\newcommand{\dee}{\mathop{\mathrm{d}\!}}
\newcommand{\dw}{\,\dee w}
\def\ddefloop#1{\ifx\ddefloop#1\else\ddef{#1}\expandafter\ddefloop\fi}
\def\ddef#1{\expandafter\def\csname bb#1\endcsname{\ensuremath{\mathbb{#1}}}}
\def\ddef#1{\expandafter\def\csname c#1\endcsname{\ensuremath{\mathcal{#1}}}}
\def\ddef#1{\expandafter\def\csname s#1\endcsname{\ensuremath{\mathscr{#1}}}}
\def\ddef#1{\expandafter\def\csname v#1\endcsname{\ensuremath{\boldsymbol{#1}}}}
\def\ddef#1{\expandafter\def\csname v#1\endcsname{\ensuremath{\boldsymbol{\csname #1\endcsname}}}}
\def\balign#1\ealign{\begin{align}#1\end{align}}
\def\baligns#1\ealigns{\begin{align*}#1\end{align*}}
\def\balignat#1\ealign{\begin{alignat}#1\end{alignat}}
\def\balignats#1\ealigns{\begin{alignat*}#1\end{alignat*}}
\def\bitemize#1\eitemize{\begin{itemize}#1\end{itemize}}
\def\benumerate#1\eenumerate{\begin{enumerate}#1\end{enumerate}}
\newenvironment{talign*}
 {\csname align*\endcsname}
 {\endalign}
\newenvironment{talign}
 {\csname align\endcsname}
 {\endalign}
\def\balignst#1\ealignst{\begin{talign*}#1\end{talign*}}
\def\balignt#1\ealignt{\begin{talign}#1\end{talign}}
\let\originalleft\left
\let\originalright\right
\renewcommand{\left}{\mathopen{}\mathclose\bgroup\originalleft}
\renewcommand{\right}{\aftergroup\egroup\originalright}
\def\tinycitep*#1{{\tiny\citep*{#1}}}
\def\tinycitealt*#1{{\tiny\citealt*{#1}}}
\def\tinycite*#1{{\tiny\cite*{#1}}}
\def\smallcitep*#1{{\scriptsize\citep*{#1}}}
\def\smallcitealt*#1{{\scriptsize\citealt*{#1}}}
\def\smallcite*#1{{\scriptsize\cite*{#1}}}
\theoremstyle{plain}  
\newtheorem{remark}{\textbf{Remark}}
\newtheorem*{remark*}{\textbf{Remark}}
\def\R{\mathbb{R}}
\def\<{\left\langle} % Angle brackets
\def\>{\right\rangle}
\DeclareSymbolFont{rsfs}{U}{rsfs}{m}{n}
\DeclareSymbolFontAlphabet{\mathscrsfs}{rsfs}
\newcommand{\He}{\mathsf{He}}
\providecommand{\argmin}{\mathop\mathrm{arg min}}
\def\supp#1{\mathrm{supp}({#1})}
\newtheorem{theo}{Theorem}
\newtheorem{fact}[theo]{Fact}
\newtheorem{theorem}{Theorem}
\newtheorem{lemma}[theo]{Lemma}
\newtheorem{corollary}[theo]{Corollary}
\newtheorem{definition}[theo]{Definition}
\newtheorem{proposition}[theo]{Proposition}
\newtheorem*{theorem*}{\textbf{Theorem}}
\renewenvironment{proof}{\noindent\textbf{Proof.}\hspace*{.3em}}{\qed\\}
\newenvironment{proof-sketch}{\noindent\textbf{Proof Sketch}
  \hspace*{1em}}{\qed\bigskip\\}
\newenvironment{proof-idea}{\noindent\textbf{Proof Idea}
  \hspace*{1em}}{\qed\bigskip\\}
\newenvironment{proof-of-lemma}[1][{}]{\noindent\textbf{Proof of Lemma {#1}}
  \hspace*{1em}}{\qed\\}
\newenvironment{proof-of-theorem}[1][{}]{\noindent\textbf{Proof of Theorem {#1}}
  \hspace*{1em}}{\qed\\}
\newenvironment{proof-attempt}{\noindent\textbf{Proof Attempt}
  \hspace*{1em}}{\qed\bigskip\\}
\newtheorem{observation}[theo]{Observation}
\newtheorem{claim}[theo]{Claim}
\numberwithin{equation}{section}
\newcommand{\on}{\operatorname}
\newcommand{\md}{\mathcal{D}}
\newcommand{\me}{\mathcal{E}}
\newcommand{\mq}{\mathcal{Q}}
\newcommand{\cf}{\mathcal{F}}
\newcommand{\cy}{\mathcal{Y}}
\newcommand{\cg}{\mathcal{G}}
\newcommand{\sd}{\mathbb{S}^{d-1}}
\newcommand{\eps}{\epsilon}
\global\long\def\mg#1{\textcolor{purple}{\textbf{[MG comments:} #1\textbf{]}}}
\global\long\def\mgn#1{\textcolor{green}{\textbf{[Note:} #1\textbf{]}}}
\global\long\def\jb#1{\textcolor{teal}{\textbf{[Joan comments:} #1\textbf{]}}}
\global\long\def\dw#1{\textcolor{blue}{\textbf{[DW comments:} #1\textbf{]}}}
\newcommand{\rtmf}{\rho_t^{\textsc{MF}}}
\newcommand{\rsmf}{\rho_s^{\textsc{MF}}}
\newcommand{\rimf}{\rho_{\infty}^{\textsc{MF}}}
\newcommand{\brt}{\bar{\rho}^m_t}
\newcommand{\brz}{\bar{\rho}^m_0}
\newcommand{\rtm}{\hat{\rho}^m_t}
\newcommand{\rsm}{\hat{\rho}^m_s}
\newcommand{\rzm}{\hat{\rho}^m_0}
\newcommand{\bwti}{\bar{w}_t(i)}
\newcommand{\bwtj}{\bar{w}_t(j)}
\newcommand{\bwzi}{\bar{w}_0(i)}
\newcommand{\dit}{\Delta_t(i)}
\newcommand{\djt}{\Delta_t(j)}
\newcommand{\dis}{\Delta_s(i)}
\newcommand{\djs}{\Delta_s(j)}
\newcommand{\diz}{\Delta_0(i)}
\newcommand{\hek}{\text{He}_k}
\newcommand{\hej}{\text{He}_j}
\newcommand{\he}{\text{He}}
\newcommand{\jmax}{J_{\text{max}}}
\newcommand{\javg}{J_{\text{avg}}}
\newcommand{\hpt}{H_t^{\perp}}
\newcommand{\hpi}{H_{\infty}^{\perp}}
\newcommand{\hpit}{H_{\infty}^{\perp}}
\newcommand{\dpt}{D_t^{\perp}}
\newcommand{\cbal}{C_{\text{b}}}
\newcommand{\csim}{C_{\text{SIM}}}
\newcommand{\cstarr}{C_{\rho^*}}
\newcommand{\creg}
{C_{\text{reg}}}
\newcommand{\cdecd}{C_{\ref{lemma:dec2body}}}
\newcommand{\chdec}{C_{\ref{lemma:dec1body}}}
\newcommand{\wed}{\text{BSD}\:}
\newcommand{\cri}{\text{CRI}\:}
\newcommand{\tw}{\tilde{w}}
\newcommand{\wpe}{w_{\perp}}
\newcommand{\er}{\delta}
\newcommand{\xii}{\xi^{\infty}}
\newcommand{\hxiti}{\hat{\xi}_t(w_i)}
\newcommand{\epsmj}{\eps_m^{\ref{lemma:concJ}}}
\newcommand{\epsmh}{\eps_m^{\ref{lemma:conctau}}}
\newcommand{\epsmv}{\eps_m^{\ref{lemma:concentration}}}
\newcommand{\clsc}{C_{\textsc{lsc}}}
\newcommand{\tlsc}{\tau}
\renewcommand{\bwti}{\xi_t(w_i)}
\renewcommand{\bwtj}{\xi_t(w_j)}
\renewcommand{\bwzi}{w_i}
\newcommand{\xorfour}{$\mathsf{XOR}_4$}
\newcommand{\hoponethree}{$\mathsf{Staircase}$}
\newcommand{\mantwo}{$\mathsf{Circle}$}
\newcommand{\mis}{$\mathsf{Misspecified}$}
\newcommand{\hefour}{$\He_4$}
\newcommand{\random}[2]{$\mathsf{Random}_{#1, #2}$}
\newcommand{\hdit}{\hat{\Delta}_t(i)}
\newcommand{\rTmf}{\rho_T^{\textsc{MF}}}
\newenvironment{assumptionp}[1]{
  
  \assumptionalt
}{\endassumptionalt}
\renewcommand{\mgn}[1]{}
\renewcommand{\mg}[1]{}
\renewcommand{\jb}[1]{}
\renewcommand{\dw}[1]{}
\title{Propagation of Chaos in One-hidden-layer Neural Networks beyond Logarithmic Time}
\author{
{Margalit Glasgow}$^{1}$\!,\,~ 
{Denny Wu}$^{2,3}$\!,\,~ 
{Joan Bruna}$^{2,3}$\!
\vspace{1.5mm}
\\
\normalsize{$^{1}$Massachusetts Institute of Technology},\,\,\, 
\normalsize{$^{2}$New York University},\,\,\, 
\normalsize{$^{3}$Flatiron Institute}
% \vspace{1.5mm}
% \\
% {\small
% \texttt{margalit.r.glasgow@gmail.com}, \,\,
% \texttt{dennywu@nyu.edu}, \,\, 
% \texttt{bruna@cims.nyu.edu}
% }
}
\begin{document}

\maketitle

\begin{abstract}
We study the approximation gap between the dynamics of a polynomial-width neural network and its infinite-width counterpart, both trained using projected gradient descent in the mean-field scaling regime. We demonstrate how to tightly bound this approximation gap through a differential equation governed by the mean-field dynamics. A key factor influencing the growth of this ODE is the \textit{local Hessian} of each particle, defined as the derivative of the particle's velocity in the mean-field dynamics with respect to its position. We apply our results to the canonical feature learning problem of estimating a well-specified single-index model; we permit the information exponent to be arbitrarily large, leading to convergence times that grow polynomially in the ambient dimension $d$. We show that, due to a certain ``self-concordance'' property in these problems — where the local Hessian of a particle is bounded by a constant times the particle's velocity — polynomially many neurons are sufficient to closely approximate the mean-field dynamics throughout training.
\end{abstract}

\tableofcontents

\section{Introduction}

\paragraph{The Mean-field Regime.} We consider the training of the following one-hidden-layer neural network with $m$ neurons via gradient-based optimization:
\begin{align}
    f(x) = \frac{1}{m}\sum_{i=1}^m \sigma(\langle x,w_i\rangle), \quad w_1,w_2,...,w_m\in\sd,
    \label{eq:two-layer-nn}
\end{align}
where $\sigma:\R\to\R$ is the nonlinear activation function (e.g., ReLU), and $\{w_i\}_{i=1}^m$ are trainable parameters, constrained to the sphere. Due to the nonlinearity of the activation function, the optimization landscape is generally non-convex. In this context, two approaches have been developed to ``convexify" the problem through overparameterization (i.e., increasing the network width $m$) and to establish global optimization guarantees: the \textit{neural tangent kernel} (NTK) \cite{jacot2018neural,du2018gradient,allen-zhu2019convergence,zou2020gradient} and the \textit{mean-field} analysis \cite{nitanda2017stochastic,chizat2018global,mei2018mean,rotskoff2018neural,sirignano2020mean}.
The NTK approach linearizes the training dynamics around initialization under appropriate scalings, ensuring that the trainable parameters remain close to their random initialization \cite{chizat2019lazy}. However, this condition prevents feature learning and often leads to suboptimal statistical rates, as it fails to capture the adaptivity of neural networks \cite{ghorbani2019limitations,chizat2020implicit,yang2020feature,ba2022high}.

The mean-field analysis, on the other hand, lifts \eqref{eq:two-layer-nn} into the (infinite-dimensional) space of measures by considering the empirical distribution of neurons $\hat{\rho}^m = m^{-1}\sum_{i=1}^m \delta_{w_i}$. Under certain regularity conditions, one can establish weak convergence of the empirical distribution to the limiting mean-field measure as the number of neurons tends to infinity: $\hat{\rho}^m\overset{m\to\infty}{\to}\rho^{\text{MF}}$, and the trajectory of the limiting parameter distribution is characterized by a partial differential equation (PDE). This (McKean-Vlasov type) PDE description can capture the nonlinear evolution of the neural network beyond the kernel (lazy) regime.

Studying the mean-field dynamics has several advantages, particularly with regard to learning sparse or low-dimensional target functions such as multi-index models. First, in contrast to the NTK regime, the mean-field dynamics describes feature learning which often leads to improved statistical efficiency (see e.g., \cite{bach2017breaking, chizat2020implicit, abbe2022merged, mahankali2023beyond}). Further, overparameterized neural networks are useful for fitting functions that are not well-specified, for instance a multi-index function with an unknown link function. In such instances, prior correlation loss analyses \cite{abbe2023sgd,lee2024neural} that ignore the interaction between neurons cannot establish learnability\footnote{In Section~\ref{sec:assm}, we give several concrete examples of this, along with simulations.}. Second, from a purely analytical perspective, the infinite-width limit allows us to exploit certain problem symmetries that simplify the mean-field PDE into low-dimensional descriptions as done in \citep{abbe2022merged,hajjar2022symmetries,arnaboldi2023high,chen2024mean, montanari2025dynamical}.

\paragraph{Propagation of Chaos.}
Since training infinite-width networks is computationally infeasible, the practical significance of the above theoretical benefits hinges on having a quantitative connection between finite-width networks and their associated mean-field limit. This is precisely the goal we embark upon in this work. The dynamics of polynomial-width neural networks can be viewed as a finite (interacting) particle discretization of the limiting mean-field PDE. Therefore, one of the main challenges in transferring learning guarantees of the infinite-width limit to the finite-width system lies in the non-asymptotic control of particle discretization error --- known as the \textit{propagation of chaos} \cite{sznitman1991topics, chaintron2022propagation}.

In the context of neural network theory, existing propagation of chaos results typically fall short of delivering this non-asymptotic control. On one hand, \textit{exponential-in-time Grönwall-type} estimates leverage the regularity of the dynamics to propagate the Monte-Carlo error at initialization (at scale $O(1/m)$) to obtain an estimate of the form $\sup_{t\in [0,T]} (f_{\rtmf}(x) - f_{\rtm}(x))^2 \lesssim \exp(T)\cdot (m^{-1} \wedge \eta)$ where $\eta>0$ is the learning rate \cite{mei2018mean,mei2019mean,de2020quantitative}. Hence, this type of discretization error analysis is only quantitative when the time horizon is short, such as $T=O_d(1)$ for learning low ``leap'' functions \cite{abbe2022merged,berthier2023learning, montanari2025dynamical} and $T=O_d(\log d)$ for learning certain quartic polynomials \cite{mahankali2023beyond}. 
On the other hand, for the \textit{mean-field Langevin dynamics} (MFLD) \cite{hu2019mean,nitanda2022convex,chizat2022mean}, which introduces additive Gaussian noise to the gradient updates, exponential dependency on time can be removed under a uniform logarithmic Sobolev inequality (LSI), leading to \textit{uniform-in-time propagation of chaos} \cite{chen2022uniform,suzuki2023convergence,kook2024sampling,nitanda2024particle}. However, the LSI assumption ultimately transfers the exponential dependency to the runtime \cite{suzuki2023feature,wang2024mean,mousavi2024learning,takakura2024mean}.
Finally, \cite{chen2020dynamical,pham2021limiting,chizat2022sparse} proved uniform-in-time fluctuations around the mean-field limit, but in the asymptotic width limit. To our knowledge, the only work that coupled a poly-width network with the infinite-width limit for $\text{poly}(d)$ time is \cite{ren2023depth}, which considered a specific bottleneck architecture for learning a symmetric target function.

Consequently, despite the feature learning advantage, the function class that can be learned by two-layer neural networks trained via gradient descent in the mean-field regime with \textit{polynomial compute} is largely unknown, except for target functions reachable within finite (or at most $\log d$) time horizon. It is likely that for many interesting problems, this $T=O_d(\log d)$ horizon is not sufficient for the mean-field dynamics to converge to a low-loss solution. For instance, when the target function is low-dimensional, prior works have shown that gradient-based feature learning often requires $T\gtrsim d^{\Theta(k^*)}$ runtime, where $k^*$ is the \textit{information/leap exponent} (IE) of the link function, which may be arbitrarily large \cite{benarous2021online,abbe2023sgd,bietti2023learning}. 
The goal of this work is to identify sufficient and verifiable conditions under which the mean-field limit is well-approximated by $m=\text{poly}(d)$ neurons up to $T=\text{poly}(d)$ time horizon.

\subsection{Our Contributions}  

In this work, we study a teacher-student setting where the target function is parameterized by finitely many ``teacher'' neurons. Let $\rtmf$ denote the distribution at time $t$ of the infinite-width mean-field dynamics trained with projected (spherical) gradient flow on infinite data, and $\rtm$ the $m$-particle mean-field discretization of this dynamics, trained with $n$ samples. We establish a set of conditions under which $\rtm$ is well approximated by $\rtmf$ up to the time required to learn the teacher model.  
The crux of these conditions is twofold:
\begin{enumerate}[leftmargin=*,itemsep=0.5mm] 
    \item The mean-field dynamics satisfy a certain \em local strong convexity \em (Assumption~\ref{def:lsc}),  which states that when a neuron is close to a teacher neuron, the local landscape is strongly convex.
    \item A certain average \emph{stability} parameter $\javg$  (Assumption~\ref{assm:growth}) is at most $O(1/T)$, where $T$ is the convergence time. Loosely speaking, $\javg$ is a measure of the average sensitivity of the neurons with respect to a small perturbation in any one neuron.
    % governs the impact on neuron $w'$ at time $t$ of a infinitesimal change to some neuron $w$ at time $s$, averaged over all neurons $w'$ far from teacher neurons.
\end{enumerate} 

Denote $f_\rho(x) := \mathbb{E}_{\rho}[\sigma(\langle x, w \rangle)]$,  $ \| f \|:= \mathbb{E}_{x} [|f(x)|^2]^{1/2}$ and $\mathcal{E}( \rho, \tilde{\rho}) = \| f_\rho - f_{\tilde{\rho}} \|$. 
We show in Theorem~\ref{thm:main} that if the above conditions hold (along with several other regularity and technical conditions), then for $t \leq T$, with high probability one has 
\begin{align}
    \mathcal{E}(\rtmf, \rtm) \lessapprox \frac{\on{poly}(d, t)}{\min\left(\sqrt{m}, \sqrt{n}\right)}.
\end{align}
% \begin{align}
%     W_1\left(\rtmf, \rtm\right) \lessapprox \frac{\on{poly}(d, t)}{\min\left(\sqrt{m}, \sqrt{n}\right)}.
% \end{align}
This means that $\text{poly}(d,T)$ neurons suffice to approximate the mean-field limit up to the time of convergence. This result also gives a non-asymptotic rate of convergence of $\rtm$ to an appropriate empirical measure of $\rtmf$ with time dependence that goes beyond the pessimistic Grönwall estimate.

% , showing that even for long time horizons, \mg{say something...Joan, what was your point about it being interesting that it always has the $\eps_m$ scaling?} .

% \jb{yes, the comment is that this result concerns the non-asymptotic regime of bringing the error down to the monte-carlo error drawn from the MF measure. There is a subsequent regime where one can get errors below the $1/\sqrt{m}$ if particles continue evolving after being drawn from the MF measure at convergence. Arguably the non-asymptotic regime is the most important in practice. }

In Theorem~\ref{theorem:SIM}, we apply our result to a setting of learning a single-index model (SIM) with high information exponent $k^* \geq 4$, for which gradient flow converges in time $T = \Theta(d^{k^*/2})$. First, we prove that in this setting, the limiting mean-field network, trained on the population loss, can learn the target function at time $T$. Then we use Theorem~\ref{thm:main} to deduce that with $m, n = d^{\Theta(k^*)}$, at time $T$ the distance $\mathcal{E}(\rtmf, \rtm)$ is small, and thus the finite-width model $\rtm$ also achieves small population loss. 

\begin{remark*}
To our knowledge, our work is the first to prove propagation of chaos (i.e., the above bound on $\mathcal{E}(\rtmf, \rtm)$) with polynomially many neurons at timescales longer than $\log(d)$. We remark that we do not believe all the conditions we impose to be necessary -- we discuss this in detail in Section~\ref{sec:assm}. 
Existing techniques (see \cite{chaintron2022propagation} for review) primarily leverage either (a) convexity in the system, (b) Grönwall's method, or (c) a large diffusion term. Our techniques go beyond these approaches, and as such they could be useful to establish quantitative propagation of chaos in interacting particle systems with little or no noise.
\end{remark*}

% Further, we note that we do not expect propagation of chaos to hold in the non-spherical setting, even for learning simple functions (see Remark~\ref{rem:spherical}).
%We believe our work also provides a technical contribution that could be useful in showing propagation of chaos more generally in interacting particle systems.

% \paragraph{Outline.} % COLT VERSION
% In Section~\ref{sec:setting}, we provide preliminaries on the setting and explain the basic objects we will analyze. In Section~\ref{sec:result}, we state our main results, as outlined in the contributions.  In Section~\ref{sec:pfoverview}, we give an overview of the proofs. We conclude in Section~\ref{sec:conclusion}. In Appendix~\ref{apx:related}, we provide additional discussion of related work. In Appendices~\ref{sec:assm} and \ref{sec:sims}, we discuss the assumptions of our settings, comment on their necessity, and provide simulations. Full proofs are in the following appendices.

\paragraph{Outline.} % Arxiv Version
In Section~\ref{sec:setting}, we provide preliminaries on the setting and explain the basic objects we will analyze. In Section~\ref{sec:result}, we state our main results, as outlined in the contributions. In Section~\ref{sec:pfoverview}, we give an overview of the proofs. In Sections~\ref{sec:assm} and \ref{sec:sims}, we discuss the assumptions of our settings, comment on their necessity, and provide simulations. We conclude in Section~\ref{sec:conclusion}. Full proofs are given in the Appendix.

\paragraph{Notations.} $\mathcal{P}(\Omega)$ denotes the space of probability distributions over $\Omega$. $W_1(\rho, \rho')$ denotes the 1-Wasserstein distance between distributions $\rho$ and $\rho'$.
We will use lower-case letters ($f, g, h$) to denote functions defined on $\sd$, Greek letters ($\Delta$, $\xi$, etc) to denote vector-valued functions $\sd \rightarrow \mathbb{R}^d$, and upper-case letters to denote matrix-valued functions $\sd \rightarrow \mathbb{R}^{d \times d}$ or $\sd \times \sd \rightarrow \mathbb{R}^{d \times d}$. 
When $\hat{\mu}$ is an empirical measure of the form $\hat{\mu}=\frac1m \sum_{i} \delta_{w_i}$, we will use the shorthand $f(i) = f(w_i)$, and denote $\mathbb{E}_i f(i) := \frac1m \sum_i f(w_i)$. We write $P^{\perp}_w := (I - ww^\top )$. For $H \in L^2(\sd \times \sd, \mu^2, \mathbb{R}^{d \times d})$, $D \in L^2(\sd, \mu, \mathbb{R}^{d \times d})$ and $\Lambda, \in L^2(\sd, \mu, \mathbb{R}^{d})$, we use $H\Lambda (w) := \mathbb{E}_{w' \sim \mu} H(w, w') \Lambda(w')$, and $D \odot \Lambda (w) = D(w)\Lambda(w)$. For $f \in L^2( \R^d, \nu)$, we write $\| f \|_\nu^2:= \mathbb{E}_x |f(x)|^2$, and omit the subscript when the context is clear. 

Throughout this paper, we use the asymptotic notation $O_{C}(X)$ to denote $X$ times some constant that depends arbitrarily on $C$. Whenever a term of the form $C$ (usually with some subscript) appears, this term is referring to a constant, meaning that its value does not depend on $m, n, d$ (which we will take to infinity). We write ``with high probability'' when the probability approaches $1$ as $m$ or $n$ goes to infinity. This probability is taken over the neural network initialization $\{w_i\}_{i \in [m]}$ and the random sample of $n$ data points.

% \vspace{-0.2cm}
\section{Setting and Preliminaries}\label{sec:setting}
% \vspace{-0.1cm}
\subsection{Projected Gradient Dynamics on Neural Networks}
% \vspace{-0.1cm} 
Consider a neural network to be parameterized by some distribution $\rho \in \mathcal{P}(\sd)$, such that 
\begin{align*}
    f_{\rho}(x) := \mathbb{E}_{w \sim \rho} \sigma(w^\top x),
\end{align*}
for a link function (activation) $\sigma$. We require $\sigma$ to satisfy the regularity conditions in Assumption~\ref{assm:reg}.

A supervised regression problem is parameterized by an initial distribution for the network weights, $\rho_0$, and a distribution $\md$ over points $(x, y) \in \mathbb{R}^{d} \times \mathbb{R}$. Given $(\rho_0, \md)$, we define $f^*(x) = \mathbb{E}_{\md}[y | x]$.
% We study a teacher-student set-up, where the teacher function is given by some (sparse) distribution $\rho^*$, that is $f^*(x) := f_{\rho^*}(x)$. 
% For a distribution $\md$ over data points $(x, y) \in \mathbb{R}^{d} \times \mathbb{R}$, 
We will train the neural network to minimize the squared loss 
\begin{align}
    L_{\md}(\rho) := \mathbb{E}_{(x, y) \sim \md} (f_{\rho}(x) - y)^2~.
\end{align}
% We will always assume that distribution $\mathcal{D}$ has label noise with variance bounded by $1$, that is $\mathbb{E}_{\md}[y | x] = f^*(x)$ and $\mathbb{E}_{\md}[(y - f^*(x))^2] \leq 1$.

% A problem $P$ is given by a ground truth function, a distribution over inputs, and a loss. 
% \begin{align*}
%     P = (f^*: \mathbb{R}^d \rightarrow \mathbb{R}, \md \in \Delta(\mathbb{R}^d), \ell: \mathbb{R}^2 \rightarrow \mathbb{R}^+).
% \end{align*}

% A problem induces a population loss 
% \begin{align}
%     L(\rho) := \mathbb{E}_{x \sim \mathcal{D}} \ell(f_{\rho}(x), f^*(x))
% \end{align}

We study the projected gradient flow dynamics of $\rho$ induced by moving each particle $w \sim \rho$ in the direction of the gradient of the loss $L_{\md}(\rho)$, and then projecting the particle back on the sphere:
% \vspace{-0.15cm}
\begin{align}\label{eq:Vexpansion}\textstyle
        \frac{d}{dt} w = \nu_{\md}(w, \rho) := -(I - ww^\top )\nabla_w \mathscr{f}_{\md}(w) + (I - ww^\top )\nabla_w \mathbb{E}_{w' \sim \rho} \mathscr{k}_{\md}(w, w')
    % &= -F'(w) + \mathbb{E}_{w' \sim \rho}K'(w, w'),
\end{align}
% \vspace{-0.15cm}
where 
% \vspace{-0.15cm}
\begin{align}\label{eq:FKdefbody}
    \mathscr{f}_{\md}(w) := \mathbb{E}_{(x, y)\sim \md} y\sigma(w^\top x) \qquad \text{and} \qquad \mathscr{k}_{\md}(w, w') := \mathbb{E}_{(x, y) \sim \md} \sigma(w'^\top x)\sigma(w^\top x). 
    % &F'(w) := (I - ww^\top )\nabla_w F(w) &K'(w, w') := (I - ww^\top )\nabla_w \mathscr{k}(w, w')
\end{align}
In the case where we train on infinite data, the relevant problem parameters are $(f^*, \rho_0, \md_{x})$, where $\md_x$ is the $x$-marginal of $\md$. In such setting, and when $\md_x$ is clear from context, we will use $\nu(w, \rho)$ (without any distribution subscripted) to denote the case where $x \sim \md_x$, $y = f^*(x)$ deterministically. Whenever an expectation over $x$ appears in this paper without explicit distribution, it should be interpreted as over $x \sim \md_x$. In this paper, we will primarily be interested in a teacher-student setting with a ground truth measure $\rho^*$, such that $f^*(x) = \mathbb{E}_{w^* \sim \rho^*}\sigma(x^\top w^*)$. Thus we will sometimes describe a problem by $(\rho^*, \rho_0, \md_x)$. 

\subsection{Coupling between Mean Field and Finite-Neuron Dynamics}

We will study the evolution of two different learning dynamics in this paper.
% \vspace{-0.15cm}
\paragraph{Infinite-width, infinite-data \em mean-field \em dynamics.}
We denote the mean-field distribution at time $t$ by $\rtmf \in \mathcal{P}(\sd)$, where we initialize $\rho_0^{\textsc{MF}} = \rho_0$. Each particle $w \in \sd$ in the mean-field dynamics evolves according to the infinite-data velocity $\nu(w, \rtmf) \in T_w \sd$. $\xi_t(w) \in \sd$ denotes the \emph{characteristic} of a particle initialized at $w$ and evolved under the mean-field dynamics:
\begin{align*}\textstyle
    \frac{d}{dt} \xi_t(w) = \nu(\xi_t(w), \rtmf) \qquad \xi_0(w_i) = w_i~.
\end{align*}
This dynamics can also be expressed though the \em continuity equation:\em
$
    \frac{d}{dt} \rtmf = \nabla \cdot (\nu(w, \rtmf)\rtmf).
$

\paragraph{Finite-width, finite-data dynamics.}
Let $\rtm$ denote the empirical measure defined by \em $m$ neurons \em under the projected gradient flow induced by the \em empirical loss \em from $n$ training samples. Let $\hat{\md}$ denote the empirical distribution of the $n$ training samples. We initialize $\rzm = \frac{1}{m}\sum_{i = 1}^m \delta_{w_i}$, where $w_i \sim \rho_0$ i.i.d. for each $i \in [m]$. Each particle $w \in \sd$ in the finite dynamics evolves according to the empirical velocity $\nu_{\hat{\md}}(w, \rtm)$. This defines an ODE in $(\sd)^{\otimes m}$, whose characteristics are now denoted by $\hxiti$, and solve  
%to denote the location at time $t$ of the particle initialized at $w_i$ whose dynamics are given by
% \vspace{-0.15cm}
\begin{align*}\textstyle
    \frac{d}{dt} \hxiti = \nu_{\hat{\md}}(\hxiti, \rtm) \qquad \hat{\xi}_0(w_i) = w_i~,~i\in [m]~.
\end{align*}
% \vspace{-0.15cm}
% and we train with the dynamics
% \begin{align*}
%     \frac{d}{dt} w = \nu_{\hat{\md}}(w, \rtm)
% \end{align*}
We will study the setting where the training data are drawn i.i.d.~from a sub-Gaussian distribution with sub-Gaussian label noise (See Assumption~\ref{assm:reg}, \ref{assm:data}). 

% \vspace{-0.15cm}
\paragraph{Coupling the dynamics.}
% \hspace{-0.2cm}
Let $\brt$ be the distribution initialized at $\rzm$, but that evolves according to the dynamics $\nu(\cdot, \rtmf)$. That is, $\brt = \frac1m \sum_{i = 1}^m \delta_{\xi_t(w_i)}$. Note that $\brt$ is equivalent in distribution to a random sample of $m$ particles drawn iid from $\rtmf$. Define the coupling error at neuron $w_i$ as
 \begin{align}
    \dit :=  \hxiti - \bwti \in \mathbb{R}^d~,~i\in [m]~,
\end{align}
such that $\diz = 0$ for all $i$. Now by definition, $W_1(\rtm, \brt) \leq \mathbb{E}_i \|\dit\|$; thus it is easy to show that $\mathbb{E}_i \|\dit\|$ gives a good bound on the function-error distance between $\rtmf$ and $\rtm$:
\begin{restatable}{lemma}{factw}\label{fact} Suppose Assumption~\ref{assm:reg} holds. With high probability over the draw $\rho_0^m$, we have
%    $$\mathbb{E}_x (f_{\rtmf}(x) - f_{\rtm}(x))^2 \leq 2\creg^2 \left(\mathbb{E}_i \|\dit\|\right)^2 + \frac{\log(m)}{m}.$$
        $$\| f_{\rtmf} - f_{\rtm}\|^2 \leq O_{\creg} \left(\mathbb{E}_i \|\dit\|\right)^2 + \frac{\log(m)}{m}.$$
\end{restatable}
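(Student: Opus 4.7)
The plan is to introduce the coupled intermediate measure $\brt$, which is simultaneously (a) the empirical measure of $m$ i.i.d.\ samples from $\rtmf$ (indexed by the same random initializations as $\rtm$), and (b) related to $\rtm$ by the displacement field $\dit$. Using $(a+b)^2 \leq 2a^2 + 2b^2$, I would split
\begin{align*}
\| f_{\rtmf} - f_{\rtm}\|^2 \;\leq\; 2\|f_{\rtmf} - f_{\brt}\|^2 \;+\; 2\|f_{\brt} - f_{\rtm}\|^2,
\end{align*}
and then bound the first term (Monte Carlo error) by $\tfrac{\log m}{2m}$ with high probability and the second term (coupling error) by $\creg^2 (\mathbb{E}_i \|\dit\|)^2$ deterministically.

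For the coupling term, both $\brt$ and $\rtm$ are supported on particles indexed by $i\in[m]$, so pointwise
\begin{align*}
f_{\brt}(x) - f_{\rtm}(x) = \frac{1}{m}\sum_{i=1}^m\bigl[\sigma(\bwti^{\top} x) - \sigma(\hxiti^{\top} x)\bigr].
\end{align*}
Using the regularity of $\sigma$ from~\ref{assm:reg}~Assumption~\ref{assm:sigma}, $|\sigma(a)-\sigma(b)| \leq \|a-b\|\cdot(\text{Lipschitz constant})$, so at each $x$ we get $|f_{\brt}(x) - f_{\rtm}(x)| \leq \frac{1}{m}\sum_i (\text{Lip})\cdot \|x\|\cdot\|\dit\|$. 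Squaring, applying Jensen to pull the square inside the empirical average, and taking expectation over $x\sim \md_x$ (with $\mathbb{E}\|x\|^2$ absorbed into $\creg$ by sub-Gaussianity of $\md_x$) yields the stated $\creg^2 (\mathbb{E}_i \|\dit\|)^2$.

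The Monte Carlo term is the main obstacle, since a naive in-expectation bound only gives $O(1/m)$ and we need a high-probability estimate. I would apply McDiarmid's inequality to the scalar functional $g(\{w_i\}_{i=1}^m) := \|f_{\brt} - f_{\rtmf}\|$ viewed as a function of the i.i.d.\ initializations. Replacing one $w_i$ changes $f_{\brt}$ by at most $2\csig/m$ uniformly in $x$, hence changes the $L^2$ norm $g$ by $O(\csig/m)$; since $\mathbb{E}[g^2] \leq \csig^2/m$ by Fubini and the Bessel-type computation for i.i.d.\ zero-mean bounded samples, Jensen gives $\mathbb{E}[g] = O(\csig/\sqrt{m})$, and McDiarmid contributes an extra $O(\csig\sqrt{\log m / m})$ deviation, so $g = O(\csig\sqrt{\log m/m})$ with high probability. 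Squaring and absorbing constants delivers the $\log(m)/m$ term. (A cleaner route would be to observe that the mean-field characteristic map $w\mapsto \xi_t(w)$ pushes $\rho_0$ forward to $\rtmf$, so $\brt$ is literally the empirical of $m$ i.i.d.\ draws from $\rtmf$; after that the concentration is a standard bounded-differences argument whose only subtle point is tracking the precise constant in front of $\log m / m$.)
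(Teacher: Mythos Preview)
Your decomposition via $\brt$ and the split into a coupling term and a Monte Carlo term matches the paper. The gap is in the coupling term. Assumption~\ref{assm:sigma} does \emph{not} say $\sigma$ is Lipschitz --- it only gives fifth-moment bounds on $\sigma^{(j)}(X)$ for sub-Gaussian $X$, and is designed to permit polynomial activations such as $\sigma=\He_4$ (the paper's main application). So the pointwise inequality $|\sigma(a)-\sigma(b)|\le \text{Lip}\cdot|a-b|$ is unavailable. Even if you patch this with the mean value theorem, your subsequent step $|\dit^\top x|\le\|\dit\|\,\|x\|$ loses a factor of $d$ once you take $\mathbb{E}_x$, since $\mathbb{E}\|x\|^2\asymp d$ for isotropic sub-Gaussian data; this cannot be ``absorbed into $\creg$'', which is dimension-free. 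The paper instead keeps $\dit^\top x$ as a \emph{one-dimensional} projection (whose moments are $O(\|\dit\|)$ uniformly in $d$), writes the Taylor remainder in the integral form $\int_0^1 \sigma'((\bwti+s\dit)^\top x)\,\dit^\top x\,ds$ so that for each fixed $s$ the argument of $\sigma'$ is a genuine sub-Gaussian scalar, and then applies H\"older/Cauchy--Schwarz across $i,j$ to obtain the dimension-free bound $\creg^2(\mathbb{E}_i\|\dit\|)^2$.

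The same misconception appears in your Monte Carlo argument: changing one $w_i$ does \emph{not} change $f_{\brt}(x)$ by $O(1/m)$ uniformly in $x$ when $\sigma$ is unbounded. What does survive is a bounded-differences estimate on the $L^2$ functional $g$ itself, namely $|g(\{w_j\})-g(\{w_j'\})|\le \tfrac{1}{m}\|\sigma(\xi_t(w_i)^\top\cdot)-\sigma(\xi_t(w_i')^\top\cdot)\|_{L^2(\md_x)}\le 2\creg/m$ via the moment bound on $\sigma$. With that correction McDiarmid goes through (and in fact yields a sharper tail than the paper, which simply computes $\mathbb{E}[(\|f_{\rtmf}-f_{\brt}\|^2)^2]=O(m^{-2})$ and applies Chebyshev).
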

% Here, and throughout, we use the notation $\|f - g\|^2$ to denote $\mathbb{E}_{x \sim \md_x} [(f(x) - g(x))^2]$. 

% For distributions $\rho, \rho'$, define 
% \begin{align}
%     \mathcal{E}(\rho, \rho') := \sup_{g : \sd \rightarrow \mathbb{R}, |g'| \leq 1} \mathbb{E}_{\rho} g(w) -  \mathbb{E}_{\rho'} g(w)
% \end{align}
% \jb{So 1-Wasserstein ?}

\subsection{Description of the Dynamics of $\Delta$}\label{sec:dyn_desc}
The main result of this section is Lemma~\ref{lemma:errdynamicsbody}, which gives a first-order approximation of the dynamics of $\dit$. The quantities $\{\dit\}_i$ evolve via their own particle interaction system, governed by two main terms: a self-interaction term, and an interaction term. 
%force from the other $\djt$, for $j \neq i$.  
The self-interaction term is described by what we call the \em local Hessian, \em the derivative of a particle's velocity with respect to that particle's position.
\begin{definition}[Local Hessian]
The \em local Hessian \em $D^{\perp}_t: \sd \to \R^{d \times d}$ of neuron $w$ at time $t$ is
    \begin{align}
    D^{\perp}_t(w) := \left(\nabla_{\xi_t(w)} \nu(\xi_t(w), \rtmf)\right)(I - \xi_t(w)\xi_t(w)^\top).
    % = \nabla_{\xi_t(w)} F'(\xi_t(w)) - \mathbb{E}_{w' \sim \rtmf}\nabla_{\xi_t(w)} K'(\xi_t(w), w').
\end{align}
We will also use the abbreviated notation $ D^{\perp}_t(i) :=  D^{\perp}_t(\bwzi)$.
% We also define the related \em empirical local Hessian \em $\bar{D}_t$, where the expectation is taken over $\brt$:
% \begin{align}
%     \bar{D}_t(w) := \frac{d}{d \xi_t(w)} \nu(\xi_t(w), \brt) = \nabla_{\xi_t(w)} F'(\xi_t(w)) - \mathbb{E}_{w' \sim \brt}\nabla_{\xi_t(w)} K'(\xi_t(w), w').
% \end{align}
% For convenience, let 
% \begin{align*}
%     D_t^{\perp}(w) := D_t(w)\left(I - \xi_t(w)\xi_t(w)\right)\\
%     \bar{D}_t^{\perp}(w) := \bar{D}_t(w)\left(I - \xi_t(w)\xi_t(w)\right).
% \end{align*}
\end{definition}
\begin{remark}\label{rem:ll}
We call this the local Hessian because it equals the \em negative \em Hessian of the landscape of the map $\bwti \rightarrow  U_t(\bwti) := U(\bwti; \rtmf)$, where $U = \frac{\delta L}{\delta \rho}$ is the first-variation of the loss, so that $V = \nabla U$, 
%i.e.  $\int U(w_t; \rtmf)d\rtmf(w_t)$ is the linearization of the loss landscape $L(\rtmf)$ with respect to $\rtmf$, 
and $\bwti$ is restricted to the manifold $\sd$. Thus if the local landscape $U_t(\bwti)$ is convex on $\sd$, then $\dpt(i)$ is negative semi-definite. 
\end{remark}

The part of the dynamics driven by the other $\djt$ is described by what we term the \em interaction Hessian, \em the (rescaled) derivative of a particle's velocity with respect to the other particles' position. 
\begin{definition}[Interaction Hessian]
Define the \em interaction Hessian \em $H^{\perp}_t: \mathbb{S}^{d-1} \times \mathbb{S}^{d-1}  \rightarrow \mathbb{R}^{d \times d}$ by
\begin{align}
    H^{\perp}_t(w, w') := \left(I - \xi_t(w)\xi_t(w)^\top\right)\nabla_{\xi_t(w')} \nabla_{\xi_t(w)} \mathscr{k}(\xi_t(w), 
    \xi_t(w'))\left(I - \xi_t(w')\xi_t(w')^\top\right),
\end{align}
We will also use the abbreviated notation $ H^{\perp}_t(i, j) :=  H^{\perp}_t(\bwzi, w_j)$.
% For convenience, let
% \begin{align}
%     H^{\perp}_t(w, w') :=  H_t(w, w')\left(I - \xi_t(w')\xi_t(w')\right).
% \end{align}
\end{definition}
\begin{fact}\label{fact:PSDbody}
For any $w, w'$, $H^{\perp}_t(w,w')$ is a positive semi-definite kernel.
\end{fact}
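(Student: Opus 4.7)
The plan is to exhibit $H^{\perp}_t(w, w')$ explicitly as the (matrix-valued) Gram kernel of a feature map $\psi_x : \sd \to \mathbb{R}^d$ integrated against $x \sim \md_x$. Once $H^\perp_t(w, w') = \mathbb{E}_x[\psi_x(w) \psi_x(w')^\top]$, positive semi-definiteness in the kernel sense is immediate, because any quadratic form collapses to the expectation of a squared norm.

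First I would compute $\nabla_{w'}\nabla_{w} \mathscr{k}(w, w')$ in closed form from the definition \eqref{eq:FKdefbody}. Under the regularity conditions in Assumption~\ref{assm:reg} (on $\sigma$ and on the $x$-marginal $\md_x$), dominated convergence permits interchanging $\mathbb{E}_x$ with both gradients, yielding
\begin{align}
\nabla_{w'}\nabla_{w} \mathscr{k}(w, w') = \mathbb{E}_x\!\left[\sigma'(w^\top x)\,\sigma'(w'^\top x)\, x x^\top\right].
\end{align}
Substituting this into the definition of $H^{\perp}_t$ and pushing the projectors $P^{\perp}_{\xi_t(w)}$ and $P^{\perp}_{\xi_t(w')}$ inside the $x$-expectation gives
\begin{align}
H^{\perp}_t(w, w') = \mathbb{E}_x\!\left[\sigma'(\xi_t(w)^\top x)\,\sigma'(\xi_t(w')^\top x)\; P^{\perp}_{\xi_t(w)} x x^\top P^{\perp}_{\xi_t(w')}\right].
\end{align}

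Next I would introduce the feature map $\psi_x(w) := \sigma'(\xi_t(w)^\top x)\, P^{\perp}_{\xi_t(w)} x \in \mathbb{R}^d$, so that the display above becomes $H^{\perp}_t(w, w') = \mathbb{E}_x[\psi_x(w)\, \psi_x(w')^\top]$. To conclude, I would verify the PSD-kernel definition: for any finite set of base points $w_1, \dots, w_N \in \sd$ and any vectors $v_1, \dots, v_N \in \mathbb{R}^d$,
\begin{align}
\sum_{k,l=1}^N v_k^\top H^{\perp}_t(w_k, w_l) v_l
= \mathbb{E}_x \left\|\, \sum_{k=1}^N v_k^\top \psi_x(w_k) \,\right\|^2 \geq 0,
\end{align}
where the integrand is a nonnegative scalar and the expectation is well-defined by the sub-Gaussianity of $\md_x$ together with the boundedness of $\sigma'$ (again from Assumption~\ref{assm:reg}).

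There is no real obstacle here beyond bookkeeping: the only ``honest'' step is justifying the interchange of differentiation and expectation in the computation of $\nabla_{w'}\nabla_{w} \mathscr{k}$, which follows from the standard dominated-convergence argument using the uniform bounds on $\sigma'$ and the sub-Gaussian tails of $x$. The remainder is an algebraic rearrangement exposing the kernel as an $\mathbb{E}_x[\psi_x \psi_x^\top]$ form, which is the canonical reason a kernel is PSD.
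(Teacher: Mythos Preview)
Your proof is correct and follows exactly the paper's approach: exhibit the feature map $\psi_x(w) = \sigma'(\xi_t(w)^\top x)\, P^{\perp}_{\xi_t(w)} x$ so that $H^{\perp}_t(w,w') = \mathbb{E}_x[\psi_x(w)\psi_x(w')^\top]$, from which the PSD property is immediate. The paper's proof is identical up to notation (it calls the feature map $\phi_x$) and omits the bookkeeping you spell out.
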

% \ifthenelse{\boolean{arxiv}}{
\begin{proof}
By definition of $\mathscr{k}_{\md}$ in Equation~\ref{eq:FKdefbody}, one can check that $H^{\perp}_t(w, w') = \mathbb{E}_x\phi_x(w)\phi_x(w')^\top$, where 
we define the feature map $\phi_x(w) := (I - \xi_t(w)\xi_t(w)^\top)\sigma'(\xi_t(w)^\top x)x$
% \begin{align*}
%     \nabla_{w'} \nabla_w \mathscr{k}(w, 
%     w')&= \nabla_{w'} \left( \nabla_{w} \mathscr{k}(w, w')\right) \\
%     &= \nabla_{w'} \nabla_{w} \left(\mathbb{E}_x \sigma(w^\top x)\sigma(w'^\top x)\right)\\
%     &= \left(\mathbb{E}_x \sigma'(w^\top x)xx^\top \sigma'(w'^\top x)\right).
% \end{align*}
% Define the feature map $\phi_x(w) := (I - \xi_t(w)\xi_t(w)^\top )\sigma'(\xi_t(w)^\top x)x$. Then $H^{\perp}_t(w, w') = \mathbb{E}_x\phi_x(w)\phi_x(w')^\top $, which proves that $H^{\perp}_t$ is positive semi-definite.
\end{proof}

We make the following basic regularity assumptions on the activation function and the data.
\begin{assumptionp}{Regularity}[Regularity Assumptions]\label{assm:reg} ~
\begin{enumerate}[{\bfseries{R\arabic{enumi}}}]
    \item\label{assm:sigma} For a constant $\creg$, the activation $\sigma$ satisfies that for $j = 0, 1, 2, 3$ and any subGaussian variable $X$, we have $\mathbb{E}_{X}|\sigma^{(j)}(X)|^5 \leq (\creg/11)^5$, where $\sigma^{(j)}$ denotes the $j$th derivative of $\sigma$.
\item \label{assm:data} The distribution $\md_x$ on the covariates is $\creg$-subGaussian, and the noise has covariance at most $1$, that is $\mathbb{E}_{y \sim \md | x}(y - f^*(x))^2 \leq 1$.
\end{enumerate}

\end{assumptionp}

We introduce the control parameters 
% $\eps_m := \frac{d^{3/2}\log(mT)}{\sqrt{m}},$ and $\eps_n := \frac{\sqrt{d}\log^2(n)}{\sqrt{n}}.$
\begin{align*}
    \eps_m := \frac{d^{3/2}\log(mT)}{\sqrt{m}}, &\qquad \eps_n := \frac{\sqrt{d}\log^2(n)}{\sqrt{n}}. 
\end{align*}
We will show in Lemma~\ref{lemma:concentration} that with high probability, the error $\|\nu(\bwti, \rtmf) - \nu(\bwti, \brt)\|$ due to sampling only $m$ neurons is uniformly (over $i$ and $t$) bounded by $\eps_m$. Similarly, we will show in Lemma~\ref{lemma:nconcentration} that the error $\|\nu_{\hat{\md}}(\hxiti, \rtm) - \nu(\hxiti, \rtm)\|$ due to using the empirical data distribution $\md$ is uniformly bounded by $\eps_n$.
\begin{restatable}[Parameter-Space Error Dynamics]{lemma}{errdyn}\label{lemma:errdynamicsbody}
Suppose Assumption~\ref{assm:reg} holds. With high probability, for all $t\leq T$ and $i \in [m]$,
$$
\frac{d}{dt}\dit = D^{\perp}_t(i) \dit - \mathbb{E}_{j \sim [m]}H^{\perp}_t(i, j) \djt + {\bm{\epsilon}_{t, i}},$$
where 
$\|{\bm{\eps}_{t, i}}\| \leq 2\eps_m + \eps_n + 2\creg\left(\|\dit\|^2 + \mathbb{E}_j\|\djt\|^2\right).$
\end{restatable}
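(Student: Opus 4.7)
The plan is to obtain the identity for $\tfrac{d}{dt}\dit$ by adding and subtracting four natural intermediate velocities, using the two concentration lemmas to absorb the sampling discrepancies, and Taylor expansion to extract the Hessian contributions. Concretely, since $\tfrac{d}{dt}\dit = \nu_{\hat{\md}}(\hxiti,\rtm)-\nu(\bwti,\rtmf)$, I would write
\begin{align*}
\tfrac{d}{dt}\dit &= \underbrace{\bigl[\nu_{\hat{\md}}(\hxiti,\rtm)-\nu(\hxiti,\rtm)\bigr]}_{A_t(i)} + \underbrace{\bigl[\nu(\hxiti,\rtm)-\nu(\hxiti,\brt)\bigr]}_{B_t(i)} \\
&\quad + \underbrace{\bigl[\nu(\hxiti,\brt)-\nu(\bwti,\brt)\bigr]}_{C_t(i)} + \underbrace{\bigl[\nu(\bwti,\brt)-\nu(\bwti,\rtmf)\bigr]}_{D_t(i)}.
\end{align*}
Term $A_t(i)$ is the empirical-data error, uniformly bounded by $\eps_n$ via Lemma~\ref{lemma:nconcentration}, while $D_t(i)$ is the empirical-neuron error, uniformly bounded by $\eps_m$ via Lemma~\ref{lemma:concentration}; these account for the $\eps_n+\eps_m$ parts of the target bound on $\|\bm{\eps}_{t,i}\|$.

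For the interaction term $B_t(i)$, the $\mathscr{f}$-contribution cancels since it does not depend on $\rho$. Expanding $\rtm-\brt = m^{-1}\sum_j(\delta_{\hxitj}-\delta_{\bwtj})$ and Taylor-expanding $w'\mapsto\nabla_w\mathscr{k}(\hxiti,w')$ around $w'=\bwtj$ to first order gives the leading piece $P_{\hxiti}\,\mathbb{E}_j\,\nabla_{w'}\nabla_w\mathscr{k}(\hxiti,\bwtj)\,\djt$, with a second-order Taylor remainder of size $O(\mathbb{E}_j\|\djt\|^2)$ by the $L^5$ bound on $\sigma^{(3)}$ in~\ref{assm:sigma}. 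I would then (i) replace $\hxiti$ by $\bwti$ inside the Hessian kernel and inside $P_{\hxiti}$, at cost $O(\|\dit\|\cdot\mathbb{E}_j\|\djt\|)$ by Lipschitzness, and (ii) insert the projection $P_{\bwtj}$ using the spherical identity $\bwtj^\top\djt=-\tfrac12\|\djt\|^2$ (which holds since both iterates lie on $\sd$) at cost $O(\mathbb{E}_j\|\djt\|^2)$, producing $-\mathbb{E}_j H^\perp_t(i,j)\,\djt$ plus a residual of the form $O(\|\dit\|^2+\mathbb{E}_j\|\djt\|^2)$ after applying AM--GM to the cross-term. For the self-interaction term $C_t(i)$, a first-order Taylor expansion of $\nu(\cdot,\brt)$ around $\bwti$ gives $\nabla_w\nu(\bwti,\brt)\dit+O(\|\dit\|^2)$, again using~\ref{assm:sigma}. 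Swapping $\brt$ for $\rtmf$ inside the Jacobian introduces $[\nabla_w\nu(\bwti,\brt)-\nabla_w\nu(\bwti,\rtmf)]\dit$, which by a derivative-level version of Lemma~\ref{lemma:concentration} is bounded by $\eps_m\|\dit\|$ and absorbed as $\eps_m+\creg\|\dit\|^2$ through AM--GM. Finally, writing $\dit=P_{\bwti}\dit+O(\|\dit\|^2)$ converts $\nabla_w\nu(\bwti,\rtmf)\dit$ into $D^\perp_t(i)\dit+O(\|\dit\|^2)$.

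Summing the four contributions and collecting all remainder terms yields $\|\bm{\eps}_{t,i}\|\le 2\eps_m+\eps_n+2\creg(\|\dit\|^2+\mathbb{E}_j\|\djt\|^2)$, as stated. The main technical obstacle is the derivative-level concentration estimate used in handling $C_t(i)$: obtaining $\|\nabla_w\nu(\bwti,\brt)-\nabla_w\nu(\bwti,\rtmf)\|_{\op}\lesssim\eps_m$ uniformly in $i,t$ is what forces the $d^{3/2}$ factor (rather than $\sqrt d$) in the definition of $\eps_m$, and this is where the uniform moment control on $\sigma''$ in~\ref{assm:sigma} is essential. All other steps are routine multivariate Taylor expansion combined with the spherical identity $\bwti^\top\dit=-\tfrac12\|\dit\|^2$.
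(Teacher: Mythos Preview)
Your proposal is correct and follows essentially the same strategy as the paper: a four-term telescoping decomposition, with two terms handled by the concentration Lemmas~\ref{lemma:concentration} and~\ref{lemma:nconcentration}, and the other two by Taylor expansion together with the spherical identity $\bwti^\top\dit=-\tfrac12\|\dit\|^2$. The only difference is the ordering of the two middle terms: the paper changes the measure $\brt\to\rtm$ at position $\bwti$ and then moves the position with measure $\rtm$, whereas you change the measure at $\hxiti$ and then move the position with measure $\brt$. Your ordering lets the self-term $C_t$ produce $\bar D_t(i)\dit$ directly (avoiding the paper's auxiliary $\mathbf{M}_j$ correction), at the price of an extra $\hxiti\to\bwti$ swap inside the interaction Hessian in $B_t$; both routes generate the same $O(\|\dit\|\cdot\mathbb{E}_j\|\djt\|)$ cross-term, just in different places. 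You also correctly identify that the derivative-level concentration $\|\bar D_t(i)-D_t(i)\|\le\eps_m$ is the step that forces the $d^{3/2}$ in $\eps_m$---this is exactly the second clause of Lemma~\ref{lemma:concentration}.
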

% We defer the proof of Lemma~\ref{lemma:errdynamicsbody} to Appendix~\ref{apx:dynamics}. 
We prove Lemma~\ref{lemma:errdynamicsbody} by decomposing $\frac{d}{dt}\dit \!=\! \nu(\bwti, \rtmf) \!-\! \nu(\hxiti, \rtm)$ into four differences (see Figure~\ref{fig:dynamics}), and separating the first order terms (in $\Delta_t$) from higher order terms in these differences.

\begin{figure}[t]
\vspace{-2mm}
    \centering
    \begin{tikzpicture}[scale=0.9, every node/.append style={scale=0.9}]
    % Define the nodes
    \node[draw, rectangle, minimum width=2cm, minimum height=1cm, thick] (rect1) {$\nu(\bwti, \rtmf)$};
    \node[draw, rectangle, minimum width=2cm, minimum height=1cm, right=1cm of rect1, thick] (rect2) {$\nu(\bwti, \brt)$};
    \node[draw, rectangle, minimum width=2cm, minimum height=1cm, right=1cm of rect2, thick] (rect3) {$\nu(\bwti, \rtm)$};
    \node[draw, rectangle, minimum width=2cm, minimum height=1cm, right=1cm of rect3, thick] (rect4) {$\nu(\hxiti, \rtm)$};
    \node[draw, rectangle, minimum width=2cm, minimum height=1cm, right=1cm of rect4, thick] (rect5) {$\nu_{\hat{D}}(\hxiti, \rtm)$};

    % Draw arrows between rectangles
    \draw[->, thick] (rect1.north east) to[out=30, in=150] node[midway, above] {$\leq \eps_m \approx \frac{1}{\sqrt{m}}$} (rect2.north west);
    \draw[->, thick] (rect2.north east) to[out=30, in=150] node[midway, above] {$-\mathbb{E}_{j}H^{\perp}_t(i, j)\djt$} (rect3.north west);
    \draw[->, thick] (rect3.north east) to[out=30, in=150] node[midway, above] {$D^{\perp}_t(i)\dit$} (rect4.north west);
    \draw[->, thick] (rect4.north east) to[out=30, in=150] node[midway, above] {$\leq \eps_n \approx \frac{1}{\sqrt{n}}$} (rect5.north west);
\end{tikzpicture}
    \caption{\small Decomposing $\frac{d}{dt}\dit = \nu(\bwti, \rtmf) - \nu_{\hat{\md}}(\hxiti, \rtm)$. The approximate differences between the terms in the rectangles are given above the arrows. }
    % \caption{Decomposing $\frac{d}{dt}\dit = \nu(\bwti, \rtmf) - \nu(w_t(i), \rtm)$. The first order differences (in $\Delta_t$) between the terms in the rectangles are given above the arrows. Residual differences are bounded by the value below the arrows. See Lemma~\ref{lemma:errdynamics} for the full proof.}
    \label{fig:dynamics}
    % \vspace{-5mm}
\end{figure}

\paragraph{An integral form for $\dit$.}

Duhamel's principle gives us a way to solve the ODE in Lemma~\ref{lemma:errdynamicsbody} using the solution to a simpler dynamics which only involves the local Hessian.
\begin{definition}[Local Stability Matrix]
Define $J^{\perp}_{t, s}(w)$ to be the matrix that solves
\begin{align}\textstyle
\frac{d}{dt} J^{\perp}_{t, s}(w) = D^{\perp}_t(w)J^{\perp}_{t, s}(w); \qquad J^{\perp}_{s, s}(w) = (I - \xi_s(w)\xi_s(w)^\top).
\end{align}
We call this the local stability matrix, because $J^{\perp}_{t, s}(w) = \mathbf{J}_{\xi_{t, s}}(\xi_s(w))$, where $\xi_{t, s}(u)$ denotes the position of a neuron at time $t$ which evolves in the mean field dynamics starting at position $u$ at time $s$, and $\mathbf{J}$ denotes the Jacobian. We use the shorthand $J_{t, s}(i) := J_{t, s}(w_i)$.

\end{definition}
On the same assumptions as Lemma~\ref{lemma:errdynamicsbody}, 
Duhamel's principle yields
 \begin{align}\label{eq:duhamel}
    \dit &= \int_{0}^t J^{\perp}_{t, s}(i)\left(-\mathbb{E}_j H^{\perp}_s(i, j) \djs + \bm{\eps_{s, i}}\right)ds.
\end{align}
% where $m_s(i) := \mathbb{E}_j H^{\perp}_s(i, j) \djt$, and $\bm{\eps_{s, i}}$ is bounded as in Lemma~\ref{lemma:errdynamicsbody}.

\section{Main Result: Propagation of Chaos}\label{sec:result}

\subsection{Intuition and Key Challenges}\label{sec:intuition}
To bound $\mathcal{E}(\rtm, \rtmf)$, it suffices to analyze the dynamics of $\Delta_t$ given by the ODE in Lemma~\ref{lemma:errdynamicsbody}:
\begin{align}\label{eq:dyn}\textstyle
    \frac{d}{dt}\dit &= D^{\perp}_t(i) \dit - \mathbb{E}_{j \sim [m]}H^{\perp}_t(i, j) \djt + {\bm{\epsilon}_{t, i}} \qquad \|{\bm{\epsilon}_{t, i}}\| \leq \eps.
\end{align}
One might hope to leverage the linearity of \eqref{eq:dyn} to solve this ODE in closed form, but unfortunately, the time-dependent coefficient matrix, $\text{diag}(\dpt) - \hpt$, does not commute at different times $t$. 
 % To gain some intuition about the challenges of analyzing this PDE, and the conditions we will assume to bound it, lets consider several simpler instantiations of this PDE.
 \paragraph{Going Beyond Grönwall.}
The conventional approach (see e.g., \cite{mei2018mean,mei2019mean}), uses the maximum Lipschitzness of $\nu(w, \rho)$
% that is 
% \begin{align}
    %     \|\nu(w, \rho) - \nu(w', \rho')\| \leq L_{\max}\left(\|w - w'\| + d_{BL}(\rho, \rho')\right),
    % \end{align}
    % where $d_{BL}$ denotes the bounded Lipschitz distance between probability measures.
 -- in our spherical case, this translates to a bound on $\sup_{i, j, t} \|\dpt\|,  \|\hpt(i, j)\|$ -- to bound the RHS of \eqref{eq:dyn} as
\begin{align}\label{eq:gronwall}
    \frac{d}{dt} \|\dit\| \leq 2\on{Lip}_{\on{max}}\sup_{j \in [m]} \|\djt\| + \epsilon.
\end{align}
In standard settings, this maximum Lipschitzness is a constant, so this method can achieve no better than the bound $W_1(\rho_t^m, \brt) \leq \exp(\Theta(t))\eps$. The work of \cite{mahankali2023beyond} goes further to bound \eqref{eq:gronwall} using a tight time-dependent Lipschitz constant, yielding propagation of chaos for $\log(d)$ time. 
However, for problems with polynomial-in-$d$ time to converge, such as learning a SIM with a high information exponent, the approach in \eqref{eq:gronwall} is overly pessimistic, because both the local Lipschitzness at neuron $i$, and the $\|\djt\|$ are extremely non-uniform in $i$ and $j$ (See Figure~\ref{fig:nonuniform}).

Equation~\eqref{eq:duhamel} gives us an alternative way to approach \eqref{eq:dyn} which can leverage the non-uniform Lipschitzness. Ignoring for a moment the interaction terms in Equation~\eqref{eq:duhamel}, we have $\|\dit\| \approx \int_{0}^t J^{\perp}_{t, s}(i)\bm{\eps_{s, i}}ds$, where we recall that the perturbation matrix $J^{\perp}_{t, s}(i)$ measures of the stability of $\xi_t(w)$ with respect to perturbations at time $s$. Naively, $J^{\perp}_{t, s}(w_i)$ appears to grow at an exponential rate whenever the local landscape of the linearized loss around $\bwti$ (see Remark~\ref{rem:ll}) is non-convex.
% A key contribution of our work is the observation that in the setting where $w_i$ only escapes one saddle of order greater than 1 \mg{okay need to figure out how to state this for normal saddles.} --- no matter how high its order --- the stability $\|J^{\perp}_{t, s}(i)\|$ will be bounded polynomially in $t - s$. 

A key observation of our work is that when $w_i$ escapes certain higher-order saddles, $\|J^{\perp}_{t, s}(i)\|$ will be bounded polynomially in $t - s$. We achieve this by showing a certain \textit{self-concordance}-like property which upper bounds $\dpt(i)$ using the velocity (which is small near the saddle).
Thus one part of our assumptions will be a worst-case polynomial bound on $\|J^{\perp}_{t, s}(w)\|$ (see Assumption~\ref{assm:growth}).
% We discuss the applicability of this assumption at length in Section~\ref{sec:pfoverview:jmax}. 

\begin{figure}[t]
\vspace{-2.5mm}
    \begin{minipage}{0.33\textwidth}
        \centering
        \includegraphics[width=0.95\textwidth]{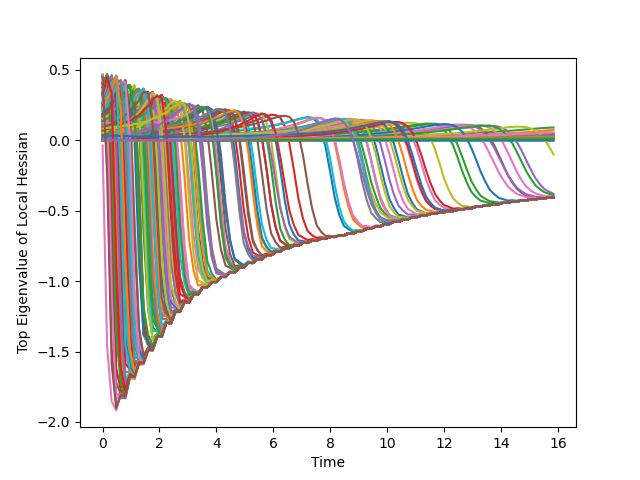}
    \end{minipage}
    \begin{minipage}{0.33\textwidth}
        \centering
  \includegraphics[width=0.95\textwidth]{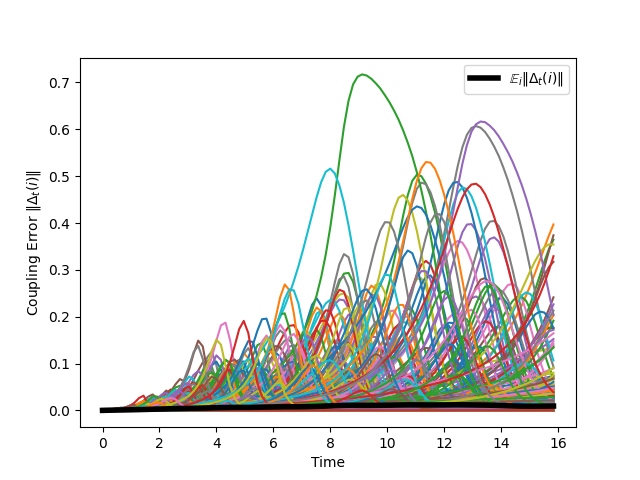}
    \end{minipage} 
    \begin{minipage}{0.33\textwidth}
        \centering
        \includegraphics[width=0.95\textwidth]{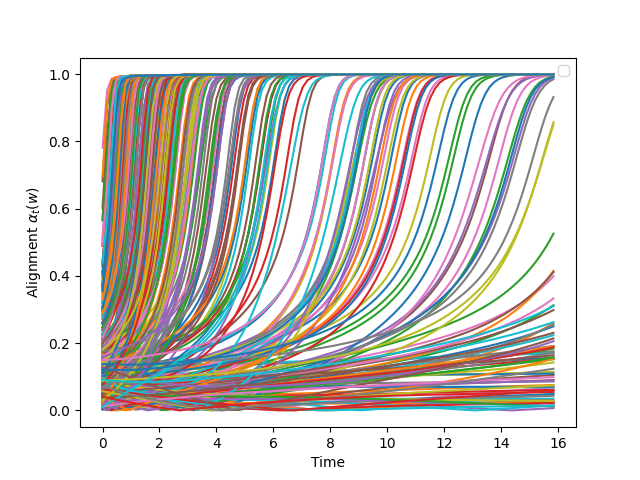}
    \end{minipage}
   \vspace{-5.5mm}
    \caption{\small Non-Uniform Dynamics in SIM with IE $4$ $(f^*(x) = \he_4(x^\top w^*)$ for $x \in \mathbb{R}^{32}$. We plot $\dpt(i), \|\dit\|, \alpha_t(w_i) = |w^*\xi_t(w_i)|$ for each neuron. Left: Top eigenvalue of the local hessians $\dpt(i)$. Center: $\|\dit\|$, Right: Alignment $\alpha_t(w_i)$ with the teacher neuron. A key challenge in the IE $> 2$ setting is the variance in Lipschitzness among the different neurons, and in $\|\dit\|$.}\label{fig:nonuniform}
   \vspace{-5mm}
\end{figure}
\vspace{-0.15cm}
\paragraph{The Interaction Term: A Blessing and a Curse.}\label{para:blessing}
% This interaction term $\hpt$ arises because of the \em interactions \em between the neurons due to the square loss, or more generally any functional $L(\rho)$ that is non-linear in $\rho$. 
At first glance, the presence of the PSD interaction term $\hpt$ in \eqref{eq:dyn} seems like it can only help us bound $\mathbb{E}_i\|\dit\|$. Indeed, if we ignore the local $\dpt$ terms in the ODE, we would have that $\frac{d}{dt}\Delta_t = - \hpt \Delta_t$, and thus we could show that $\mathbb{E}_i \|\dit\|^2$, an upper bound on the Wasserstein-2 distance $W_2(\rtm, \rtmf)$, is non-increasing.

However, the interaction of $\hpt$ and $\dpt$ can lead to precarious situations if the neurons move at non-uniform rates. To see this possibility, suppose for some neuron $w_i$, $\dit$ first grows by a polynomial factor due to $\dpt(i)$, and then propagates that error, via the interaction term, to a different neuron $w_j$. Later on, when neuron $w_j$ escapes the saddle, it will grow $\djt$ by a polynomial factor. The process can then continue by ``passing off'' the error between neurons such that it grows in an exponential fashion, without any neuron doing more than ``polynomial growth'' of the error itself. 
% To see this consider an example where we can break the neurons up into $1 \ll K \ll T$ sets $\{S_k\}_{k \in [K]}$, where during the interval $I_k = [Tk/K, T(k + 1)/K]$, for each $w_i \in S_k$, $\dpt(i)$ is positive and large (See Figure~\ref{fig:SIM}). Then during the interval $I_k$, $\mathbb{E}_i \|\dit\|\mathbf{1}(i \in S_k)$ grows by a (super)constant factor. Suppose then that during the time interval $I_{k + 1}$, this error propogates into the neurons in $S_{k + 2}$ via the interaction term, yielding $\mathbb{E}_i \|\dit\|\mathbf{1}(i \in S_{k + 2})$ on the order of $\mathbb{E}_i \|\dit\|\mathbf{1}(i \in S_k)$ at the \em start \em of the interval $I_{k + 2}$. Then during the interval $I_{k + 2}$, this error on $S_{k + 2}$ grows by (super)constant factor, and thus the process repeats $K/2$ time.

To rule out such a scenario, we will impose an assumption that leverages the intuition that in many teacher-student settings with uniform initialization, the neurons are dispersed before converging to the teacher neurons. Thus on average, the interaction term -- whose scale is dictated by inner product $w_i^\top w_j$ -- is small, and cannot propagate too much error to these neurons. Specifically, the interaction term drives changes in the error according to the interaction Hessian, $\hpt$: an error of $\djt$ at neuron $w_j$ causes a force of $-\hpt(i, j)\djt$ on the error of neuron $w_i$. Following Equation~\eqref{eq:duhamel}, this force propagates into an error of scale $R_{t, s}(i, j)\djs$ on neuron $w_i$ at time $t$, where $R_{t, s}(i, j) := J^{\perp}_{t, s}(i)H^{\perp}_s(i, j)$.

% \begin{align}
% \end{align}
The second part of Assumption~\ref{assm:growth}  states that the \em average \em of $R_{t, s}(i, j)$, over all neurons $i$ far from $\supp{\rho^*}$, is small. 
% Even in the best case scenario where $J^{\perp}_{t, s}(w_i) \preceq I$ for all $s \leq t$, the integral $-\int_{0}^t J^{\perp}_{t, s}(i)\mathbb{E}_j H^{\perp}_s(i, j)\djs$ suggests $\exp(t)$ growth in $\mathbb{E}_i \|\dit\|$. To avoid such growth, we will make an assumption that leverages the intuition that when neurons are far from the teacher neurons, they are dispersed, and thus, on average, the interaction terms are small. 
% \vspace{-0.2cm}
\paragraph{Behavior Near the Teacher Neurons.}
While the second part of Assumption~\ref{assm:growth} is quite powerful, we cannot hope that it holds for neurons near the teacher neurons. Indeed, when $i$ and $j$ are both near some $w^* \in \supp{\rho^*}$, then $\|R_{t, t}(i, j)\| = \|\hpt(i, j)\| = \Omega(1)$. Thus for neurons near $\supp{\rho^*}$, we will need to leverage the fact that $\hpt$ is PSD. A key contribution of our work is constructing a novel potential function which can leverage this term. We discuss this at length in Section~\ref{sec:pfoverview}.

\vspace{-0.3cm}
\subsection{Theorem Statement}
We will now present an informal version of our assumptions and propagation of chaos result. Due to the technicality of some of the assumptions, we defer some full statements to Section~\ref{sec:assm}. Define
\begin{align}
B_{\tau} := \{w \in \mathbb{S}^{d-1}: \exists w^* \in \text{supp}(\rho^*): \|w^* - w\|\leq \tau \}.
\end{align}
The following key assumption gives average and worst-case bounds on some of the stability parameters of the MF dynamics.
\begin{assumptionp}{Stability}[Worst-Case and Average Stability]\label{assm:growth}
    Suppose that we have
    \begin{align}
        \jmax := \sup_{s \leq t \leq T, w \in \sd}\left(\|J^{\perp}_{t, s}(w)\|, \mathbb{E}_{w \sim \rho_0} \|J^{\perp}_{t, s}(w)\|^2\right) \leq \on{poly}(d, T).\tag{J1} \label{eq:JMAX}
    \end{align}
    \mgn{\begin{align}
        \jmax(t) := \sup_{s \leq t, w \in \sd}\left(\|J^{\perp}_{t, s}(w)\|, \mathbb{E}_{w \sim \rho_0} \|J^{\perp}_{t, s}(w)\|^2\right) \leq \on{poly}(d, t). %\tag{J1} \label{eq:JMAX}
    \end{align}}
    Further suppose that for all $\tau > 0$, and given a target horizon $T>0$,
    \begin{align}
        \javg(\tau) := \sup_{s \leq t \leq T, w', v \in \mathbb{S}^{d-1}}\mathbb{E}_{w \sim \rho_0}\|J^{\perp}_{t, s}(w)H_s^{\perp}(w, w')v\|\mathbf{1}(\xi_t(w) \notin B_{\tau}) \leq \frac{\on{poly}(1/\tau)}{T}.
    \end{align}
        \mgn{Weaker 1:
            \begin{align}
        \javg(t, \tau) := \sup_{s \leq t, w', v \in \mathbb{S}^{d-1}}\mathbb{E}_{w \sim \rho_0}\|J^{\perp}_{t, s}(w)H_s^{\perp}(w, w')v\|\mathbf{1}(\xi_t(w) \notin B_{\tau}) \leq \frac{\on{poly}(1/\tau) + \Theta(\log(d))}{t}.
    \end{align}
    Weaker 2:
        \begin{align}
        \int_{s = 0}^t \javg(s, \tau) \leq \on{poly}(1/\tau) + \Theta(\log(d) + \log(t)).
    \end{align}
        }
    % weaker thing thats also works
    % \begin{align}
    %     \mathbb{E}_{w \sim \rho_0}\|D_t(w)J_{t, s}(w)H_s^{\perp}(w, w')v\|\mathbf{1}(\xi_t(w) \notin B_{\tau}) \leq \javg < 1.
    % \end{align}
    \end{assumptionp}
Next, we will state our local strong convexity assumption. We remark that such an assumption can only hold when $\rho^*$ is atomic (see Remark~\ref{rem:lsc}, and additional comments in Section~\ref{dis:lsc}). 
\begin{assumptionp}{LSC (abbv)}[Local Strong Convexity \textnormal{(Abbreviated; see Assumption~\ref{def:lsc})}]\label{assm:lsc}
We have $(\clsc, \tlsc)$ \em locally strongly convex \em up to time $T$, meaning that for any $t \leq T$, for any $w$ with $\xi_t(w) \in B_{\tau}$, we have
\vspace{-0.15cm}
% \begin{align} 
% D^{\perp}_t(w)\preceq - \clsc P^{\perp}_{\xi_t(w)} \sqrt{\mathbb{E}_x (f_{\rtmf}(x) - f^*(x))^2}.
% \end{align}
\begin{align} 
D^{\perp}_t(w)\preceq - \clsc P^{\perp}_{\xi_t(w)} \| f_{\rtmf} - f^* \| .
\end{align}
\end{assumptionp}
Both ~\ref{assm:growth} and \ref{assm:lsc} assumptions are verifiable via solving the deterministic mean-field dynamics $\rtmf$. For technical reasons, our result requires two additional conditions. First, our theorem depends on  the rank of the interaction Hessian as $\rtmf \rightarrow \rho^*$ being a constant independent of the ambient dimension $d$. 
This rank can be bounded by the following parameter, which will appear in our main theorem:
\begin{align}\label{def:cstarr}
    \cstarr := \min\left(|\supp{\rho^*}|, \on{dim}(\supp{\rho^*})^{2\on{degree}(\sigma) + 1}\right).
\end{align}
Here $\on{degree}(\sigma)$ is the degree of the polynomial $\sigma$ (or $\infty$ if $\sigma$ is not a polynomial). We do not expect such an assumption to be critical; see Section~\ref{dis:cstarr}.

Second, we require a technical symmetry condition stated in Assumption~\ref{assm:symmetries} (in Section~\ref{sec:assm}). Loosely, this requires that the atomic set $\supp{\rho^*}$ is \em transitive \em with respect to the group of rotational symmetries that describe the problem. We remark that such an assumption still covers many non-trivial problems, for instance, learning two teacher neurons in non-orthogonal positions, many neurons in orthogonal positions, or a ring of evenly spaced neurons in a circle. See Section~\ref{dis:sym} for further discussion.

We are now ready to state the main theorem.

% Finally, we state a basic regularity assumption on the data.
% \begin{assumption}\label{assm:data}
%     We assume that the distribution $\md_x$ on the covariates is subgaussian, and the noise has covariance at most $1$, that is $\mathbb{E}_{y \sim \md | x}(y - f^*(x))^2 \leq 1$.
% \end{assumption}

% \begin{assumption}[Data Distribution factorizes in $\rho^*$ subspace]\label{assm:rhostarnew}
%     Let $U, V$ be orthgonal spaces \jb{you mean orthogonal to each other?} in $\mathbb{R}^d$ where $\on{span}(\supp{\rho^*}) = V$. Then the distribution $\md$ on covariates $x$ factorizes over $U$ and $V$, that is $\md = \md_U \otimes \md_V$, where $\md_U$ is a distribution on $V$ and $\md_U$ is a distribution on $U$. Further, $\mathbb{E}_{x_U \sim \md_U}x = 0$, and $\mathbb{E}_{x_U \sim \md_U}xx^\top  = UU^\top $. \mgn{this can be weakened to the covariance being well conditioned, but I didn't bother.} 
% \end{assumption}

\begin{theorem}[Propagation of Chaos]\label{thm:main}
Assume that ~\ref{assm:reg}, \ref{assm:growth}, \ref{def:lsc} and \ref{assm:symmetries} hold up to time $T$ (if relevant). Let $C$ be a constant depending on $\creg, \clsc, \tlsc, \cstarr$, and $\delta_T:= \| f_{\rho_T^{\textsc{MF}}} - f^*\|$. Suppose $n$, $m$ are large enough such that $\jmax^4T^3 (\eps_n + \eps_m) \leq 1/C$. Then with high probability over the draw $\rho_0^m$, for all $t \leq T$,
\begin{align}
\| f_{\rtmf} - f_{\rtm}\|^2 \leq O_{\creg}(\mathbb{E}_i \|\dit\|)^2 + \frac{\log(m)}{m}
%    \mathbb{E}_x (f_{\rtmf}(x) - f_{\rho_t^m}(x))^2 \leq 2(W_1(\rtm, \rtmf))^2 + \frac{2\creg \log(m)}{m}
    %  \leq t\jmax(\eps_m + \eps_n)\exp\left(\frac{t}{T}\on{poly}(1/\min(\delta, c, \tau))\right)
     \leq (C\jmax t (\eps_m + \eps_n))^2.
\end{align}
where $\eps_m = \frac{\log(mT)\max(d^{1/2}\jmax, d^{3/2})}{\sqrt{m}}$ and $\eps_n = \frac{\sqrt{d}\log^2(n)}{\sqrt{n}}$. %and $\delta = \sqrt{\mathbb{E}_x (f_{\rho_T^{\textsc{MF}}}(x) - f^*(x))^2}$.
\end{theorem}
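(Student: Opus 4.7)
The plan is to work from the integral representation~\eqref{eq:duhamel} and control $\mathbb{E}_i\|\dit\|$ via a continuity/bootstrap argument; the function-space statement then follows immediately from Lemma~\ref{fact}. Decompose
\begin{align*}
    \dit \;=\; \int_0^t J^{\perp}_{t,s}(i)\,\bm{\epsilon}_{s,i}\,ds \;-\; I_t(i), \qquad I_t(i) := \int_0^t J^{\perp}_{t,s}(i)\,\mathbb{E}_j H^{\perp}_s(i,j)\djs\,ds.
\end{align*}
I expect the noise integral to supply the entire bound $O(\jmax t(\eps_m+\eps_n))$, while $I_t$ only contributes a subleading correction absorbable by Gr\"onwall.

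The noise term is the simplest piece. Using $\|J^{\perp}_{t,s}(w)\|\leq \jmax$ from~\ref{assm:growth} together with the bound $\|\bm{\epsilon}_{s,i}\|\leq 2\eps_m+\eps_n+2\creg(\|\dis\|^2+\mathbb{E}_j\|\djs\|^2)$ from Lemma~\ref{lemma:errdynamicsbody}, one immediately gets
\begin{align*}
    \mathbb{E}_i\Big\|\int_0^t J^{\perp}_{t,s}(i)\bm{\epsilon}_{s,i}\,ds\Big\| \;\leq\; \jmax t(2\eps_m+\eps_n) \;+\; 2\creg\jmax\!\int_0^t \mathbb{E}_j\|\djs\|^2\,ds.
\end{align*}
The quadratic correction is lower order under the bootstrap, provided the smallness hypothesis $\jmax^4 T^3(\eps_m+\eps_n)\leq 1/C$ is used to keep $\sup_{s\leq t}\mathbb{E}_j\|\djs\|=o(1)$.

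The principal obstacle is controlling $I_t(i)$, and the strategy here is to split according to whether the characteristic $\xi_t(w_i)$ lies in $B_{\tlsc}$. On the far-from-teacher region, the second part of~\ref{assm:growth} directly gives
\begin{align*}
    \mathbb{E}_i \|I_t(i)\|\,\mathbf{1}(\xi_t(w_i)\notin B_{\tlsc}) \;\leq\; \int_0^t \javg(\tlsc)\,\mathbb{E}_j\|\djs\|\,ds \;\leq\; \frac{C_{\tau}}{T}\int_0^t \mathbb{E}_j\|\djs\|\,ds,
\end{align*}
a Gr\"onwall-type linear term with rate $C_{\tau}/T$, hence absorbable since $C_{\tau}t/T$ stays bounded by a constant on $[0,T]$. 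On the near-teacher region $\{\xi_t(w_i)\in B_{\tlsc}\}$, however, $\javg$ is useless and $\|\hpt(i,j)\|$ is $\Omega(1)$; here one must harness both the local strong convexity $\dpt(i)\preceq -\clsc\|f_{\rtmf}-f^*\|\,P^{\perp}_{\xi_t(w_i)}$ (Assumption~\ref{def:lsc}) and the PSD structure of $\hpt$ (Fact~\ref{fact:PSDbody}) via a weighted Lyapunov functional of the form
\begin{align*}
    \Phi_t \;:=\; \mathbb{E}_i\,\mathbf{1}(\xi_t(w_i)\in B_{\tlsc})\,\langle \dit,\, M_t(w_i)\dit\rangle,
\end{align*}
where $M_t$ is tuned so that differentiating picks up (i) a negative self-term from $\dpt$ of strength $\clsc\|f_{\rtmf}-f^*\|$ and (ii) a manifestly $\leq 0$ quadratic form from the restriction of $\hpt$ to $B_{\tlsc}\times B_{\tlsc}$. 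The delicate cross-term between near- and far-teacher neurons coupled through $\hpt(i,j)$ is precisely where the rank bound $\cstarr$ and the symmetry condition (Assumption~\ref{assm:symmetries}) enter: together they collapse the effective interaction kernel near each teacher onto a constant-dimensional subspace, so the cross-term decomposes into $O(\cstarr)$ scalar modes that can be estimated individually using the far-teacher bound. This Lyapunov construction, rather than the familiar Gr\"onwall or pure convexity approaches, is the conceptual heart of the proof and the step I expect to be the main technical obstacle.

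Assembling the two regimes yields a self-consistent inequality of the form
\begin{align*}
    \mathbb{E}_i\|\dit\| \;\leq\; C_1\jmax t(\eps_m+\eps_n) \;+\; \frac{C_2}{T}\int_0^t \mathbb{E}_j\|\djs\|\,ds \;+\; C_3\jmax\!\int_0^t (\mathbb{E}_j\|\djs\|)^2 ds,
\end{align*}
valid as long as the bootstrap hypothesis $\sup_{s\leq t}\mathbb{E}_j\|\djs\|\leq 1$ holds. Gr\"onwall applied to the linear part, combined with the hypothesis $\jmax^4 T^3(\eps_m+\eps_n)\leq 1/C$ to dominate the quadratic remainder, upgrades this to $\mathbb{E}_i\|\dit\|\leq C\jmax t(\eps_m+\eps_n)$, closes the bootstrap by strict inequality, and via Lemma~\ref{fact} delivers the stated bound on $\|f_{\rtmf}-f_{\rtm}\|^2$.
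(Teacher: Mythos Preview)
Your high-level outline—Duhamel representation, near/far split via $B_{\tlsc}$, $\javg$ for the far block, local strong convexity plus the PSD interaction for the near block—matches the paper's strategy. The genuine gap is in the Lyapunov step: the paper does \emph{not} use a quadratic functional $\Phi_t=\mathbb{E}_i\mathbf{1}(\xi_t(w_i)\in B_{\tlsc})\langle\dit,M_t(w_i)\dit\rangle$. In fact Section~\ref{sec:pfoverview:pot} argues explicitly that any $L^2$-type potential fails property~\ref{P3}: perturbations $G$ are only controllable through $\mathbb{E}_i\|G(i)\|$, and $\langle\nabla\Phi_t,G\rangle$ for a quadratic $\Phi_t$ scales with $\mathbb{E}_i\|\dit\|\|G(i)\|$, which blows up under the heavy-tailed $\|\dit\|$ distribution of Figure~\ref{fig:nonuniform}. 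Equally important, the paper gives a concrete counterexample showing that $\Omega=\mathbb{E}_i\|\dit\|$ alone \emph{increases} under $-\hpt\Delta_t$ even at convergence (the imbalanced $\Delta$ with $p<1/2$), so you cannot simply fall back on $\Omega$ for the near block and patch in an auxiliary quadratic to absorb the interaction. Your claimed final inequality does not follow from the sketch: converting control of a quadratic $\Phi_t$ back to $\mathbb{E}_i\|\dit\|\mathbf 1(\text{near})$ loses exactly the uniformity you need, and the boundary-crossing of $\mathbf 1(\xi_t(w_i)\in B_{\tlsc})$ injects jumps of size $\|\dit\|^2$ that are not addressed.

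The paper's resolution is a different object: the global potential $\Phi_{\mq}=\Omega+\Psi_{\mq}$ of~\eqref{eq:potentialf}, where $\Psi_{\mq}(\Delta)=\sum_\lambda\eta_\lambda\big(\sum_{v\in\mathcal B_\lambda}|\mathbb{E}_i v(w_i)^\top\Delta(i)|^2\big)^{1/2}$ is built from a \emph{balanced spectral decomposition} of $\hpi$ (not $\hpt$). This is the conceptual heart you correctly flagged, but its form is crucial: it is $L^1$-like in the sense of Lemma~\ref{lemma:l1pertbody} yet still produces a negative contribution from $-\hpt$ via Lemma~\ref{lemma:dec1body}. The role of $\cstarr$ and the symmetry assumption is not merely to ``collapse onto a constant-dimensional subspace''; $\cstarr$ is what makes the decomposition balanced (Lemma~\ref{lemma:balancedsimple}, so that $\sum_\lambda\eta_\lambda^2\leq\cstarr$ and hence $\Psi_{\mq}$ obeys~\ref{P3}), and transitivity yields \emph{consistent restricted isometry} (Lemma~\ref{lemma:cri}), which lets the time-independent eigenfunctions of $\hpi$ remain eigenfunctions of the time-$t$ restriction to $B_\tau^t$—this is why the potential can be defined once rather than tracked along the flow. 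You also omit the \emph{structured} clause of Assumption~\ref{def:lsc}, which is what makes the local Hessian commute (up to $O(\tau)$) with the $U/V$ decomposition underlying $\Psi_{\mq}$ and is essential for Lemma~\ref{lemma:dec2body}. With these three lemmas, one obtains the integro-differential inequality~\eqref{eq:dynphi} for $\Phi_{\mq}$ directly (Theorem~\ref{lemma:ddtphi}), and the bound follows from the ODE comparison in Lemma~\ref{lemma:pdesoln} plus Lemma~\ref{fact}.
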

Theorem~\ref{thm:main} follows directly from Lemma~\ref{fact} and Corollary~\ref{cor:main} in Section~\ref{apx:balance}. In Theorem~\ref{theorem:SIM}, we will apply this theorem to the example of learning a single-index function with high information exponent which takes $T = \on{poly}(d)$ time to learn.

% In Section~\ref{sec:pfoverview:pot}, we introduce the key ideas in designing the potential function used to prove this theorem. Note that in order to check the conditions of Theorem~\ref{thm:main}, it suffices to understand the deterministic mean-field dynamics $\rtmf$. In Section~\ref{sec:applications}, we will apply this theorem to the example of learning a single-index function where we can prove (a) the mean-field dynamics converge to risk $\eps$ in some $T_{\eps} = \text{poly}(d/\eps)$ time; (b) we have local strong convexity and (c) $\javg \leq O\left(\frac{\text{poly}(1/\eps)}{T_{\eps}}\right)$ and $\jmax \leq \text{poly}(d/\eps)$. Thus applying Theorem~\ref{thm:main} with $t = T = T_{\eps}$ yields convergence of the finite-width, finite-sample dynamics with $m, n = O_{\eps}\left(\text{poly}(d)\right)$.

% Before proceeding, we provide several remarks on the assumptions of the theorem. We will address the applicability of Assumption~\ref{assm:growth} in Sections~\ref{sec:pfoverview:jmax} and \ref{sec:pfoverview:javg}.
\vspace{-0.25cm}
\begin{remark}[Local Strong Convexity]\label{rem:lsc}
Our local strong convexity is similar to assumptions appearing in prior mean-field analyses~\cite[Assumption A5]{chizat2022sparse}\cite[Lemma D.9]{ chen2020dynamical}. In comparison to these works, our assumption is \em stronger \em in that we require it for all $t$, not just as $t \rightarrow \infty$; this is necessary for our non-asymptotic analysis. However, our assumption is also \em weaker \em in that we allow the strong convexity parameter to depend on the loss, similarly to the notion of one-point strong convexity (see e.g., \cite{safran2021effects}). Attaining the stronger non-loss-dependent strong convexity requires a strongly convex regularization term.

In problems where the mean-field dynamics converge to $\rho^*$, our local strong convexity condition enforces that when a neuron $w_t$ is close a teacher neuron $w^* \in \supp{\rho^*}$, it will be attracted to $w^*$ and thus any small perturbations are dampened. Local strong convexity can \em only \em hold when $\rho^*$ is atomic. Similar properties have been shown for various sparse optimization problem over measures~\cite{flinth2021linear, poon2023geometry}.

% See Proposition 3.7 of flinth2021linear - its not a NN problem but they show some sort of local convexity 
\end{remark}

\vspace{-0.8cm}
\subsection{Application to Single-index Model with High Information Exponent}\label{sec:applications}
 We now study the setting of learning a well-specified single-index function $f^*(x) = \sigma(x^\top w^*)$, where $w^* \in \sd$, and $\sigma(z) = \sum_{k = k^*}^K c_k \hek(z)$, where $(a)$ $k^* \geq 4$, and $\frac{1}{\csim} \leq c_{k^*} \leq \csim \max_k {c_k}$, $(b)$ $\sigma$ is an even function\footnote{If $\sigma$ is not even, the loss may not go to zero, since $1/2$ of the neurons may be stuck on the side of the equator with $w^\top w^* < 0$.}. We restrict to the case when $k^* > 2$ because  because the escape time for $k^* = 2$ is logarithmic in $d$, and thus can be handled via Gronwall's inequality; see  Remark~\ref{rem:ie2} for further comments. 
We assume the initial distribution $\rho_0$ of the neurons is uniform on $\sd$, and the data is drawn i.i.d from the distribution $\md$, which has Gaussian covariates, and subGaussian label noise: that is, 
\begin{align}
    x \sim \mathcal{N}(0, I_d), \qquad y = f^*(x) + \zeta(x); \qquad \mathbb{E}[\zeta(x)] = 0, \quad \mathbb{E}[\zeta(x)^2] \leq 1.
\end{align}
\vspace{-0.5cm}
\begin{restatable}[PoC in Single-Index Model]{theorem}{thmsim}
\label{theorem:SIM}
Fix any $\delta>0$, and suppose $d$ is large enough in terms of $\delta$, $\csim$ and $K$.  Let $T(\delta) := \arg\min\{t : \| f_{\rtmf} - f^*\| ^2 \leq \delta^2\}$. Then $T(\delta) = O_{K, \csim}(\sqrt{d}^{k^* - 2}\delta^{-(k^* - 1)})$.
If $n \geq d^{11k^*}$ and $m \geq d^{13k^*}$, then with high probability, for all $t \leq T(\delta)$,
$$\| f_{\rtmf} - f_{\rtm}\|^2 \leq \frac{O_{K, \delta}(d^{3k^*})}{\min\left(\sqrt{m}, \sqrt{n}\right)} \leq 3 \delta^2~.$$
%$$\mathbb{E}_x (f_{\rtmf}(x) - f_{\rho_t^m}(x))^2 \leq \frac{O_{K, \delta}(d^{3k^*})}{\min\left(\sqrt{m}, \sqrt{n}\right)}~.$$
%Thus, $\mathbb{E}_x (f_{\rtmf}(x) - f_{\rho_t^m}(x))^2 \leq 3\delta^2$.
%Thus, $\| f_{\rtmf} - f_{\rtm}\|^2 \leq 3\delta^2$.
\end{restatable}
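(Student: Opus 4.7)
The plan is to invoke Theorem~\ref{thm:main} after verifying its four hypotheses (Assumptions~\ref{assm:reg}, \ref{assm:growth}, \ref{def:lsc}, \ref{assm:symmetries}) and separately establishing that the infinite-width mean-field dynamics $\rtmf$ achieves loss $\delta^2$ by time $T(\delta) = O(d^{(k^*-2)/2}\delta^{-(k^*-1)})$. Throughout, the key leverage is rotational symmetry: since $\rho_0$ is uniform on $\sd$ and the teacher is supported on $\{\pm w^*\}$ (using evenness of $\sigma$), the measure $\rtmf$ is invariant under rotations fixing $w^*$, and each characteristic $\xi_t(w)$ is determined by the scalar alignment $\alpha_t(w) := \langle w^*, \xi_t(w)\rangle$, evolving under a one-dimensional ODE of the form $\dot{\alpha}_t = (1-\alpha_t^2)\,\mathscr{g}_t(\alpha_t)$, where $\mathscr{g}_t$ depends on the marginal of $\rtmf$ in $\alpha$.

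First I would analyze the scalar ODE in three phases. The Hermite expansion gives $\mathscr{g}_t(\alpha) = c_{k^*}\alpha^{k^*-1} + O(\alpha^{k^*+1})$ near the equator, so a neuron starting at $\alpha_0 \sim d^{-1/2}$ escapes to $\alpha = \Omega(1)$ in time $O(d^{(k^*-2)/2})$ by separation of variables. A brief fast phase then drives the bulk toward $\alpha \approx 1$. The final slow refinement phase of length $\Theta(\delta^{-(k^*-1)})$ pulls the tail of the neuron distribution within $O(\delta^{2/(k^*-2)})$ of $w^*$, at which point $\|f_{\rtmf} - f^*\|^2 \leq \delta^2$, yielding the claimed $T(\delta)$.

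Next I would verify Assumption~\ref{assm:growth} using the one-dimensional reduction. Along each characteristic, $J^{\perp}_{t,s}(w)$ is essentially block-diagonal in the $w^*$-direction versus the orthogonal complement, with the radial block equal to $d\alpha_t/d\alpha_s$. Separating variables in the scalar ODE gives $d\alpha_t/d\alpha_s \approx (\alpha_t/\alpha_s)^{k^*-2}$ during the escape phase and a contracting factor thereafter. This implies the worst-case bound $\|J^{\perp}_{t,s}(w)\| \lesssim d^{(k^*-2)/2}$, giving $\jmax = \on{poly}(d,T)$; this is where the self-concordance structure of Section~\ref{sec:intuition} enters, since the local Hessian scales as a power of $\alpha$ matching the velocity. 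For $\javg(\tau)$, the key observation is that if $\xi_t(w) \notin B_\tau$ then $\alpha_t(w)$ is bounded away from $1$, and the interaction kernel $H^{\perp}_s(w,w')$ has an integral representation (Fact~\ref{fact:PSDbody}) whose magnitude scales with powers of $\xi_s(w)^\top \xi_s(w')$. Averaging over $w \sim \rho_0$ and using concentration on the sphere gains a factor of $d^{-1/2}$ per extra power, which more than compensates the $d^{(k^*-2)/2}$ growth of $\|J^{\perp}_{t,s}\|$, yielding the required $\on{poly}(1/\tau)/T$ bound.

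Verifying Assumption~\ref{def:lsc} is a direct Hessian calculation: for a neuron with $\xi_t(w) \in B_\tau$, a Taylor expansion of $\nu(\cdot, \rtmf)$ around $w^*$ shows that $D^{\perp}_t(w)$ is negative semidefinite on the tangent space with spectral gap proportional to $\|f_{\rtmf} - f^*\|$, matching the loss-dependent form of Remark~\ref{rem:lsc}. Assumption~\ref{assm:symmetries} is immediate from rotation invariance about $w^*$, and $\cstarr \leq 2$ since $|\supp{\rho^*}| \leq 2$. Regularity (Assumption~\ref{assm:reg}) is routine for polynomial $\sigma$ on Gaussian data. Plugging $\jmax = O(d^{(k^*-2)/2})$ and $T = T(\delta) = O(d^{(k^*-2)/2}\delta^{-(k^*-1)})$ into Theorem~\ref{thm:main} yields a bound $\|f_{\rtmf} - f_{\rtm}\|^2 = \tilde{O}(d^{3k^*}/\min(m,n))$, which is $\leq 3\delta^2$ whenever $m, n \geq d^{\Theta(k^*)}$ as assumed. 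The hardest step will be the quantitative verification of $\javg(\tau)$: it requires carefully tracking how the distribution of neurons on the sphere evolves from uniformity through the escape phase, showing that the accumulation of mass in $B_\tau$ is slow enough that the averaged interaction kernel remains $O(1/T)$ across the entire horizon, and not just at a single snapshot.
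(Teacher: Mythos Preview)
Your plan is essentially the same as the paper's: reduce to a one-dimensional ODE in the alignment $\alpha_t(w) = \langle w^*, \xi_t(w)\rangle$ via rotational symmetry, analyze this ODE to bound $T(\delta)$, verify Assumptions~\ref{assm:reg}, \ref{assm:growth}, \ref{def:lsc}, \ref{assm:symmetries}, and then invoke Theorem~\ref{thm:main}. One computational slip: the radial block of $J^{\perp}_{t,s}(w)$ satisfies $d\alpha_t/d\alpha_s \approx (\alpha_t/\alpha_s)^{k^*-1}$, not $(\alpha_t/\alpha_s)^{k^*-2}$---from $\dot\alpha = c\alpha^{k^*-1}$ one gets $\frac{d}{dt}\log(d\alpha_t/d\alpha_s) = (k^*-1)\frac{d}{dt}\log\alpha_t$, hence the exponent is $k^*-1$ (this is exactly the self-concordance computation of Section~\ref{sec:pfoverview:jmax}). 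This changes your $\jmax$ from $d^{(k^*-2)/2}$ to roughly $d^{k^*-1}$, but the argument is otherwise intact since both are $\on{poly}(d,T)$.

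Two places where the paper is more careful than your sketch. First, your three-phase decomposition of the ODE analysis is heuristic; the paper instead carries an inductive hypothesis (that the measure $\rtmf$ places at most $\iota$ mass on the ``middle'' interval $[\iota,1-\iota]$) which both enables and is reproved by the velocity lower bound, and the $\delta^{-(k^*-1)}$ factor in $T(\delta)$ arises because the velocity is damped by $(1-r_t)\asymp\delta$ once most neurons have converged, not from a separate slow phase. Second, for $\javg(\tau)$ the paper combines the averaging bound on $H_s^\perp$ with a quantitative tail estimate $\mathbb{E}_{w\sim\rtmf}[\alpha(w)^{k^*-1}\mathbf{1}(\alpha(w)\le 1-\tau)] \lesssim d^{-(k^*-2)/2}$, obtained by tracking how the Jacobian $(\alpha_t/\alpha_s)^{k^*-1}$ compresses the initial $d^{-1/2}$-scale distribution---you will need this to close the $O(1/T)$ bound, and it is exactly the step you flag as hardest. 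Finally, don't forget to verify the \emph{structured} clause of Assumption~\ref{def:lsc}, which in this rotationally symmetric setting follows from the fact that $D_t^\perp(w)$ restricted to the orthogonal complement of $\on{span}(w^*,\xi_t(w))$ is a scalar multiple of the identity.
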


\begin{remark*}
The above theorem provides, to the best of our knowledge, the first polynomial-width learning guarantee for one-hidden-layer neural network in the mean-field regime that holds for polynomial-in-$d$ time horizon. When $\on{degree}(\sigma) \gg k^*$, our result demonstrates the statistical advantage of the mean-field parameterization over the lazy/NTK alternative; specifically, under the NTK parameterization, when the width $m$ is sufficiently large, the sample complexity of gradient descent training on the empirical risk must scales as $n\gtrsim d^{\Theta(\on{degree}(\sigma))}$ \cite{ghorbani2021linearized}, whereas the mean-field scaling only requires $n\gtrsim d^{\Theta(k^*)}$ samples. 
\end{remark*}

\vspace{-0.5cm}
\section{Overview of Proof Ideas}\label{sec:pfoverview}

\subsection{Potential-Based Analysis to Prove Theorem~\ref{thm:main}}\label{sec:pfoverview:pot}
We introduce a potential function of $\Delta_t$ which dominates $W_1(\rtm, \brt)$. Building upon the observations from Section~\ref{sec:intuition}, we design this potential function to have the following three properties:
\begin{enumerate}[{\bfseries{P\arabic{enumi}}}]

\item\label{P1} When many neurons are near the teacher neurons, the dynamics due to the interaction hessian $\hpt$ should cause the potential to decrease.

\item\label{P2} When a neuron $w_i$ is in a locally convex region ($\dpt(i) \preceq 0$), the dynamics due to the local Hessian at $w_i$ should decrease the potential.

\item \label{P3} The change in potential due to a perturbation of $\Delta$ should be bounded proportionally to the \em average \em change over the $\Delta_i$.
%``$\ell_1$'' norm of the change in $\Delta$ (that is, the \em average \em change over the $\Delta_i$). 
\end{enumerate}
%\begin{remark}\label{rem:W2}
    A natural choice of potential function would be $\mathbb{E}_i\|\dit\|^2$ (which upper bounds $W_2(\rtm, \brt)$) because when $\rtmf \approx \rho^*$,  $D_t(i)$ are negative definite so $$\frac{d}{dt} \mathbb{E}_i\|\dit\|^2 \approx - \Delta_t^\top H^{\perp}_t \Delta_t - 2\mathbb{E}_i \dit^\top  D_t(i)\dit \leq 0.$$ However, such a function does not satisfy \ref{P3} whenever there is a lot of variance among the $\|\dit\|$.
    
    To achieve \ref{P3}, intuitively, the potential should behave more like $W_1(\rtm, \brt)$ than $W_2(\rtm, \brt)$, making $\mathbb{E}_i \|\dit\|$ another natural choice. Unfortunately, this alone does not work as potential function, because even when all neurons have converged to the support of $\rho^*$, it may \em increase \em under the dynamics from the interaction Hessian\footnote{Using $W_1(\rtm, \brt)$ alone (instead of $\mathbb{E}_i \|\dit\|$) fails for the same reason.}. 
    As an example, consider the case where $\rho^* = \delta_{w^*}$, and thus near convergence, $\hpt \approx \mathbf{1}\mathbf{1}^\top  \otimes P^{\perp}_{w^*}$, where $\mathbf{1} \in \{\sd \rightarrow \mathbb{R}\}$ sends all inputs to $1$; then if $\Delta_t$ is very ``imbalanced'' (in the sense that $\hpt \Delta_t = \mathbb{E}_i \dit$ is large), we may have $\frac{d}{dt} \mathbb{E}_i \|\dit\| > 0$. For instance suppose $\Delta_t(i) = u$ for a $p$ fraction of the neurons, and $\Delta_t(i) = 0$ for the remaining neurons. Then $\frac{d}{dt} \mathbb{E}_i \|\dit\| = -p + (1 - p) > 0$ for $p < 0.5$. To counteract the increase in $\mathbb{E}_i \|\dit\|$, we need to include in the potential function a term which decreases whenever $\Delta_t$ is very imbalanced, yet it retains a flavor of an $\ell_1$ norm. In order to tame the interactions, such a term should naturally take into account the
    eigendecomposition of $\hpt$.
To construct such a potential function, we will instead consider the eigendecomposition of the map $\hpi$ (defined explicitly in Defintion~\ref{def:hpi}), which closely approximates $\hpt$ on neurons in $B_{\tau}$ and avoids tracking the temporal evolution of the eigendecomposition. This ultimately lets us leverage the PSD structure of $\hpt$.
\begin{definition}\label{def:hpi}
    Define
    \begin{align}
        \hpit(w, w') = P^{\perp}_{\xii(w)}\nabla_{\xii(w')} \nabla_{\xii(w)} K(\xii(w), 
    \xii(w'))P^{\perp}_{\xii(w')},
    \end{align}
    where $\xii(w) := \argmin_{w^* \in \supp{\rho^*}} \|\xi_T(w) - w^*\|$ and we break ties in the argmin arbitrarily.
\end{definition}

Let $\mathcal{Z}:=L^2(\sd, \rho_0;  \mathbb{R}^d)$ be the Hilbert space with the dot product $\langle{f, g}\rangle_{\mathcal{Z}} := \mathbb{E}_{w \sim \rho_0} f(w)^\top g(w)$.
Define the action of $H: (\sd)^{\otimes 2} \to \R^{d \times d}$ on $\mathcal{Z}$ as $v \mapsto \overline{H}v(w) := \mathbb{E}_{w' \sim \rho_0}H(w, w')v(w').$
% \begin{align}
%     \overline{H} : \, \mathcal{Z} & \to \mathcal{Z} \\
%     v & \mapsto \overline{H}v(w) := \mathbb{E}_{w' \sim \mu_0}H(w, w')v(w').
% \end{align}
In Section~\ref{sec:app_balancedbody}, we verify that $\overline{\hpit}$ is well defined, self-adjoint, and due to the atomic nature of $\rho^*$, the span of $\overline{\hpit}$ is has some finite dimension $J$. Therefore, $\overline{\hpit}$ admits a spectral decomposition in $\mathcal{Z}$ in terms of an orthonormal basis $\{ \varphi_j \}_{j \leq J}$:
% of $\mathcal{W}$:
\begin{align}
\label{eq:spectral_basic0}\textstyle
    \overline{\hpit} &= \sum_{j \leq J} \lambda_j \varphi_j \otimes \varphi_j~,~ \lambda_j \in \R~,~\varphi_j \in \mathcal{Z}~,
\end{align}
such that $\|\overline{\hpit}\|_*:= \sum_j | \lambda_j | < \infty$. 
% Since $\varphi_j \in \mathcal{W}$, it follows that these eigenfunctions are piece-wise constant in $\sd$.
Note that one can have multiplicities in this spectral decomposition. 
For that purpose, denote by $\Lambda = \{ \lambda_j ; j \leq J\}$ the support of the spectrum. For each $\lambda \in \Lambda$, we denote by $V_\lambda$ the subspace spanned by $\{ \varphi_j; \lambda_j = \lambda \}$, and let $P_{\lambda}$ be the orthogonal projector onto that space.

\begin{definition}[Balanced Spectral Decomposition of $\hpi$ (\wed)]
We say that the spectral decomposition \eqref{eq:spectral_basic0} is $\cbal$-\emph{balanced} if, for all $\lambda \in \Lambda$, there exists an orthonormal basis $\mathcal{B}_\lambda$ of $V_\lambda$, and some $\eta_{\lambda} > 0$ such that 
for all $w \in \sd$,
$\sum_{ v \in \mathcal{B}_{\lambda}} v(w) v(w)^\top \preceq \eta^2_{\lambda} I_d~,$ and $\sum_{\lambda \in \Lambda} \eta^2_{\lambda} \leq \cbal$. We denote by $\mathcal{Q}:=\{ (\mathcal{B}_\lambda, \eta_\lambda)\}_{\lambda \in \Lambda}$ the resulting set of eigenfunctions and constants. 
%$j$ and all $w \in \mathcal{X}$, we have $\| \varphi_j(w)\| \leq \cbal$. 
\end{definition}

Now, for any $v \in \mathcal{Z}$ and $\Delta \in (\R^d)^{\otimes m}$, we define $\phi_v(\Delta):= | \mathbb{E}_i v(w_i)^\top \Delta(i) |$, and 
\begin{align}
    \Psi_{\mq}(\Delta) := \sum_{\lambda \in \Lambda}  
    \eta_{\lambda} \left(\textstyle \sum_{v \in \mathcal{B}_\lambda} \phi_{v}(\Delta)^2 \right)^{1/2},
\end{align}
 % with $\eta_\lambda = \frac{\text{dim}(V_\lambda)}{J}$.
%$\eta_\lambda = \frac{\text{dim}(V_\lambda)|\lambda|}{\|H\|_*}$.
%\jb{double and triple check this weights}.
Finally, our potential function is 
\begin{align}
\label{eq:potentialf}
\Phi_{\mq}(\Delta) &:= \Omega(\Delta) + \Psi_{\mq}(\Delta),    
\end{align}
with $\Omega(\Delta) = \mathbb{E}_i \|\Delta(i)\|$. 
When the context is clear, we will write $\Phi_{\mq}(t) = \Phi_{\mq}(\Delta_t)$. 
% The resulting potential $\Phi_{\mq}$ thus balances the effects of the two RHS terms guiding the evolution of $\Delta_t$. 

When the context is clear, we will write $\Phi_{\mq}(t) = \Phi_{\mq}(\Delta_t)$. 
\begin{lemma}[Balanced Spectral Decomposition]\label{lemma:balancedsimple} 
    Suppose Assumption~\ref{assm:symmetries} holds. Then there exists an spectral distribution $\mq$ %of $(\hpit, \rho_0)$
    which is $\cstarr = \min\left(|\supp{\rho^*}|, \on{dim}(\supp{\rho^*})^{\on{degree(\sigma)}}\right)$-balanced.
\end{lemma}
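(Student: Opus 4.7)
The goal is to construct a $\cstarr$-balanced spectral decomposition of $\overline{\hpit}$, leveraging the atomic structure of $\rho^*$ together with the symmetry group supplied by Assumption~\ref{assm:symmetries}. The plan has three stages: a finite-dimensional reduction, a symmetrization of each eigenbasis, and a two-pronged bound on $\sum_\lambda \eta_\lambda^2$.

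First, I reduce to finite dimensions. Writing $\supp{\rho^*} = \{w_1^*, \ldots, w_k^*\}$ with $k = |\supp{\rho^*}|$ and $j(w) := \arg\min_i \|\xii(w) - w_i^*\|$, the kernel $\hpit(w, w')$ depends on $(w, w')$ only through $(j(w), j(w'))$. An analogue of Fact~\ref{fact:PSDbody} gives $\hpit(w, w') = \mathbb{E}_x \phi_x^*(w) \phi_x^*(w')^\top$ with $\phi_x^*(w) := P^{\perp}_{\xii(w)} \sigma'(\xii(w)^\top x) x$, so $\overline{\hpit}$ is a finite-rank, self-adjoint, PSD operator on $\mathcal{Z}$. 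Any nonzero eigenfunction $v$ is constant on each cluster $A_j := \{w : j(w) = j\}$ and admits the representation $v(w) = u^v_{j(w)}$ with $u_j^v \in P^{\perp}_{w_j^*}\R^d$. Therefore $\overline{\hpit}$ admits a spectral decomposition $\sum_{j \le J} \lambda_j \varphi_j \otimes \varphi_j$ with $J < \infty$, and the balancedness property reduces to controlling $\sum_\lambda \|\sum_{v \in \mathcal{B}_\lambda} v(w)v(w)^\top\|_{\op}$ uniformly in $w$.

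Next, I use symmetry to pin down each $Q_\lambda(j) := \sum_{v \in \mathcal{B}_\lambda} u_j^v (u_j^v)^\top$. The compact group $G$ supplied by Assumption~\ref{assm:symmetries} preserves $\rho_0$ and the mean-field dynamics and acts transitively on $\supp{\rho^*}$, so $\overline{\hpit}$ commutes with the $G$-action and every eigenspace $V_\lambda$ is $G$-invariant. The matrix $Q_\lambda(j)$ is basis-independent (it is the diagonal of the orthogonal projector onto $V_\lambda$), is $G_j$-invariant where $G_j \le G$ is the stabilizer of $w_j^*$, and by transitivity the family $\{Q_\lambda(j)\}_j$ is mutually $G$-conjugate, so $\|Q_\lambda(j)\|_{\op}$ is independent of $j$. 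Setting $\eta_\lambda^2 := \|Q_\lambda(j)\|_{\op}$ thus yields $Q_\lambda(j) \preceq \eta_\lambda^2 I_d$ uniformly in $w$. Applying Schur's lemma to the $G_j$-action on $P^{\perp}_{w_j^*}\R^d$, $Q_\lambda(j) = \eta_\lambda^2 \Pi_\lambda(j)$ for an orthogonal projector $\Pi_\lambda(j)$ onto a $G_j$-isotypic subspace, and since $p_j = 1/k$ by transitivity, $\eta_\lambda^2 \cdot \mathrm{rank}(\Pi_\lambda(j)) = \tr Q_\lambda(j) = \dim V_\lambda$.

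Finally, I bound $\sum_\lambda \eta_\lambda^2$ in two complementary ways to recover the two terms inside the minimum defining $\cstarr$. For the combinatorial bound $\le |\supp{\rho^*}|$, one has $\sum_\lambda Q_\lambda(j) \preceq P^{\perp}_{w_j^*} \preceq I_d$, so each $\eta_\lambda^2 \le 1$; and Frobenius reciprocity applied to $\on{Ind}_{G_1}^G(P^{\perp}_{w_1^*}\R^d)$ bounds the number of distinct $G$-isotypic components appearing in the image of $\overline{\hpit}$ by $k$. For the polynomial bound $\le \on{dim}(\supp{\rho^*})^{\on{degree}(\sigma)}$, I Hermite-expand $\phi_x^*$ in the projection of $x$ onto $V := \on{span}(\supp{\rho^*})$: since $\sigma'(\xii(w)^\top x)$ is a polynomial of degree at most $\on{degree}(\sigma) - 1$ in that projection, the span of $\{\phi_x^* : x \in \R^d\}$ in $\mathcal{Z}$ has dimension at most $\on{dim}(V)^{\on{degree}(\sigma)}$, which upper-bounds $\mathrm{rank}(\overline{\hpit})$ and therefore $\sum_\lambda \eta_\lambda^2 \le \sum_\lambda \dim V_\lambda$. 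The main obstacle is making the combinatorial bound rigorous: one must show that the number of distinct eigenvalues of $\overline{\hpit}$, after accounting for the multiplicity spaces of each $G$-isotypic component, is controlled by $|\supp{\rho^*}|$ rather than by the ambient dimension, which requires the specific structure of $\supp{\rho^*}$ and its stabilizers assumed in Assumption~\ref{assm:symmetries} beyond mere transitivity.
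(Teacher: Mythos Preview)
Your proposal has a genuine gap: both of your bounds on $\sum_\lambda \eta_\lambda^2$ fail as stated, and the missing ingredient is the factorization $\md_x = \md_U \otimes \md_V$ with isotropic $U$-marginal from Assumption~\ref{assm:symmetries}~\ref{assm:rhostarnew}, which you never invoke.

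For the polynomial bound, your claim that $\mathrm{rank}(\overline{\hpit}) \le \dim(V)^{\deg(\sigma)}$ is false. Take $x = v + u$ with $v \in V$, $u \in U$: then the $U$-component of $\phi_x^*(w)$ is $\sigma'(\xii(w)^\top v)\,u$, and as $(v,u)$ vary these span a subspace of $\mathcal{Z}$ of dimension proportional to $\dim(U)\approx d$. So the rank of $\overline{\hpit}$ scales with $d$, not with $\dim(V)^{\deg(\sigma)}$. For the combinatorial bound, the inequality $\sum_\lambda Q_\lambda(j) \preceq I_d$ also fails. With transitivity one only gets $\sum_\lambda Q_\lambda(j) \preceq k I_d$ from $V_j V_j^\top \preceq p_j^{-1} I_d$; concretely, restricting to $U$ gives $\sum_\lambda Q_\lambda(j)\big|_U = \big(\sum_y |y(w_j^*)|^2\big)\,UU^\top$, whose operator norm is $|\cY|$, not $1$. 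Your Frobenius-reciprocity count on the number of eigenvalues does not compensate, since you need $\sum_\lambda \|Q_\lambda(j)\|_{\op}$, not $\|\sum_\lambda Q_\lambda(j)\|_{\op}$.

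The paper's proof avoids both issues by first using \ref{assm:rhostarnew} to split $\hpit(w,w') = M_1(\xii(w),\xii(w'))\,UU^\top + M_2(\xii(w),\xii(w'))$, where $M_1$ is a scalar kernel and $M_2$ a matrix-valued kernel on the finite set $\supp{\rho^*}$, each of rank at most $\cstarr$. Eigenfunctions are then built explicitly as $v^f(w)=f(\xii(w))$ and $v^{y,i}(w)=y(\xii(w))U_i$ from orthonormal eigenbases $\cF,\cY$ of $M_2,M_1$; the point is that the $\dim(U)$ many $v^{y,i}$ for a single $y$ contribute only $|y(\xii(w))|^2\,UU^\top$ to $\sum_{v\in\mathcal{B}_\lambda} v(w)v(w)^\top$, hence count as one toward $\eta_\lambda^2$ rather than $\dim(U)$. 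Transitivity~\ref{def:transitiveI1} is then used via averaging over the group (not Schur's lemma) to show $\sum_{v\in\mathcal{B}_\lambda} v(w)v(w)^\top \preceq (\#\{f:\lambda_f=\lambda\}+\#\{y:\lambda_y=\lambda\})\,I$, whence $\sum_\lambda \eta_\lambda^2 = |\cF|+|\cY| \le 2\cstarr$. Your abstract representation-theoretic route could perhaps be repaired, but only after reinstating the $U/V$ split to separate the high-multiplicity $U$-directions from the low-rank $V$-directions.
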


The next three lemmas show that the potential function $\Phi_{\mq}$ has the desired properties \ref{P1}-\ref{P3}.
\begin{restatable}[Descent with Respect to Interaction Term]{lemma}{dech}\label{lemma:dec1body}
Let $\Phi_{\mq}(t)$ be as defined above, where $\mq$ is a $\cbal$-balanced spectral decomposition of $\hpit$. Then for any $\tau > 0$ for which the concentration event of Lemma~\ref{lemma:conctau} holds for $S = B_{\tau}$, we have
    $$
        \langle{\nabla \Phi_{\mq}(t), -\hpt \Delta_t}\rangle\leq (1 + \cbal)\mathbb{E}_i \|\mathbb{E}_j \hpt(i, j)\djt\|\mathbf{1}(\xi_t(w_i) \notin B_{\tau}) + \mathcal{E}_{\ref{lemma:dec1body}},$$
    where  $\mathcal{E}_{\ref{lemma:dec1body}} = C_{\ref{lemma:dec1body}}(\mathbb{E}_i \|\dit\|\mathbf{1}(\xi_t(w_i) \notin B_{\tau}) + (\tau + \cbal \eps_m^{\ref{lemma:conctau}}) \Omega(t))$ for some $C_{\ref{lemma:dec1body}} = O_{\creg, \cbal}(1)$.
\end{restatable}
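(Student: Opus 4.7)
The plan is to decompose $\langle \nabla \Phi_{\mq}(t), -\hpt \Delta_t \rangle$ into contributions from $\Omega$ and $\Psi_\mq$, and to exploit an approximate cancellation between them on the inside region $\{\xi_t(w_i) \in B_\tau\}$. Direct computation gives $\nabla_i \Omega(\Delta_t) = \dit/\|\dit\|$ and $\nabla_i \Psi_\mq(\Delta_t) = \sum_{\lambda \in \Lambda} \frac{\eta_\lambda}{\Psi_\lambda(\Delta_t)} \sum_{v \in \mathcal{B}_\lambda} g_v\, v(w_i)$, where $g_v := \mathbb{E}_i v(w_i)^\top \dit$ and $\Psi_\lambda(\Delta_t)^2 = \sum_{v \in \mathcal{B}_\lambda} g_v^2$. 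The workhorse is a pointwise gradient bound $\|\nabla_i \Phi_{\mq}(\Delta_t)\| \leq 1 + \cbal$, obtained from Cauchy--Schwarz together with the balanced estimate $\|\sum_{v \in \mathcal{B}_\lambda} v(w_i) v(w_i)^\top\|_{\op} \leq \eta_\lambda^2$, which yields $\|\sum_v g_v v(w_i)\| \leq \eta_\lambda \Psi_\lambda$ and hence $\|\nabla_i \Psi_{\mq}\| \leq \sum_\lambda \eta_\lambda^2 \leq \cbal$, while $\|\nabla_i \Omega\| = 1$. Writing $M_i := \mathbb{E}_j \hpt(i,j) \djt$, the outside-$B_\tau$ contribution $-\mathbb{E}_i(\nabla_i \Phi_\mq)^\top M_i \mathbf{1}(\xi_t(w_i) \notin B_\tau)$ is thus bounded by $(1+\cbal)\mathbb{E}_i \|M_i\| \mathbf{1}(\xi_t(w_i) \notin B_\tau)$, supplying the lemma's main term.

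For the inside region, I would swap $\hpt$ for $\hpit$ and split the analysis by whether the auxiliary index $j$ is also in $B_\tau$: when both $\xi_t(w_i), \xi_t(w_j) \in B_\tau$, Lemma~\ref{lemma:conctau} combined with continuity of the interaction Hessian gives $\|\hpt(i,j) - \hpit(i,j)\| \leq O(\tau + \epsmh)$, contributing $O((\tau+\epsmh)\Omega(t))$ after pairing with $\|\djt\|$; when $\xi_t(w_j) \notin B_\tau$, the difference $\|\hpt(i,j) - \hpit(i,j)\|$ is only uniformly bounded, contributing $O(\mathbb{E}_j \|\djt\| \mathbf{1}(\xi_t(w_j) \notin B_\tau))$. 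Paired with the gradient bound, this shows the swap error $\mathbb{E}_i (\nabla_i \Phi_\mq)^\top (M_i - M_i^\infty) \mathbf{1}(\xi_t(w_i) \in B_\tau)$ is at most the claimed $\mathcal{E}_{\ref{lemma:dec1body}}$, with $M_i^\infty := \mathbb{E}_j \hpit(i, j) \djt$. The remaining inside term $-\mathbb{E}_i (\nabla_i \Phi_\mq)^\top M_i^\infty \mathbf{1}(\xi_t(w_i) \in B_\tau)$ is rewritten as the full-sum quantity $-\mathbb{E}_i (\nabla_i \Phi_\mq)^\top M_i^\infty$ plus a complementary outside restriction; on the full-sum part, the Mercer-like identity $\hpit(w, w') = \sum_\lambda \lambda \sum_{v \in \mathcal{B}_\lambda} v(w) v(w')^\top$ gives $M_i^\infty = \sum_\lambda \lambda \sum_v v(w_i) g_v$, up to concentration of empirical inner products $\mathbb{E}_i v(w_i)^\top v'(w_i)$ (controlled at scale $\cbal\epsmh$ by the same lemma). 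The $\Psi_\mq$-piece then evaluates to $-\sum_\lambda \eta_\lambda \lambda \Psi_\lambda$ by near-orthonormality of $\{v\}$ within and across blocks, while the $\Omega$-piece satisfies $|\mathbb{E}_i (\nabla_i \Omega)^\top M_i^\infty| \leq \sum_\lambda \eta_\lambda \lambda \Psi_\lambda$ by exactly the same Cauchy--Schwarz computation used for the gradient bound; since Fact~\ref{fact:PSDbody} gives $\lambda \geq 0$, the two cancel. The complementary outside piece is bounded via the spectral estimate $\|M_i^\infty\| \leq \sum_\lambda \lambda \eta_\lambda \Psi_\lambda$ together with $\|\nabla_i \Phi_\mq\| \leq 1 + \cbal$ and absorbed into $\mathcal{E}$.

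The main obstacle is the meticulous constant-tracking required to land exactly $(1+\cbal)$ in front of the main term: this forces the $\Omega$ and $\Psi_\mq$ gradients to be extracted in spectral coordinates matched to $\hpit$ so that their ``ideal'' contributions cancel exactly in the population limit, while the three layers of approximation---the substitution $\hpt \to \hpit$ on $B_\tau$, the empirical-vs-population concentration of second moments of eigenfunctions $\mathbb{E}_i v(w_i)^\top v'(w_i)$, and the inside-vs-outside restriction manipulations---must aggregate to only the stated $O_{\creg,\cbal}(\mathbb{E}_i\|\dit\|\mathbf{1}(\xi_t(w_i)\notin B_\tau) + (\tau + \cbal\epsmh)\Omega(t))$ error, with no residual terms proportional to $\Omega(t)$ alone or to global Lipschitz constants that would destroy the propagation-of-chaos argument downstream.
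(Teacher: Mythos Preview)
Your overall structure—split by whether $\xi_t(w_i)\in B_\tau$, swap $\hpt\to\hpit$ on the inside, then exploit spectral cancellation between $\Omega$ and $\Psi_\mq$—matches the paper, and your full-sum cancellation $-\mathbb{E}_i(\nabla_i\Phi_\mq)^\top M_i^\infty\le 0$ is correct. The gap is the ``complementary outside restriction'' $+\mathbb{E}_i(\nabla_i\Phi_\mq)^\top M_i^\infty\,\mathbf{1}(\xi_t(w_i)\notin B_\tau)$, which you assert can be absorbed into $\mathcal{E}_{\ref{lemma:dec1body}}$. It cannot. Your own spectral estimate gives $\|M_i^\infty\|\le\sum_\lambda\lambda\eta_\lambda\Psi_\lambda\le\creg\cbal\,\Omega(t)$ \emph{uniformly} in $i$, so this piece is bounded only by $(1+\cbal)\creg\cbal\,\Omega(t)\,\mathbb{P}_i[\xi_t(w_i)\notin B_\tau]$. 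Since $\mathbb{P}_i[\xi_t(w_i)\notin B_\tau]$ is $\Theta(1)$ through much of training and $\|M_i^\infty\|$ carries no factor of $\|\dit\|$, you are left with a residual of size $O_{\creg,\cbal}(\Omega(t))$ with no $\tau$ or $\epsmh$ prefactor—exactly the kind of term you flag in your last paragraph as one that ``would destroy the propagation-of-chaos argument downstream.''

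The paper does not pass to the full sum. It restricts \emph{both} indices $i,j$ to $B_\tau^t$ from the outset and, for the $\Psi_\mq$ piece, invokes the \emph{consistent restricted isometry} property (Definition~\ref{def:cri}, Lemma~\ref{lemma:cri}), which is where the Symmetry assumption enters: because each eigenfunction $v$ depends on $w$ only through $\xii(w)$, and the conditional law of $\xii(w)$ given $\xi_t(w)\in B_\tau$ coincides with its unconditional law, one has $\mathbb{E}_{w'\sim\rho_0}\hpit(w,w')v(w')\mathbf{1}(\xi_t(w')\in B_\tau)=\lambda_v v(w)\,\mathbb{P}_{\rho_0}[B_\tau^t]$. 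This yields $\langle\nabla\phi_v,\Delta_t\rangle_{\hpit}^{B_\tau^t}\approx\mathbb{P}_{\rtmf}[B_\tau]\,\lambda_v\,\phi_v(t)$ (Lemma~\ref{lemma:ddtphiv}), so the $\Psi_\mq$ contribution acquires a factor $\mathbb{P}[B_\tau]$ that matches the $\mathbb{P}_i[B_\tau^t]$ arising from the pointwise bound in the $\Omega$ computation. The cancellation then occurs with matched restriction factors, and no complementary term is ever generated. Your full-sum-plus-complement manipulation was an attempt to bypass CRI, but CRI is precisely the ingredient that makes the restricted cancellation close.
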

\begin{restatable}[Descent with Respect to Local Term]{lemma}{decd}\label{lemma:dec2body}
Suppose Assumption~\ref{def:lsc} holds with $(\clsc, \tlsc)$. Let $\mq$ be a $\cbal$-balanced spectral distribution. Then with $\cdecd = O_{\creg, \cbal}(1)$, we have
$$\langle{\nabla \Phi_{\mq}(t), \dpt \odot \Delta_t}\rangle \leq - \Big(\textstyle\frac{c\sqrt{L_{\md}(\rtmf)}}{2} - \cdecd \tau \Big) \Phi_{\mq}(t) + \cdecd\mathbb{E}_{i}\|\dit\|\mathbf{1}(\bwti \notin B_{\tau}) + \cbal\mathbb{E}_i \|\dit\|^2.$$
\end{restatable}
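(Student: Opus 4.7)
The plan is to split the inner product as $\langle \nabla \Phi_{\mq}(t), \dpt \odot \Delta_t\rangle = \langle \nabla \Omega, \dpt \odot \Delta_t\rangle + \langle \nabla \Psi_{\mq}, \dpt \odot \Delta_t\rangle$, and to further split each summand into contributions from neurons with $\bwti \in B_\tau$ (where Assumption~\ref{def:lsc} applies) and those outside (where only the regularity of $\dpt$ is available). I would begin by recording the subgradient formulas. Since $\Omega(\Delta) = \mathbb{E}_i \|\Delta(i)\|$ and $\Psi_{\mq}(\Delta) = \sum_\lambda \eta_\lambda \|P_\lambda \Delta\|_{\mathcal{Z}}$ with $P_\lambda$ the orthogonal projector in $\mathcal{Z}$ onto $V_\lambda$, we get $\nabla \Omega(\Delta)(i) = m^{-1}\Delta(i)/\|\Delta(i)\|$ and $\nabla \Psi_{\mq}(\Delta)(i) = m^{-1}\sum_\lambda \eta_\lambda u_\lambda(w_i)$, where $u_\lambda := P_\lambda \Delta/\|P_\lambda \Delta\|_{\mathcal{Z}}$ is a unit vector in $V_\lambda \subseteq \mathcal{Z}$. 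Two consequences of the balanced property are crucial: the pointwise bound $\|u_\lambda(w)\| \leq \eta_\lambda$ yields $\|m\nabla \Phi_{\mq}(\Delta)(i)\| \leq 1 + \cbal$ uniformly in $i$, and $\Phi_{\mq}$ is positively $1$-homogeneous, so Euler's identity gives $\langle \nabla \Phi_{\mq}(\Delta), \Delta\rangle = \Phi_{\mq}(\Delta)$.

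For the outside contribution, I would combine the uniform bound on $\|m\nabla \Phi_{\mq}(\Delta_t)(i)\|$ with $\|\dpt(i)\|_{\mathrm{op}} = O_{\creg}(1)$ (obtained by differentiating $\nu$ under~\ref{assm:reg} Assumption~\ref{assm:sigma}) to control that piece by $O_{\creg, \cbal}(1)\mathbb{E}_i \|\dit\| \mathbf{1}(\bwti \notin B_\tau)$, producing the second term in the claim. For the inside contribution to the $\Omega$ part, Assumption~\ref{def:lsc} together with the spherical identities $\bwti^\top \dit = -\tfrac{1}{2}\|\dit\|^2$ and $\|P^\perp_{\bwti}\dit\|^2 = \|\dit\|^2(1 - \tfrac{1}{4}\|\dit\|^2)$ (both following from $\hxiti, \bwti \in \sd$) give $\dit^\top \dpt(i)\dit / \|\dit\| \leq -\clsc\|f_{\rtmf}-f^*\|\,\|\dit\|(1 - \tfrac{1}{4}\|\dit\|^2)$; summing over $\bwti \in B_\tau$ and writing $\mathbb{E}_i \|\dit\|\mathbf{1}_{\mathrm{in}} = \Omega(\Delta_t) - \mathbb{E}_i \|\dit\|\mathbf{1}_{\mathrm{out}}$, this contributes $-\clsc\|f_{\rtmf}-f^*\|\,\Omega(\Delta_t) + O(\cbal)\mathbb{E}_i\|\dit\|^2$ up to the outside indicator already accounted for.

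The subtle step, and the main obstacle, is the inside contribution to $\langle \nabla \Psi_{\mq}, \dpt \odot \Delta_t\rangle$, because Assumption~\ref{def:lsc} is a matrix inequality directly controlling only symmetric quadratic forms such as $\dit^\top \dpt(i)\dit$, whereas the $\Psi_{\mq}$ gradient introduces an asymmetric pairing $u_\lambda(w_i)^\top \dpt(i)\dit$ in which $u_\lambda(w_i)$ and $\dit$ are generically different vectors. To bridge this, I would use that every non-trivial eigenfunction $v \in \bigcup_\lambda \mathcal{B}_\lambda$ of $\overline{\hpit}$ satisfies $v(w) = P^\perp_{\xii(w)}v(w)$ (from the outer $P^\perp$ factors in Definition~\ref{def:hpi}), so on $B_\tau$, where $\|\bwti - \xii(w_i)\| \leq O(\tau)$, each $u_\lambda(w_i)$ is $O(\tau \eta_\lambda)$-close to the tangent space at $\bwti$. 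Combined with $\dpt(i) = P^\perp_{\bwti}\dpt(i)P^\perp_{\bwti}$ (the Riemannian-Hessian form, symmetric on the tangent space, from Remark~\ref{rem:ll}) and a Cauchy--Schwarz-type comparison against $\|P^\perp_{\bwti}\dit\|$, Assumption~\ref{def:lsc} yields $\sum_\lambda \eta_\lambda \mathbb{E}_i u_\lambda(w_i)^\top \dpt(i)\dit\,\mathbf{1}_{\mathrm{in}} \leq -\tfrac{\clsc\|f_{\rtmf}-f^*\|}{2}\Psi_{\mq}(\Delta_t) + O_{\creg, \cbal}(\tau)\Phi_{\mq}(\Delta_t)$, where the factor $1/2$ reflects the Cauchy--Schwarz loss in the bridging step, and the $O(\tau)$ slack is exactly what gets absorbed into the $\cdecd \tau \Phi_{\mq}(t)$ term of the statement. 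Summing the $\Omega$ and $\Psi_{\mq}$ contributions, using $\|f_{\rtmf} - f^*\| = \sqrt{L_{\md}(\rtmf)}$ (up to the irreducible label noise, absorbed into constants via~\ref{assm:reg} Assumption~\ref{assm:data}), and collecting error terms produces the stated inequality with $\cdecd = O_{\creg, \cbal}(1)$.
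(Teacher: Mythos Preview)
Your overall decomposition (split $\Phi_{\mq}=\Omega+\Psi_{\mq}$, and further split into $\bwti\in B_\tau$ versus $\bwti\notin B_\tau$) matches the paper, and your treatment of the $\Omega$-part and of the outside contributions is essentially correct. The gap is in the inside contribution to $\langle\nabla\Psi_{\mq},\dpt\odot\Delta_t\rangle$.

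You correctly identify the difficulty: the one-sided inequality $\dpt(i)\preceq -\clsc\|f_{\rtmf}-f^*\|\,P^\perp_{\bwti}$ controls the quadratic form $\dit^\top\dpt(i)\dit$ but says nothing about the bilinear form $u_\lambda(w_i)^\top\dpt(i)\dit$ when $u_\lambda(w_i)$ and $\dit$ differ. Your proposed fix, a ``Cauchy--Schwarz-type comparison,'' cannot close this. Concretely, if on the tangent space $\dpt(i)=\mathrm{diag}(-\clsc\delta,-C)$ with $C\gg\clsc\delta$ (still satisfying the matrix inequality), then for $u=e_1$ and $\dit=e_2$ one has $u^\top\dpt(i)\dit=0$ while $\phi_v$-type terms can also vanish; but rotating slightly, $u=(e_1+e_2)/\sqrt2$ and $\dit=e_1-e_2$ gives $u^\top\dpt(i)\dit=(C-\clsc\delta)/\sqrt2$, which is large and positive, with no compensating $\Psi_{\mq}$ term. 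No Cauchy--Schwarz inequality on a negative-semidefinite matrix yields a sign-preserving lower bound of the form $u^\top\dpt\Delta\le -\tfrac{\clsc\delta}{2}\,u^\top\Delta$.

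The paper avoids this by invoking the \emph{structured} clause of Assumption~\ref{def:lsc}, which you do not use: on $B_\tau$ one has $\dpt(i)=D^{\mathrm{good}}_t(i)+D^{\mathrm{bad}}_t(i)$ with $D^{\mathrm{good}}_t(i)=-c_1\,P^\perp_{\xii(w_i)}VV^\top P^\perp_{\xii(w_i)}-c_2\,UU^\top$ block-scalar in the $U/V$ decomposition and $\|D^{\mathrm{bad}}_t(i)\|\le \tfrac{\clsc\delta}{2\sqrt{\cbal}}+\creg\tau$. Crucially, the balanced basis $\mq$ produced in Lemma~\ref{lemma:balancedbody} has each eigenfunction $v$ with $v(w)\in\mathrm{span}(U)$ for all $w$, or $v(w)\in\mathrm{span}(V)$ for all $w$. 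Hence $v(w)^\top D^{\mathrm{good}}_t(i)\dit=-c_j\,v(w)^\top\dit$ (up to $O(\tau)+O(\|\dit\|^2)$ corrections from replacing $P^\perp_{\xii}$ by $P^\perp_{\bwti}$), which gives $\langle\nabla\phi_v,\,D^{\mathrm{good}}_t\odot\Delta_t\rangle\le -\clsc\delta\,\phi_v(t)$ directly, without any Cauchy--Schwarz loss. The $D^{\mathrm{bad}}$ piece is then handled by the $L^1$-perturbation Lemma~\ref{lemma:l1pertbody} and contributes the $\tfrac{\clsc\delta}{2}\Omega(t)+O(\tau)\Phi_{\mq}(t)$ slack. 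This is precisely the mechanism the paper flags in Section~\ref{dis:sym} as the reason the structured condition is needed.
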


\begin{restatable}[L1 Perturbation Lemma]{lemma}{pert}\label{lemma:l1pertbody}
Let $\mq$ be a $\cbal$-balanced spectral distribution. %of $(\hpit, \rho_0)$. 
Let $G : [m] \rightarrow \mathbb{R}^d$. Then
$\left|\langle{\nabla \Phi_{\mq}(t), G}\rangle \right|\leq (1 + \cbal) \mathbb{E}_i \|G(i)\|.$
\end{restatable}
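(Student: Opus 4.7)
The plan is to use the additive decomposition $\Phi_{\mq}(\Delta) = \Omega(\Delta) + \Psi_{\mq}(\Delta)$ and bound the directional derivative of each piece separately in the direction $G$, then conclude by the triangle inequality. Throughout, I interpret $\langle \nabla\, \cdot\, , G\rangle$ as a one-sided directional derivative and handle points of non-smoothness (where some $\Delta(i)=0$ or $a_v(\Delta)=0$) by the standard subgradient argument; any choice of subgradient yields the same upper bound, since the calculation below only uses absolute values and Euclidean norms.

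For $\Omega(\Delta) = \mathbb{E}_i \|\Delta(i)\|$, the component of the gradient at neuron $i$ is the unit vector $\Delta(i)/\|\Delta(i)\|$, so a pointwise Cauchy--Schwarz followed by $\mathbb{E}_i$ gives $|\langle \nabla \Omega, G\rangle| \leq \mathbb{E}_i \|G(i)\|$. The bulk of the proof concerns $\Psi_{\mq}$. Setting $a_v(\Delta) := \mathbb{E}_i v(w_i)^\top \Delta(i)$ so that $\phi_v = |a_v|$, and $F_\lambda(\Delta) := \bigl(\sum_{v \in \mathcal{B}_\lambda} a_v(\Delta)^2\bigr)^{1/2}$, the chain rule produces
\begin{equation*}
\langle \nabla F_\lambda, G\rangle \;=\; \frac{1}{F_\lambda(\Delta)} \sum_{v \in \mathcal{B}_\lambda} a_v(\Delta)\, a_v(G),
\end{equation*}
and a Cauchy--Schwarz on this finite sum yields $|\langle \nabla F_\lambda, G\rangle| \leq F_\lambda(G)$, reducing matters to the key estimate
\begin{equation*}
F_\lambda(G) \;\leq\; \eta_\lambda\, \mathbb{E}_i \|G(i)\|.
\end{equation*}

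To establish this, I would introduce at each $w \in \sd$ the linear map $\Phi_\lambda(w) : \mathbb{R}^d \to \mathbb{R}^{|\mathcal{B}_\lambda|}$ defined by $(\Phi_\lambda(w) u)_v = v(w)^\top u$. Its Gram matrix is $\Phi_\lambda(w)^\top \Phi_\lambda(w) = \sum_{v \in \mathcal{B}_\lambda} v(w) v(w)^\top \preceq \eta_\lambda^2 I_d$ by the $\cbal$-balanced hypothesis, hence $\|\Phi_\lambda(w)\|_{\op} \leq \eta_\lambda$. Since $F_\lambda(G)^2 = \sum_{v \in \mathcal{B}_\lambda} a_v(G)^2 = \|\mathbb{E}_i \Phi_\lambda(w_i) G(i)\|^2$, the triangle inequality for the Euclidean norm combined with the operator-norm bound delivers the key estimate. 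Summing over $\lambda \in \Lambda$ and invoking $\sum_\lambda \eta_\lambda^2 \leq \cbal$ gives $|\langle \nabla \Psi_{\mq}, G\rangle| \leq \sum_\lambda \eta_\lambda^2\, \mathbb{E}_i \|G(i)\| \leq \cbal\, \mathbb{E}_i \|G(i)\|$, and adding the $\Omega$ bound yields the stated inequality. The only non-routine step is the key estimate; its trick is the reformulation of the $\ell^2$-type quantity $F_\lambda(G)$ as the Euclidean norm of an $\mathbb{R}^{|\mathcal{B}_\lambda|}$-valued vector that is linear in $G$, which is precisely where the balanced hypothesis on $\mq$ is used to convert an $\ell^2$-flavored bound into the desired $\ell^1$-flavored one.
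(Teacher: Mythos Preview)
Your proof is correct and follows essentially the same approach as the paper: decompose $\Phi_{\mq}=\Omega+\Psi_{\mq}$, bound the $\Omega$-part by the unit-norm gradient, and for $\Psi_{\mq}$ apply Cauchy--Schwarz over $v\in\mathcal{B}_\lambda$ together with the balanced condition $\sum_{v\in\mathcal{B}_\lambda}v(w)v(w)^\top\preceq\eta_\lambda^2 I$ to get the factor $\sum_\lambda\eta_\lambda^2\le\cbal$. The only cosmetic difference is that the paper applies Cauchy--Schwarz after passing to $|G(i)^\top v(w_i)|$ pointwise in $i$, whereas you apply it directly to $\sum_v a_v(\Delta)a_v(G)$ and then bound $F_\lambda(G)$ via your operator-norm map $\Phi_\lambda(w)$; both routes are equivalent here.
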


Combining the three key properties of the potential function, along with Assumption~\ref{assm:growth} allows us to bound the dynamics of the potential function in the following way (formalized in Theorem~\ref{lemma:ddtphi}):
\begin{align}\label{eq:dynphi}
\frac{d}{dt} \Phi_{\mq}(t) &\leq - \frac{\clsc \sqrt{L(\rtmf)}}{C} \Phi_{\mq}(t) + C\javg \int_{s = 0}^t \Phi_{\mq}(s)ds + C\jmax(\eps_{m} + \eps_n),
\end{align}
where $C = O_{\cstarr, \creg}(1)$.
Theorem~\ref{thm:main} follows by analyzing this differential equation.
% \begin{theorem}[Inductive Potential Dynamics Theorem (Informal); cf. Theorem~\ref{lemma:ddtphi}]\label{lemma:ddtphibody}
% Suppose the assumptions of Theorem~\ref{thm:main} hold. Then for some $C = O(1)$, we have
% \begin{align}
% \frac{d}{dt} \Phi_{\mq}(t) &\leq - \frac{c \eps}{2} \Phi_{\mq}(t) + C\javg \int_{s = 0}^t \Phi_{\mq}(s)ds + C\jmax(\eps_{m} + \eps_n).
% \end{align}
% \end{theorem}
We leverage Assumption~\ref{assm:growth} to prove \eqref{eq:dynphi}, by bounding the term $\mathbb{E}_{i}\|\dit\|\mathbf{1}(\bwti \notin B_{\tau})$ which arises from Lemmas~\ref{lemma:dec1body} and~\ref{lemma:dec2body}. 
% Uncomment for Arxiv
% Indeed, using the closed form for $\dit$ given in Eq.~\ref{eq:duhamel}, we can expand
% \begin{align}
% \mathbb{E}_i \|\dit\|\mathbf{1}(\bwti \notin B_{\tau}) &\approx \mathbb{E}_i \int_{s = 0}^t \|J^{\perp}_{t, s}(i)\mathbb{E}_j H^{\perp}_s(i, j)\Delta_s(j)\|\mathbf{1}(\bwti \notin B_{\tau})ds\nonumber \\
% &\leq \int_{s = 0}^t \mathbb{E}_j \|\Delta_s(j)\|\sup_{v \in \mathbb{S}^{d-1}}\mathbb{E}_i \int_{s = 0}^t \|J^{\perp}_{t, s}(i) H^{\perp}_s(i, j)v\|\mathbf{1}(\bwti \notin B_{\tau})ds \nonumber \\
% &\lessapprox \javg  \int_{s = 0}^t \mathbb{E}_j \|\Delta_s(j)\| ds \nonumber \\
% &\leq \javg \int_{s = 0}^t \Phi_{\mq}(s) ds.
% \end{align}
% Here the third line follows from Assumption~\ref{assm:growth} (along with a concentration argument in Lemma~\ref{lemma:concJ}).

\subsection{Self-Concordance Argument to Bound $\jmax$}\label{sec:pfoverview:jmax}
% To give some intuition for why $J^{\perp}_{t, s}$ may grow polynomially in $t - s$ and $d$, consider the following intuition from a $1$-dimensional landscape.

% \begin{fact}\label{fact:1D}
% Suppose we have a non-convex function $f : \mathbb{R} \rightarrow \mathbb{R}$, and we minimize $f(x)$ using the gradient flow dynamics 
% \begin{align}
%     \frac{d}{dt}u_t(x) = -f'(u_t(x)) \qquad u_0(x) = x.
% \end{align}
% Then 
% \begin{align}
%     J_t(x) := \frac{d u_t(x)}{dx} = \frac{f'(u_t(x))}{f'(u_0(x))}
% \end{align}
% \end{fact}
% Fact~\ref{fact:1D} is at a first glance surprising: since $J_t(x)$ solves the differential equation 
% \begin{align}
%     \frac{d}{dt} J_t(x) = -f''(u_t(x))J_t(x), 
% \end{align}
% we might expect $J_t(x) \approx \exp(t \on{Lip}(f))$. However, when the Lipschitzness of $f$ is non-uniform, bound may be loose. 

To avoid exponential growth in $J^{\perp}_{t, s}$, we make the following observation. 
%our approach for learning the SIM makes the following observation. 
\begin{observation}\label{obs:1}
When the velocity $\nu(w, \rtmf)$ of a particle $w$ is small, so is $\|D^{\perp}_t\|$.
\end{observation}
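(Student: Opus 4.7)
The plan is to verify the observation by a direct expansion of $D^{\perp}_t(w)$ and a comparison with the velocity $V(w, \rtmf) = \nu(w, \rtmf)$. Setting $U_t(w) := \mathscr{f}(w) - \mathbb{E}_{w' \sim \rtmf}\mathscr{k}(w, w')$ so that $V = -P^{\perp}_w \nabla U_t$ (consistent with Remark~\ref{rem:ll}), I would differentiate $V$ in $w$ and simplify using $w^{\top}P^{\perp}_w = 0$ to obtain the identity
\begin{align*}
D^{\perp}_t(w) \;=\; (w^{\top}\nabla U_t)\,P^{\perp}_w \;+\; w\,(\nabla U_t)^{\top}P^{\perp}_w \;-\; P^{\perp}_w\,\nabla^2 U_t\,P^{\perp}_w.
\end{align*}
The middle term equals $-w\,V^{\top}$, so its operator norm is exactly $\|V\|$; it therefore suffices to bound the first and third terms by $\|V\|$ up to problem-dependent factors.

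For the symmetric single-index setting of Section~\ref{sec:applications}, $\rtmf$ is $\so(d-1)$-invariant about $w^*$, so $U_t(w) = u_t(\alpha)$ with $\alpha := w^{\top}w^*$, and the three quantities of interest collapse to
\begin{align*}
\|V\| = |u_t'(\alpha)|\sqrt{1-\alpha^2},\quad |w^{\top}\nabla U_t| = |\alpha\,u_t'(\alpha)|,\quad \|P^{\perp}_w\nabla^2 U_t P^{\perp}_w\| = (1-\alpha^2)|u_t''(\alpha)|.
\end{align*}
The radial term is then bounded by $\|V\|\cdot|\alpha|/\sqrt{1-\alpha^2}$. For the Hessian term, I would establish a self-concordance-type inequality $|\alpha\,u_t''(\alpha)| \leq (K-1)|u_t'(\alpha)|$ via a term-by-term comparison on the polynomial expansion of $u_t$ in $\alpha$: since $\sigma$ is an even polynomial with non-negative Hermite coefficients supported on $\{k^*,\ldots,K\}$, the contribution $\mathscr{f}(w) = \sum_k c_k^2\,k!\,\alpha^k$ has non-negative coefficients, and the interaction $\mathbb{E}_{w'\sim\rtmf}\mathscr{k}(w,w')$ can be shown (using rotational symmetry together with the non-negative Hermite coefficients) to expand in $\alpha$ with coefficients that are dominated term-by-term in absolute value, so the same polynomial inequality passes to $u_t$.

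Combining these estimates yields $\|D^{\perp}_t(w)\| \to 0$ whenever $\|V(w,\rtmf)\| \to 0$, with both vanishing polynomially in $|\alpha|$: $\|V\| = \Theta(|\alpha|^{k^*-1})$ and $\|D^{\perp}_t\| = O(|\alpha|^{k^*-2})$ in the saddle regime $\alpha\to 0$, proving the observation. The main obstacle --- and the reason this observation alone is not enough for a Gr\"onwall-style argument --- is that the ratio $\|D^{\perp}_t\|/\|V\|$ blows up like $1/|\alpha|$ at the saddle, so the local Hessian is strictly ``less small'' than the velocity during the escape phase. Accordingly, the subsequent worst-case bound $\jmax \leq \mathrm{poly}(d,T)$ required by \ref{assm:growth} Assumption must integrate the self-concordant inequality along each mean-field characteristic $\xi_s(w)\mapsto\xi_t(w)$, converting the slow decay of $\|V\|$ along trajectories into a sub-exponential accumulation of $D^{\perp}$ in the Duhamel representation of $J^{\perp}_{t,s}$, rather than exponentiating a pointwise Lipschitz constant.
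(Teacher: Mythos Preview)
Your approach is essentially the same self-concordance idea as the paper's, but you unpack it more explicitly via the three-term decomposition of $D^{\perp}_t$. The paper, by contrast, specializes to $\sigma = \text{He}_k$, writes $V(\alpha) \approx \alpha^{k-1}$ and $\frac{d}{d\alpha}V \approx (k-1)\alpha^{k-2}$, and then asserts without detail that $\|D^{\perp}_t\|$ is dominated by the radial derivative. Your decomposition is precisely what makes that last step concrete: the middle term is exactly $\|V\|$, the radial term $|\alpha u_t'(\alpha)|$ is subleading near the saddle, and the Hessian piece $(1-\alpha^2)|u_t''(\alpha)|$ recovers the paper's $\frac{k^*-1}{\alpha}V$ scaling. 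Your closing remark about why a naive Gr\"onwall fails and why one must integrate the self-concordant bound along characteristics also matches the paper's storyline exactly.

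One caveat on the self-concordance step: your term-by-term domination argument for the interaction contribution to $u_t$ is more optimistic than what the paper's rigorous version (Lemma~\ref{lemma:selfconcordance}) actually establishes. Expanding $\mathbb{E}_{w'\sim\rtmf}q_\sigma(w^{\top}w')$ in $\alpha$ brings in factors of $(1-\alpha^2)^m$ and $d$-dependent moments of the orthogonal inner product, so the resulting polynomial in $\alpha$ does not have coefficients of a single sign, and these are not dominated term-by-term by those of $\mathscr{f}$. The paper instead proves an \emph{approximate} self-concordance $\frac{d}{d\alpha}v(\alpha,t) = \frac{k^*-1}{\alpha}v(\alpha,t) + \mathcal{E}_\alpha$ with an explicit additive error, and controls $\mathcal{E}_\alpha$ using the inductive hypothesis \eqref{eq:starred} (most of the mass of $\rtmf$ sits near $\alpha \in \{0,1\}$). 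At the level of an observation your argument is fine, but when you later make this rigorous for the $\jmax$ bound you will need those error terms and the inductive structure rather than the clean inequality $|\alpha u_t''| \leq (K-1)|u_t'|$.
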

To make this observation more concrete, consider the simplified case of learning a single-index function $f^*(x) = \sigma(x^\top w^*)$ with Gaussian data, where $\sigma(z) = \hek(z)$. We expect a similar property may hold in other low-dimensional feature-learning problems, where the local non-convexity arises only in a low-dimensional subspace.
For a neuron $w_t$, when $\alpha_t := w_t^\top w^*$ is small (and assume for simplicity that $\alpha_t$ is positive), we have that
\begin{align}
\nu(\alpha_t) &:= \frac{d}{dt} \alpha_t \approx \alpha_t^{k-1}~,\text{ thus }~
\frac{d}{d\alpha} \nu(\alpha_t)  \approx (k - 1)\alpha_t^{k-2} \approx \frac{k - 1}{\alpha_t} \nu(\alpha_t)~.
\end{align}
% \begin{align}
%     V(w_t, \rtmf) &= r_t\alpha_t^{k - 1}(I - w_tw_t^\top )w^* + \mathcal{E}\\
%     \frac{d}{dt}\alpha_t &= {w^*}^\top V(w_t, \rtmf) = r_t\alpha_t^{k - 1}(1 - \alpha_t^2) + \mathcal{E}
% \end{align}
% where $r_t$ depends on the loss $L(\rtmf)$, and $\mathcal{E}$ is a lower order term arising from the interactions in $\rtmf$, which we will ignore in this section.
% %= O(\alpha_t^{k-3}\sqrt{d}^{-1} + \alpha_t\sqrt{d}^{-k - 2} + \sqrt{d}^{-k})$.
% Assume for now that $r_t$ is strictly positive such that $|\alpha_t|$ is strictly increasing (typically, lower bounding $r_t$ is part of lower bounding $V(w_t, \rtmf)$ in terms of the loss $L(\rtmf)$, which is a crucial part of showing convergence of the mean-field dynamics; for now we just assume this holds.) 
By showing that $\|D^{\perp}_t\|$ is dominated by $\frac{d}{d\alpha} \nu(\alpha_t)$,
we get the desired ``self-concordance'' property:
\begin{align}
    \|D^{\perp}_t\| = \left\|\nabla_{w_t} \nu(w_t, \rtmf)\right\| \lessapprox \frac{(k-1)}{\alpha_t} \nu(\alpha_t).
\end{align}

Recalling the differential equation of $J^{\perp}_{t, s}$ , we have just shown that $\frac{d}{dt} \|J^{\perp}_{t, s}\| \leq \frac{(k-1)}{\alpha_t} \nu(\alpha_t) \|J^{\perp}_{t, s}\|$. Note that trivially, $\alpha_t$ satisfies the differential equation $\frac{d}{dt} \alpha_t = \frac{1}{\alpha_t} V(\alpha_t) \alpha_t.$ 
% COLT Version:
% As a result, one can easily deduce that $\|J^{\perp}_{t, s}\| \leq \left(\frac{|\alpha_t|}{|\alpha_s|}\right)^{k-1}$; see Lemma~\ref{lemma:assmg}. 
% Replace this sentence with paragraph below for Arxiv version
Comparing the two differential equations above, we see that $\|J^{\perp}_{t, s}\| \leq \left(\frac{|\alpha_t|}{|\alpha_s|}\right)^{k-1}$. We have almost shown that $\|J^{\perp}_{t, s}\|$ is polynomially bounded in $d$ and $t$. 
Indeed, for a typical neuron, $|\alpha_0|$ is on the order of $\frac{1}{\sqrt{d}}$, and $|\alpha_t| \leq 1$, so $\|J^{\perp}_{t, s}\| \leq O(\sqrt{d}^{k-1})$. To make the argument work for neurons with small $\alpha_s$, observe that since $\frac{d}{dt}|\alpha_t| \leq O(|(\alpha_t)^{k-1}|)$, if $k \geq 3$\footnote{ $\|J^{\perp}_{t, s}\|$ can also be uniformly bounded when $k=2$; this requires showing that $|\frac{d}{dt}\alpha_t| \leq \sqrt{L_{\rtmf}} \leq \frac{1}{1 + \max(0, t - \Theta(\log(d)))}$}, we must have that
if $t - s \leq O\left(\frac{1}{(\alpha_s)^{k-2}}\right)$, then $\alpha_t \leq 2\alpha_s$. Thus, either $\alpha_t \leq 2\alpha_s$, or $ \frac{1}{(\alpha_s)^{k-2}} \leq O(t - s)$. It follows that for all neurons, $ \|J^{\perp}_{t, s}\| \leq O\left((t-s)^{\frac{k-1}{k-2}}\right).$

\subsection{Averaging Argument to Bound $\javg$}\label{sec:pfoverview:javg}
Recall that in order to use our approach to achieve a propagation of chaos for polynomially sized networks, for any $w', v \in \mathbb{S}^{d-1}$ and $\tau$, we must have
\begin{align}
  \sup_{s,t \leq T, w', v \in \sd} \mathbb{E}_{w \sim \rho_0}\|J^{\perp}_{t, s}(w)H_s^{\perp}(w, w')v\|\mathbf{1}(\xi_t(w) \notin B_{\tau}) \leq O_{\tau}\left(\frac{1}{T}\right),
\end{align} where $T$ %$T = \Theta(d^{(k - 2)/2})$ 
is the desired training time. We briefly give some intuition for why this holds in the single-index model $f^*(x) = \hek(x^\top w^*)$, which requires $T = \Theta(d^{(k - 2)/2})$. 
% For a neuron with initialization $w$, let $\alpha_t(w) := {w^*}^\top \xi_t(w)$, such that $\xi_t(w) \in B_{\tau}$ if $|\alpha_t(w)| \geq 1 - \tau$. Recall from the previous subsection that 
% \begin{align}
%     \|J^{\perp}_{t, s}(w)\| \leq \left(\frac{|\alpha_t(w)|}{|\alpha_s(w)|}\right)^{k-1}.
% \end{align}
% Now if we naively bound $\|H_s^{\perp}(w, w')\|$ by $O(1)$, then we will have that 
% \begin{align}
%     \javg(\tau) \approx \sup_{s} \mathbb{E}_w \left(\frac{|\alpha_t(w)|}{|\alpha_s(w)|}\right)^{k-1}\mathbf{1}(|\alpha_t(w)| \leq 1 - \tau) \geq \mathbb{E}_w \left(\frac{|\alpha_t(w)|}{|\alpha_0(w)|}\right)^{k-1}\mathbf{1}(|\alpha_t(w)| \leq 1 - \tau).
% \end{align}
% Since we converge in time $T$ and the velocity is never superconstant, there must exist a time $t$ at which a $1/T$ fraction of the $w$ have $|\alpha_t(w)| = \Theta(1)$, and thus this quantity will be on the order of $\frac{\sqrt{d}^{k-1}}{T}$, which is far too large. 
To tightly bound $\javg(\tau)$, we leverage the fact that neurons far from $\pm w^*$ are dispersed. 
% (which is guaranteed by the rotational invariance of the system outside the $w^*$ dimension).
% to make a careful averaging argument, which observes that for the equivalence class of $w$ with the same $|\alpha_s(w)|$, the parts of the neurons $\xi_s(w)$ orthogonal to $w^*$ are randomly distributed on the sphere. Thus, in expectation over this equivalence class, in a calculation carried out in Lemma~\ref{lemma:Havg}, we find that in
By averaging over the ``level set'' of neurons with $\alpha_s(w) = \alpha$ (where $\alpha_t(w) := |{w^*}^\top \xi_t(w)|$) we have
\begin{align}
    \sup_{w', v \in \sd} \mathbb{E}_{w : |\alpha_s(w)| = \alpha} \|H_s^{\perp}(w, w')v\| \leq \max\left(d^{-1/2}, \alpha\right)^{k-1}.
\end{align}
Plugging this in for $t \leq T$, along with the bound $\|J^{\perp}_{t, s}\| \leq \left(\frac{|\alpha_t|}{|\alpha_s|}\right)^{k-1}$ from above, yields
\begin{align}
    \javg(\tau) &\leq \mathbb{E}_w \left(\frac{|\alpha_t(w)|}{|\alpha_s(w)|}\right)^{k-1} \max\left(\sqrt{d}^{-1}, \alpha_s(w)\right)^{k-1}\mathbf{1}(|\alpha_t(w)| \leq 1 - \tau)\nonumber \\
    % &\leq \mathbb{E}_w |\alpha_t(w)|^{k-1}\max\left(1, t^\frac{k-1}{k-2}\sqrt{d}^{-(k-1)}\right)\mathbf{1}(|\alpha_t(w)| \leq 1 - \tau)\\ % uncomment for Arxiv version
    &\lessapprox \mathbb{E}_w |\alpha_t(w)|^{k-1}\mathbf{1}(|\alpha_t(w)| \leq 1 - \tau),
\end{align}
Bounding this final term results from the observation the particles escape the saddle  at roughly uniform time in the interval $[0, T]$ (see Figure~\ref{fig:nonuniform}(right) and Proposition~\ref{prop:SIMmf}). 
% that for each $w$, $\alpha_t(w)$ only has order $1$ for a $O(1/T)$ fraction of the time. Furthermore, as we demonstrate in Proposition~\ref{prop:SIMmf}, the times at which the particles achieve weak recovery is roughly uniformly distributed in the interval $[0, T]$. Outside of a short period of time around when the particle is achieving weak recovery, it either has $|\alpha_t(w)| \geq 1 - \tau$, or it has hardly moved, that is $|\alpha_t(w)| \approx 1/\sqrt{d}$. 

\section{Full Statement of Assumptions and Discussion}\label{sec:assm}

In this section, we explore whether propagation of chaos may hold more generally than beyond our setting and conditions. We provide several remarks on the necessity of our assumptions, both in the context of our proof approach, and based on empirical simulations, which are given in full in Section~\ref{sec:sims}.

\subsection{Omitted Assumptions}\label{sec:omitted}
We begin by stating the full versions of the assumptions which were omitted in Section~\ref{sec:result}. We then briefly discuss the definition of propagation of chaos and several related phenomena in Section~\ref{sec:pocdefs}. Finally, in Sections~\ref{dis:spherical}-\ref{dis:cstarr}, we provide remarks on the assumptions.

Let $V = \on{span}(\supp{\rho^*})$ and let $U$ be the space orthogonal to $V$ in $\mathbb{R}^d$. Let 
\begin{align}
    \cstarr := \min\left(|\supp{\rho^*}|, \on{dim}(V)^{2\on{degree}(\sigma) + 1}\right).
\end{align}

\begin{assumptionp}{LSC}[Local Strong Convexity (Full Version of Assumption~\ref{assm:lsc})]\label{def:lsc}
The problem is $(\clsc, \tlsc)$ \em locally strongly convex \em up to time $T$ if for any $t \leq T$ and any $w$ with $\xi_t(w) \in B_{\tau}$, we have
\begin{align} 
D^{\perp}_t(w)\preceq -\clsc P^{\perp}_{\xi_t(w)} \| f_{\rtmf} - f^* \|~. 
%\sqrt{\mathbb{E}_x (f_{\rtmf}(x) - f^*(x))^2}.
\end{align}
Further, the strong convexity is \em structured, \em if there exist values $c_t^1, c_t^2 \geq \clsc$ such that for any  $w$ with $\xi_t(w) \in B_{\tau}$, we have 
\begin{align}
    \|c_t^1VV^\top  P^{\perp}_{\xii(w)}VV^\top  + c_t^2UU^\top  - \dpt(w)\| \leq \left(\frac{\clsc  \| f_{\rtmf} - f^* \|}{2\sqrt{\cstarr}} + \creg \tau\right).
\end{align}
\end{assumptionp}
% \jb{Explain this assumption (esp the second equation) in English..}

\begin{assumptionp}{Symmetry}[Symmetries of $\rho^*$]\label{assm:symmetries}
    The \em automorphism group \em $\cg$ of a problem $(\rho^*, \rho_0, \md_x)$ is the group of rotations $g$ on $\sd$ where for any $A \subset \sd$:
        \begin{align*}
            \mathbb{P}_{\rho^*}[A] &= \mathbb{P}_{\rho^*}[g(A)] \qquad \mathbb{P}_{\md}[A] = \mathbb{P}_{\md_x}[g(A)] \qquad \mathbb{P}_{\rho_0}[A] = \mathbb{P}_{\rho_0}[g(A)]
        \end{align*}
    We assume:
    \begin{enumerate}[{\bfseries{I\arabic{enumi}}}]
        \item\label{def:transitiveI1} $\supp{\rho^*}$ is \em transitive \em under $\cg$, that is, for any $w^*, {w^*}' \in \supp{\rho^*}$, there exists $g \in \cg$ such that $g(w^*) = {w^*}' $. Further, $\mathbb{P}_{w \sim \rho_0}[\{ \|w - w^*\| = \|w - {w^*}'\| \exists {w^*, w^*}' \in \supp{\rho^*} \}] = 0$.
        \item\label{assm:rhostarnew} Let $V = \on{span}(\supp{\rho^*})$ and let $U$ be the space orthogonal to $V$ in $\mathbb{R}^d$. Then the distribution $\md_x$ on covariates $x$ factorizes over $U$ and $V$, that is $\md_x = \md_U \otimes \md_V$, where $\md_U$ is a distribution on $V$ and $\md_U$ is a distribution on $U$. Further, $\mathbb{E}_{x_U \sim \md_U}x = 0$, and $\mathbb{E}_{x_U \sim \md_U}xx^\top  = UU^\top $. 
    \end{enumerate}
    \end{assumptionp}

\subsection{Measuring Propagation of Chaos}\label{sec:pocdefs}

A standard definition\footnote{Stronger notions of uniform convergence over $t$ are also available; see \cite[Section 3.4]{chaintron2022propagation}.} of propagation of chaos (PoC) (see \cite[Prop. 2.2]{sznitman1991topics}) is that for all $t$, we have the convergence in law of the random distribution $\rtm$ to the constant distribution $\rtmf$:
\begin{align}\label{eq:poc}
    \lim_{m \rightarrow \infty} \rtm \rightarrow \rtmf.
\end{align}
Equivalently, for any two continuous test functions $\psi_1, \psi_2$, we have that
\begin{align}\label{eq:poc_k2}
    \lim_{m \rightarrow \infty} \mathbb{E}_{w_1,w_2 \sim \rtm} \psi_1(w_1)\psi_2(w_2) = \left(\mathbb{E}_{w \sim \rtmf} \psi_1(w)\right)\left(\mathbb{E}_{w \sim \rtmf} \psi_2(w)\right)~.
\end{align}
Of primary interest in our paper is a weaker PoC phenomenon, which we will henceforth refer to as \em PoC in function error\em: almost surely with respect to the draw of $\hat{\rho}_0^m$, 
\begin{align}
     \lim_{m \rightarrow \infty} \| f_{\rtm} - f_{\rtmf}\|^2 \rightarrow 0.
\end{align}
It is easy to check that PoC in function error is implied by \eqref{eq:poc_k2} by using test functions of the form $\psi_x(w) := \sigma(w^\top x)$. PoC in function error implies convergence of the risk of $\rtm$ to the risk of $\rtmf$, and thus is the most practically relevant (see e.g.~\cite{suzuki2022uniform}). 
On the other hand, our proof considers a much stronger PoC phenomenon. Our potential-function based proof yields almost surely over the initialization
\begin{align}
     \lim_{m \rightarrow \infty } \Omega(\Delta_t) := \mathbb{E}_i \|\dit\|_2 = 0.
\end{align}
Here the $m$ is implicit in $\Delta_t$. This is a much stronger notion than \eqref{eq:poc} (it implies $\lim_{m \rightarrow \infty} W_1(\rtm, \rtmf) = 0$), and we will refer to it as \em PoC via fixed parameter-coupling. \em

Remarkably in our neural network setting (though not necessarily for general interacting particle systems), up to the parameter $\jmax$ and a time horizon $t$, PoC in function error for all $s \leq t$ implies PoC via fixed parameter-coupling. Indeed, by \eqref{eq:duhamel}, we have that 
\begin{align}\label{eq:ferrperr}
    \|\dit\| &\leq \int_0^t \|J^{\perp}_{t, s}(i)\| \left(\| \mathbb{E}_j H^{\perp}_s(i, j)\djs\| + \| \bm{\eps_{s, i}}\|\right)ds \nonumber \\
    &= O\left(\jmax \int_{0}^t \left(\sqrt{\Delta_s^\top  H^{\perp}_s \Delta_s} + \mathbb{E}_j \|\djs\|^2 + \|\dis\|^2 + \eps_m + \eps_n\right)ds\right) \nonumber \\
    &= O\left(\jmax \int_{0}^t \left(\| f_{\rsm} - f_{\rsmf}\| + \mathbb{E}_j \|\djs\|^2 + \|\dis\|^2 + \eps_m + \eps_n\right)ds\right).
\end{align}
Here the last equation follows from the following lemma proved using a Taylor expansion of $f_{\rtm}$.
\begin{restatable}{lemma}{fparam}\label{lemma:fparam}
Suppose Assumption~\ref{assm:reg} holds. For any $t$, we have
$$\Delta_t^\top  \hpt \Delta_t \leq 2 \|f_{\rtmf} - f_{\rtm} \|^2  + \frac{\log(m)}{m} + O_{\creg}\left(\mathbb{E}_i \|\dit\|^2\right)^2.$$
\end{restatable}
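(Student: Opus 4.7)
The plan is to rewrite $\Delta_t^{\top}\hpt\Delta_t$ in feature-map form, recognize the integrand as a first-order Taylor approximation of $f_{\rtm}-f_{\brt}$, and then convert $f_{\brt}$ to $f_{\rtmf}$ via a Monte-Carlo concentration bound. Using the representation from Fact~\ref{fact:PSDbody}, write $\hpt(i,j)=\mathbb{E}_x\,\phi_x(w_i)\phi_x(w_j)^{\top}$ with $\phi_x(w)=P^{\perp}_{\xi_t(w)}\sigma'(\xi_t(w)^{\top}x)\,x$, so
\[
    \Delta_t^{\top}\hpt\Delta_t \;=\; \mathbb{E}_x \Bigl(\mathbb{E}_i\,\phi_x(w_i)^{\top}\dit\Bigr)^{\!2}.
\]

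Next, exploit the sphere constraint: since $\xi_t(w_i),\hxiti\in\sd$ and $\dit=\hxiti-\xi_t(w_i)$, one has $\xi_t(w_i)^{\top}\dit=-\tfrac12\|\dit\|^{2}$, giving
\[
    \phi_x(w_i)^{\top}\dit \;=\; \sigma'(\xi_t(w_i)^{\top}x)\Bigl(\dit^{\top}x + \tfrac12\,(\xi_t(w_i)^{\top}x)\,\|\dit\|^{2}\Bigr).
\]
A second-order Taylor expansion of $\sigma(\hxiti^{\top}x)$ around $\xi_t(w_i)^{\top}x$ yields $\sigma(\hxiti^{\top}x)-\sigma(\xi_t(w_i)^{\top}x) = \sigma'(\xi_t(w_i)^{\top}x)\,\dit^{\top}x + \tfrac12\sigma''(z_i(x))(\dit^{\top}x)^{2}$ for some intermediate $z_i(x)$. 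Averaging over $i$ and subtracting, I get $\mathbb{E}_i\phi_x(w_i)^{\top}\dit = \bigl(f_{\rtm}(x)-f_{\brt}(x)\bigr)+E(x)$, where $E(x)$ collects the two quadratic-in-$\dit$ correction terms. Squaring, integrating in $x$, and applying $(a+b)^{2}\le 2a^{2}+2b^{2}$ reduces the claim to bounding (i) $\|f_{\rtm}-f_{\brt}\|^{2}$ and (ii) $\mathbb{E}_x E(x)^{2}$.

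For (i), a triangle inequality combined with the same Monte-Carlo concentration already used in the proof of Lemma~\ref{fact} gives $\|f_{\rtm}-f_{\brt}\|^{2}\le \|f_{\rtm}-f_{\rtmf}\|^{2}+O(m^{-1})$ with high probability over $\rho_0^m$: indeed $\brt$ is an i.i.d.\ sample of $m$ particles from $\rtmf$, so a Hoeffding/Bernstein argument applied to $\{\sigma(\xi_t(w_i)^{\top}x)\}_{i=1}^m$ (integrated in $x$) yields $\|f_{\brt}-f_{\rtmf}\|^{2}=O(m^{-1})$. For (ii), each piece of $E(x)$ has the form $\mathbb{E}_i\psi_i(x)\,q_i(x)$ with $q_i(x)\in\{\|\dit\|^{2},(\dit^{\top}x)^{2}\}$ and $\psi_i(x)\in\{\sigma'(\xi_t(w_i)^{\top}x)\,\xi_t(w_i)^{\top}x,\;\sigma''(z_i(x))\}$; splitting off one factor of $\|\dit\|$ (or of $\dit^{\top}x$) before Cauchy--Schwarz in $i$ and using sub-Gaussianity of $x$ from~\ref{assm:data} to bound $\mathbb{E}_x(\dit^{\top}x)^{4}\lesssim\|\dit\|^{4}$, together with the moment bounds on $\sigma^{(j)}$ from~\ref{assm:sigma}, produces $\mathbb{E}_xE(x)^{2}\lesssim\creg^{2}(\mathbb{E}_i\|\dit\|^{2})^{2}$.

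The main obstacle is the last step: a naive Jensen bound gives $\mathbb{E}_i\|\dit\|^{4}\ge(\mathbb{E}_i\|\dit\|^{2})^{2}$, which is the wrong direction and would yield a strictly weaker remainder. The correct bound requires the asymmetric Cauchy--Schwarz sketched above, namely writing $(\dit^{\top}x)^{2}=(\dit^{\top}x)\cdot(\dit^{\top}x)$ and pairing one copy with $\psi_i$ so that the remaining factor averages to $\mathbb{E}_i(\dit^{\top}x)^{2}\le\|x\|^{2}\mathbb{E}_i\|\dit\|^{2}$, supplying the extra averaging needed for $(\mathbb{E}_i\|\dit\|^{2})^{2}$. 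The Monte-Carlo step and the sphere-constraint identity are routine; this asymmetric splitting is the only nontrivial step.
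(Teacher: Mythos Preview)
Your high-level outline—feature-map representation of $\Delta_t^\top\hpt\Delta_t$, Taylor expansion to identify it with $\|f_{\rtm}-f_{\brt}\|^2$ plus a quadratic remainder, then Monte-Carlo comparison $\brt\to\rtmf$—is exactly the paper's proof, and the sphere identity and step (i) are handled correctly.

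The gap is in your bound on the quadratic remainder $\mathbb{E}_xE(x)^2$. Your ``asymmetric Cauchy--Schwarz in $i$'' at fixed $x$ produces
\[
E(x)^2 \le \mathbb{E}_i\bigl[\psi_i(x)^2(\dit^\top x)^2\bigr]\cdot \mathbb{E}_i(\dit^\top x)^2,
\]
and you then bound the second factor by $\|x\|^2\,\mathbb{E}_i\|\dit\|^2$. That $\|x\|^2$ is the problem: once you integrate in $x$, the factor $\mathbb{E}_x[\|x\|^2\cdot(\cdots)]$ contributes a term of order $d$ (since $\|x\|^2$ concentrates at $d$ for sub-Gaussian $x$), so your final bound is $d\,\creg^2(\mathbb{E}_i\|\dit\|^2)^2$, not $\creg^2(\mathbb{E}_i\|\dit\|^2)^2$. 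Keeping $\mathbb{E}_i(\dit^\top x)^2$ as a random variable in $x$ and postponing the $x$-integration does not save the argument either without effectively reverting to the paper's trick.

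The paper avoids any $\|x\|^2$ factor by reversing the order of operations: expand the square over $i,j$ \emph{before} integrating in $x$,
\[
\mathbb{E}_xE(x)^2 \;=\; \mathbb{E}_i\mathbb{E}_j\,\mathbb{E}_x\!\bigl[(\dit^\top x)^2\,\zeta(i,x)\,(\djt^\top x)^2\,\zeta(j,x)\bigr],
\]
and then apply H\"older in $x$ with four factors of exponent $4$ for each fixed pair $(i,j)$. This gives $\lesssim \creg^2\|\dit\|^2\|\djt\|^2$ pointwise in $(i,j)$, and averaging over $i,j$ yields $(\mathbb{E}_i\|\dit\|^2)^2$ directly—no dimension factor, and only $4$th moments of $\sigma''$ are needed, comfortably within \ref{assm:sigma}. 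Replace your last paragraph with this expansion-then-H\"older step and the proof goes through.
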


Solving Eq.~\eqref{eq:ferrperr} yields that for $m$ such that $\| f_{\rsm} - f_{\rsmf}\| + \eps_m + \eps_n \ll \frac{1}{(t \jmax)^2}$ for all $s \leq t$,
\begin{align}
\label{eq:linkPOC}
    \mathbb{E}_i\|\dit\| = O\left(\jmax t \max_{s \leq t}\left(\|f_{\rsm} - f_{\rsmf}\| + \eps_m + \eps_n\right)\right).
\end{align}
Indeed, one can show this by inductively bounding the second order terms from time $0$ to $t$.

All of the above PoC phenomena can be quantified non-asymptotically, and the main question of this paper is whether for certain problems the above quantities (or their differences) decay at a rate
$
    \frac{\on{poly}(t, d)}{\on{poly}(m)},
$
uniformly over all $d > 0$ and all $t \in [0, T(d)]$. Here $T(d)$ is a desired stopping time, e.g., when some fixed population loss $\eps$ is achieved. 

% Our experiments seek to understand 

% Due to the computational challenges of evaluating \eqref{eq:poc}, we focus on the function error $\mathbb{E}_x (f_{\rtm}(x) - f_{\rtmf}(x))^2$ and the fixed parameter-coupling error $\mathbb{E}_i \|\dit\|$.

\subsection{Spherical Constraint and Second Layer Weights}\label{dis:spherical}

%\jb{We may want to consider a more general setting with $L_2$ regularization? In absence of regularization, is it obvious that we can control statistical fluctuations from finite dataset?}
%\begin{remark}[Spherical Constraint, and Second Layer Weights]\label{rem:spherical}
When the weights of the neural network are not constrained to the sphere, propagation of chaos may fail even in simple well-specified settings: to see this, consider the case of learning a SIM with information exponent $k > 2$ using a neural network with homogeneous activation function. With polynomial width, we expect the standard $T \approx d^{(k-2)/2}$ convergence time. Whereas at the infinite-width limit, we may learn the target function by amplifying neurons that already attain large alignment at initialization due to homogeneity. 
In particular, we can achieve $T = d^{(k-2)/k}$ convergence time by leveraging neurons with initial alignment greater than $d^{-1/k}$ --- to see this, observe there is roughly $\exp(- d^{(k-2)/k})$ fraction of neurons in the network with such initial overlap, and thus we need to grow these neurons to a scale of $\exp(d^{(k-2)/k})$, which takes $d^{(k-2)/k}$ time. 
A similar phenomenon occurs if we train the second-layer weights in the network and allow them to be unbounded. Note, however, that there is nothing precluding our results from holding if the (fixed) second layer is initialized differently.  
%\end{remark}

\subsection{Local Strong Convexity (Assumption~\ref{def:lsc})}\label{dis:lsc}
We focus here on the main part captured in Assumption~\ref{assm:lsc}. See also the previous Remark~\ref{rem:lsc}. The additional \em structured \em condition in Assumption~\ref{def:lsc} is discussed with the symmetry conditions. 

Local strong convexity plays a key part in how we bound the potential $\Phi_{\mq}(t)$ via the differential equation in \eqref{eq:dynphi}. Indeed, plugging in the bound $\javg \leq 1/T$ yields:
$$\frac{d}{dt} \Phi_{\mq}(t) \leq - \frac{\clsc \sqrt{L(\rtmf)}}{C} \Phi_{\mq}(t) + \frac{C}{T} \int_{s = 0}^t \Phi_{\mq}(s)ds + C\jmax(\eps_{m} + \eps_n).$$
If $\clsc$ goes to $0$, then the best bound on this differential equation becomes
\begin{align}
    \textstyle \Phi_{\mq}(t) \lessapprox C\jmax(\eps_{m} + \eps_n)\exp\left(t\sqrt{\frac{C}{T}}\right),
\end{align}
which would require that $m$ be super-polynomially large in $T$ in order to bound $\Phi_{\mq}(T)$. 

As discussed in Remark~\ref{rem:lsc}, local strong convexity can only hold in problems where $\rho^*$ is \em atomic. \em Thus it cannot capture example when $\rho^*$ is distributed on a manifold, or for ``misspecified'' problems where the target link function differs from the network activation, e.g., $f^*(x) = \phi(x^\top w^*)$ for $\phi \neq \sigma$. These examples are particularly interesting because training with the correlation loss is insufficient. In our $1$- or $2$-index non-atomic experiments, however, we still observed propagation of chaos for the values of $m$ we simulated (see e.g., the \mis~ and \mantwo~ problems depicted in Figures~\ref{fig:body:mis}, \ref{fig:circle}). 

In non-atomic examples, it is unreasonable to hope that $\mathbb{E}_i \|\dit\|$ will remain bounded for all $t$; thus addressing this case would require proving either a bound on the Wasserstein-1 distance, $W_1(\rtm, \brt)$, or a bound on the function error $\|f_{\rtm} - f_{\rtmf}\|^2 \approx \Delta_t^\top \hpt \Delta_t$.

\subsection{Stability Conditions (Assumption~\ref{assm:growth})}\label{dis:stability}

To achieve propagation of chaos with polynomially many neurons, we believe it is necessary in standard settings that $\sup_{s, t \leq T} \|J_{t, s}(w)\|$ is polynomially bounded with high probability over $w$. This is only a slightly weaker condition than the current $\jmax$ assumption. Getting around such an assumption would require strong directional control over the $\bm{\eps}_{t, i}$, which we do not expect to be possible. 

The necessity of the strong assumption on $\javg$, however, is mysterious to us. The neuron-to-neuron error-propagation described in \autoref{para:blessing} 
%section on the ``The Interaction Term: A Blessing and a Curse'' 
seems hard to prevent without a similar assumption. Even if we leverage the fact that the interaction term is PSD (and thus creates a repulsion between the neurons), there could be oscillatory exponential growth of the $\dit$'s. 
Nevertheless, in our simulations, we were not able to find an example where violating the $\javg$ assumption precluded propagation of chaos; see for example the \hoponethree~ problem depicted in Figure~\ref{fig:staircase}.

% In our simulations, we found that in the $\mathsf{Staircase}$ problem $f^*(x) = 0.25x_1 + 0.75x_1x_2x_3x_4$, which violates the assumption on $\javg$, exhibited Poc in function error, but not in fixed parameter-coupling error, at the values of $m$ we simulated (see Figure~\ref{fig:stair}). However, our results are inconclusive as to the reason for this. \xorfour~ exhibited similar behavior, and we do not know whether the $\javg$ holds in that problem. In both problems, we hypothesize that $\jmax$ is significantly larger than in the other Gaussian problems we simulated. (see further discussion in Section~\ref{sec:sims}).

% It is possible the reason for this is the lacking $\javg$ condition, though several other of our assumptions also did not hold in this case. 
% Nevertheless, in our simulations, we were not able to find an example where violating the $\javg$ assumption precluded propagation of chaos; see for example the $\mathsf{Staircase}$ problem in Section~\ref{sec:largesims}, depicted in .

\begin{remark}[Order-1 Saddles /Information Exponent 2]\label{rem:ie2}
The standard Gronwall-inequality approach can yield propagation of chaos with $\on{poly}(d)$ neurons for information exponent (IE) 2 problems, which require $\log(d)$ time to converge. For this reason, our work does not focus on the IE 2 case, though we believe our techniques could be useful for proving propagation of chaos for longer ($\on{poly}(d)$) time horizons in IE 2 problems. Certain modifications are needed, however. 

Indeed, the reader may notice that the assumption on $\javg$ fails in simple cases with IE $2$. This occurs because the neurons all escape the saddle at roughly the same time (in contrast to the non-uniform escape times for higher IE shown in Figure~\ref{fig:nonuniform}(c)). Thus there exists some time $t$ (roughly this escape time) where the expression in Assumption~\ref{assm:growth} is of order 1. We believe overcoming this obstacle should be possible by working with a $t$-dependent version of $\javg$, which is small on average over $t$. Further, for longer time horizons, a more careful analysis is required to show that $\jmax \leq \on{poly}(d, t)$. For SIMs, we believe this can be accomplished by showing that $\|D_t(w)\|$ decays with the square root of the loss $L(\rtmf)$, which decays like $1/t^2$. This would yield the bound $\|J_{s, t}(w)\| \leq \exp\left(\int_{r = s}^t \|D_r(w)\| dr\right) \approx \exp(\Theta(\log(t - s))) = \on{poly}(t)$.

\end{remark}

\subsection{Symmetry Conditions}\label{dis:sym}
\paragraph{The \em structured \em condition in Assumption~\ref{def:lsc}.}
The structured condition Assumption~\ref{def:lsc} is used in proving Lemma~\ref{lemma:dec2body}, which shows that the potential decreases due to the local strong convexity near the teacher neurons. We believe its necessity is an artifact of how we designed the potential function.

In general, the structured condition holds for all 2-index problems with Gaussian data, because at $\xii(w) \in \on{span}(V)$, (a) the projection of $\dpt(w)$ onto the $U$-space will be a multiple of $UU^\top $, and (b)  the projection of $\dpt(w)$ onto the $V$-space will be one-dimensional, and thus a multiple of $VV^\top  P^{\perp}_{\xii(w)}VV^\top $. Using a continuity argument between $\xii(w)$ and $\xi_t(w)$ yields the condition for all $\xi_t(w) \in B_{\tau}$. 
Beyond 2-index problems, we are not aware of exactly when this condition holds, though we expect it does not hold for many non-symmetric problems.

\paragraph{Symmetry Assumption (Assumption~\ref{assm:symmetries}).}
The transitivity condition (\ref{def:transitiveI1} in Assumption~\ref{assm:symmetries}) plays an important role in our proof. Namely, Lemma~\ref{lemma:cri} (see also Definition~\ref{def:cri}) uses transitivity to guarantee that the eigenfunctions of the interaction matrix \em at convergence time, \em are also eigenfunctions of the interaction submatrix of neurons that have converged at time $t$ (for any $t$!). This ``restricted isometry'' property allows us to define our potential function \em independently \em of the time $t$. We expect that without restricted isometry, one would have to design a potential function which depends on the time $t$.

The transitive condition in \ref{def:transitiveI1} holds for various non-trivial teacher-student problems, for example: learning $k$ orthogonal teacher neurons for any $k$, learning any two non-orthogonal teacher neurons, or learning a ring of equally spaced teacher neurons on a circle. In the latter two examples, training with the correlation loss may fail (for instance, for two non-orthogonal neurons with a small angle between them, correlation loss may converge to the linear combination of teacher neurons); to our knowledge, gradient training of many of these ``simple'' symmetric examples is still not understood.  
Note also that the second part of \ref{def:transitiveI1} holds when $\rho_0$ has bounded Radon-Nikodym derivative with respect to the Lebesgue measure on the sphere, or when $\rtmf \rightarrow \rho^*$. Both imply $0$ mass on the boundary points.

In all the non-symmetric examples we simulated, the lack of symmetry does not pose an obstacle to propagation of chaos; see for example the \xorfour, \hoponethree, \mis, 
% and  $\mathsf{Random}_8$
problems in Figures~\ref{fig:body:mis}, \ref{fig:body:xor}, \ref{fig:staircase}.

% In all the non-symmetric examples we simulated, we observed PoC in function error. However in several examples, we did not observe the stronger PoC via a fixed parameter-coupling, which suggests a larger $\jmax$ in these experiments. 

% See the plots of the \xorfour, \hoponethree, \mis, and \random{8}{8} 
% problems. 

\subsection{Dependence on $\cstarr$}\label{dis:cstarr}
The value $\cstarr$ is bounded whenever $\rho^*$ is atomic with constant number of neurons, or when $\rho^*$ is in a finite-dimensional subspace, and $\sigma$ is polynomial. This includes all polynomial multi-index functions.

The value $\cstarr$, defined in \eqref{def:cstarr}, functions as an upper bound on the rank of the interaction-kernel $k(w, w') = \mathbb{E}_x \sigma'(x^\top w)\sigma'(x^\top w')$ over points in $w, w' \in \on{supp}(\rho^*)$. Having a constant upper bound on the rank of this kernel is useful in constructing a balanced spectral decomposition of $\hpi$, which (loosely) ensures that near convergence time, small $L_1$-bounded changes in $\Delta_t$ cannot propagate (via the force of the interaction kernel) to large changes, measured in $L_1$. While it may be possible, we have not been able to find any simple ways to prevent $L_1$-growth in $\Delta_t$ near convergence time without this constant-rank assumption. In Section~\ref{sec:sims}, we simulate several examples in which $\cstarr$ grows polynomially in $d$. The presence of various PoC phenomena did not seem to be correlated with the size of $\cstarr$ --- observe that the \mis~example (Figure~\ref{fig:body:mis}), which has $\cstarr$ that grows polynomially in $d$, demonstrated PoC for relatively small widths.

% \mg{Theorem~\ref{thm:main} requires that $n$ and $m$ depend arbitrarily on $\cstarr$ (and in fact, an inspection of our proofs reveals that this dependence is exponential). We remark that we have leveraged this constant only for convenience in our proofs; if $\cstarr$ is super-constant (e.g., in the case that $|\supp{\rho^*}|$ grows with $d$ and $\sigma$ is not a polynomial) then it is possible to rework our potential function by grouping eigenfunctions with eigenvalues into buckets of the form $[\lambda, C\lambda]$ for a universal constant $C$. Since we can lose factors on the scale of $1/\sqrt{m}$ without affecting our analysis, it suffices to create such buckets down to the scale of $\lambda = 1/\sqrt{m}$ This incurs only constant-factor losses in our analysis if} 

\section{Experiments}\label{sec:sims}

We conduct simulations both to validate our theory in settings which we expect satisfy our assumptions, and to examine what happens when these assumptions fail to hold. Table~\ref{table:sims} in Appendix~\ref{apx:sims} describes all of the settings we simulated, and documents which assumptions we believe they satisfy. We remark that we did not preferentially chose these examples because we expected (or observed) propagation of chaos: we in fact ran these simulations with the goal of finding a multi-index function in which propagation of chaos fails, and have so far been unsuccessful.

% We remark that we did not preferentially chose these examples because we expected (or observed) propagation of chaos: we in fact ran these simulations with the goal of finding a multi-index function in which propagation of chaos fails, and have so far been unsuccessful. 

\subsection{Experiment Setting}

Since we could not simulate an infinite-width network, we measured certain proxies for the distance between $\rtm$ and $\rtmf$ by comparing a neural network of width $m$ to a neural network of width $M$, for $M \gg m$. The full experimental design is described in Section~\ref{apx:sims:design}. 
In brief, we initialize the smaller width-$m$ network to be a subset of the neurons in the larger width-$M$ network; in this way, we can track for all $i \in [m]$ the coupling differences $\hdit$ (a proxy for $\dit$\footnote{Triangle inequality yields that $\|\hdit\| \leq 2\|\dit\|$, though the converse is not necessarily true.}) throughout training. In our plots, we estimate the following quantities from data throughout the training dynamics: $(a)$ the prediction risk or generalization error, $(b)$ the function error $m\cdot\| f_{\rtm} - f_{\rtmf}\|^2$, and $(c)$ the fixed parameter-coupling error $m\cdot\mathbb{E}_i \|\hdit\|$. In our full plots in the appendix, we also include several histograms of $\|\hdit\|$. We plot the above values for a range of widths $m$, and examine the decay rate in $m$. 

In Figures~\ref{fig:body:hefour},\ref{fig:body:mis},\ref{fig:body:xor} we consider $(i)$ the well-specified Gaussian single-index setting with $\text{He}_4$ activation function, which satisfies all our assumptions in Section~\ref{sec:assm}, $(ii)$ a misspecified single-index setting where we do not expect $\rho^*$ to be atomic (see e.g., \cite{mahankali2023beyond}), and $(iii)$ the multi-index setting of $4$-parity function similar to that studied in \cite{glasgow2023sgd,telgarsky2023feature,suzuki2023feature}. 

\subsection{Takeaways from Simulations}

We describe some of our takeaways from the experiments below. More figures can be found in Appendix~\ref{apx:sims}.

\paragraph{PoC in function error.}

In all examples we simulated, we observed that for $m$ large enough, the function error $\| f_{\rtm} - f_{\rtmf}\|^2$ decayed at least linearly with the width $m$ -- this is evident in Figures~\ref{fig:body:hefour},\ref{fig:body:mis},\ref{fig:body:xor}(b). Surprisingly, in all examples the function error decayed nearly monotonically in time. 

\paragraph{PoC via fixed parameter-coupling.}
Similarly to the function error, we observed that the parameter-coupling error $\left(\mathbb{E}_i \|\hdit\|\right)^2$ decayed at at $1/m$ rate -- this is evident in Figure~\ref{fig:body:hefour},\ref{fig:body:mis},\ref{fig:body:xor}(c). However, unlike the function error, the growth of this error appeared to be linear in $t$, which is consistent with the upper bound on parameter-coupling error in terms function error given in \eqref{eq:linkPOC}. We note that our experiments show that \eqref{eq:linkPOC} is not quite tight, as the parameter-coupling error seems to grow slower than (a scaling of) the integral of the function error over time. More extensive experiments could provide more insight on this.

\begin{figure}[t]
\begin{minipage}{0.32\linewidth}
\centering
{\includegraphics[width=0.96\linewidth]{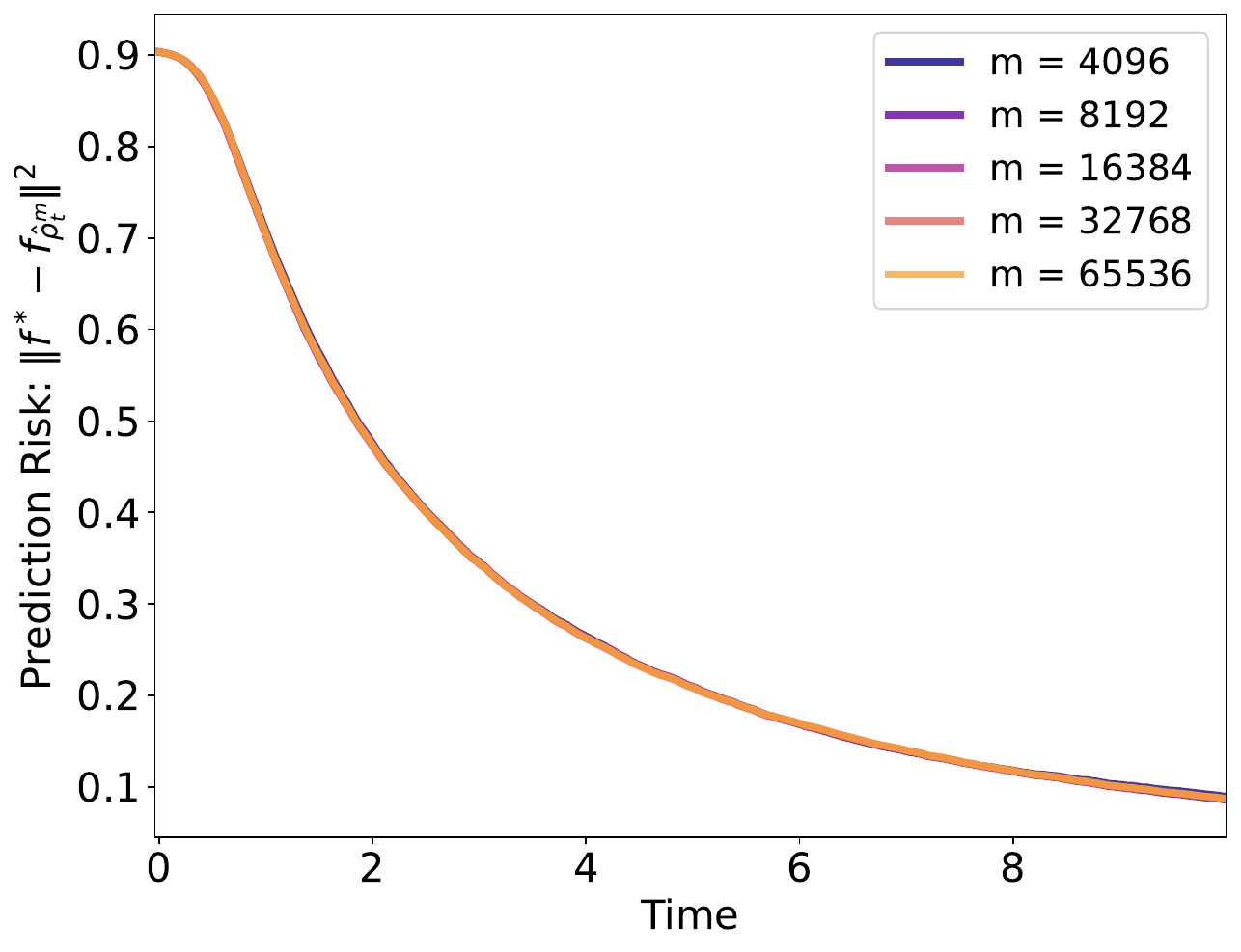}}  \\ \vspace{-1mm}
\small (a) prediction risk \\ $\|f_{\rtm} - f^*\|^2$.  
\end{minipage}%
\begin{minipage}{0.32\linewidth}
\centering
{\includegraphics[width=0.96\linewidth]{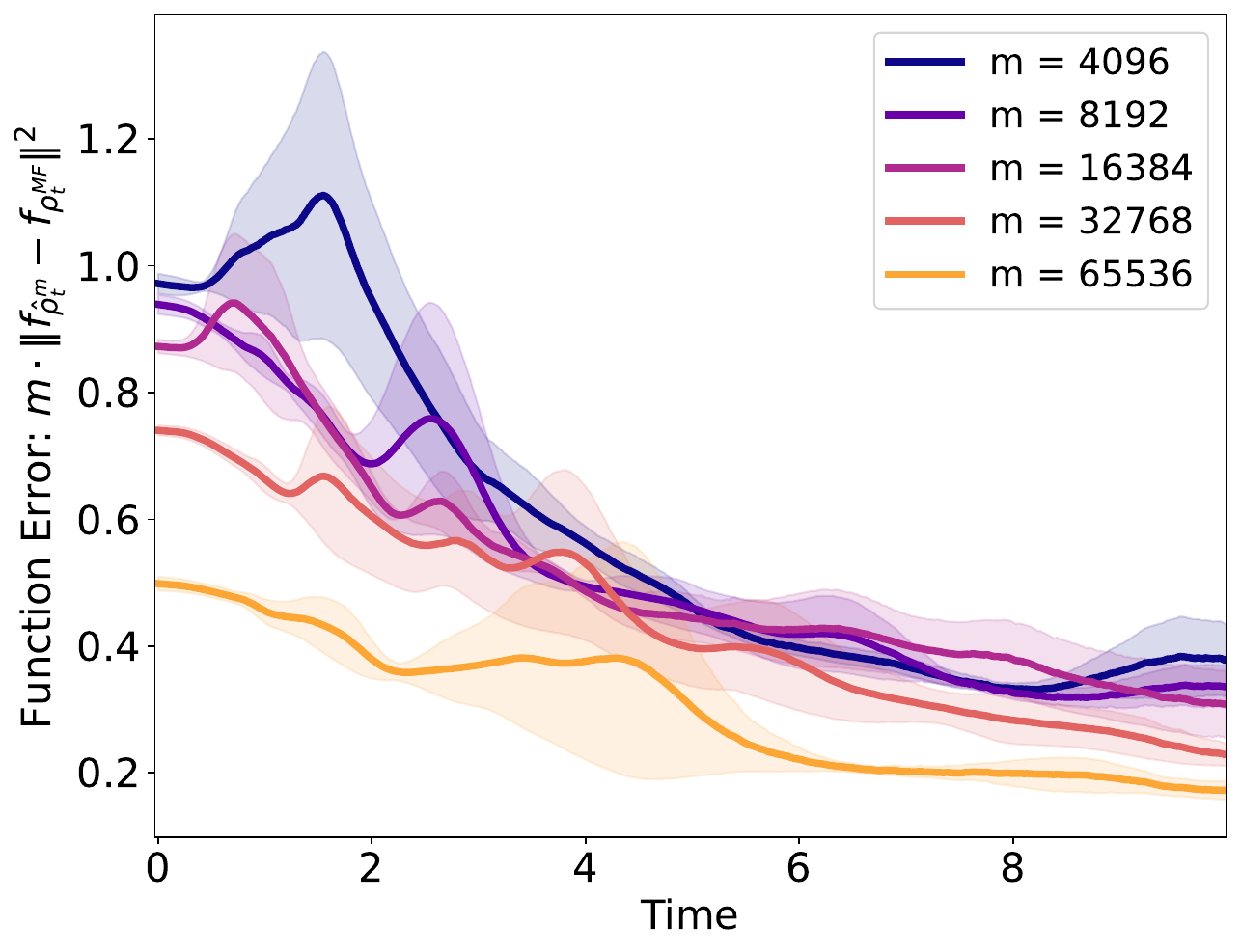}}  \\ \vspace{-1mm}
\small (b) scaled function error $m \|f_{\rtm} - f_{\rtmf}\|^2$.  
\end{minipage}%
\begin{minipage}{0.32\linewidth}
\centering
{\includegraphics[width=0.96\linewidth]{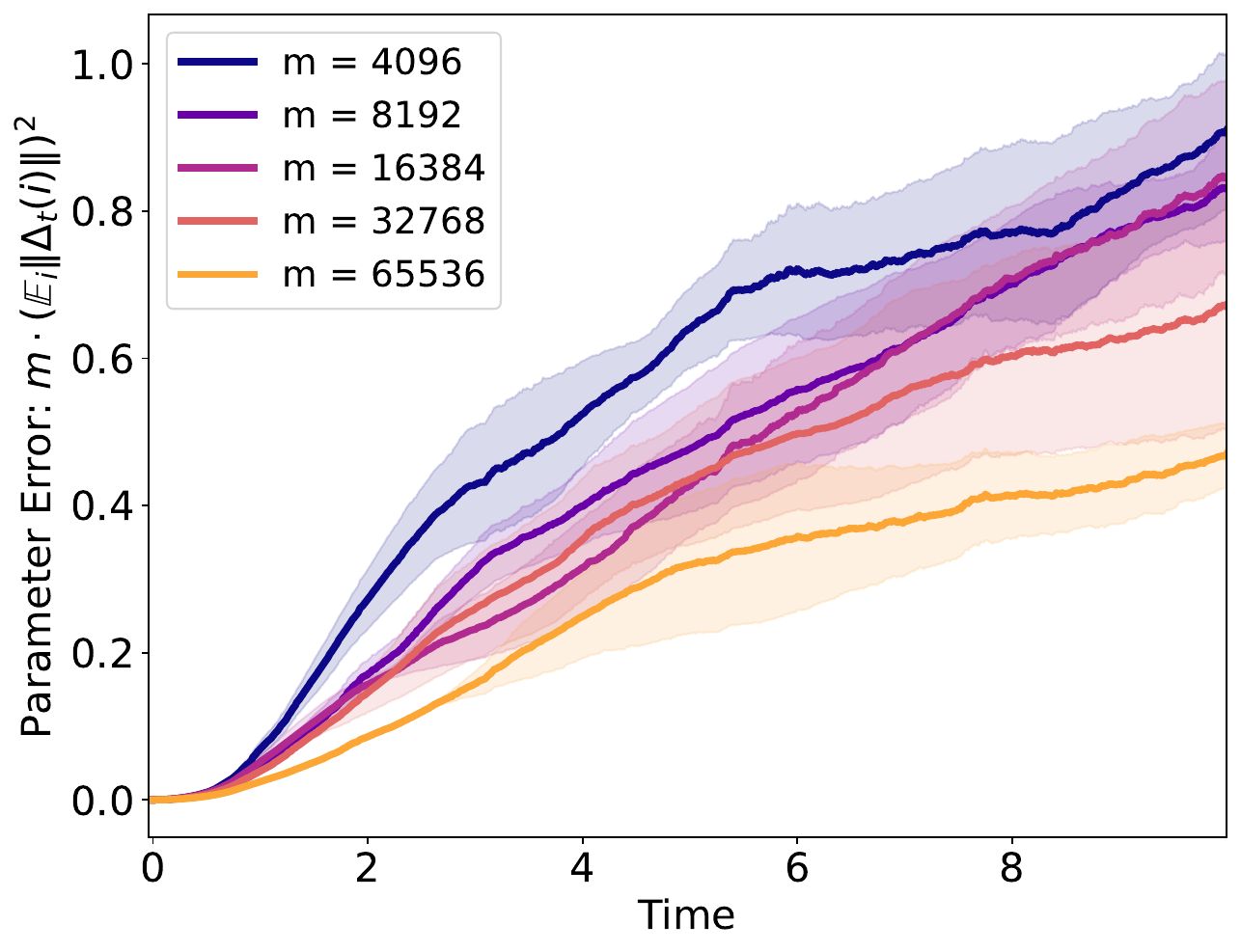}}  \\ \vspace{-1mm}
\small (c) scaled parameter coupling error $m (\mathbb{E}_i \|\hdit\|)^2$.  
\end{minipage}%
\vspace{-1mm}
\caption{Well-specified single-index (\hefour) target function $f^*(x) = \text{He}_4(x^\top w^*), x\sim\mathcal{N}(0,I_d)$, and $\sigma = \text{He}_4$. We set $d=32$ and learning rate $\eta=0.01$. 
    % \hefour~($d = 32$) Left: Loss and risk. Middle: Function error (approximation of $m\mathbb{E}_x (f_{\rtm}(x) - f_{\rtmf}(x))^2$). Right: Parameter coupling error (approximation of $m\left(\mathbb{E}_i \|\dit\|\right)^2$). The results for other values of $d \in [16, 32, 64, 128]$ were similar.
    }
    \label{fig:body:hefour}
\end{figure}

\begin{figure}[t]
\begin{minipage}{0.32\linewidth}
\centering
{\includegraphics[width=0.96\linewidth]{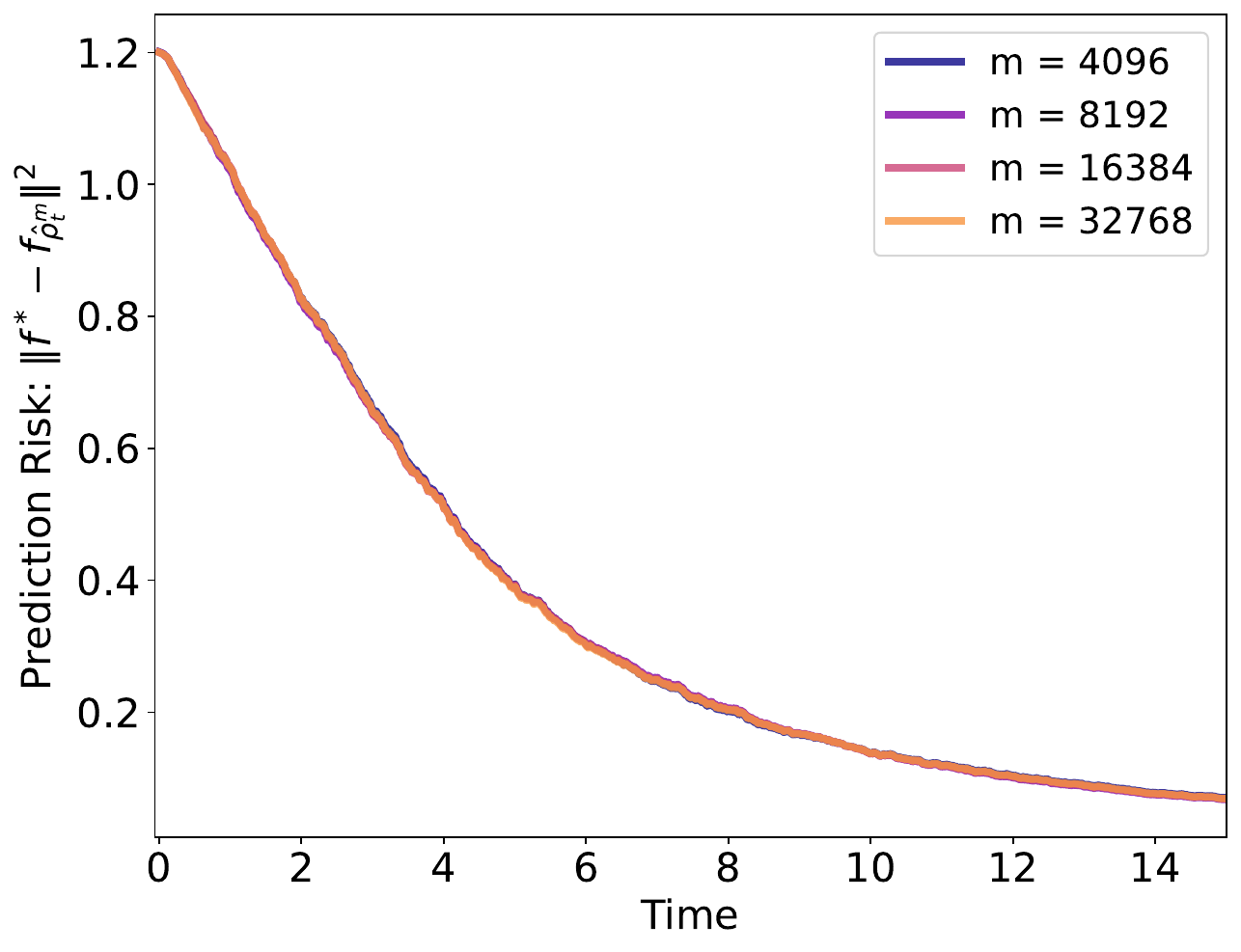}}  \\ \vspace{-1mm}
\small (a) prediction risk \\ $\|f_{\rtm} - f^*\|^2$.  
\end{minipage}%
\begin{minipage}{0.32\linewidth}
\centering
{\includegraphics[width=0.96\linewidth]{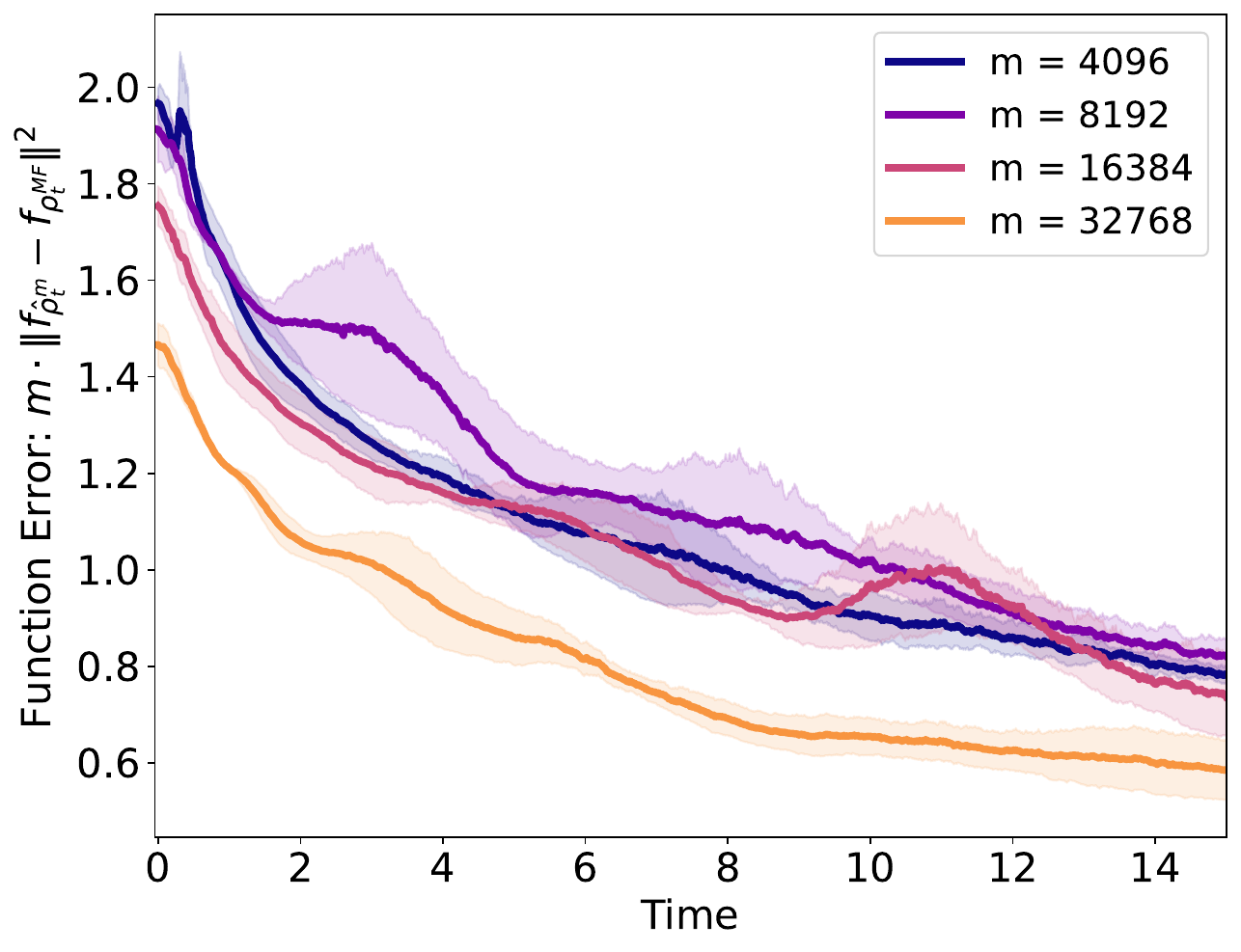}}  \\ \vspace{-1mm}
\small (b) scaled function error $m \|f_{\rtm} - f_{\rtmf}\|^2$.  
\end{minipage}%
\begin{minipage}{0.32\linewidth}
\centering
{\includegraphics[width=0.96\linewidth]{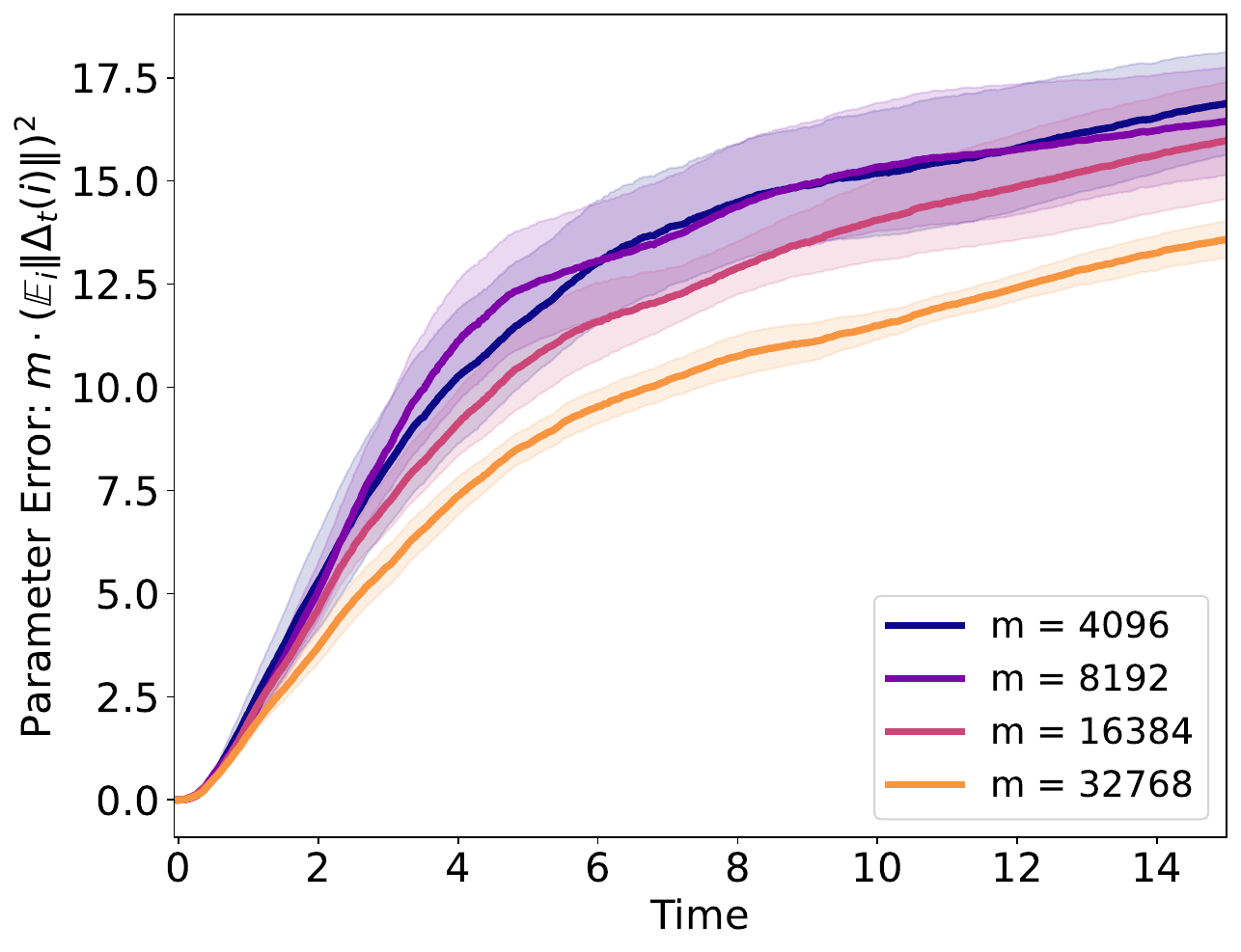}}  \\ \vspace{-1mm}
\small (c)  scaled parameter coupling error $m (\mathbb{E}_i \|\hdit\|)^2$. 
\end{minipage}%
\vspace{-1mm}
\caption{Misspecified single-index (\mis) target function $f^*(x) = 0.8 \text{He}_4(x^\top w^*) + 0.6\text{He}_6(x^\top w^*), x\sim\mathcal{N}(0,I_d)$, and $\sigma = \text{He}_4 + \text{He}_6$. We set $d=32$ and learning rate $\eta=0.01$. 
    % \hefour~($d = 32$) Left: Loss and risk. Middle: Function error (approximation of $m\mathbb{E}_x (f_{\rtm}(x) - f_{\rtmf}(x))^2$). Right: Parameter coupling error (approximation of $m\left(\mathbb{E}_i \|\dit\|\right)^2$). The results for other values of $d \in [16, 32, 64, 128]$ were similar.
    }
    \label{fig:body:mis}
\end{figure}

\begin{figure}[!htb]
\begin{minipage}{0.32\linewidth}
\centering
{\includegraphics[width=0.96\linewidth]{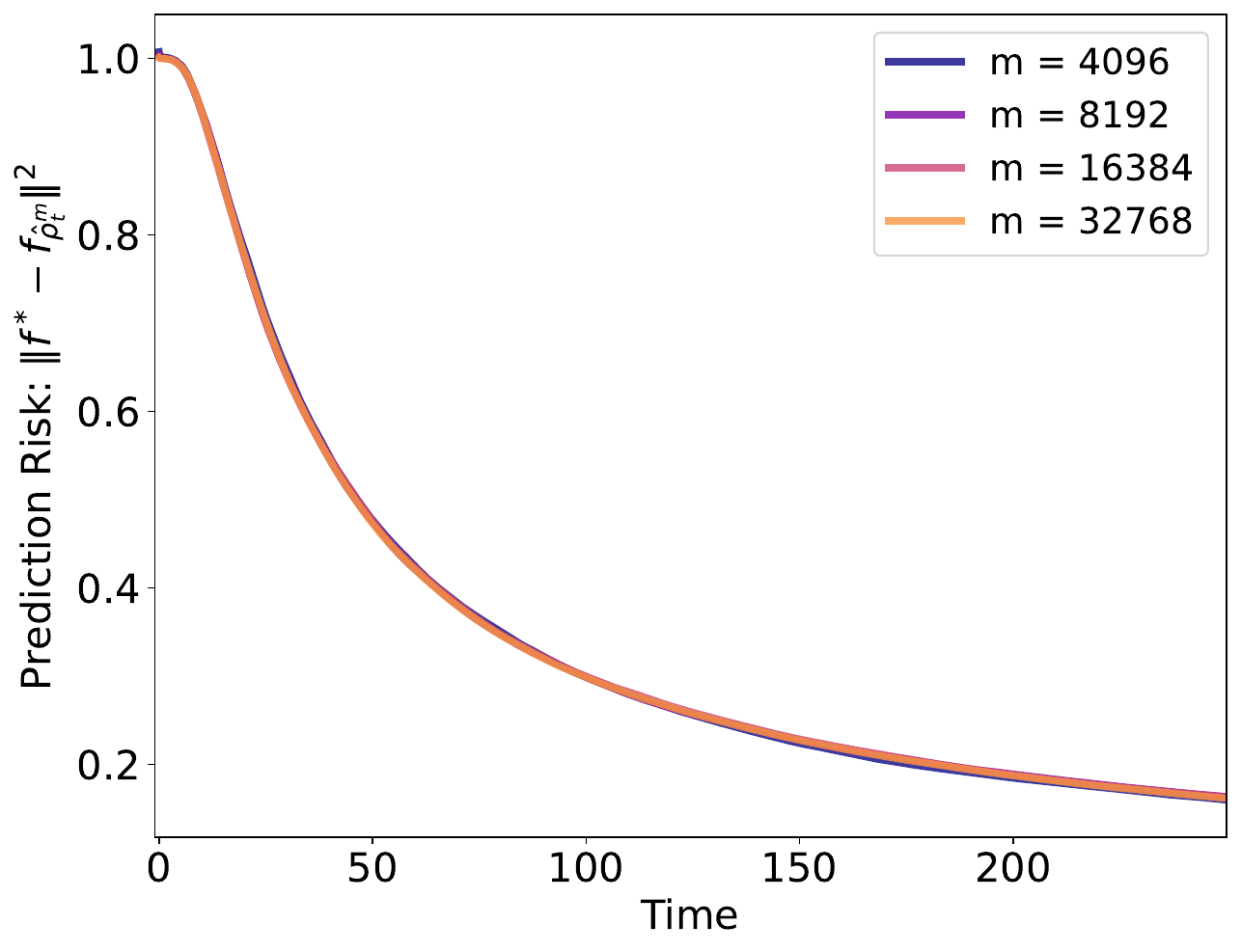}}  \\ \vspace{-1mm}
\small (a) prediction risk \\ $\|f_{\rtm} - f^*\|^2$.
\end{minipage}%
\begin{minipage}{0.32\linewidth}
\centering
{\includegraphics[width=0.96\linewidth]{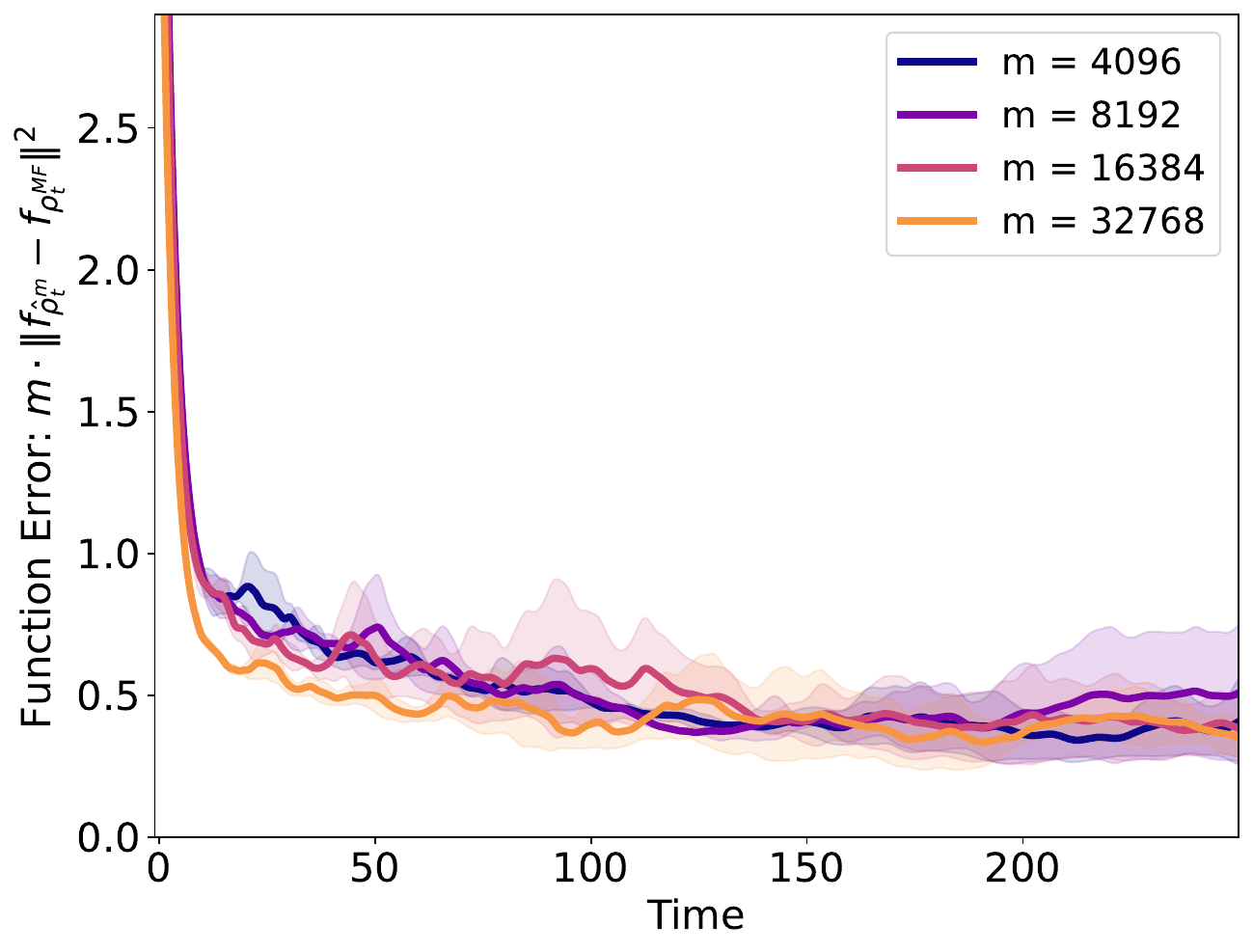}}  \\ \vspace{-1mm}
\small (b) scaled function error $m \|f_{\rtm} - f_{\rtmf}\|^2$.  
\end{minipage}%
\begin{minipage}{0.32\linewidth}
\centering
{\includegraphics[width=0.96\linewidth]{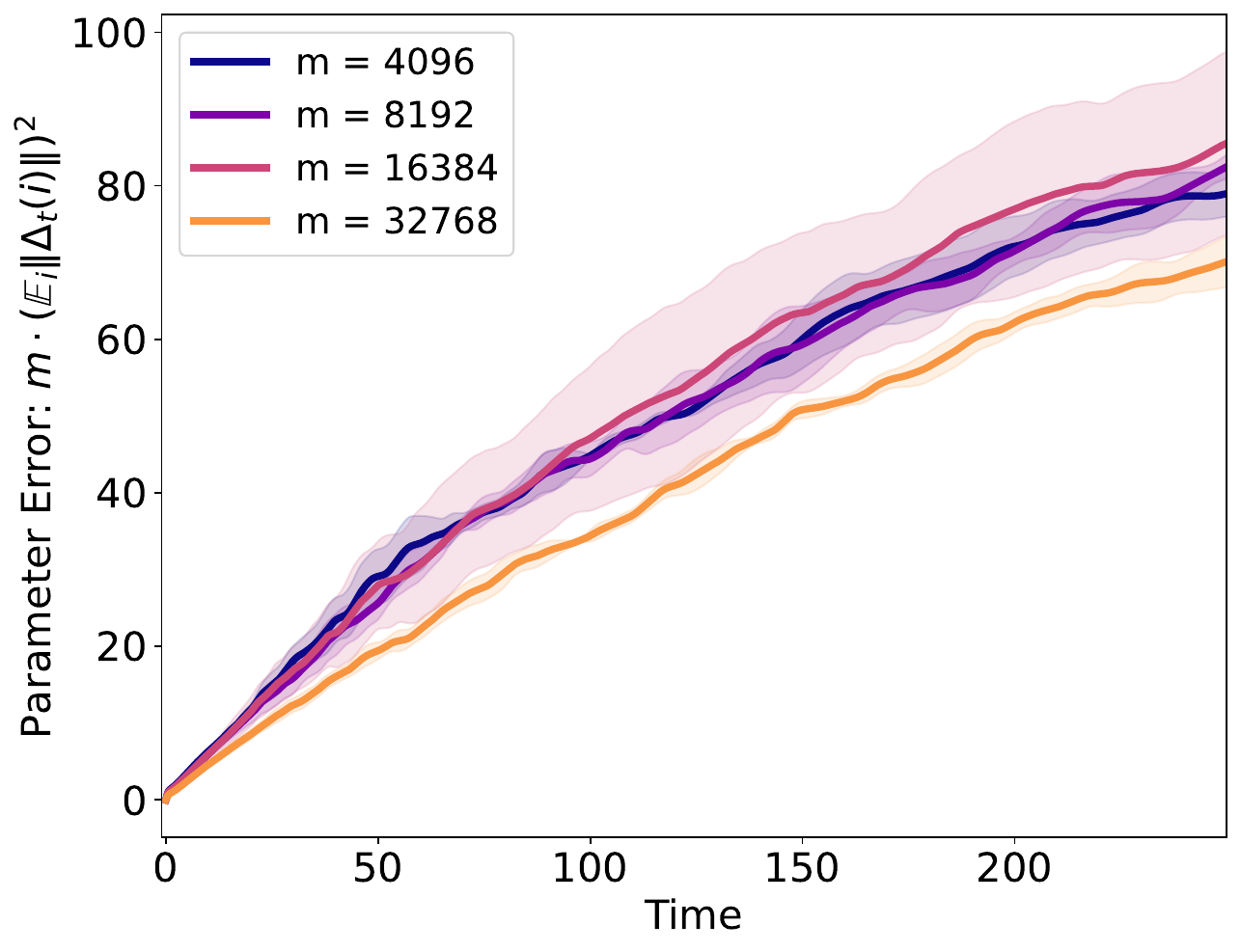}}  \\ \vspace{-1mm}
\small (c)  scaled parameter coupling error $m (\mathbb{E}_i \|\hdit\|)^2$. 
\end{minipage}%
\vspace{-1mm}
\caption{$4$-parity (\xorfour) target function $f^*(x) = \prod_{j\le 4} [x]_j, [x]_i\sim\text{Unif}\{1,-1\}$, and $\sigma = \text{SoftPlus}$ with temperature 16. We set $d=32$ and learning rate $\eta=0.05$. 
    } 
    \label{fig:body:xor}
\end{figure}

\begin{remark}
We note that our experiments are technically insufficient to guarantee that the PoC rate is polynomial in $d, T$, because we did not conduct extensive comparisons of the decay rate across growing values of $d, T$. However, in all of the experiments we plotted, we observed linear decay in function error $\| f_{\rtm} - f_{\rtmf}\|^2 \lesssim m^{-1}$ starting even at the smallest value of $m = 2^{12}$, which suggests that if the parameter-coupling PoC rate is some $g(d, t)/m$, then for all of experiments, $g(d, t) \leq t$ for the value of $d$ and range of $t$ we simulated.
% In experiments where we tried multiple values of $d$, we observed that the rate of decay of function error grew at most polynomially in $d$ (see Table~\ref{table:d} in Section~\ref{apx:sims}). 
Thus, we conjecture that in all these problems there is PoC in function error and in fixed parameter-coupling error at a rate at most $\on{poly}(d, t)/{m}$.  
\end{remark}
\section{Conclusion}\label{sec:conclusion}
We studied propagation of chaos in the context of gradient-based training of shallow neural networks. By leveraging several key geometric 
assumptions of the optimization landscape, we established non-asymptotic 
guarantees of finite-width dynamics with polynomial dependency in all relevant parameters. 
At the heart of our technical contributions is a tailored potential function that balances the intricate interactions that arise between particle fluctuations around their idealized mean-field evolution. In essence, our assumptions exploit a form of self-concordance in the instantaneous potentials, as well as symmetries in the minimizing mean-field measure. While these assumptions rule out generic interaction particle systems, they crucially capture several problems of interest, such as planted models including single-index target functions. An enticing future direction is remove the local strong convexity assumptions to extend to the case when $\rho^*$ is a manifold; among other settings, this captures the learning a misspecified SIM. Another interesting direction is to go beyond the Monte Carlo scale of fluctuations, which has been established asymptotically under certain conditions \citep{chen2020dynamical,pham2021limiting}.

\bigskip 

\subsection*{Acknowledgments} 
The authors would like to thank Yuqing Wang, Jamie Simon, Taiji Suzuki, Jason Lee, and Andrea Montanari and anonymous referees for useful discussions during the completion of this work. JB acknowledges funding support from NSF DMS-MoDL 2134216 and NSF CAREER CIF 1845360. This material is based upon MG's work supported by the NSF under award 2402314. This work was done in part while MG and DW were visiting the Simons Institute for the Theory of Computing.

\bigskip

{

\fontsize{11}{11}\selectfont   
\bibliographystyle{alpha}
\bibliography{ref}

}

\newpage

\appendix

\section{Additional Related Works}\label{apx:related}

\paragraph{Mean-field analysis of shallow neural networks.}
The mean-field analysis views the training of two-layer neural network \eqref{eq:two-layer-nn} as an interacting particle system, and studies the evolution of the distribution of particles via the mean-field PDE \cite{nitanda2017stochastic,chizat2018global,mei2018mean,rotskoff2018neural,sirignano2020mean}. While most optimization guarantees for mean-field neural networks are \textit{qualitative} in nature, quantitative convergence rate can also be established under additional structural assumptions on the learning problem \cite{javanmard2019analysis,chizat2021sparse,chen2020dynamical,chen2022feature} or modification of the training dynamics \cite{rotskoff2019global,wei2019regularization,nitanda2022convex,chizat2022mean}. 

Recent works have studied the statistical efficiency of mean-field neural networks in learning low-dimensional target functions including multi-index models and $k$-parity. These existing analyses can be divided into two approaches: $(i)$ simplify the mean-field PDE using the symmetry and low-dimensional structure, and study the \textit{dimensional-free} dynamics \cite{hajjar2022symmetries,arnaboldi2023high} at short timescale $T=\tilde{O}_d(1)$ \cite{abbe2022merged,mahankali2023beyond,joshi2024complexity}; $(ii)$ directly characterize the converged solution using global optimality conditions \cite{wei2019regularization,telgarsky2023feature,suzuki2023feature,mousavi2024learning}. While the latter approach establishes a much larger learnable function class (e.g., see \cite{bach2017breaking}), the computational complexity is exponential in the (intrinsic) dimensionality of the problem. 

\paragraph{Gradient-based learning of single/multi-index models.}  
Outside of the mean-field regime, feature learning in neural networks has also been studied in a ``narrow-width'' setting, where neurons evolve (almost) independently and align with the low-dimensional target function during gradient-based training. Prior analyses in this regime mostly considered target functions that depends on $k=O_d(1)$ directions of the input, such as single-index models \cite{benarous2021online,ba2022high,bietti2022learning,mousavi2023neural,damian2023smoothing,dandi2024benefits,lee2024neural,arnaboldi2024repetita} and multi-index models \cite{damian2022neural,abbe2022merged,abbe2023sgd,dandi2023learning,bietti2023learning,collins2023hitting,glasgow2023sgd,arous2024high}. For the ``rank-extensive'' setting $k\gg 1$, recent works have investigated the additive setting where the target function is a sum of $k$ orthogonal single-index models \cite{li2020learning,oko2024learning,ren2024learning,simsek2024learning}. 
\section{Proofs of Lemmas from Basic Setup }\label{apx:dynamics}
\subsection{Notations}
Throughout this section, we will use the following notation, which builds upon the notation in our setup from the main body. \footnote{To emphasize the relationship with $\mathscr{f}$ and $\mathscr{k}$, we deviate from our standard notation convention here in using the lower-case letters $\mathscr{f}'$ and $\mathscr{k}'$ to denote vector-valued functions.}
\begin{align}
    \mathscr{f}(w) &:= \mathbb{E}_{x \sim \md_x} f^*(x)\sigma(w^\top x)\\ 
    \mathscr{f}'(w) &:= (I - ww^\top )\nabla_w \mathscr{f}(w)
\end{align}
and 
\begin{align}\label{eq:FKdef}
    \mathscr{k}(w, w') &:= \mathbb{E}_{x \sim \md_x} \sigma(w'^\top x)\sigma(w^\top x)\\
    \mathscr{k}'(w, w') &:= (I - ww^\top )\nabla_w \mathscr{k}(w, w').
\end{align}
In additional the interaction Hessian $\hpt$ introduced in the introduction, we also define a versions without the orthogonal projection, that is:
\begin{align}
    H_t(w, w') &:= \mathscr{k}'(\xi_t(w), \xi_t(w'))\\
    \hpt(w, w') &= H_t(w, w')(I - \xi_t(w')\xi_t(w'))
\end{align}
We also define the \em empirical local Hessian \em $\bar{D}_t$ (closely related to $\dpt$), where the expectation is taken over $\brt$ instead of $\rtmf$:
\begin{align}
    \bar{D}_t(w) &:= \nabla_{\xi_t(w)} \nu(\xi_t(w), \brt) = \nabla_{\xi_t(w)} \mathscr{f}'(\xi_t(w)) - \mathbb{E}_{w' \sim \brt}\nabla_{\xi_t(w)} \mathscr{k}'(\xi_t(w), w').\\
    \dpt(w) &= \nabla_{\xi_t(w)} \nu(\xi_t(w), \rtmf) = \nabla_{\xi_t(w)} \mathscr{f}'(\xi_t(w)) - \mathbb{E}_{w' \sim \rtmf}\nabla_{\xi_t(w)} \mathscr{k}'(\xi_t(w), w').
\end{align}
\subsection{Proof of Lemma~\ref{lemma:errdynamicsbody}}
We being with a basic lemma which uses the regularity of $\sigma$ to bound the smoothness of various problem parameters.
\begin{lemma}\label{lemma:smoothness}
Assume Assumption~\ref{assm:sigma} holds for the constant $\creg$. Then the following holds for any $w$ and $w'$ with norm at most $1$.
\begin{enumerate}[{\bfseries{S\arabic{enumi}}}]
    \item \label{S6}  $\left\|\nabla_w\mathscr{k}'(w, w')\right\| \leq \creg$ and $\left\|\nabla_w\mathscr{f}'(w)\right\| \leq \creg$ 
    \item\label{S1} $\left\|\nabla^2_{w'}\mathscr{k}'(w, w')\right\| \leq \creg$
    \item\label{S2} $\left\|\nabla^2_{w}\mathscr{k}'(w, w')\right\| \leq \creg$
    \item\label{S3} $\left\|\nabla_{w'}\nabla_w\mathscr{k}'(w, w')\right\| \leq \creg$
    \item\label{S4} $\left\|\nabla^2_{w} \mathscr{f}'(w)\right\|_{op} \leq \creg$
    \item\label{S5} For any distribution $\rho \in \Delta(\mathbb{S}^{d-1})$, we have $\left\|\nabla^2_{w} \nu(w, \rho)\right\|_{op} \leq \creg$
    % \item \label{S7}$\mathbb{E}_{x \sim \md_x}\left[\mathrm{Lip}( \sigma( \langle \cdot , x \rangle) )^2\right] \leq \creg$
\end{enumerate}
\end{lemma}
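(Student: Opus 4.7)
The plan is to reduce each of \ref{S6}--\ref{S5} to bounding tensor-valued expectations of the form $\mathbb{E}_x\bigl[\sigma^{(j_1)}(w^\top x)\sigma^{(j_2)}(w'^\top x)\, x^{\otimes r}\bigr]$ with $j_1,j_2\in\{0,1,2,3\}$ and $r\in\{0,1,2,3\}$, and then to control these by the generalized Hölder inequality at exponent $5$. The factor of $11$ in Assumption \ref{assm:sigma} is precisely what makes the bookkeeping close cleanly with the constant $\creg$.

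First I would expand each derivative by the product rule. For example, starting from $\mathscr{k}(w,w')=\mathbb{E}_x[\sigma(w'^\top x)\sigma(w^\top x)]$ and differentiating under the expectation, a $k$-fold derivative in $w$ together with an $\ell$-fold derivative in $w'$ produces the tensor $\mathbb{E}_x[\sigma^{(\ell)}(w'^\top x)\sigma^{(k)}(w^\top x)\, x^{\otimes(k+\ell)}]$. For $\mathscr{k}'(w,w')=(I-ww^\top)\nabla_w\mathscr{k}(w,w')$, additional derivatives may instead fall on the projector $(I-ww^\top)$, each such hit replacing one factor of $x$ by a bounded factor of $w$ (and, via $\|w\|\le 1$, contributing at most a constant to the norm). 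Inspecting \ref{S6}--\ref{S4}, one checks that each derivative produces a finite sum of terms, each of which is a product of at most $2$ factors $\sigma^{(j)}(\cdot)$ with $j\le 3$, at most $3$ factors $x^\top u$ with unit vectors $u$, and bounded algebraic factors in $w,w'$.

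To bound the operator norm of such a tensor, I would write
\begin{align*}
\left\|\mathbb{E}_x\bigl[\sigma^{(j_1)}(w^\top x)\sigma^{(j_2)}(w'^\top x)\, x^{\otimes r}\bigr]\right\|_{\op}
&=\sup_{\|u_1\|=\cdots=\|u_r\|=1}\Bigl|\mathbb{E}_x\Bigl[\sigma^{(j_1)}(w^\top x)\sigma^{(j_2)}(w'^\top x)\prod_{k=1}^r(u_k^\top x)\Bigr]\Bigr|,
\end{align*}
and apply Hölder with $2+r\le 5$ equal weights (padding with $\mathbb{E}[1]$ if $r<3$ to reach five factors):
\begin{align*}
\left|\mathbb{E}[Y_1Y_2Y_3Y_4Y_5]\right|
\le \prod_{i=1}^{5}\bigl(\mathbb{E}|Y_i|^5\bigr)^{1/5}.
\end{align*}
Since $\|w\|,\|w'\|\le 1$ and $x$ is sub-Gaussian by \ref{assm:data}, the linear forms $w^\top x$, $w'^\top x$, $u_k^\top x$ are all sub-Gaussian with a universal constant, so Assumption \ref{assm:sigma} yields $(\mathbb{E}|\sigma^{(j)}(w^\top x)|^5)^{1/5}\le \creg/11$, and the $u_k^\top x$ factors contribute a universal constant $C_{\mathrm{subG}}$. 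Multiplying and summing over the (constantly many) terms produced by the product rule gives the claimed bound with a constant depending only on $\creg$.

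For \ref{S4} the same strategy applies once we note that in the teacher--student setting, $f^*(x)=\mathbb{E}_{w^*\sim\rho^*}\sigma(x^\top w^*)$ with $w^*\in\sd$, so Jensen's inequality and Assumption \ref{assm:sigma} give $(\mathbb{E}|f^*(x)|^5)^{1/5}\le \creg/11$; the remaining factors are handled exactly as above. Finally, \ref{S5} follows immediately from \ref{S2} and \ref{S4} by writing $\nu(w,\rho)=-\mathscr{f}'(w)+\mathbb{E}_{w'\sim\rho}\mathscr{k}'(w,w')$, differentiating twice in $w$, and applying the uniform-in-$w'$ bound from \ref{S2}. The only mildly delicate step is the careful product-rule expansion for the projector in $\mathscr{k}'$ and $\mathscr{f}'$; I would handle this once with generic notation and then invoke it for each of the six bullets rather than repeat the calculation, since no term ever exceeds $2$ derivative factors on $\sigma$ or $3$ factors of $x$, which is exactly why the fifth-moment control in \ref{assm:sigma} is sufficient.
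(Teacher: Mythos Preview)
Your proposal is correct and matches the paper's approach essentially line for line: reduce to tensors of the form $\mathbb{E}_x[\sigma^{(j_1)}(\cdot)\sigma^{(j_2)}(\cdot)\,x^{\otimes r}]$ via the product rule (handling the projector $I-ww^\top$ separately as a bounded factor), then apply H\"older at exponent $5$ together with the fifth-moment bound in Assumption~\ref{assm:sigma}. The paper organizes the reduction slightly differently---it first strips off the projector via a generic chain-rule estimate and then lists the required derivatives of $\mathscr{f}$ and $\mathscr{k}$---but the substance is identical, and your explicit use of Jensen to bound $(\mathbb{E}|f^*(x)|^5)^{1/5}$ in the teacher--student setting is exactly what the paper implicitly invokes.
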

\begin{proof}[Proof of Lemma~\ref{lemma:smoothness}]
These are straightforward to check from the definitions. First note that the operator norm of the first and second derivatives of $I - ww^\top $ is at most $2$. Thus for any vector-valued function $\xi(w)$, by chain rule, we have
\begin{align}
    \left\|\nabla_w (I - ww^\top )\xi(w)\right\| &\leq \left\|\nabla_w \xi(w)\right\| + 2\left\|\xi(w)\right\|~, \\
    \left\|\nabla^2_{w} (I - ww^\top )\xi(w)\right\| &\leq 3\left\|\nabla^2_{w} \xi(w)\right\| + 8\left\|\nabla_w \xi(w)\right\|.
\end{align}
So to prove the lemma, it suffices to bound (over all $w, w' \in \sd$):
\begin{align}
    \left\|\nabla_w \mathscr{f}(w)\right\|, \left\|\nabla^2_{w} \mathscr{f}(w)\right\|, \left\|\nabla^3_w \mathscr{f}(w)\right\| \leq \creg/11,
\end{align}
and 
\begin{align}
    \left\|\nabla_w \mathscr{k}(w, w')\right\|, \left\|\nabla^2_{w} \mathscr{k}(w, w')\right\|, \left\|\nabla^3_w \mathscr{k}(w, w')\right\|, \left\|\nabla_w \nabla_{w'} \nabla_w \mathscr{k}(w, w')\right\|,\left\|\nabla^2_w \nabla_{w'}\nabla_w \mathscr{k}(w, w')\right\| \leq \creg/11~.
\end{align}

As an example, for \ref{S1}, we have
\begin{align}
   \left\|\nabla^2_{w'}\mathscr{k}'(w, w')\right\|_{op} &\leq \sup_{v_2, v_2, v_3 \in \mathbb{S}^{d-1}}\mathbb{E}_x \sigma(w^\top x)\sigma'''(w'^\top x)v_1^\top (I - ww^\top )x(v_2^\top x)(v_3^\top x)\\
    &\leq \sup_{z, z' \in B_2^d}\left(\mathbb{E}_x |\sigma(z^\top x)|^5\right)^{1/5}\left(\mathbb{E}_x |\sigma'''(z'^\top x)|^5\right)^{1/5}\sup_{v \in \mathbb{S}^{d-1}}\left(\mathbb{E}_x|(v^\top x)|^5\right)^{3/5}\\
    &\leq \creg/11,
\end{align}
where here the second inequality holds by Holder's inequality, and the final inequality by Assumption~\ref{assm:sigma}. For \ref{S2}, the argument is the same as the previous one, except we use the product rule to account for the derivatives of $(I - ww^\top )$, which have operator norm at most $1$. 

For the rest of the terms involving derivatives --- up to third order --- of $K$, the argument is near identical, following from Holder's inequality and Assumption~\ref{assm:sigma}. Thus each of these terms above are bounded by $\creg/11$. 

For the terms involving $F$, as an example, let's expand the third order term. We have
\begin{align}
    \left\|\nabla^3_w \mathscr{f}(w)\right\| &\leq \sup_{v_1, v_2, v_3 \in \sd} \mathbb{E}_x |\sigma^{(3)}(w^\top x)(v_1^\top x)(v_2^\top x)(v_3^\top x)f^*(x)|\\
    &\leq \sup_{z, z' \in B_2^d}\left(\mathbb{E}_x |\sigma^{(3)}(z^\top x)|^5\right)^{1/5}\sup_{v \in \mathbb{S}^{d-1}}\left(\mathbb{E}_x|(v^\top x)|^5\right)^{3/5} \left(\mathbb{E}_x (f^*(x))^5\right)^{1/5}\\
    & \leq \creg/11.
\end{align}
It follows that all the terms in the lemma are bounded by $\creg$. 

% \begin{align}
%    \left\|\nabla^2_{w}\mathscr{k}'(w, w')\right\|_{op} &\leq \sup_{v_2, v_2, v_3 \in \mathbb{S}^{d-1}}\mathbb{E}_x \sigma(w^\top x)\sigma'''(w'^\top x)v_1^\top (I - ww^\top )x(v_2^\top x)(v_3^\top x)\\
%     &\leq \sup_{z, z' \in B_2^d}\left(\mathbb{E}_x |\sigma(z^\top x)|^5\right)^{1/5}\left(\mathbb{E}_x |\sigma'''(z'^\top x)|^5\right)^{1/5}\sup_{v \in \mathbb{S}^{d-1}}\left(\mathbb{E}_x|(v^\top x)|^5\right)^{3/5}\\
%     &\leq \creg,
% \end{align}
\end{proof}
We also prove Lemma~\ref{fact} and Lemma~\ref{lemma:fparam} here, which we restate for the reader's convenience.
\factw*
\fparam*

\begin{proof}[Proof of Lemma~\ref{fact} and Lemma~\ref{lemma:fparam}]

First we decompose
    \begin{align*}
    \mathbb{E}_x (f_{\rtmf}(x) - f_{\rtm}(x))^2 &\leq 2\mathbb{E}_x (f_{\rtmf}(x) - f_{\brt}(x))^2 + 2\mathbb{E}_x (f_{\brt}(x) - f_{\rtm}(x))^2.
    \end{align*}
Now we can expand
\begin{align}
    \mathbb{E}_x &(f_{\brt}(x) - f_{\rtm}(x))^2\\
    &= \mathbb{E}_x \left(\mathbb{E}_i \sigma(\bwti^\top x) - \sigma((\bwti - \dit)^\top x)\right)^2 \\
    &= \mathbb{E}_x \left(\mathbb{E}_i \sigma'(\bwti^\top x)x^\top \dit  + \int_{s = 0}^1 \int_{s' = 0}^s (\sigma''((\bwti - s'\dit^\top x)(x^\top \dit)^2 ds' ds\right)^2.
\end{align}

Letting $\zeta(i, x) := \int_{s = 0}^1 \int_{s' = 0}^s \sigma''(\bwti - s'\dit^\top x) ds' ds$, we have
\begin{align}\label{eq:conc1}
     \mathbb{E}_x (f_{\brt}(x) - f_{\rtm}(x))^2 &\leq 2\mathbb{E}_x \left(\mathbb{E}_i \sigma'(\bwti^\top x)x^\top \dit\right)^2 + 2\mathbb{E}_x \left(\mathbb{E}_i (x^\top \dit)^2 \zeta(i, x) \right)^2,
\end{align}
and likewise,
\begin{align}\label{eq:conc2}
    \mathbb{E}_x \left(\mathbb{E}_i \sigma'(\bwti^\top x)x^\top \dit\right)^2 &\leq 2\mathbb{E}_x (f_{\brt}(x) - f_{\rtm}(x))^2 + 2\mathbb{E}_x \left(\mathbb{E}_i (x^\top \dit)^2 \zeta(i, x) \right)^2.
\end{align}

Let us bound the second term. We have 
\begin{align}
    \mathbb{E}_x &\left(\mathbb{E}_i (x^\top \dit)^2 \zeta(i, x) \right)^2 \\
    &= \mathbb{E}_i \mathbb{E}_j \mathbb{E}_x (x^\top \dit)^2 \zeta(i, x) (x^\top \djt)^2 \zeta(j, x) \\
    &\leq \mathbb{E}_i \mathbb{E}_j \left(\mathbb{E}_x ((x^\top \dit)^2)^4\right)^{1/4}\left(\mathbb{E}_x (\zeta(i, x))^4\right)^{1/4}\left(\mathbb{E}_x ((x^\top \djt)^2)^4\right)^{1/4}\left(\mathbb{E}_x (\zeta(j, x))^4\right)^{1/4}\\
    &= \left(\mathbb{E}_i \left(\mathbb{E}_x ((x^\top \dit)^2)^4\right)^{1/4}\left(\mathbb{E}_x (\zeta(i, x))^4\right)^{1/4}\right)^2 \\
    &\leq \left(\mathbb{E}_i \left(\mathbb{E}_x ((x^\top \dit)^8\right)^{1/4}\right)^2 \left(\max_i \left(\mathbb{E}_x (\zeta(i, x))^4\right)^{1/4}\right)^2\\
    &\leq O_{\creg}\left(\mathbb{E}_i \|\dit\|^2\right)^2 \left(\max_i \left(\mathbb{E}_x (\zeta(i, x))^4\right)^{1/4}\right)^2
    % &\leq 28\creg^4 \left(\mathbb{E}_i \|\dit\|^2\right)^2 \left(\mathbb{E}_x (\zeta(i, x))^4\right)^{1/2}~.
\end{align}
Here the final line follows by the $\creg$-subgaussianity assumption on $x$ in \ref{assm:data}.

% \begin{align}
%     \mathbb{E}_x &\left(\mathbb{E}_i\int_{s = 0}^1 \zeta_s(i) x^\top \dit ds\right)^2 = \mathbb{E}_{i, j} \int_{r, s = 0}^1 \mathbb{E}_x \zeta_s(i, x)x^\top \dit \zeta_r(j, x)x^\top \djt ds dr\\
%     &\leq \mathbb{E}_{i, j} \int_{r, s = 0}^1 \left(\mathbb{E}_x (\zeta_s(i, x))^4\right)^{1/4} \left(\mathbb{E}_x (x^\top \dit)^4\right)^{1/4} \left(\mathbb{E}_x (\zeta_s(j, x))^4\right)^{1/4} \left(\mathbb{E}_x (x^\top \djt)^4\right)^{1/4} ds dr\\
%     &= \mathbb{E}_{i} \int_{s = 0}^1 \left(\mathbb{E}_x (\zeta_s(i, x))^4\right)^{1/2} \left(\mathbb{E}_x (x^\top \dit)^4\right)^{1/2} ds \\
%     &= \mathbb{E}_i \sqrt{3}\|\dit\|^2  \int_{s = 0}^1 \left(\mathbb{E}_x (\zeta_s(i, x))^4\right)^{1/2} ds.
% \end{align}
Now since for any $s' \in [0, 1]$, we have that $\|\bwti + s'\dit\| \leq 1$ (as it interpolates between two points on the sphere), we have by Assumption~\ref{assm:reg} that 
\begin{align}
    \mathbb{E}_x (\zeta(i, x))^4 \leq (\creg/11)^4.
\end{align}
Defining 
\begin{align}
    \zeta_s(i, x) = \int_{r = 0}^s \sigma''((\bwti + r\dit^\top x)dr,
\end{align}
and thus since $\|\bwti + r\dit\| \leq 1$ (as it interpolates between two points on the sphere), we have by Assumption~\ref{assm:reg} that 
\begin{align}
    \mathbb{E}_x (\zeta_s(i, x))^4 \leq (\creg/11)^4,
\end{align}
and thus
\begin{align}
    \mathbb{E}_x \left(\mathbb{E}_i (x^\top \dit)^2 \zeta(i, x) \right)^2 &\leq O_{\creg}\left(\mathbb{E}_i \|\dit\|^2\right)^2.
\end{align}
Returning to Equations~\eqref{eq:conc1} and \eqref{eq:conc2}, and observing that $\mathbb{E}_x \left(\mathbb{E}_i \sigma'(\bwti^\top x)x^\top \dit\right)^2 - \Delta_t^\top  \hpt \Delta_t \leq \creg\left(\mathbb{E}_i \|\dit\|^2\right)^2$ (to account for the projections orthogonal to $\bwti$ in $\hpt$; we omit the details), we have that
\begin{align}\label{eq:conc12}
     \mathbb{E}_x (f_{\brt}(x) - f_{\rtm}(x))^2 &\leq 2\Delta_t^\top  \hpt \Delta_t + O_{\creg}\left(\mathbb{E}_i \|\dit\|^2\right)^2\\
     &\leq O_{\creg}\left(\mathbb{E}_i \|\dit\|\right)^2,
\end{align}
and 
\begin{align}\label{eq:conc22}
    \Delta_t^\top  \hpt \Delta_t &\leq 2\mathbb{E}_x (f_{\brt}(x) - f_{\rtm}(x))^2 + O_{\creg}\left(\mathbb{E}_i \|\dit\|^2\right)^2.
\end{align}

It follows that
\begin{align}
    \mathbb{E}_x (f_{\brt}(x) - f_{\rtm}(x))^2 \leq O_{\creg}\left(\mathbb{E}_i \|\dit\|\right)^2.
\end{align}

We will use Chebychev's inequality to bound the first term $\mathbb{E}_x (f_{\rtmf}(x) - f_{\rtm}(x))^2$. We have
    \begin{align*}
        \mathbb{E}_{\rzm \sim \rho_0^{\otimes m}}\left(\mathbb{E}_x (f_{\rtmf}(x) - f_{\brt}(x))^2\right)^2 &\leq \mathbb{E}_{\rzm \sim \rho_0^{\otimes m}}\mathbb{E}_x (f_{\rtmf}(x) - f_{\brt}(x))^4\\
        &= \mathbb{E}_x \mathbb{E}_{\rzm \sim \rho_0^{\otimes m}} (f_{\rtmf}(x) - f_{\brt}(x))^4\\
        &\leq \mathbb{E}_x \frac{1}{m^3}\mathbb{E}_{w \sim \rtmf}(\sigma(w^\top x))^4 + \frac{O(m^2)}{m^4} \left(\mathbb{E}_{w \sim \rtmf}(\sigma(w^\top x))^2\right)^2\\
        &\leq O_{\creg}\left(\frac{1}{m^2}\right),
    \end{align*}
    where in the final inequality we used Assumption~\ref{assm:sigma}. By Chebychev, we have
    \begin{align*}
        \mathbb{P}_{\rzm \sim \rho_0^{\otimes m}}\left[\mathbb{E}_x (f_{\rtmf}(x) - f_{\brt}(x))^2 \geq \frac{\log(m)}{2m}\right] \leq o(1).
    \end{align*}

We thus conclude that with high probability, 
\begin{align}
\mathbb{E}_x (f_{\rtmf}(x) - f_{\rtm}(x))^2 \leq O_{\creg}(\mathbb{E}_i \|\dit\|)^2  + \frac{\log(m)}{m},
\end{align}
which yields Lemma~\ref{fact}.

For Lemma~\ref{lemma:fparam}, we have by \eqref{eq:conc22} that 
\begin{align}
    \Delta_t^\top  \hpt \Delta_t &\leq 2\mathbb{E}_x (f_{\brt}(x) - f_{\rtm}(x))^2 + O_{\creg}\left(\mathbb{E}_i \|\dit\|^2\right)^2 \\
    &\leq 2\mathbb{E}_x (f_{\rtmf}(x) - f_{\rtm}(x))^2 + \frac{\log(m)}{m} + O_{\creg}\left(\mathbb{E}_i \|\dit\|^2\right)^2.
\end{align}
\end{proof}    

Finally, we prove Lemma~\ref{lemma:errdynamicsbody}, which we restate here.
\errdyn*
\begin{proof}[Proof of Lemma~\ref{lemma:errdynamicsbody}]
We first decompose $\frac{d}{dt}\dit$ into four terms:
\begin{align}
    \frac{d}{dt}\dit &= \nu(\bwti, \rtmf) - \nu_{\hat{\md}}(\hxiti, \rtm)\\
        &=  \left(\nu(\bwti, \rtmf) -  \nu(\bwti, \brt)\right) + \left(\nu(\bwti, \brt) - \nu(\bwti, \rtm)\right)\\
        &\qquad + \left(\nu(\bwti, \rtm) - \nu(\hxiti, \rtm)\right) + \left(\nu(\hxiti, \rtm) - \nu_{\hat{\md}}(\hxiti, \rtm)\right).
\end{align}
By Lemma~\ref{lemma:concentration} and Lemma~\ref{lemma:nconcentration}, we can bound the first and fourth terms respectively with high probability:
\begin{align}\label{eq:term1}
    \|\nu(\bwti, \rtmf) -  \nu(\bwti, \brt)\|_2 &\leq \eps_m.\\
    \| \nu(\hxiti, \rtm) - \nu_{\hat{\md}}(\hxiti, \rtm) \| &\leq \eps_n.
\end{align}
For the second term, we have
\begin{align}
    \nu(\bwti, \brt) - \nu(\bwti, \rtm) &= -\mathscr{f}'(\bwti) + \mathbb{E}_{w' \sim \brt}\mathscr{k}'(\bwti, w')\\
    &\qquad + \mathscr{f}'(\bwti) - \mathbb{E}_{w' \sim \rtm}\mathscr{k}'(\bwti, w') \\
    &= - \mathbb{E}_{j}\left(\mathscr{k}'(\bwti, \bwtj) - \mathscr{k}'(\bwti, \bwtj + \djt)\right)\\
    &= \mathbb{E}_{j \sim [m]}\left(H_t(i, j)\djt + \mathbf{v}_j\right),
\end{align}
where $\|\mathbf{v}_j\| \leq \creg\|\djt\|^2.$
Indeed we can plug Lemma~\ref{lemma:smoothness} \ref{S1} into the Lagrange error bound
\begin{align}
    \|\mathscr{k}'(w, w') - \mathscr{k}'(w, w' + \Delta) - \nabla_{w'}\mathscr{k}'(w, w')\Delta\|
    &\leq \|\Delta\|^2 \sup_{w': \|w'\| \leq 1} \left\|\nabla^2_{w'}\mathscr{k}'(w, w')\right\|.
\end{align}
Now note that for any $j$, since both $\bwtj$ and $w_t^{(j)}$ are on $\mathbb{S}^{d-1}$, we have that 
\begin{align}\label{eq:perp}
|\langle{\bwtj\djt}\rangle| = \frac{1}{2}\|\djt\|^2,
\end{align}
and so by \ref{S6},
\begin{align}
    H_t(i, j)\djt = H^{\perp}_t(i, j)\djt + \mathbf{v}'_j
\end{align}
where $\|\mathbf{v}'_j\|_2 \leq \frac{1}{2}\creg\|\djt\|^2$
Summarizing, we have that 
\begin{align}\label{eq:term2}
    \nu(\bwti, \brt) - \nu(\bwti, \rtm)
    &= \mathbb{E}_{j \sim [m]}\left(H^{\perp}_t(i, j)\djt + \frac{3}{2}\mathbf{v}_j\right).
\end{align}

Finally for the third term, we have
\begin{align}
    \nu(\bwti, \rtm) - \nu(\hxiti, \rtm) &= -\nabla_w \nu(w, \rtm)|_{w = \bwti}\dit  + \mathbf{v},
\end{align}
where by \ref{S5},
\begin{align}
    \|\mathbf{v}\| \leq \|\dit\|^2 \left\|\nabla^2_{w} \nu(w, \rtm)\right\|_{op} \leq \creg\|\dit\|^2
\end{align}
Recall that we have defined
\begin{align}
    \bar{D}_t(w) := \nabla_{\xi_t(w)} \nu(\xi_t(w), \brt) = \nabla_{\xi_t(w)} \mathscr{f}'(\xi_t(w)) - \mathbb{E}_{w' \sim \brt}\nabla_{\xi_t(w)} \mathscr{k}'(\xi_t(w), w').
\end{align}
Now
\begin{align}
    \nabla_{\bwti} \nu(\bwti, \rtm) &= \nabla_{\bwti} \mathscr{f}'(\bwti) - \mathbb{E}_{j}\nabla_{\bwti} \mathscr{k}'(\bwti, \hat{\xi}_t(w_j))\\
    &= \nabla_{\bwti} \mathscr{f}'(\bwti) - \mathbb{E}_{j}\nabla_{\bwti} (\mathscr{k}'(\bwti, \bwtj) + \mathbf{M}_{j})\\
    &= \bar{D}_t(i) - \mathbb{E}_j\mathbf{M}_{j}.
\end{align}
where by \ref{S3},
\begin{align}
    \|\mathbf{M}_j\|_{op} \leq \|\djt\|\sup_{w, w'} \left\|\nabla_w\nabla_{w'}\mathscr{k}'(w, w')\right\|_{op} \leq \creg\|\djt\|.
\end{align}
Thus, additionally using the fact that we have conditioned on the fact that $\|D_t(i) - \bar{D}_t(i)\| \leq \eps_m$ --- and thus $\|D^{\perp}_t(i) - \bar{D}^{\perp}_t(i)\| \leq \eps_m$ --- and again using \eqref{eq:perp} and \ref{S6} to swap $D_t(i)\dit$ for $D^{\perp}_t(i)\dit$ with an error term of magnitude $0.5\creg\|\dit\|^2$, we have that 
\begin{align}\label{eq:term3}
   \nu(\bwti, \rtm) - \nu(\hxiti, \rtm) = D^{\perp}_t(i)\dit + \mathbf{v}_3, 
\end{align}
where $\|\mathbf{v}_3\| \leq  \creg(1.5\|\dit\|^2 + \|\dit\|\mathbb{E}_j\|\djt\|) + \eps_m \|\dit\|.$

Putting together Equations~\eqref{eq:term1}, \eqref{eq:term2}, and \eqref{eq:term3}, we have
\begin{align}
    \frac{d}{dt}\dit  &= D^{\perp}_t(i) \dit - \mathbb{E}_{j \sim [m], j \neq i}H^{\perp}_t(i, j) \djt + {\bm{\epsilon}},
\end{align}
    where 
    \begin{align}
        \|{\bm{\eps}}\| &\leq \eps_n + \eps_m(1 + \|\dit\|)  + \creg\left(1.5\|\dit\|^2 + \|\dit\|\mathbb{E}_j\|\djt\| + 1.5\mathbb{E}_j\|\Delta_j\|^2\right)\\
        &\leq \eps_n + \eps_m(1 + \|\dit\|) + 2\creg\left(\|\dit\|^2 + \mathbb{E}_j\|\Delta_j\|^2\right).
    \end{align}
\end{proof}
 
\section{Proof of Concentration Lemmas}\label{apx:conc_proofs}
\begin{lemma}[Uniformly Bounded Sampling Error]\label{lemma:concentration}
    With probability $1 - o(1)$ over the initialization, for all $t \leq T$ and $i \in [m]$, the following holds with $\eps_m = \frac{d^{3/2}\log(Tm)}{\sqrt{m}}$. 
    % \mgn{Can get $\eps_m = \frac{d^{1/2}\log(Tm)}{\sqrt{m}}$ if careful.}:
    \begin{align*}
        \|\nu(\bwti, \rtmf) - \nu(\bwti, \brt)\| &\leq \eps_m.\\
        \|D_t(i) - \bar{D}_t(i)\| &\leq \eps_m.
    \end{align*}
\end{lemma}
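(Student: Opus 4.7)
The plan is to view both differences as Monte Carlo errors arising from replacing the population expectation $\mathbb{E}_{w' \sim \rtmf}$ by the empirical expectation over $\brt = \frac{1}{m}\sum_j \delta_{\bwtj}$. By construction, $\brt$ is the empirical measure of $m$ i.i.d.\ samples from $\rtmf$ (namely $\bwtj$ with $w_j \iid \rho_0$), which reduces both claims to a standard concentration problem followed by a union bound over $(i,t)$. Expanding $\nu(w,\rho) = -\mathscr{f}'(w) + \mathbb{E}_{w' \sim \rho}\mathscr{k}'(w,w')$, the $\mathscr{f}'(\bwti)$ contribution cancels and we are left with
$$\nu(\bwti,\rtmf) - \nu(\bwti,\brt) = \mathbb{E}_{w' \sim \rtmf}\mathscr{k}'(\bwti,w') - \frac{1}{m}\sum_{j=1}^m \mathscr{k}'(\bwti,\bwtj),$$
and analogously for $D_t(i) - \bar{D}_t(i)$ with $\nabla_{\bwti}\mathscr{k}'$ in place of $\mathscr{k}'$. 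Conditioning on $w_i$, the $m-1$ summands with $j \neq i$ are i.i.d.\ copies of $\mathscr{k}'(\bwti,\cdot)$ evaluated at a sample from $\rtmf$, while the $j=i$ self-term contributes at most $\creg/m$ in norm by \ref{S6} of Lemma~\ref{lemma:smoothness}.

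For each fixed $(t,i)$, I would apply a vector Bernstein inequality (resp.\ matrix Bernstein) to the i.i.d.\ sum, using the uniform bound $\|\mathscr{k}'(\cdot,\cdot)\| \leq \creg$ and $\|\nabla_w \mathscr{k}'(\cdot,\cdot)\|_{\mathrm{op}} \leq \creg$ from Lemma~\ref{lemma:smoothness} (in particular, \ref{S3} and \ref{S6}). This yields, with probability at least $1-\delta$,
$$\|\nu(\bwti,\rtmf) - \nu(\bwti,\brt)\| \;\lesssim\; \creg\sqrt{\frac{d\,\log(d/\delta)}{m}}, \qquad \|D_t(i) - \bar{D}_t(i)\| \;\lesssim\; \creg\sqrt{\frac{d\,\log(d/\delta)}{m}},$$
where the factor of $\sqrt{d}$ in the vector case arises either from controlling $d$ coordinates separately or from a vector Bernstein with variance proxy $d\creg^2$, and in the matrix case from the intrinsic-dimension factor in matrix Bernstein applied to $d\times d$ matrices of bounded operator norm.

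To promote this to the uniform bound, I would take a grid $0 = t_0 < t_1 < \cdots < t_N = T$ with $N = \mathrm{poly}(m,T)$ and union-bound the pointwise estimate over all $(i,k) \in [m]\times[N]$, choosing $\delta$ of order $(mN)^{-c}$ to absorb the $\log(mT)$ factor in $\eps_m$. Extension from the grid to all $t \in [0,T]$ follows from Lipschitzness in $t$: since the mean-field velocity $\nu$ is bounded by $O(\creg)$ (Lemma~\ref{lemma:smoothness}), each characteristic $\bwtj$ is $O(\creg)$-Lipschitz in $t$; combined with the Lipschitzness of $\mathscr{k}'$ and $\nabla_w\mathscr{k}'$ again from Lemma~\ref{lemma:smoothness} (items \ref{S1}--\ref{S3}, \ref{S6}), the full functional $t \mapsto \nu(\bwti,\rtmf) - \nu(\bwti,\brt)$ and its matrix analogue are $\mathrm{poly}(\creg)$-Lipschitz in $t$, so the discretization error is $O(1/N) \ll \eps_m$ provided $N$ is chosen polynomially large enough.

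The conceptual content is essentially a coupling-plus-Bernstein argument, so I expect no deep obstacle. The main things to be careful about are (i) making rigorous the grid-to-continuum step when $\bwtj$ is implicitly defined by the characteristic ODE (one needs the uniform velocity bound to hold along the mean-field trajectory, which Lemma~\ref{lemma:smoothness} provides), and (ii) being careful in the matrix Bernstein step that the dependence on $d$ is polynomial — the stated rate $\eps_m = d^{3/2}\log(mT)/\sqrt{m}$ leaves generous headroom absorbing the $\sqrt{d}$ from concentration and any additional polynomial factors introduced by the Lipschitz-in-$t$ step and by bounding the self-term $j=i$ separately.
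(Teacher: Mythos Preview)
Your approach is correct and follows the same high-level strategy as the paper: bounded summands plus Hoeffding/Bernstein concentration, a union bound over a discretization, and a Lipschitz-in-$t$ extension via the uniform velocity bound from Lemma~\ref{lemma:smoothness}.

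The one tactical difference worth noting is how uniformity over $i$ is obtained. You condition on $w_i$, isolate the self-term $j=i$, apply Bernstein to the remaining $m-1$ i.i.d.\ summands, and then union-bound directly over $i\in[m]$. The paper instead fixes an \emph{arbitrary} $w\in\sd$ (not a neuron), proves concentration of $\nu(w,\rtmf)-\nu(w,\brt)$ for that $w$ using all $m$ i.i.d.\ samples, and then takes an $\epsilon$-net over $\sd$ of size $O((\creg/\eps_m)^d)$ together with the time-net. This yields a bound uniform over \emph{all} $w\in\sd$, which is then specialized to $w=\bwti$; the self-term issue never arises because $w$ is treated as deterministic and independent of the sample. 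Your route is slightly more direct for the lemma as stated and incurs only a $\log m$ in the union bound rather than the $d\log(1/\eps_m)$ from the sphere net; the paper's route is a touch cleaner (no conditioning, no self-term bookkeeping) and gives the stronger uniform-in-$w$ conclusion, though only the $i\in[m]$ version is needed downstream. Either way the final rate comfortably fits inside $\eps_m = d^{3/2}\log(mT)/\sqrt{m}$.
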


\begin{proof}[Proof of Lemma~\ref{lemma:concentration}]
Fix $t \leq T$ and $w \in \sd$ By Equation~\eqref{eq:Vexpansion}, we have that
\begin{align}
    \nu(w, \rtmf) - \nu(w, \brt) &:= (I - ww^\top )\left( \mathbb{E}_{w' \sim \rtmf}\nabla_w K(w, w') - \mathbb{E}_{w' \sim \brt} \nabla_w K(w, w')\right)
\end{align}
Now 
\begin{align}
    \mathbb{E}_{w_0 \sim \rho_t^0}\mathbb{E}_{w' \sim \rtmf}\nabla_w K(w, w') = \mathbb{E}_{w' \sim \rtmf}\mathbb{E}_{w' \sim \rtmf}\nabla_w K(w, w'),
\end{align}
and by Assumption~\ref{assm:sigma}, for any $w', w \in \sd$, $\|\nabla_w K(w, w')\|_{\infty} \leq \creg$. So by Hoeffding’s inequality, taking a union bound over all $d$ coordinates in the random vector, we have
\begin{align}
    \mathbb{P}\left[\|\nu(w, \rtmf) - \nu(w, \brt)\|\geq \frac{\eps_m}{2}\right] \leq 2d\exp\left(-\frac{\Omega(m \eps_m^2)}{4d\creg^2}\right)
\end{align}
Now we need to take a union bound over all $w \in \sd$, and $t \leq T$. Create an net over $\sd$ of maximum distance $\frac{\eps_m}{4\creg}$ between any point and the net: this has size $O\left(\left(\frac{4\creg}{\eps_m}\right)^d\right)$. Similarly make a net over $[0, T]$ of spacing $\frac{\eps_m}{4\creg}$; this has size $\frac{4\creg T}{\eps_m}$. By a union bound, with probability at least 
\begin{align}\label{eq:prob}
    1 - 2d\exp\left(-\frac{\Omega(m \eps_m^2)}{4d\creg^2}\right)O\left(\left(\frac{4\creg}{\eps_m}\right)^d\right)\frac{4\creg T}{\eps_m},
\end{align}
for any $w$ in the net and any $t$ in the net,
we have
\begin{align}
    \|\nu(w, \rtmf) - \nu(w, \brt)\| \leq \frac{\eps_m}{3\creg}.
\end{align}

For any $w, u \in \sd$, for any $\rho$, by Lemma~\ref{lemma:smoothness}, we have
\begin{align}
    \nu(w, \rho) - \nu(u, \rho) \leq \creg\|w - u\|.
\end{align}
Similarly, by Lemma~{\ref{lemma:smoothness}}, for any $s, t \leq T$, and any $w_0$, we have
\begin{align}
    \|\xi_t(w_0) - \xi_s(w_0)\| \leq \creg|t - s|.
\end{align}
Thus, for any $w \in \sd$ and $t \leq T$, there exist $u$ and $s$ in the respective nets of distance at most $\frac{\eps_m}{3\creg}$. By a standard triangle inequality argument, we attain that with the probability in Equation~\ref{eq:prob}, for all $w \in \sd$ and $t \leq T$, we have 
\begin{align}
    \|\nu(w, \rtmf) - \nu(w, \brt)\| \leq \eps_m.
\end{align}
One can check that since $\eps_m \geq \frac{d\log(mT)}{\sqrt{m}}$, this probability is $1 - o(1)$.

The argument for proving concentration for $\bar{D}_t(w)$ uniformly over $w$ and $t$ is identical. The only change is that since $\bar{D}_t(w)$ is a $d \times d$ matrix, we need to take a union bound over $d^2$ indices in this matrix, so we require that $\eps_m \geq \frac{d^{3/2}\log(mT)}{\sqrt{m}}$.
\end{proof}

\begin{lemma}[Concentration  of $J_{t, s}$]\label{lemma:concJ}
With high probability over the random choice of $\brz$, for all $s \leq t \leq T$, all vectors $v \in \sd$, and all $j \in [m]$, we have
\begin{align}
\left| \mathbb{E}_i \|J_{t, s}(i) H_s^{\perp}(i, j)v\|\mathbf{1}(\bwti \in S) - \mathbb{E}_{w \sim \rho_0} \|J_{t, s}(w) H_s^{\perp}(w, \bar{w}_0(j))v\|
\mathbf{1}(\xi_t(w) \in S)\right | \leq \eps_m,
\end{align}
 for $\eps_m = \frac{\sqrt{d}\jmax\log(m T)}{\sqrt{m}}$.
\end{lemma}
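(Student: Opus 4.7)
The proof follows the template of Lemma~\ref{lemma:concentration}: pointwise concentration via Hoeffding, then a net argument over $(s,t,v,j)$. Fix $s \le t \le T$, $v \in \mathbb{S}^{d-1}$, and $j \in [m]$, and condition on the sample $w_j$. The map
\[
\phi(w) := \|J_{t,s}(w) H_s^{\perp}(w, w_j) v\|\, \mathbf{1}(\xi_t(w) \in S)
\]
becomes a deterministic function of $w$ (driven entirely by the mean-field characteristics $\xi_{\cdot}$, the local-stability ODE for $J_{\cdot,s}$, and the interaction Hessian $H_s^{\perp}$). It is bounded pointwise by $\|J_{t,s}(w)\|_{\mathrm{op}} \cdot \|H_s^{\perp}(w, w_j)\|_{\mathrm{op}} \le \jmax \creg$, using Lemma~\ref{lemma:smoothness}(\ref{S6}) together with the defining bound $\sup_{s\le t\le T,w}\|J_{t,s}(w)\| \le \jmax$. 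Since $\{w_i\}_{i \ne j}$ are i.i.d.\ $\rho_0$ conditional on $w_j$ (the $i=j$ term contributes only an $O(\jmax \creg/m)$ discrepancy), Hoeffding's inequality gives
\[
\Pr\!\left[\, \bigl| \mathbb{E}_i \phi(w_i) - \mathbb{E}_{w \sim \rho_0} \phi(w) \bigr| \ge \tfrac{\epsilon_m}{2} \,\right] \le 2\exp\!\left( -\Omega\!\bigl( m \epsilon_m^2 / (\jmax \creg)^2 \bigr)\right).
\]

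To make this uniform, I would take an $\eta$-net of $\mathbb{S}^{d-1}$ for $v$ (size $(C/\eta)^d$) and an $\eta$-net of $[0,T]$ for each of $s$ and $t$ (size $T/\eta$), and union bound over these together with $j \in [m]$. Off-net deviations of the smooth part of $\phi$ are controlled by Lipschitz continuity: $\|J_{t,s}\|$ and $\|H_s^{\perp}\|$ are bounded; $\xi_{\cdot}(w)$ moves at rate $\le \creg$; $\partial_t J_{t,s} = D_t^{\perp} J_{t,s}$ together with the bound on $\|D_t^{\perp}\|$ (Lemma~\ref{lemma:smoothness}(\ref{S5})) gives $J_{t,s}$ a Lipschitz rate $O(\creg \jmax)$ in $t$; and $H_s^{\perp}$ is Lipschitz in its arguments via Lemma~\ref{lemma:smoothness}(\ref{S3}). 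Hence the smooth factor of $\phi$ is $O(\jmax^2 \creg)$-Lipschitz in $(s,t,v)$. Choosing $\eta \asymp \epsilon_m / (\jmax^2 \creg)$, the union-bound multiplicative factor $m\,(T/\eta)^2\,(C/\eta)^d$ is dominated by the Hoeffding exponent provided $\epsilon_m \gtrsim \sqrt{d}\,\jmax\,\log(mT)/\sqrt{m}$, which is exactly the claimed scale.

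The main technical obstacle is the indicator $\mathbf{1}(\xi_t(w) \in S)$, which is not Lipschitz in $(s,t,v)$. When $t$ is perturbed by $\eta$, this indicator can flip only for $w$ with $\xi_t(w)$ in a $(\creg \eta)$-tube around $\partial S$. For the sets $S$ used in applications (e.g.\ $S = B_\tau$ or its complement) and for initializations $\rho_0$ with bounded density on $\mathbb{S}^{d-1}$, the $\rho_0$-mass of this tube is $O(\eta)$, and its empirical mass concentrates at Monte Carlo rate (by the same Hoeffding + net recipe applied to the indicator alone). This contributes at most $O(\jmax \creg \eta)$ to the off-net extension error, which is absorbed into $\epsilon_m$. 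A clean alternative is to replace $\mathbf{1}_S$ by a smooth cutoff $\chi_S$ with an $\eta$-wide transition layer, which makes $\phi$ globally Lipschitz and reduces the argument to a single Hoeffding + net estimate, with the smoothing error again absorbed into $\epsilon_m$.
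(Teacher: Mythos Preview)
Your approach is correct and follows the paper's template: pointwise Hoeffding plus an $\epsilon$-net over the continuous parameters. The one difference worth noting is how the $j$-dependence is handled. You condition on $w_j$ and union-bound over $j\in[m]$, paying a small $i=j$ self-term; the paper instead treats the second argument of $H_s^\perp$ as a free point $w'\in\sd$ and nets over it together with $(s,t,v)$, so that any realized $\bar w_0(j)$ is automatically covered by the uniform-in-$w'$ bound without conditioning. Both work, and the paper's route is marginally cleaner. Your discussion of the indicator $\mathbf{1}(\xi_t(w)\in S)$ is in fact more careful than the paper's proof, which simply cites ``Lipschitzness of the various quantities'' and leaves this discontinuity implicit.
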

\begin{proof}[Proof of Lemma~\ref{lemma:concJ}]
Fix $w', v \in \sd$ and $s \leq t \leq T$. Let 
\begin{align}
    X(w) := \|J_{t, s}(w) H_s^{\perp}(w, w')v\|\mathbf{1}(\xi_t(w) \in S).
\end{align}
To prove the desired bound for $j$ we must bound $\left| \mathbb{E}_{w \sim \rzm} X(w) - \mathbb{E}_{w \sim \rho_0} X(w) \right|$ with high probability for $w' = \bar{w}_0(j)$.

By Lemma~\ref{lemma:smoothness}, we have $|X(w)| \leq \creg\jmax$.
By Hoeffding's inequality, we have
\begin{align}
    \mathbb{P}&\left[\left| \mathbb{E}_{w \sim \rzm} X(w) - \mathbb{E}_{w \sim \rho_0} X(w) \right| \geq \frac{\eps_m}{2}\right]  \leq 2\exp\left(\frac{\Omega(m\eps_m^2)}{4\creg^2\jmax^2}\right).
\end{align}
Now we need to build an $\epsilon$-net of scale $\frac{\eps_m}{6\creg}$ over $s, t \in [0, T]$, $w' \in \sd$, and $v \in \sd$. The product of the size of these nets is 
\begin{align}
\left(\frac{6T\creg}{\eps_m}\right)^2O\left(\left(\frac{6\creg}{\eps_m}\right)^{2d}\right)
\end{align}
Checking Lipschitzness of the various quantities as per the proof of Lemma~\ref{lemma:concentration}, and then using a union bound gives the desired result with high probability whenever $\eps_m \geq \frac{\sqrt{d}\jmax\log(m T)}{\sqrt{m}}$.
\end{proof}

\begin{lemma}\label{lemma:conctau}
    Fix a set $S \subseteq \sd$, any function $v : \sd \rightarrow B_2^d$. 
    % with $\mathbb{E}_{w \sim \rho_0} \|v(w)\|^2 \leq 1$. 
    With probability $1 - o(1/d)$ over the random choice of $\rzm$, for any $w \in \sd$, with $\epsmh = \frac{d\log(md)}{\sqrt{m}}$ we have
    \begin{align}
        \|\mathbb{E}_{w' \sim \rho_0} \hpit(w, w')v(w')\mathbf{1}(\xi_t(w') \in S) - \mathbb{E}_{w' \sim \rzm} \hpit(w, w')v(w')\mathbf{1}(\xi_t(w') \in S)\| &\leq \|v\|_{\infty}\epsmh\\
       \left|\mathbb{P}_{w' \sim \rho_0}[\xi_t(w') \in S] - \mathbb{P}_{w' \sim \rzm}[\xi_t(w') \in S]\right| &\leq \epsmh.
    \end{align}
    \end{lemma}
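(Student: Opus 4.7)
Both concentration bounds follow the same Hoeffding-plus-$\epsilon$-net template as Lemmas~\ref{lemma:concentration} and~\ref{lemma:concJ}. The second inequality is the easier one: the event $\{\xi_t(w') \in S\}$ does not depend on $w$, so the deviation is a sum of i.i.d.\ $[0,1]$-valued Bernoullis. A single scalar Hoeffding bound at scale $\epsmh$ yields failure probability at most $2\exp(-\Omega(m\epsmh^2))$, which is $o(1/d)$ since $\epsmh = d\log(md)/\sqrt{m}$ is much larger than $\sqrt{\log(d)/m}$.

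\textbf{First inequality, pointwise step.} Fix $w \in \sd$ and consider the random vectors
\begin{align*}
    Y_i := \hpit(w, w_i)\,v(w_i)\,\mathbf{1}(\xi_t(w_i) \in S)\,, \qquad i \in [m]\,,\; w_i \sim \rho_0 \text{ i.i.d.}
\end{align*}
By Lemma~\ref{lemma:smoothness} (in particular \ref{S3}), $\|\hpit(w,w')\|_{\op} \leq \creg$ uniformly in $w,w'$, so $\|Y_i\|_2 \leq \creg\|v\|_\infty$. Applying scalar Hoeffding coordinate-wise and union bounding over the $d$ coordinates gives, for fixed $w$,
\begin{align*}
    \mathbb{P}\Big[\,\big\|\mathbb{E}_{\rho_0} Y - \mathbb{E}_{\rzm} Y\big\| \geq \tfrac{1}{2}\|v\|_\infty\epsmh\Big] \leq 2d\,\exp\!\big(-\Omega(m\epsmh^2/d)\big).
\end{align*}

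\textbf{Uniformity over $w$.} To promote this to a bound uniform in $w \in \sd$, build an $\epsilon$-net $\mathcal{N}_w \subset \sd$ of scale $\epsmh/(C_L\creg)$, where $C_L$ is the Lipschitz constant of $w \mapsto \hpit(w,w')$ in its first argument. This constant is finite: writing out $\hpit(w,w') = P^{\perp}_{\xii(w)} \nabla^2 K(\xii(w),\xii(w')) P^{\perp}_{\xii(w')}$ and using Lemma~\ref{lemma:smoothness} for the derivatives of $\mathscr{k}$, together with the fact that $w \mapsto \xii(w)$ inherits a Lipschitz bound from $\jmax$ via the mean-field flow map, we absorb $C_L$ into a polynomial of $\creg$ and $\jmax$. (If $\xii$ is piecewise constant due to $\supp{\rho^*}$ being finite, the argument becomes even simpler, only requiring a union bound over the finite range of $\xii$.) The net has cardinality at most $(C_L\creg/\epsmh)^{O(d)}$; combining with the pointwise estimate, the total failure probability is
\begin{align*}
    2d \cdot (C_L\creg/\epsmh)^{O(d)} \exp\!\big(-\Omega(m\epsmh^2/d)\big),
\end{align*}
which is $o(1/d)$ provided $m\epsmh^2 \gtrsim d^2\log(md)$, exactly the regime $\epsmh \gtrsim d\log(md)/\sqrt{m}$. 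Finally, since the indicator $\mathbf{1}(\xi_t(w') \in S)$ depends only on $w'$ and not on $w$, the Lipschitz transfer between two nearby points of the $w$-net incurs no error from the indicator --- it only incurs error via $\hpit(w,w')v(w')$.

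\textbf{Main obstacle.} The only delicate point is the control of $C_L$: one must argue that $\xii(w)$ depends Lipschitz-continuously on $w$ (or, barring continuity, takes finitely many values so that netting is unnecessary for the first argument). Since $\epsmh$ depends on $C_L$ only through $\log(C_L)$ inside the net-cardinality factor, any polynomial bound on $C_L$ (in $d$, $T$, $\jmax$) is absorbed into the $\log(md)$ factor of $\epsmh$. The remainder of the proof is routine bookkeeping following the templates of Lemmas~\ref{lemma:concentration} and~\ref{lemma:concJ}.
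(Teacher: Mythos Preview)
Your proposal is correct and follows essentially the same Hoeffding-plus-$\epsilon$-net template as the paper. One clarification on your ``main obstacle'': the map $w \mapsto \xii(w) = \argmin_{w^* \in \supp{\rho^*}} \|\xi_T(w) - w^*\|$ is a nearest-neighbor projection onto a finite set and hence jumps at Voronoi boundaries, so the Lipschitz-via-$\jmax$ route does not work; your parenthetical alternative --- a union bound over the finite range $\supp{\rho^*}$ of $\xii$ --- is the correct (and simpler) resolution, since the quantity to be bounded depends on $w$ only through $\xii(w)$ and thus takes at most $|\supp{\rho^*}|$ distinct values.
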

    \begin{proof}
The second statement is immediate by a Chernoff bound. 
For the first statement, the proof is similar to the other concentration lemmas. Fix $w$. Let 
\begin{align}
X(w') := \hpit(w, w')v(w')\mathbf{1}(\xi_t(w') \in S)
\end{align}
Since $\|\hpit(w, w')\| \preceq \creg I$ for all $w, w'$, we have the following bound:

By Hoeffding's inequality (unioning over all coordinates of $X(w')$), we have
\begin{align}
    \mathbb{P}&\left[\left\| \mathbb{E}_{w \sim \rzm} X(w') - \mathbb{E}_{w \sim \rho_0} X(w') \right\| \geq \frac{{\epsmh}}{2}\right]  \leq 2\exp\left(\frac{\Omega(m{\epsmh}^2)}{4\creg^2 d}\right).
\end{align}
We need to build an $\epsilon$-net of scale $\frac{\epsmh}{4\creg}$ over $w \in \sd$ since by Lemma~\ref{lemma:smoothness}, $X(w)$ is $\creg$-Lipschitz in $w$. This net has size $\left(\left(\frac{O(\creg)}{\eps_m}\right)^{d}\right)$. Thus with $\epsmh = \frac{d\log(m)}{\sqrt{m}}$, we have that with high probability, for all $w \in \sd$, the desired quantity is uniformly bounded.
\end{proof}

\begin{lemma}[Averaging Lemma]\label{lemma:averaging}
Suppose $\mq$ is $\cbal$-balanced, and the high probability event in Lemma~\ref{lemma:concJ}  holds for $S = B_{\tau}$. If Assumption~\ref{assm:growth} holds, then for any $s \leq t$,
\begin{align}
\mathbb{E}_{i} \| J_{t, s}(i) m_s(i)\| \mathbf{1}(\bwti \notin B_{\tau}) \leq (1 + \cbal)\left(\epsmj + \javg(\tau)\right)\Phi_{\mq}(s).
\end{align}
In particular,
\begin{align}
\mathbb{E}_{i}\|m_t(i)\|\mathbf{1}(\bwti \notin B_{\tau}) \leq (1 + \cbal)\left(\epsmj + \javg(\tau)\right)\Phi_{\mq}(t).
\end{align}
\end{lemma}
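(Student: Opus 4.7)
The plan is to pull the $j$-independent factor $J_{t,s}(i)$ inside the empirical average defining
\begin{align*}
m_s(i) := \mathbb{E}_{j\sim[m]} H^{\perp}_s(i,j)\Delta_s(j),
\end{align*}
and then apply the triangle inequality in $j$ to trade the norm of an average for an average of norms:
\begin{align*}
\mathbb{E}_i \|J^{\perp}_{t,s}(i) m_s(i)\|\mathbf{1}(\bwti \notin B_{\tau})
\le \mathbb{E}_j \|\Delta_s(j)\|\,\mathbb{E}_i \|J^{\perp}_{t,s}(i) H^{\perp}_s(i,j)\hat{v}_j\|\mathbf{1}(\bwti \notin B_{\tau}),
\end{align*}
where $\hat v_j := \Delta_s(j)/\|\Delta_s(j)\|$ (picked arbitrarily on $\sd$ if $\Delta_s(j)=0$). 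The benefit of this rewriting is that the inner expectation over $i$ now involves a fixed anchor $\bar w_0(j)$ and a fixed unit direction $\hat v_j$, placing it precisely in the setting of the concentration estimate Lemma~\ref{lemma:concJ}.

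Applying Lemma~\ref{lemma:concJ} with $S = \sd\setminus B_{\tau}$ (on whose high-probability event we have conditioned) transfers each inner empirical expectation over $i$ to its population counterpart, up to an additive error $\epsmj$ that is uniform over $j \in [m]$ and over unit vectors $\hat v_j$. The resulting population quantity is bounded, uniformly in $(j,\hat v_j)$, by $\javg(\tau)$ via Assumption~\ref{assm:growth}:
\begin{align*}
\mathbb{E}_{w\sim\rho_0} \|J^{\perp}_{t,s}(w) H^{\perp}_s(w,\bar w_0(j))\hat v_j\|\mathbf{1}(\xi_t(w)\notin B_{\tau}) \le \javg(\tau).
\end{align*}
Plugging these two bounds back in and using $\mathbb{E}_j\|\Delta_s(j)\| = \Omega(\Delta_s) \le \Phi_{\mq}(s) \le (1+\cbal)\Phi_{\mq}(s)$, which is immediate from the definition $\Phi_{\mq} = \Omega + \Psi$, gives the advertised inequality (with $\Phi_{\mq}(s)$ being the natural quantity; the $(1+\cbal)$ slack absorbs the $\Psi$ contribution and matches the form in which the lemma is consumed elsewhere in the potential-function analysis of Section~\ref{sec:pfoverview:pot}).

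The ``in particular'' statement is just the $s=t$ specialization: since $J^{\perp}_{t,t}(w) = P^{\perp}_{\xi_t(w)}$ is precisely the left projector already carried inside $H^{\perp}_t(w,w')$, we have $J^{\perp}_{t,t}(i) H^{\perp}_t(i,j) = H^{\perp}_t(i,j)$ and hence $J^{\perp}_{t,t}(i)m_t(i) = m_t(i)$. The main technical subtlety lies in the \emph{uniformity} required of the concentration step: because $\hat v_j$ depends on the random $\Delta_s$, a pointwise concentration bound does not suffice, and we genuinely need the $\sup_{v\in\sd}$ version of Lemma~\ref{lemma:concJ}; likewise we need uniformity over $s\le t\le T$ so that a single event covers every time $s$ that will appear when this lemma is fed into Duhamel's formula \eqref{eq:duhamel}. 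Both uniformities are already supplied by the $\epsilon$-net argument behind Lemma~\ref{lemma:concJ}, so once that event is assumed no further probabilistic work is needed and the proof reduces to the deterministic chain above.
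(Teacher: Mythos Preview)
Your argument is correct and follows the same route as the paper: triangle inequality to push $J^\perp_{t,s}(i)$ inside the $j$-average, then the uniform concentration Lemma~\ref{lemma:concJ} plus Assumption~\ref{assm:growth} to bound the inner $i$-average by $\epsmj + \javg(\tau)$, and finally $\mathbb{E}_j\|\Delta_s(j)\| = \Omega(\Delta_s) \le \Phi_{\mq}(s)$. Your observation that the natural output is $\Phi_{\mq}(s)$ rather than $\Phi_{\mq}(t)$ is spot on (this is also how the lemma is actually used inside the proof of Theorem~\ref{lemma:ddtphi}), and the $(1+\cbal)$ factor is indeed pure slack here---you do not need it to ``absorb'' anything, since $\Omega \le \Phi_{\mq}$ already.
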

\begin{proof}
Recall that 
\begin{align}
m_t(i) = \mathbb{E}_j H_t^{\perp}(i, j)\djt.
\end{align}
% Thus
% \begin{align}
% \mathbb{E}_{i}\|m_t(i)\|\mathbf{1}(\bwti \notin B_{\tau}) \leq \mathbb{E}_{i}\mathbb{E}_j \|H_t^{\perp}(i, j)\djt^{\perp} \| \mathbf{1}(\bwti \notin B_{\tau})
% \end{align}
% Now for any vector $v$, by Lemma~\ref{lemma:concH} and \ref{C2}, we have that 
% \begin{align}
% \mathbb{E}_i \| H_t^{\perp}(i, j) v\|\mathbf{1}(\bwti \notin B_{\tau}) &\leq \mathbb{E}_{w \sim \rho_0} \| H_t^{\perp}(w, \bar{w}_0(j)) v\|\mathbf{1}(\xi_t(w) \notin B_{\tau}) + \eps_m \|v\|\\
% &\leq \frac{K}{T}\|v\| + \eps_m \|v\|,
% \end{align}
% and so by Lemma~\ref{lemma:bidirect},
% \begin{align}
% \mathbb{E}_{i}\|m_t(i)\|\mathbf{1}(\bwti \notin B_{\tau}) \leq \left(\eps_m + \frac{K}{T}\right)\mathbb{E}_i \|\ditp\| \leq 4K\left(\eps_m + \frac{K}{T}\right)\Phi_{\md}(t).
% \end{align}
% The first result follows.
Thus
\begin{align}
\|J_{t, s}(i) m_s(i)\| \leq \mathbb{E}_j \| J_{t, s}(i) H_s^{\perp}(i, j) \djs\|.
\end{align}
Now for any vector $v \in \mathbb{R}^d$, by Lemma~\ref{lemma:concJ} and Assumption~\ref{assm:growth}, we have that 
\begin{align}
\mathbb{E}_i \| J_{t, s}(i) H_s^{\perp}(i, j) v\| \leq \epsmj \|v\| + \javg(\tau)\|v\|,
\end{align}
and so %by Lemma~\ref{lemma:bidirect},
\begin{align}
\mathbb{E}_{i} \| J_{t, s}(i) m_s(i)\| \mathbf{1}(\bwti \notin B_{\tau}) \leq \left(\epsmj + \javg(\tau) \right)\mathbb{E}_i \|\dis\| \leq \left(\epsmj+ \javg(\tau)\right)\Phi_{\mq}(s).
\end{align}
The second line of the lemma holds by plugging in $s = t$. This concludes the lemma.
\end{proof}
\begin{lemma}\label{lemma:nconcentration}
Suppose the empirical data distribution $\hat{D} = \sum_{i = 1}^n \delta_{(x_i, y_i)}$ satisfies Assumption~\ref{assm:data}. Then with high probability over the draw of $\hat{D}$, we have uniformly over all $w \in \sd$, and all $\rho \in \Delta(\sd)$, we have
\begin{align}
    \|\nu_{\hat{\md}}(w, \rho) - \nu(w, \rho)\| \leq \eps_n,
\end{align}
for $\eps_n = \frac{\sqrt{d}\log^2(n)}{\sqrt{n}}$.
\end{lemma}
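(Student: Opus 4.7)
The plan is to reduce the uniform concentration over $(w,\rho)$ to a standard empirical process over $\sd\times\sd$ by exploiting the linearity of $\nu(w,\rho)$ in $\rho$, and then apply a truncation plus $\epsilon$-net argument. Substituting $\mathscr{f}_{\md}(w)=\mathbb{E}\,y\sigma(w^\top x)$ and $\mathscr{k}_{\md}(w,w')=\mathbb{E}\,\sigma(w'^\top x)\sigma(w^\top x)$ into the definition of $\nu_{\md}$ and swapping the order of integration in the kernel term yields
\begin{align*}
\nu_{\hat\md}(w,\rho)-\nu(w,\rho) = -(I-ww^\top) B_n(w) + (I-ww^\top)\,\mathbb{E}_{w'\sim\rho}A_n(w,w'),
\end{align*}
where $A_n(w,w')=\tfrac1n\sum_i \sigma(w'^\top x_i)\sigma'(w^\top x_i)x_i-\mathbb{E}\,\sigma(w'^\top x)\sigma'(w^\top x)x$ and $B_n(w)=\tfrac1n\sum_i y_i\sigma'(w^\top x_i)x_i-\mathbb{E}\,y\sigma'(w^\top x)x$. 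The triangle inequality then gives $\|\nu_{\hat\md}(w,\rho)-\nu(w,\rho)\|\le \|B_n(w)\|+\sup_{w'\in\sd}\|A_n(w,w')\|$, so it suffices to uniformly bound these two suprema.

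Next I truncate. By Assumption~\ref{assm:sigma} every $\sigma^{(j)}(w^\top x)$ has bounded fifth moment along sub-Gaussian directions for $j\in\{0,1,2\}$, and by Assumption~\ref{assm:data} both $x$ and the label noise are sub-Gaussian. A union bound over the $n$ samples yields an event $\mathcal E$ of probability $1-o(1)$ on which $\|x_i\|\le C\sqrt{d\log n}$, $|y_i|\le C\log n$, and $|\sigma^{(j)}(w^\top x_i)|\le C\log n$ for all $i\le n$, $w\in\sd$, $j\in\{0,1,2\}$. For unit vectors $w,w',v\in\sd$, consider the scalar
\begin{align*}
Z_i(w,w',v):=\sigma(w'^\top x_i)\sigma'(w^\top x_i)(v^\top x_i).
\end{align*}
By H\"older's inequality and Assumption~\ref{assm:sigma}, $\mathrm{Var}(Z)\lesssim 1$, while on $\mathcal E$ one has $|Z_i|\le C\sqrt d\,\log^{5/2}n$. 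Bernstein's inequality therefore gives, in the sub-Gaussian regime, $\mathbb{P}[|\bar Z_n-\mathbb{E}Z|\ge t]\le 2\exp(-cnt^2)$.

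I then union-bound this tail over $\epsilon$-nets of $w$, $w'$, $v$ on $\sd$, each of cardinality $(3/\epsilon)^d$ for $\epsilon=1/n^2$. Since $(w,w',v)\mapsto Z_i$ is $O(d\log^3 n)$-Lipschitz on $\mathcal E$, the discretization error is negligible compared to $\eps_n$. Setting $t=c\sqrt{d\log n/n}$ and union-bounding over the $|\mathcal N|^3=e^{O(d\log n)}$ triples yields $\sup_{w,w',v\in\sd}|\bar Z_n-\mathbb{E}Z|\le c\sqrt{d\log n/n}$ with probability $1-o(1)$. Since $\|A_n(w,w')\|=\sup_{v\in\sd}v^\top(\bar Z_n(w,w',\cdot)-\mathbb{E}Z(w,w',\cdot))$, this uniformly bounds $\|A_n\|$; an analogous two-factor argument (with $y_i$ replacing $\sigma(w'^\top x_i)$) controls $\|B_n(w)\|$ at the same rate. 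Assembling both bounds and absorbing the residual polylogarithmic factors from truncation into $\log^2 n$ gives the claimed $\eps_n=\sqrt d\,\log^2(n)/\sqrt n$.

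The main subtlety is obtaining $\sqrt d$ rather than $d$ dependence in the bound. A naive $\ell_2$-estimate by union-bounding over the $d$ standard basis coordinates of $A_n$ would yield $d/\sqrt n$; the improvement comes from treating $v\in\sd$ on equal footing with $(w,w')$ via a single $\epsilon$-net of the sphere, so that $d$ enters only through $\log|\mathcal N_v|\asymp d\log n$ inside the square root. The truncation step is then needed to promote the raw $L^5$ moment bound of Assumption~\ref{assm:sigma} into a tail strong enough for Bernstein's inequality to deliver the sharp $\sqrt{d\log n/n}$ scaling in the sub-Gaussian regime, at the price of only polylogarithmic inflation in the final constant.
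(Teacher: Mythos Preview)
Your proposal is correct and follows essentially the same route as the paper's own proof: exploit linearity of $\nu(w,\rho)$ in $\rho$ to reduce to a supremum over $(w,w')\in\sd\times\sd$, pass to scalars via a direction $v\in\sd$, apply a concentration inequality pointwise, and union-bound over $\epsilon$-nets of all three spheres. The only cosmetic difference is that the paper asserts sub-Gaussianity of the summands directly from Assumption~\ref{assm:sigma} and applies Hoeffding, whereas you go through an explicit truncation plus Bernstein; both arguments lean on the same moment hypotheses and arrive at the same $\eps_n$.
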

\begin{proof}
The velocity is linear in $\rho$, so it suffices to prove that (additionally) uniformly over $w'$, we have
\begin{align}
    \|\nu_{\hat{\md}}(w, \delta_{w'}) - \nu(w, \delta_{w'})\| \leq \eps_n.
\end{align}
We expand
\begin{align}
    \nu_{\hat{\md}}(w, \delta_{w'}) = (I - ww^\top )\mathbb{E}_{x \sim \hat{\md}} (y - \sigma(w'^\top x))\sigma'(w^\top x)x;
\end{align}
it suffices to prove that with high probability, uniformly over $w' \in \sd$, and $v \in \sd$, we have 
\begin{align}
    \left|\mathbb{E}_{x \sim \hat{\md}} \sigma(w'^\top x)\sigma'(w^\top x)x^\top v -  \mathbb{E}_{x \sim \md}\sigma(w'^\top x)\sigma'(w^\top x)x^\top v\right| &\leq \eps_n \\
    \left|\mathbb{E}_{x \sim \hat{\md}} y\sigma'(w^\top x)x^\top v-  \mathbb{E}_{x \sim \md}y\sigma'(w^\top x)x^\top v\right| &\leq \eps_n
\end{align}
For a fixed $w, w', v$, since by Assumption~\ref{assm:sigma}, all the terms in side the expectations are $\creg$-subgaussian, this holds with probability $\exp(-n \eps_n^2/2\creg^2)$. We now take three epsilon-nets over $\sd$ (for $w$, $w'$ and $v$ respectively) at the scale $\frac{\eps_n}{6\creg}$. Note that Lemma~\ref{lemma:smoothness} implies these quantities are $\creg$-Lipschitz with regard to $w$, $w'$ or $v$. Since the epsilon nets have size $\left(O\left(\frac{\creg }{\eps_n}\right)\right)^d$, with $\eps_n = \frac{\sqrt{d}\log^2(n)}{\sqrt{n}}$, we see that 
\begin{align}
 \exp(-n \eps_n^2/2\creg^2)\left(O\left(\frac{\creg }{\eps_n}\right)\right)^{3d}= o(1).
\end{align}
\end{proof}
\section{Proof of Results Relating to Potential Function Analysis}\label{apx:balance}

\subsection{Notations}
For $g, h: \mathcal{X} \rightarrow \mathbb{R}^d$, and a set $S \subseteq \sd$ we will denote the dot product and conditional dot products
\begin{align}
    &\langle{g, h}\rangle = \mathbb{E}_{w \sim \rho_0} g(w)^\top h(w).\\
    &\langle{g, h}\rangle_S = \mathbb{E}_{w \sim \rho_0} g(w)^\top h(w)\mathbf{1}(w \in S).
\end{align}

For a kernel $H: (\sd)^2 \rightarrow \mathbb{R}^{d \times d}$, and two sets $S, T \subseteq \sd$, for $g, h: \sd \rightarrow \mathbb{R}^d$, we use the notation
\begin{align}
    \langle{g, h}\rangle_{H}^{S, T} := \mathbb{E}_{w, w' \sim \rho_0}g(w)^\top H(w, w')h(w')\mathbf{1}(w \in S, w' \in T).
\end{align}
If $S = T$ or $S = T = \sd$, we will abbreviate and use the notation $\langle{g, h}\rangle_{H}^{S}$ or $\langle{g, h}\rangle_{H}$ respectively. If the functions $g, h$ are only defined on $[m]$ (or respectively on $\supp{\rzm}$), then in all the inner products / quadratic forms above, the default distribution should be taken to be $\text{Uniform}([m])$ (resp. $\rzm$) instead of $\rho_0$. 

We will use $\nabla \Phi_{\mq}(t)$ (resp. $\nabla \Omega(t)$, $\nabla \phi_v(t)$, $\nabla \Psi_{\mq}(t)$.) to denote the map on $[m]$ (resp. $\supp{\rzm}$) which takes $i$ (or $w_i$) to $m\nabla_{\dit} \Phi(t)$. We have rescaled these derivative so that this term is on order $1$, so we can take inner products in our notation more easily.

For a set $B \subseteq \sd$, we will use the shorthand $B^t := \xi_t^{-1}(B)$ to denote the set of all $w \in \sd$ with $\xi_t(w) \in B$, and $\bar{B}$ to denote the complement of $B$ in $\sd$.

\subsection{Proof of Lemmas on the Properties of the Potential}

\subsubsection{Restricted Isometry and Related Group Theoretic Definitions and Lemmas} 

\begin{definition}\label{def:cri}
We say a problem $(H, \mu)$ has \em consistent restricted isometry (\cri) \em with a set $S$ if for any eigenfunction $v$ of $(H, \mu)$, (that is, where $\langle{u, v}\rangle_{H} = \lambda_v \langle{u, v}\rangle$ for all $u : \sd \rightarrow \mathbb{R}^d$), we have that for all $w \in \sd$, we have 
\begin{align}
    \mathbb{E}_{w' \sim \mu}H(w, w')v(w')\mathbf{1}(w' \in S) = \lambda_v v(w) \mathbb{P}_{w' \sim \rho_0}[w' \in S].
\end{align}
In other words, for any $u : \sd \rightarrow \mathbb{R}^d$,
\begin{align}
    \langle{u, v}\rangle_{H}^{S} = \lambda_v \langle{u, v}\rangle^{S} \mathbb{P}_{\rho_0}[S],
\end{align}
\end{definition}

\begin{definition}\label{def:transitive}
The \em automorphism group \em $\cg$ of a problem $(\rho^*, \rho_0, \md_x)$ is the set group of rotations $g$ on $\sd$ where for any $A \subset \sd$:
\begin{align*}
    \mathbb{P}_{\rho^*}[A] &= \mathbb{P}_{\rho^*}[g(A)]\\
    \mathbb{P}_{\md}[A] &= \mathbb{P}_{\md_x}[g(A)]\\
    \mathbb{P}_{\rho_0}[A] &= \mathbb{P}_{\rho_0}[g(A)]
\end{align*}
% If we talk about problems $(f^*, \md, \rho_0)$, then we replace the first condition with $f^*(x) = f^*(g(x))$ for all $x \in \sd$.
We say that a problem $(\rho^*, {\md_x}, \rho_0)$ is \em transitive \em if for any $w^*, {w^*}' \in \supp{\rho^*}$, there exists some $g$ in the automorphism group $\cg$ such that $g(w^*) = {w^*}' $.
\end{definition}
% \mg{Note to self: really what we need to show consistency is that $\rimf$ is transitive - so we just add assm that $\rimf = \rho^*$}
\begin{lemma}
Suppose \ref{def:transitiveI1} holds. For any time $t$, for all $g \in \cg$ in the automorphism group of $(\rho^*, \rho_0, {\md_x})$, we have
\begin{enumerate}[{\bfseries{A\arabic{enumi}}}]
    \item\label{A1} If $\xi_t(w) \in A$, then $\xi_t(g(w)) \in g(A)$
    \item\label{A2} Almost surely over $w \sim \rho_0$, $\xii(w) = \argmin_{w^* \in \supp{\rho^*}} \|w - w^*\|$. So a.s., for all $A \subset \sd, g \in \cg$, if $\xii(w) \in A$, then $\xii(g(w)) \in g(A)$. Further, $\xii_{\#}\rho_0 = \rho^*$.
    \item\label{A3} $g(B_{\tau}) = B_{\tau}$.
\end{enumerate}
\end{lemma}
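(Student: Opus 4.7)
The plan is to deduce all three claims from a single equivariance identity for the characteristics: for every $g \in \cg$ and every $t \geq 0$,
\begin{equation}
\xi_t(g(w)) = g(\xi_t(w)). \label{eq:equivplan}
\end{equation}
I would prove \eqref{eq:equivplan} in two steps. First, by an elementary change of variables using the invariance of $\md_x$ and $\rho_0$ under $g \in \cg$, the velocity field is equivariant: $\nu(g(w), g_{\#} \rho) = g\,\nu(w, \rho)$ for any $\rho \in \mathcal{P}(\sd)$, where $g$ acts as a rotation on tangent vectors (this uses the definition of $\nu$ in \eqref{eq:Vexpansion} together with the fact that $\mathscr{f}_{\md}$ and $\mathscr{k}_{\md}$ transform covariantly under $g$). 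Second, since $\rho_0$ is $g$-invariant and the continuity equation $\partial_t \rho = \nabla \cdot (\nu(\cdot,\rho) \rho)$ is equivariant, the mean-field law $\rtmf$ remains $g$-invariant for all $t$, i.e., $g_{\#} \rtmf = \rtmf$. Plugging this into the characteristic ODE shows that both $t \mapsto \xi_t(g(w))$ and $t \mapsto g(\xi_t(w))$ solve the same ODE with the same initial condition $g(w)$, and \Gronwall uniqueness yields \eqref{eq:equivplan}.

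With \eqref{eq:equivplan} in hand, A1 is immediate: $\xi_t(w) \in A \Rightarrow \xi_t(g(w)) = g(\xi_t(w)) \in g(A)$. A3 is purely set-theoretic: since $g$ is an isometry and the automorphism condition forces $g$ to permute $\supp{\rho^*}$, one computes
\begin{equation}
g(B_\tau) = \bigcup_{w^* \in \supp{\rho^*}} \{w : \|w - g(w^*)\| \leq \tau\} = B_\tau.
\end{equation}

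For A2, the map $f(w) := \argmin_{w^* \in \supp{\rho^*}} \|w - w^*\|$ is $\rho_0$-a.s.\ well-defined by the second part of \ref{def:transitiveI1} (the equidistant tie set has $\rho_0$-mass zero); and by \eqref{eq:equivplan} together with the fact that $g \in \cg$ permutes $\supp{\rho^*}$, both $f$ and $\xii$ are $\cg$-equivariant. The pushforward claim then follows formally: $\xii_{\#} \rho_0$ is a $\cg$-invariant probability measure on the $\cg$-transitive finite set $\supp{\rho^*}$, so it coincides with $\rho^*$, the unique such measure (here we also use that under \ref{def:transitiveI1}, $\rho^*$ itself is forced to be uniform on its support).

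The main obstacle will be the a.s.\ equality $\xii = f$. My plan is a mass-balance argument built on equivariance. Both preimage families $\{f^{-1}(w^*)\}_{w^* \in \supp{\rho^*}}$ and $\{\xii^{-1}(w^*)\}_{w^* \in \supp{\rho^*}}$ are $\cg$-equivariant measurable partitions of $\sd$ (mod null sets) into $|\supp{\rho^*}|$ cells, and by transitivity plus the $\cg$-invariance of $\rho_0$, each cell in either partition has $\rho_0$-mass $1/|\supp{\rho^*}|$. I would then show that the cell boundaries for $f$ are preserved under the mean-field flow: any $w$ lying on the equidistant set between two teachers $w_0^*, w_1^*$ is fixed setwise by any $g \in \cg$ swapping $w_0^*$ and $w_1^*$, so by \eqref{eq:equivplan} its trajectory remains on the corresponding equidistant set for all $t \leq T$. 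A continuity argument along the connected cells $f^{-1}(w^*)$ then shows $\xi_T(f^{-1}(w^*)) \subseteq f^{-1}(w^*)$ up to $\rho_0$-null sets, which combined with matching total masses forces $\xii = f$ $\rho_0$-a.s. The delicate point is ensuring the boundary-preservation argument works when $\cg$ does not contain a pairwise-swap for every pair of teachers; in that case one argues cell-by-cell using the stabilizer subgroup $\cg_{w^*}$ and the resulting coarser invariant decomposition of $\sd$.
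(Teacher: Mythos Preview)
Your approach to \textbf{A1} and \textbf{A3} is correct and essentially identical to the paper's: both derive the equivariance identity $\xi_t(g(w)) = g(\xi_t(w))$ from the rotational covariance of the velocity field (using that $\md_x$, $\rho_0$, and hence $\rtmf$ are $\cg$-invariant), and \textbf{A3} is pure set manipulation using that $g$ permutes $\supp{\rho^*}$.

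For \textbf{A2}, the paper takes a much shorter route than your mass-balance argument. It simply observes that if the nearest teacher to $\xi_t(w)$ ever changes as $t$ varies, continuity forces an intermediate time at which $\xi_t(w)$ is equidistant from two teachers; it then asserts, citing \textbf{A1}, that such a tie propagates back to a tie $\|w - w^*\| = \|w - w^{*'}\|$ at time $0$, which is $\rho_0$-null by the second clause of \textbf{I1}. The pushforward claim is dispatched as ``immediate from transitivity''. Your boundary-preservation argument is precisely what is needed to make the paper's compressed step ``tie at $t \Rightarrow$ tie at $0$'' rigorous: that step amounts to showing each equidistant great sphere is flow-invariant, and you correctly note that the clean way to obtain this is via an element of $\cg$ swapping the two teachers and fixing the equidistant set pointwise. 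So your proposal is less a different route than a more honest accounting of the same one; the ``delicate point'' you flag (when $\cg$ contains no such pairwise swap) is a genuine subtlety that the paper's one-line proof glosses over as well. Your proposed fallback via stabilizer subgroups $\cg_{w^*}$ is too vague to evaluate---transitivity alone does not guarantee an element fixing a given boundary face pointwise, so you would need to say concretely what invariant decomposition replaces the Voronoi cells in that case.
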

\begin{proof} 
We will prove the first item by induction. It suffices to prove the following claim, because if the velocity is symmetric, then $\rtmf$ will be symmetric.
\begin{claim} 
Conditional on \ref{A1} holding up to time $t$, we have
\begin{align}
    \frac{d}{dt} \xi_t(w) = \nu(w, \rtmf) = g^{-1}(\nu(g(w), \rtmf))
\end{align}
\end{claim}
\begin{proof} 
\begin{align}
    \nu(w, \rtmf) &= -(I - ww^\top )\nabla_w F_{\md}(w) + (I - ww^\top )\nabla_w \mathbb{E}_{w' \sim \rtmf} K_{\md}(w, w')\\
    &= -(I - ww^\top ) \mathbb{E}_{x \sim {\md_x}} f^*(x)\sigma'(w^\top x)x + (I - ww^\top )\mathbb{E}_{w' \sim \rtmf} \mathbb{E}_{x \sim {\md_x}} \sigma(w'^\top x)\sigma'(w^\top x)x
\end{align}
Now 
\begin{align}
    P^{\perp}_w\mathbb{E}_{w' \sim \rtmf}& \mathbb{E}_{x \sim {\md_x}} \sigma(w'^\top x)\sigma'(w^\top x)x\\
    &=P^{\perp}_w\mathbb{E}_{w' \sim \rtmf} \mathbb{E}_{x \sim {\md_x}} \sigma(g(w')^\top g(x))\sigma'(g(w)^\top g(x))x\\
    &=P^{\perp}_w\mathbb{E}_{w' \sim \rtmf} \mathbb{E}_{x \sim {\md_x}} \sigma(g(w')^\top x)\sigma'(g(w)^\top x)g^{-1}(x)\\
    &=P^{\perp}_w\mathbb{E}_{w' \sim \rtmf} \mathbb{E}_{x \sim {\md_x}} \sigma(w'^\top x)\sigma'(g(w)^\top x)g^{-1}(x)\\
    &= (g^{-1}(x) - ww^\top g^{-1}(x))\mathbb{E}_{w' \sim \rtmf} \mathbb{E}_{x \sim {\md_x}} \sigma(w'^\top x)\sigma'(g(w)^\top x)\\
    &= (g^{-1}(x) - wg(w)^\top x)\mathbb{E}_{w' \sim \rtmf} \mathbb{E}_{x \sim {\md_x}} \sigma(w'^\top x)\sigma'(g(w)^\top x)\\
    &= g^{-1}(x - g(w)g(w)^\top x)\mathbb{E}_{w' \sim \rtmf} \mathbb{E}_{x \sim {\md_x}} \sigma(w'^\top x)\sigma'(g(w)^\top x)\\
    &= g^{-1}\left(P^{\perp}_{g(w)}\nabla_{g(w)} \mathbb{E}_{w' \sim \rtmf} K_{\md}(g(w), w')\right).
\end{align}
Here (1) is because $z^\top y$ = $z^\top U^\top Uy$ for any rotation $U$ and any $y, z \in \mathbb{R}^d$ (2) is because ${\md_x}$ is invariant with respect to $\cg$, (3) is because $\rtmf$ is invariant with respect to $\cg$ (by induction), (5) again because of the same reason as (1), and (4), (6) and (7) are simple algebraic operations.
Similarly, we can show that
\begin{align}
    P^{\perp}_w\mathbb{E}_{x \sim {\md_x}} f^*(x)\sigma'(w^\top x)x &= \mathbb{E}_{x \sim {\md_x}} f^*(x)\sigma'(w^\top x)P^{\perp}_w x\\
    &= \mathbb{E}_{x \sim {\md_x}} f^*(x)\sigma'(w^\top x)g^{-1}(P^{\perp}_{g(w)}) g(x)\\
    &= \mathbb{E}_{x \sim {\md_x}} f^*(x)\sigma'(g(w)^\top g(x))g^{-1}(P^{\perp}_{g(w)} g(x))\\
    &= \mathbb{E}_{x \sim {\md_x}} \mathbb{E}_{w^* \sim \rho^*} \sigma({w^*}^\top x)\sigma'(g(w)^\top g(x))g^{-1}(P^{\perp}_{g(w)} g(x))\\
    &= \mathbb{E}_{x \sim {\md_x}} \mathbb{E}_{w^* \sim \rho^*} \sigma({w^*}^\top g^{-1}(x))\sigma'(g(w)^\top x)g^{-1}(P^{\perp}_{g(w)}x)\\
    &= \mathbb{E}_{x \sim {\md_x}} \mathbb{E}_{w^* \sim \rho^*} \sigma(g({w^*})^\top x)\sigma'(g(w)^\top x)g^{-1}(P^{\perp}_{g(w)}x)\\
    &= \mathbb{E}_{x \sim {\md_x}} f^*(x)\sigma'(g(w)^\top x)g^{-1}(P^{\perp}_{g(w)}x)\\
    &= g^{-1}\left(P^{\perp}_{g(w)} \cf_{\md}(g(w))\right).
\end{align}
Putting these two computations together yields the desired conclusion,
\begin{align}
    \nu(w, \rtmf) = g^{-1}(\nu(g(w), \rtmf)).
\end{align}
\end{proof}

Next consider \ref{A2}. Observe that if $w$ is closest to some $w^*$, then it either is the case that $\xi_t(w^*)$ is always closest to $w^*$, or at some point there is a tie in the distances $\xi_t(w^*)$ and $\xi_t({w^*}')$. By \ref{A1}, such a tie would imply however that $\|w - w^*\| = \|w - {w^*}'\|$, which we have assumed in \ref{def:transitiveI1} is a measure $0$ event. The rest follows immediately from the transitivity of $\supp{\rho^*}$. 

% We will show that if $\xii(w) = w^*$, then $\xii(g(w)) = g(w^*)$, so long as we did not have to break a tie in defining $\xii(w)$: By \ref{A1}, $\|{w^*}' - \xi_T(w)\| = \|g({w^*}') - \xi_T(g(w))\|$ for all ${w^*}' \in \supp{\rho^*}$. Thus if $\xi_T(w)$ is closest to $w^*$, $\xi_T(g(w))$ must be closest to $g({w^*})$. Since $\rho_0$ we have assumed in Assumption~\ref{assm:symmetries} (\ref{def:transitiveI1}) that the measure of $w \in \sd$ for which $w$ is equally close to two points in the support of $\rho^*$ (and thus could lead to a tie in computing $\xii(w)$), the statement~\ref{A2} holds almost surely. The second part of item \ref{A2} follows, since $\xii_{\#}\rho_0$ must be uniform on the support of $\rho^*$.

Finally for \ref{A3}, 
\begin{align}
    g(B_{\tau}) &= \{g(w) : w \in B_{\tau}\}\\
    &= \{g(w) : \min_{w^* \in \supp{w^*}}\|w - w^*\|\leq \tau\}\\
    &= \{g(w) : \min_{w^* \in \supp{w^*}}\|g(w) - g(w^*)\|\leq \tau\}\\
    &= \{g(w) : \min_{w^* \in \supp{w^*}}\|g(w) - w^*\|\leq \tau\}\\
    &= \{w : \min_{w^* \in \supp{w^*}}\|w - w^*\|\leq \tau\}\\
    &= B_{\tau}.
\end{align}
\end{proof}

\begin{lemma}\label{lemma:cri}
Suppose $(\rho^*, {\md_x}, \rho_0)$ is transitive. Then $(\hpit, \rho_0)$ has consistent restricted isometry with $B_{\tau}^t = \xi_t^{-1}(B_{\tau})$ for any $t \leq T$, $\tau \geq 0$.
\end{lemma}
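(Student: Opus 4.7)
The plan is to exploit the automorphism group $\cg$ together with the fact that $\hpit$ factors through the nearest-teacher map $\xii$. Let $S := \supp{\rho^*}$, assumed atomic; by \ref{A2} the pushforward satisfies $\xii_\#\rho_0 = \rho^*$, and transitivity of $\cg$ on $S$ together with $\cg$-invariance of $\rho^*$ forces $\rho^*$ to be uniform on $S$, so $\pi(w^*) := \mathbb{P}_{w'\sim\rho_0}[\xii(w')=w^*] = 1/|S|$ for every $w^*\in S$.

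First I will establish the covariance $\hpit(gw,gw') = g\,\hpit(w,w')\,g^\top$ for every $g\in\cg$, from (i) $\xii(gw)=g(\xii(w))$ (from \ref{A2}), (ii) $P^\perp_{gw} = gP^\perp_w g^\top$, and (iii) $(\nabla_2\nabla_1 K)(gw_1,gw_2) = g(\nabla_2\nabla_1 K)(w_1,w_2)g^\top$, which follows from $\cg$-invariance of $\md_x$ and hence from $K(gw_1,gw_2)=K(w_1,w_2)$ by two differentiations. The same chain of identities yields the key structural fact $\hpit(w,w') = \tilde H(\xii(w),\xii(w'))$ for some kernel $\tilde H$ on $S\times S$: $\hpit$ depends on its arguments only through $\xii$. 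Consequently, for any eigenfunction $v$ of $\overline{\hpit}$ with $\lambda_v \neq 0$, the eigen-equation $\lambda_v v(w)=\mathbb{E}_{w'}\tilde H(\xii(w),\xii(w'))v(w')$ shows the right-hand side depends on $w$ only through $\xii(w)$; write $v(w)=c(\xii(w))$ for some $c:S\to\R^d$.

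Next, by \ref{A3} and $\xi_t(gw)=g(\xi_t(w))$ (from \ref{A1}) we get $g(B_\tau^t)=B_\tau^t$ for every $g\in\cg$. Combined with $\cg$-invariance of $\rho_0$ and transitivity of $\cg$ on $S$, both
$\pi_S(w^*) := \mathbb{P}_{w'\sim\rho_0}[\xii(w')=w^*,\,w'\in B_\tau^t]$ and $\pi(w^*)$ are constant in $w^*$; summing over $w^*\in S$ gives $\pi_S(w^*)=\mathbb{P}[B_\tau^t]/|S|$. Disintegrating $\mathbb{E}_{w'\sim\rho_0}$ over $\xii(w')$, the unrestricted eigen-equation reads $\lambda_v v(w) = |S|^{-1}\sum_{w^*\in S}\tilde H(\xii(w),w^*)c(w^*)$, while the restricted integral gives
\begin{equation*}
\mathbb{E}_{w'\sim\rho_0}\hpit(w,w')v(w')\mathbf{1}(w'\in B_\tau^t)
= \pi_S(w^*)\!\sum_{w^*}\tilde H(\xii(w),w^*)c(w^*)
= \mathbb{P}[B_\tau^t]\cdot\lambda_v v(w),
\end{equation*}
which is precisely the CRI identity of Definition~\ref{def:cri}. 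The $\lambda_v=0$ case is either trivial (both sides vanish) or reduces to the visible subspace: eigenfunctions with nonzero eigenvalue span the image of $\overline{\hpit}$, so the spectral basis $\{\varphi_j\}$ in \eqref{eq:spectral_basic0} can be taken visible, and the same argument applies.

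The hard part I anticipate is the bookkeeping in the covariance step --- carefully pushing each rotation through the two-sided projection $P^\perp$ and the mixed partial derivatives of $K$ without accumulating sign or transpose errors. The a.s.~caveat in \ref{A2} (a $\rho_0$-null set where $\xii$ is ambiguous) is routine and will be handled by discarding that null set throughout.
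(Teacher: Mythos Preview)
Your argument is correct and follows essentially the same route as the paper: (a) $\hpit$ factors through $\xii$, (b) eigenfunctions with $\lambda_v\neq 0$ factor through $\xii$, and (c) the conditional law of $\xii(w')$ given $w'\in B_\tau^t$ is uniform on $\supp{\rho^*}$ (your $\pi_S(w^*)=\mathbb{P}[B_\tau^t]/|S|$ is exactly the paper's claim that $\tilde\rho = \xii_\#\rho_0$). One simplification: the covariance identity $\hpit(gw,gw') = g\,\hpit(w,w')\,g^\top$ you flag as the hard part is not needed anywhere --- the factoring $\hpit(w,w')=\tilde H(\xii(w),\xii(w'))$ is immediate from Definition~\ref{def:hpi}, since the expression there is already written entirely in terms of $\xii(w)$ and $\xii(w')$; and step (c) only uses \ref{A1}--\ref{A3} and $\cg$-invariance of $\rho_0$, not any property of $\hpit$. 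Your treatment of $\lambda_v=0$ (``both sides vanish'') is not quite right --- the left-hand side need not vanish for a generic kernel element that does not factor through $\xii$ --- but the paper's proof has the same implicit restriction to $\lambda_v\neq 0$, and only those eigenfunctions appear in the spectral basis used to build $\Phi_\mq$, so this is harmless for the downstream use in Lemma~\ref{lemma:ddtphiv}.
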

\begin{proof} We will use a series of small claims.
\begin{claim}\label{claim:cons}
Fix any $t$ and $\tau$. Let $\tilde{\rho}$ be the distribution of $\xii(w)$ with $w \sim \rho_0$ \em conditional \em on $\xi_{t}(w) \in B_{\tau}$. Then
\begin{align}
    \tilde{\rho} \sim {\xii}_{\#}\rho_0.
\end{align}
\end{claim}
\begin{proof}
We will show that both $\tilde{\rho}$ and $ {\xii}_{\#}\rho_0$ are uniform on the support of $\rho^*$. Fix $w^*, {w^*}' \in \supp{\rho^*}$, and let $g \in \cg$ be the element in the automorphism group of $(\rho^*, \rho_0, {\md_x})$ which takes $w^*$ to ${w^*}'$. Now
\begin{align}
    \tilde{\rho}(w^*) &= \mathbb{P}_{w \sim \rho_0}[\xii(w) = w^* \land \xi_t(w) \in B_{\tau}]\\
    &= \mathbb{P}_{w \sim \rho_0}[\xii(g(w)) = g(w^*) \land \xi_t(g(w)) \in g(B_{\tau})]\\
    &= \mathbb{P}_{w \sim \rho_0}[\xii(g(w)) = {w^*}'\land \xi_t(g(w)) \in B_{\tau}]\\
    &= \mathbb{P}_{w \sim \rho_0}[\xii(w) = {w^*}' \land \xi_t(w) \in B_{\tau}]\\
    &= \tilde{\rho}({w^*}').
\end{align}
Here (1) is by definition, (2) is by \ref{A1}, and \ref{A2}, (3) is by choice of $g$ and \ref{A3}, and (4) is by the symmetry of $\rho_0$ with respect to $\cg$. It follows that $\tilde{\rho}$ is uniform on the support of $\rho^*$. Now lets check that $ {\xii}_{\#}\rho_0$ is also uniform on $\supp{\rho^*}$. We have by similar use of \ref{A1} and \ref{A2} that 
\begin{align}
    {\xii}_{\#}\rho_0(w^*) &= \mathbb{P}_{w \sim \rho_0}[\xii(w) = w^* \land \|\xii(w), w^*\| \leq \|\xii(w), \tilde{w}^*\| \forall \tilde{w}^* \in \supp{\rho^*}]\\
    &= \mathbb{P}_{w \sim \rho_0}[\xii(g(w)) = g(w^*) \land \|\xii(g(w)), g(w^*)\| \leq \|\xii(g(w)), g(\tilde{w}^*)\| \forall \tilde{w}^* \in \supp{\rho^*}]\\
    &= \mathbb{P}_{w \sim \rho_0}[\xii(g(w)) = {w^*}' \land \|\xii(g(w)), {w^*}'\| \leq \|\xii(g(w)), \tilde{w}^*\| \forall \tilde{w}^* \in \supp{\rho^*}]\\
    &= \mathbb{P}_{w \sim \rho_0}[\xii(w) = {w^*}' \land \|\xii(w), {w^*}'\| \leq \|\xii(w), \tilde{w}^*\| \forall \tilde{w}^* \in \supp{\rho^*}]\\
    &= {\xii}_{\#}\rho_0({w^*}').
\end{align}
\end{proof}
\begin{claim}
Let $v$ be an eigenfunction of $(\hpit, \rho_0)$, that is $\langle{u, v}\rangle_{\hpit} = \lambda_v \langle{u, v}\rangle$ for all $u : \sd \rightarrow \mathbb{R}^d$. Then $v(w) = v'(\xii(w))$ for some function $v': \supp{\rimf} \rightarrow \sd$.
\end{claim}
\begin{proof}
For all $w$, we have
\begin{align}
    \lambda_v v(w) = \mathbb{E}_{w' \sim \rho_0}\hpit(w, w')v(w') = \mathbb{E}_{w' \sim \rho_0}K'(\xii(w), \xii(w'))v'(\xii(w')) .
\end{align}
This value only depends on $w$ through $\xii(w)$.
\end{proof}
We will now use the previous two claims to show consistency. Fix $t$ and $\tau$, and let $v$ be some eigenfunction of $(\hpit, \rho_0)$. Let  $v': \supp{\rimf} \rightarrow \sd$ be the function guaranteed by the previous claim with $v(w) = v'(\xii(w))$. Then for all $w$,
\begin{align*}
\mathbb{E}_{w' \sim \rho_0}& \hpit(w, w')v(w')\mathbf{1}(w' \in \xi_t^{-1}(B_{\tau}))\\
&=\mathbb{E}_{w' \sim \rho_0} \hpit(w, w')v(w')\mathbf{1}(\xi_t(w') \in B_{\tau})\\
&=\mathbb{E}_{w' \sim \rho_0} K'(\xii(w), \xii(w'))v'(\xii(w'))\mathbf{1}(\xi_t(w') \in B_{\tau})\\
&= \mathbb{P}_{\rho_0}[\xi_t^{-1}(B_{\tau})]\mathbb{E}_{w' \sim \rho_0}K'(\xii(w), \xii(w'))v'(\xii(w'))\\
    &= \mathbb{P}_{\rho_0}[\xi_t^{-1}(B_{\tau})]\mathbb{E}_{w \sim \rho_0} \hpit(w, w')v(w')\\
    &= \mathbb{P}_{\rho_0}[\xi_t^{-1}(B_{\tau})]\lambda_v v(w),
\end{align*}
as desired. Here the third equality follows from Claim~\ref{claim:cons}.
\end{proof}

\subsubsection{Construction of the Potential}
\label{sec:app_balancedbody}

\begin{remark}
We can verify that the action $\overline{\hpit}$ (from Section~\ref{sec:pfoverview:pot}) is well-defined in $\mathcal{Z}$ since 
$\| \overline{\hpit}v \|_{\mathcal{Z}} \leq \sup_{w,w'} \| \hpit(w, w') \| \| v\|_{\mathcal{Z}}$). We verify that $\overline{\hpit}$ is self-adjoint in $\mathcal{Z}$, ie 
$\langle v, \overline{\hpit} v' \rangle_{\mathcal{Z}} = \langle \overline{\hpit} v, v' \rangle_{\mathcal{Z}}$. 
We also verify that the span of $\overline{\hpit}$ is finite-dimensional, thanks to the atomic nature of $\rho^*$. Indeed, for each $w^* \in \text{supp}(\rho^*)$ and $l \in \{1,d\}$, let $\chi_{w^*,l} \in \mathcal{Z}$ be the indicator $\chi_{w^*}(w) = e_l \mathbf{1}( \xi^\infty(w) = w^*)$, where $e_l$ is the $l$-th canonical basis vector. 
We verify that if $v \perp \mathcal{W}:=\text{span}( \chi_{w^*,l}; \, w^* \in \text{supp}(\rho^*), l \in \{1,d\} )$, then $\overline{\hpit}v = 0$.
% We denote by $J = d |\text{supp}(\rho^*)|$ the dimension of $\mathcal{W}$. 
\end{remark}

% {\color{teal}
% \begin{proof}[Proof of Lemma~\ref{lemma:balancedbody}, alternative spectral characterisation]
%     I think this is OK, basically relying on the orthogonal decomposition
%     \begin{align}
% \hpit(w, w') = M_1(\xii(w), \xii(w')) UU^\top  + M_2(\xii(w), \xii(w')).
% \end{align}

% \end{proof}
% }

The following lemma implies Lemma~\ref{lemma:balancedsimple}. Recall that $\cstarr = \min\left(|\supp{\rho^*}|, \on{dim}(\rho^*)^{2\on{degree(\sigma)} + 1}\right)$.

\begin{lemma}\label{lemma:balancedbody} 
Suppose Assumption~\ref{assm:rhostarnew} holds. Then for any $\mu$, there exists an balanced spectral distribution $\mq$ of $(\hpit, \mu)$ which is $\frac{2\cstarr}{\min_{w^* \in \supp{\rho^*}}\mathbb{P}_{\xii_{\#}\mu}[w^*]}$ balanced. If \ref{def:transitiveI1} additionally holds, then there exists an balanced spectral distribution $\mq$ of $(\hpit, \rho_0)$ which is $2\cstarr$-balanced.
\end{lemma}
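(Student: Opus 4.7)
The plan is to exploit the atomic structure of $\rho^*$ and the $V/U$ block structure enforced by Assumption~\ref{assm:rhostarnew} to bound $\sum_\lambda \eta_\lambda^2$. For each eigenspace $V_\lambda$ of $\overline{\hpit}$ with $\dim V_\lambda = k_\lambda$, I will pick any orthonormal basis $\mathcal{B}_\lambda$ and set $\eta_\lambda^2 := \sup_{w \in \sd} \|K_\lambda(w,w)\|_{\op}$ with $K_\lambda(w,w) := \sum_{v \in \mathcal{B}_\lambda} v(w) v(w)^\top$; since this expression is invariant under unitary change of basis, $\eta_\lambda$ is determined by $V_\lambda$ alone, so this is the tightest possible choice.

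First, I observe that because $\hpit(w,w')$ depends on $(w,w')$ only through the atomic values $(\xii(w), \xii(w'))$, every eigenfunction of $\overline{\hpit}$ with nonzero eigenvalue has the form $v(w) = P^\perp_{\xii(w)} v'(\xii(w))$ for some $v' : \supp{\rho^*} \to \mathbb{R}^d$. Next, using the Mercer factorization $\hpit(w,w') = \mathbb{E}_x \phi_x(w) \phi_x(w')^\top$ from Fact~\ref{fact:PSDbody} together with the factorization $\md_x = \md_U \otimes \md_V$, $\mathbb{E}_{\md_U} x_U = 0$, and $\mathbb{E}_{\md_U} x_U x_U^\top = UU^\top$, the $VU$-cross terms vanish and $\overline{\hpit}$ decomposes as an orthogonal sum of a $V$-part and a $U$-part equal to $M_{\op} \otimes UU^\top$, where $M(w^*, w^{*'}) := \mathbb{E}_{x_V} \sigma'(w^{*\top} x_V) \sigma'(w^{*'\top} x_V)$ is a scalar PSD kernel on $\supp{\rho^*}$. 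A polynomial/Hermite rank argument then bounds $\operatorname{rank}(M) \leq \min(|\supp{\rho^*}|, \dim(V)^{\deg \sigma - 1})$ and analogously controls the $V$-part, showing that $\overline{\hpit}$ has at most $2 \cstarr$ distinct nonzero eigenvalues.

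To bound each $\eta_\lambda^2$ in the general case, I use trace dominance $\|K_\lambda^{(w^*)}\|_{\op} \leq \operatorname{tr}(K_\lambda^{(w^*)}) = \sum_{v \in \mathcal{B}_\lambda} \|v'(w^*)\|^2$, together with the $\mathcal{Z}$-normalization $\sum_{w^*} p_{w^*} \|v'(w^*)\|^2 = 1$ (where $p_{w^*} := \mathbb{P}_{\xii_\# \mu}[w^*]$), which yields $\|v'(w^*)\|^2 \leq 1/p_{w^*}$ and hence $\eta_\lambda^2 \leq k_\lambda / \min_{w^*} p_{w^*}$; summing over at most $2 \cstarr$ distinct eigenvalues gives $\sum_\lambda \eta_\lambda^2 \leq 2 \cstarr / \min_{w^*} p_{w^*}$, establishing the first claim. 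Under Assumption~\ref{def:transitiveI1}, each $V_\lambda$ is $\cg$-invariant and a computation analogous to that in Lemma~\ref{lemma:cri} shows $K_\lambda(g(w), g(w)) = g K_\lambda(w, w) g^\top$, making $\|K_\lambda(w, w)\|_{\op}$ constant in $w$ with $\xii(w) \in \supp{\rho^*}$; a Schur-type averaging over $\cg$-orbits then bounds $\|K_\lambda\|_{\op}$ by a constant independent of the multiplicity $k_\lambda$, and combined with $p_{w^*} = 1/|\supp{\rho^*}|$ this gives $\sum_\lambda \eta_\lambda^2 \leq 2 \cstarr$.

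The main obstacle will be the rank bound for the $V$-part (the $\dim(V)^{2\deg \sigma + 1}$ clause in $\cstarr$), requiring a careful Hermite-polynomial expansion of $\sigma'(w^{*\top} x_V) x_V$, and the use of $\cg$-equivariance in the transitive case to obtain an $\|K_\lambda\|_{\op}$ bound independent of the (potentially large) multiplicity $k_\lambda$; for instance, when $|\supp{\rho^*}| = 1$ the sole nontrivial eigenspace has dimension $d - 1$ yet $\|K_\lambda\|_{\op} = 1$, a tightness that only a group-theoretic argument (not the trace bound) can recover.
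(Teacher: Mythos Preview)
Your approach for the transitive case is sound and essentially matches the paper's group-averaging argument. However, there is a genuine gap in the general (non-transitive) case. You bound $\eta_\lambda^2 \leq k_\lambda/\min_{w^*} p_{w^*}$ via trace dominance, then claim that summing over at most $2\cstarr$ distinct eigenvalues gives $\sum_\lambda \eta_\lambda^2 \leq 2\cstarr/\min p_{w^*}$. But summing your bound actually yields $(\sum_\lambda k_\lambda)/\min p_{w^*}$, and $\sum_\lambda k_\lambda$ is the \emph{total rank} of $\overline{\hpit}$, not the number of distinct eigenvalues. For the $U$-part $M\otimes UU^\top$, each eigenvalue $\lambda_y$ of $M$ contributes an eigenspace of dimension $\dim(U)$, so the total rank is of order $\cstarr\cdot d$, not $\cstarr$. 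Your own closing remark (the $|\supp{\rho^*}|=1$ example where $k_\lambda=d-1$ but $\|K_\lambda\|_{\op}=1$) already shows the trace bound is too loose here---you just did not notice this bites in the general case too.

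The fix is to avoid the trace step altogether and exploit the tensor structure directly, exactly as the paper does: choose the basis of the $U$-part explicitly as $v^{y,i}(w)=y(\xii(w))U_i$ for $i\leq\dim(U)$, so that $\sum_i v^{y,i}(w)v^{y,i}(w)^\top = y(\xii(w))^2\,UU^\top$, whose operator norm is $y(\xii(w))^2\leq 1/p$ regardless of $\dim(U)$. Combined with the analogous (but genuinely trace-based) bound $\sum_{f:\lambda_f=\lambda}f(\xii(w))f(\xii(w))^\top\preceq(\#\{f:\lambda_f=\lambda\})\cdot I/p$ for the $V$-part, one obtains $\eta_\lambda^2\leq(\#\{f:\lambda_f=\lambda\}+\#\{y:\lambda_y=\lambda\})/p$, which sums to $(|\cf|+|\cy|)/p\leq 2\cstarr/p$. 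The point is that the bound counts the $|\cy|\leq\cstarr$ scalar eigenfunctions of $M$, not the $|\cy|\cdot\dim(U)$ vector eigenfunctions of $M\otimes UU^\top$.
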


\begin{proof}[Proof of Lemma~\ref{lemma:balancedbody} ]
We will show that the linear operator induced by $(\hpit, \mu)$ has an \wed $\mq$ which is balanced for some constant depending on $\rho^*$.

\begin{claim}\label{claim:hdecomp}
We can write 
\begin{align}
\hpit(w, w') = M_1(\xii(w), \xii(w')) UU^\top  + M_2(\xii(w), \xii(w')),
\end{align}
where for $w^*, {w^*}' \in \supp{\rho^*}$, 
\begin{align}
    M_1(w^*, {w^*}') &:= \mathbb{E}_{x \sim \md_V} \sigma'(x^\top w^*)\sigma'(x^\top {w^*}') \\
    M_2(w^*, {w^*}') &:= \mathbb{E}_{x \sim \md_V} \sigma'(x^\top w^*)\sigma'(x^\top {w^*}')P^{\perp}_{w^*} xx^\top  P^{\perp}_{{w^*}'}.
\end{align}
Further, both $M_1$ and $M_2$ have rank at most $\cstarr = \min\left(|\supp{\rho^*}|,  \on{dim}(\rho^*)^{2\on{degree(\sigma) + 1}}\right)$.
\end{claim}
\begin{proof}
Let $V$ be the orthonormal basis spanning $\supp{\rho^*}$, and let $U$ be any orthonormal basis of $\R^d \setminus \on{span}(V)$. Recall that Assumption~\ref{assm:rhostarnew} guarantees that the distribution of $x$, ${\md_x}$, can be factorized as $\md_{U} \otimes \md_{V}$, where $\on{span}(\md_{U}) \in \on{span}(U)$, $\on{span}(\md_{V}) \in \on{span}(V)$, $\mathbb{E}_{x \sim \md_U}xx^\top  = UU^\top $, and $\mathbb{E}_{x \sim \md_U}x = 0$.

Recall that $\hpit(w, w') = \mathbb{E}_{x \sim {\md_x}}\sigma'(x^\top \xii(w))\sigma'(x^\top \xii(w'))xx^\top $. Observe that for $u, v \in \text{Span}(U)$, we have 
\begin{align}
u^\top  \hpit(w, w') v &= \mathbb{E}_{x \sim {\md_x}} \sigma'(x^\top \xii(w))\sigma'(x^\top \xii(w'))(u^\top x)(v^\top x)\\
&=\mathbb{E}_{x \sim \md_V} \sigma'(x^\top \xii(w))\sigma'(x^\top \xii(w')) \mathbb{E}_{x \sim \md_U}u^\top xx^\top v\\
&= \mathbb{E}_{x \sim \md_V} \sigma'(x^\top \xii(w))\sigma'(x^\top \xii(w')) \mathbb{E}_{x \sim \md_U}u^\top v.
\end{align}
If $u \in \text{Span}(U)$, $v \in \text{Span}(V)$, then it is easy to check by the fact that $\mathbb{E}_{x \sim \md_U}x = 0$ that
\begin{align}
u^\top  \hpit(w, w') v &= \mathbb{E}_{x_V \sim \md} \sigma'(x_V^\top \xii(w))\sigma'(x_V^\top \xii(w'))(v^\top x_V)\mathbb{E}_{x_U \sim \md_U}(u^\top x_U) = 0.
\end{align}

For $w^*, {w^*}' \in \supp{\rho^*}$, let 
\begin{align}
    M_1(w^*, {w^*}') &:= \mathbb{E}_{x \sim \md_V} \sigma'(x^\top w^*)\sigma'(x^\top {w^*}') \\
    M_2(w^*, {w^*}') &:= \mathbb{E}_{x \sim \md_V} \sigma'(x^\top w^*)\sigma'(x^\top {w^*}')P^{\perp}_{w^*} xx^\top  P^{\perp}_{{w^*}'}
\end{align}

% Let $K(w, w') := \mathbb{E}_{x \sim \md_V} \sigma'(x^\top \xii(w))\sigma'(x^\top \xii(w'))
such that by the above computations,
\begin{align}
\hpit(w, w') = M_1(\xii(w), \xii(w')) UU^\top  + M_2(\xii(w), \xii(w')).
\end{align}
The statement about the rank follows from the observations that (1) both $M_1$ and $M_2$ are defined on a space of size at most $|\supp{\rho^*}|$, and (2) Alternatively, we can replace the expectation of $x \sim \md_V$ with the expectation over some $x \sim \md'_V$, where $\md'_V$ is supported on at most $\on{dim}(V)^{2\on{degree(\sigma)} + 1}$ points, and all the moments of $\md'_V$ up to the $\on{degree(\sigma)}$th degree match those of $\md_V$ (as this requires matching at most $\sum_{j = 0}^{2\on{degree}(\sigma)}\on{dim}(V)^j \leq \on{dim}(V)^{2\on{degree(\sigma)} + 1}$ terms.) 
% \mg{Can somebody check this last calculation? I used to think was $\on{dim}(V)^{\on{degree}(\sigma)}$ but now I'm getting that squared? There might just be a simpler way to state this.}\mg{This is in the span of Poly kernels of degree $k$, rank at most blah, then multiply by dim squared.}
\end{proof}

We will construct $\mq$ using the eigenfunctions of each of these two parts. Let $\mathcal{F} \subset L^2(\supp{\rho^*}, ({\xii})_{\#}\mu, \mathbb{R}^d)$ be an orthonormal basis of eigenfunctions of the linear operator $(M_2, ({\xii})_{\#}\mu)$, that is, we have
\begin{align}
    &\sum_{f \in \mathcal{F}} \lambda_f f(w^*)f({w^*}')^\top  = M_2(w^*, {w^*}')\\
    &\mathbb{E}_{{w^*}' \sim ({\xii})_{\#}\mu} M_2(w^*, {w^*}')f({w^*}') = \lambda_f f(w^*),
\end{align}
% For all $w^* \in \supp{\rho^*}$, we have $f(w^*) \in \on{span}(V)$, and $\|f(w^*)\|^2 \leq \frac{1}{\mathbb{P}_{\xii_{\#}\mu}[w^*]}$. Further, $|\mathcal{F}| \leq \cstarr = \min\left(|\supp{\rho^*}|,  \on{dim}(\rho^*)^{\on{degree(\sigma)}}\right)$, because $M_2$ has rank at most this.

Let $\mathcal{Y}  \subset L^2(\supp{\rho^*}, ({\xii})_{\#}\mu)$ be an orthonormal basis of eigenfunctions of the linear operator $(M_1,({\xii})_{\#}\mu) $, that is, we have
\begin{align}
    &\sum_{y \in \mathcal{Y}} \lambda_y y(w^*)y({w^*}') = M_1(w^*, {w^*}') \\
    &\mathbb{E}_{{w^*}' \sim ({\xii})_{\#}\mu} M_1(w^*, {w^*}')y({w^*}') = \lambda_y y(w^*)
\end{align}
% For all $w^* \in \supp{\rho^*}$, we have $(y(w^*))^2 \leq \frac{1}{\mathbb{P}_{\xii_{\#}\mu}[w^*]}$. Further, $|\mathcal{Y}| \leq \on{rank}(M_2) \leq \cstarr$.

Let $\Lambda = \Lambda_1 \cup \Lambda_2$, where
\begin{align}
    \Lambda_2 := \{ \lambda_f : f \in \cf\} \qquad \Lambda_1 = \{ \lambda_y : y \in \cy\}.
\end{align}

The following claim is immediate to check from the decomposition of $\hpit$ in Claim~\ref{claim:hdecomp}.
\begin{claim}
Let $\mathcal{P}_{\lambda}$ be the projector onto the eigenspace of $\hpit$ with eigenvalue $\lambda$. Then $\mathcal{P}_{\lambda} = \overline{P}_{\lambda}$, where
\begin{align}
    P_{\lambda}(w, w') &:= \sum_{f \in \cf}f(\xii(w))f(\xii(w')^\top \mathbf{1}(\lambda_f = \lambda) + UU^\top \sum_{y \in \cy}y(\xii(w))y(\xii(w')\mathbf{1}(\lambda_y = \lambda)\\
    &= \sum_{v \in \mathcal{B}_{\lambda}}v(w)v(w')^\top ,
\end{align}
where 
\begin{align}
    \mathcal{B}_{\lambda} &:= \{v^f : \lambda_f = \lambda\}_{f \in \cf} \cup \{v^{y, i} : \lambda_y = \lambda\}_{y \in \cy},
\end{align}
and 
\begin{align}
    v^f(w) &:= f(\xii(w));\\
    v^{y, i}(w) &:= y(\xii(w))U_i.
\end{align}
\end{claim}
% Now let us construct the \wed $\mq = \{(\mathcal{B}_{\lambda}, \eta_{\lambda})\}_{\lambda \in \Lambda}$. For $\lambda \in \Lambda_1$, let 
% \begin{align}
%     \mathcal{B}_{\lambda} &:= \{v^f : \lambda_f = \lambda\} \cup \{v^{y, i} : \lambda_y = \lambda\},
% \end{align}
% where
% \begin{align}
%     v^f(w) &:= f(\xii(w));\\
%     v^{y, i}(w) &:= y(\xii(w))U_i.
% \end{align}
% It is immediate to check from the decomposition of $\hpit$ in Claim~\ref{claim:hdecomp} and the definitions of $M_1$ and $M_2$ that $\mathcal{B}_{\lambda}$ is an orthonormal basis of $V_{\lambda}$, the set of eigenfunctions of $\hpit$ with eigenvalue $\lambda$. Indeed, since the the $v^f(w) \in \on{span}(V)$ and the $v^{y, i}(w) \in \on{span}(U_i)$, these eigenfunctions are all orthogonal.

It remains to check how balanced this spectral decomposition is. Let $p := \min_{w^* \in \supp{\rho^*}} \mathbb{P}_{\xii_{\#}\mu}[w^*]$, and observe that $\max_{w, f \in \cf, y \in \mathcal{Y}} \left(\|f(w)\|, |y(w)|\right) \leq \frac{1}{\sqrt{p}}$, since the eigenfunctions are orthonormal. Fix $\lambda \in \Lambda$. We have
\begin{align}
    \sum_{v \in \mathcal{B}_{\lambda}} v(w)v(w)^\top  &= \sum_{f \in \cf} v^f(w)(v^f(w))^\top \mathbf{1}(\lambda_f = \lambda) + \sum_{y \in \cy}\sum_{i = 1}^{\on{dim}(U)} v^{y, i}(w)v^{y, i}(w)\mathbf{1}(\lambda_y = \lambda)\\
    &= \sum_{f \in \cf} f(\xii(w))(f(\xii(w)))^\top \mathbf{1}(\lambda_f = \lambda) + \sum_{y \in \cy} y(\xii(w))y(\xii(w)) UU^\top  \mathbf{1}(\lambda_y = \lambda)\\
    &\preceq \frac{I}{p}\left(\sum_{f \in \cf} \mathbf{1}(\lambda_f = \lambda) + \sum_{y \in \cy} \mathbf{1}(\lambda_y = \lambda)\right).
\end{align}
Thus letting 
\begin{align}
    \eta_{\lambda}^2 := \frac{1}{p}\left(\sum_{f \in \cf} \mathbf{1}(\lambda_f = \lambda) + \sum_{y \in \cy} \mathbf{1}(\lambda_y = \lambda)\right),
\end{align}
by Claim~\ref{claim:hdecomp}, we have that 
\begin{align}
    \sum_{\lambda \in \Lambda} \eta_{\lambda}^2 = \frac{|\cf| + |\cy|}{p} \leq \frac{\on{rank}(M_1) + \on{rank}(M_2)}{p} \leq \frac{2\cstarr}{p}.
\end{align}

Thus $\mq = \{(\mathcal{B}_{\lambda}, \eta_{\lambda})\}_{\lambda \in \Lambda}$ is $\frac{2\cstarr}{p}$-balanced.
This proves the first statement in the lemma.

If $(\rho^*, \rho_0, {\md_x})$ is transitive (as per Definition~\ref{def:transitive}), then we can get rid of the denominator and show that almost surely over $w \sim \rho_0$,
%and construct a different \wed $\mq$  such that for any $\lambda$, almost surely over $w, w'$,
% \begin{align}
%     \int_{v \sim \qonlam} v(w)v(w)^\top  \preceq 2\min\left(|\supp{\rho^*}|,  \on{dim}(\rho^*)^{\on{degree(\sigma)}}\right).
% \end{align}
\begin{align}
    \sum_{v \in \mathcal{B}_\lambda} v(w)v(w)^\top \preceq I\left(\sum_{f \in \cf} \mathbf{1}(\lambda_f = \lambda) + \sum_{y \in \cy} \mathbf{1}(\lambda_y = \lambda)\right)%2\min\left(|\supp{\rho^*}|,  \on{dim}(\rho^*)^{\on{degree(\sigma)}}\right).
\end{align}
This suffices to prove the lemma.

To do this, let $\cg$ be the set of automorphisms of $(\rho^*, \rho_0, {\md_x})$ as per Definition~\ref{def:transitive}. For $h \in L^2(\sd, \rho_0, \mathbb{R}^d)$, define $g(h)$ by 
\begin{align}
    g(h)(w) := g^{-1}(f(g(w))).
\end{align}
For convenience, for $y \in \cy$, we will abuse notation and define
\begin{align}
     g(y)(w) := y(g(w)).
\end{align}
% Observe that by \ref{A2}, almost surely over $w$,
% \begin{align}
%     g(v^f) = g^{-1}(f(\xii(g(w))) = g^{-1}(f(g(\xii(w)))  = g(f)(\xii(w)).
% \end{align}

% For $f \in \mathcal{F}$ and $y \in \mathcal{Y}$, and for $g \in \cg$, let $g(f)$ and $g(y)$ be the functions defined by 
% \begin{align}
%     g(f)(w) := g^{-1}(f(g(w)));\\
%     g(y)(w) := y(g(w)).
% \end{align}
% Further define
% % In particular we have
% \begin{align}
%     v^{g(f)}(w) := g(f)(\xii(w)) = g^{-1}(f(g(\xii(w))) = g^{-1}(f(\xii(g(w))) = g(v^f)\\ %&\qquad w_f = \sup_{w^* \in \supp{\rho^*}} \|f(w^*)\|\\
%     v^{g(y), i}(w) := g(y)(\xii(w))U_i = y(g(\xii(w)))U_i = y(\xii(g(w)))U_i = g(v^{y, i}) %&\qquad w_y = \sup_{w^* \in \supp{\rho^*}} |y(w^*)|
% \end{align}

\begin{claim}[$\cg$-invariance of Eigenspaces.]\label{claim:inv}
If $f \in \mathcal{F}$ is an eigenfunction of $M_2$, then $g(f)$ is an eigenfunction of $M_2$ with the same eigenvalue. Simlary, if $y \in \mathcal{Y}$ is an eigenfunction of $M_1$, then $g(y)$ is an eigenfunction of $M_1$ with the same eigenvalue.
\end{claim}
\begin{proof}
We have 
\begin{align}
    M_2 & (g(f))(w^*) = \mathbb{E}_{{w^*}' \sim (\xii)_{\#}\rho_0}  \mathbb{E}_{x \sim \md_V} \sigma'(x^\top w^*)\sigma'(x^\top {w^*}')P^{\perp}_{w^*} xx^\top  P^{\perp}_{{w^*}'} g^{-1}(f(g({w^*}')))\\
    &= \mathbb{E}_{{w^*}' \sim (\xii)_{\#}\rho_0}  \mathbb{E}_{x \sim \md_V} \sigma'(x^\top w^*)\sigma'(x^\top {w^*}')g^{-1}\left(P^{\perp}_{g(w^*)} g(x)\right) x^\top  g^{-1}\left(P^{\perp}_{g({w^*}')} f(g({w^*}'))\right)\\
    &= \mathbb{E}_{{w^*}' \sim (\xii)_{\#}\rho_0}  \mathbb{E}_{x \sim \md_V} \sigma'(g(x)^\top g(w^*))\sigma'(g(x)^\top g({w^*}'))g^{-1}\left(P^{\perp}_{g(w^*)} g(x)\right) g(x)^\top  P^{\perp}_{g({w^*}')} f(g({w^*}'))\\
    &= \mathbb{E}_{{w^*}' \sim (\xii)_{\#}\rho_0}  \mathbb{E}_{x \sim \md_V} \sigma'(x^\top g(w^*))\sigma'(x^\top {w^*}')g^{-1}\left(P^{\perp}_{g(w^*)} x\right) x^\top  P^{\perp}_{{w^*}'} f({w^*}')\\
    &= g^{-1}\left(\mathbb{E}_{{w^*}' \sim (\xii)_{\#}\rho_0}  \mathbb{E}_{x \sim \md_V} \sigma'(x^\top g(w^*))\sigma'(x^\top {w^*}')P^{\perp}_{g(w^*)} xx^\top  P^{\perp}_{{w^*}'} f({w^*}')\right)\\
    &= g^{-1}\left(M_2 f (g(w^*))\right)\\
    &= g^{-1}\left(\lambda_f f (g(w^*))\right)\\
    &= \lambda_f g(f)(w^*)
\end{align}
Here in the second line with used the fact that for any $w$ and $z$, we have
$$(I - ww^\top )z = z - w w^\top z = z - w g(w)^\top g(z) = g^{-1}\left((I - g(w)g(w)^\top )g(z)\right)$$
If the third line, we just used that for $z, z' \in \mathbb{R}^d$, we have $z^\top z' = g(z)^\top g(z')$. In the fourth line, we used the symmetry of ${\md_x}$ and $(\xii)_{\#}\rho_0$ with respect to $\cg$ (see \ref{A2}).
The proof for that $ M_1 g(y)(w^*) = \lambda_y g(y)(w^*)$ is similar (but simpler); we omit the computation.
\end{proof}

Let $\mu_{\cg}$ the uniform measure over the group generated by the set of all $g_{w^*, {w^*}'} \in \cg$ for $w^*, {w^*}' \in \supp{\rho^*}$, where $g_{w^*, {w^*}'}(w^*) = {w^*}'$. Observe that $\mu_{\cg}$ a left-invariant measure on $\cg$, that is, for any $w^* \in \supp{\rho^*}$, we have that the distribution of $g(w^*)$ is uniform on $\rho^*$ when $g \sim \mu_{\cg}$ (that is, it equals $\rho^*$, since $\rho^*$ is atomic). Also note that for $g \in \supp{\mu_{\cg}}$ and $v \in \on{span}(V)$, we have that $g(v) \in \on{span}(V)$. Thus for $u \in \on{span}(U)$, we have $g(u) \in \on{span}(U)$, and thus in particular, since $g$ preserves dot products, and thus orthonormality,
\begin{align}\label{eq:gu}
    g^{-1}(U)g^{-1}(U)^\top  = UU^\top .
\end{align}

\begin{claim}\label{claim:basis}
Let $g \in \supp{\mu_{\cg}}$, and define $g(\mathcal{B}_{\lambda}) := \{g(v)\}_{v \in \mathcal{B}_{\lambda}}$. Then $g(\mathcal{B}_{\lambda})$ is an orthonormal basis for $\mathcal{P}_{\lambda}$. 
\end{claim}
\begin{proof}
First we will check that almost surely over $w$, $w'$,
\begin{align}\label{eq:goal}
\sum_{f \in \cf} \lambda_f g(v^f)(w)g(v^f)(w')^\top  + \sum_{y \in \cy} \lambda_y g(v^{y, i})(w)g(v^{y, i})(w')^\top  = \hpit(w, w').
\end{align}
Using the definition of $g(f)$ and \ref{A2}, almost surely over $w, w'$, we have for $z, z' \in \sd$, 
\begin{align}\label{eq:M2}
z^\top \sum_{f \in \cf} \lambda_f g(v^f)(w)g(v^f)(w')^\top z' &= z^\top \sum_{f \in \cf} \lambda_f g^{-1}\left(f(\xii(g(w)))\right)g^{-1}\left(f(\xii(g(w')))^\top \right)z'\\
     &= z^\top \sum_{f \in \cf} \lambda_f g^{-1}\left(f(g(\xii(w)))\right)g^{-1}\left(f(g(\xii(w')))^\top \right)z'\\
     &= \sum_{f \in \cf}\lambda_f g(z)^\top f(g(\xii(w)))f(g(\xii(w')))^\top g(z')\\
     &=  g(z)^\top  M_2(g(\xii(w)), g(\xii(w')))^\top g(z')\\
     &=   z^\top  M_2(\xii(w), \xii(w'))^\top z',
\end{align}
where here in the last line, we used the fact that 
\begin{align}
    z^\top  M_2(w^*, {w^*}')^\top z' = g(z)^\top  M_2(g(w^*), g({w^*}'))^\top g(z')
\end{align}
for any $g \in \cg$, $w^*, {w^*}'$. This can verified from the definition of $M_2$ and the fact that ${\md_x}$ is invariant with respect to $\cg$. 

We can perform a similar (much easier) calculation to show that 
\begin{align}
     \sum_{y \in \mathcal{Y}} \lambda_y g(y)(\xii(w))g(y)(\xii(w')) = M_1(\xii(w), \xii(w'));
\end{align}
this arises from the fact that $M_1(w^*, {w^*}') = M_1(g(w^*), g({w^*}'))$ since ${\md_x}$ is invariant with respect to $\cg$. We omit the details. Thus by \eqref{eq:gu},
\begin{align}\label{eq:M1}
    \sum_{y \in \cy} \lambda_y g(v^{y, i})(w)g(v^{y, i})(w')^\top  &= M_1(\xii(w), \xii(w'))g^{-1}(U)g^{-1}(U)^\top \\
    &= M_1(\xii(w), \xii(w'))UU^\top .
\end{align}
% To see the second equality, first observe that $g^{-1}(U)$ is orthonormal (since dot products are preserved under rotations, and $\cg$ is a group of rotations). Further, since $g \in \supp{\mu_{\cg}}$, for $u \in \on{span}(U)$, we have $g^{-1}(u) \in \on{span}(U)$, since for any $v \in \on{span}(R^d \setminus U) = \on{span}(\supp{\rho^*})$, we have $g(v) \in \on{span}(\supp{\rho^*})$. Thus $g^{-1}(U)g^{-1}(U)^\top $ is the projection onto the span of $U$.

Employing \eqref{eq:M1} and \eqref{eq:M2}  yields \eqref{eq:goal} almost surely as desired.

Now, to prove the claim, we use (1) the fact from Claim~\ref{claim:inv} guarantees that $g(v)$ is an eigenfunction with the same values as $v$, and (2) the fact that the set $\{g(v)\}_{v \in \mathcal{B}_{\lambda}}$ is orthonormal (since dot products are preserved under rotations). These two facts guarantee that $g(\mathcal{B}_{\lambda})$ is a basis for $\mathcal{P}_{\lambda}$.
\end{proof}

The following claim now suffices to prove the lemma.
\begin{claim}
For any $w \in \sd$, we have
\begin{align}
    \sum_{v \in \mathcal{B}_{\lambda}} v(w)v(w)^\top  & \preceq I\left(\sum_{f \in \cf} \mathbf{1}(\lambda_f = \lambda) + \sum_{y \in \cy} \mathbf{1}(\lambda_y = \lambda)\right).
\end{align}
\end{claim}
\begin{proof}
Fix any $w \in \sd$, and let $w^* = \xii(w)$. For $z \in \mathbb{R}^d$, let $\pi_{z} \in L^2(\sd, \rho_0, \mathbb{R}^d)$ be defined by $\pi_z(w') = z \mathbf{1}(\xii(w') = w^*)$. Then since for $v \in \mathcal{B}_{\lambda}$, we have $v(w) = v(w')$ if $\xii(w) = \xii(w')$, it follows that
\begin{align}
    z^\top P_{\lambda}(w, w)z = \sum_{v \in \mathcal{B}_{\lambda}}z^\top v(w)v(w)^\top z = \frac{\langle{\overline{P}_{\lambda}\pi_z, \pi_z}\rangle}{(\mathbb{P}_{w' \sim \rho_0}[{\xii}^{-1}(w^*)])^2} = |\supp{\rho^*}|^2\langle{\overline{P}_{\lambda}\pi_z, \pi_z}\rangle.
\end{align}
To see the last equality, observe that $\rho^* = \xii_{\#}\rho_0$ by \ref{A2}.

% almost surely over $w' \sim \rho_0$,
% \begin{align}\label{eq:mug}
%      P_{\lambda}(w, w') = \mathbb{E}_{g \sim \mu_{\cg}} \sum_{v \in \mathcal{B}_{\lambda}} g(v)(w)g(v)(w')^\top .
% \end{align}
Now recall that by Claim~\ref{claim:basis}, for any $\lambda \in \Lambda$ and $g \in \supp{\mu_{\cg}}$, we have that $\{g(v)\}_{v \in \mathcal{B}_{\lambda}}$ is a basis for $\mathcal{P}_{\lambda} = \overline{P}_{\lambda}$, and thus
\begin{align}\label{eq:mug}
    z^\top P_{\lambda}(w, w)z &= |\supp{\rho^*}|^2\langle{\overline{P}_{\lambda}\pi_z, \pi_z}\rangle\\
    &= |\supp{\rho^*}|^2 z^\top \mathbb{E}_{g \sim \mu_{\cg}} \sum_{v \in g(\mathcal{B}_{\lambda})} \mathbb{E}_{w', w'' \sim \rho_0} v(w)v(w')^\top  \mathbf{1}(\xii(w'), \xii(w'') = w^*)z\\
    &= z^\top \mathbb{E}_{g \sim \mu_{\cg}} \left(\sum_{f \in \cf | \lambda_f = \lambda} g^{-1}(f(g(w^*)))g^{-1}(f(g(w^*)))^\top z + \sum_{y \in \cf | \lambda_y = \lambda}|y(g(w^*)|^2 g^{-1}(U)g^{-1}(U)^\top \right)z.
\end{align}

Now for any $f \in \cf$,
\begin{align}\label{eq:fmug}
\mathbb{E}_{g \sim \mu_{\cg}} g^{-1}(f(g(w^*)))(g^{-1}(f(g(w^*))))^\top  &\preceq \mathbb{E}_{g \sim \mu_{\cg}} \|f(g(w^*))\|^2 I\\ 
    &= \mathbb{E}_{{w^*}' \sim \rho^*} \|f({w^*}')\|^2 I\\
    &= I.
\end{align}
Here the second to last inequality holds because we have defined $\mu_{\cg}$ to be a left-invariant measure on $\cg$ that induces a uniform measure on $\supp{\rho^*}$. The last equation holds by the fact that $\rho^* = \xii_{\#}\rho_0$ (see \ref{A2}) and since $f$ is part of an orthonormal basis, we must have $\mathbb{E}_{w^* \sim \xii_{\#}\rho_0} \|f(w^*)\|^2 = 1$. Likewise, for $y \in \cy$, using \eqref{eq:gu},
\begin{align}\label{eq:ymug}
\mathbb{E}_{g \sim \mu_{\cg}} |(g(y))(w^*)|^2g^{-1}(U)g^{-1}(U)^\top 
    &=  \mathbb{E}_{g \sim \mu_{\cg}} |y(g(w^*))|^2UU^\top \\
    &= \mathbb{E}_{{w^*}' \sim \rho^*}|y({w^*}')|^2 UU^\top \\
    &= UU^\top .
\end{align}
Combining Equations~\eqref{eq:fmug} and \eqref{eq:ymug} with \eqref{eq:mug} yields that
\begin{align}
    P_{\lambda}(w, w) \preceq I\left(\sum_{f \in \cf} \mathbf{1}(\lambda_f = \lambda) + \sum_{y \in \cy} \mathbf{1}(\lambda_y = \lambda)\right),
\end{align}
as desired.
\end{proof}
\end{proof}

% \begin{lemma}
% Suppose $v$ is an eigenfunction of $(K' , \rimf)$. Then with high probability over the draw $\rzm$, we have
% \begin{align}
% \sum_i (\nabla_{i} \phi_v)^\top  \mathbb{E}_j H^{\perp}_{\infty}(i, j)\djt \mathbf{1}(i, j \in B_{\tau}) \leq -\mathbb{P}_{\rtmf}[B_{\tau}]\lambda_v \phi_v(t) + \eps_m.
% \end{align}
% \end{lemma}
% \begin{proof}
% First observe that
% \begin{align}
% \nabla \phi_v = v \on{sign}(\langle{v, \Delta_t}\rangle),
% \end{align}
% and thus
% \begin{align}
% \langle{\nabla \phi_v, \langle\rangle}_{B_{\tau}}
% \end{align}

% \begin{align}
% \sum_i &(\nabla_{i} \phi_v)^\top  \mathbb{E}_j H^{\perp}_{\infty}(i, j)\djt \mathbf{1}(i, j \in B_{\tau})\\
%  &= \mathbb{E}_i v(\bwii)^\top \mathbb{E}_j H^{\perp}_{\infty}(i, j)\djt \mathbf{1}(i, j \in B_{\tau})\on{sign}(\langle{v, \Delta_t}\rangle)\\
% &\mathbb{E}_{w \sim \rzm} v(\xii(w)) K'(\xii(w), \bwij) \djt \mathbf{1}(\bwti, j \in B_{\tau})\\
% &\leq \lambda_v \mathbb{P}_{\rtmf}[B_{\tau}] \mathbb{E}_j v(\bwij)^\top \djt \mathbf{1}(j \in B_{\tau}) + \eps_m \\
% \end{align}

% \end{proof}

\subsubsection{Properties of Potential}\label{sec:app_pot_properties}
To prove our key lemmas \ref{lemma:dec1body}, \ref{lemma:dec2body}, \ref{lemma:l1pertbody}, we will need several preliminary lemmas.

\begin{lemma}\label{lemma:ddtphiv}
Suppose the high probability event in Lemma~\ref{lemma:conctau} holds for $S = B_{\tau}$ and $v \in L^2(\sd,\rho_0 ,\mathbb{R}^d)$ which is an eigenfunction of $\hpit$. Suppose $(\hpit, \rho_0)$ has the \cri with respect to $B_{\tau}^t := \xi_t^{-1}(B_{\tau})$. Then with $\|v\|_{\infty} := \sup{w \in \sd}\|v(w)\|$, we have
\begin{align}
\langle{\nabla \phi_v(t), \Delta_t}\rangle_{\hpit}^{B_{\tau}^t} = \mathbb{P}_{\rtmf}[B_{\tau}]\lambda_v \phi_v(t) + \mathcal{E}\|v\|_{\infty},
\end{align}
where
\begin{align}
 \mathcal{E} \leq \eps_m^{\ref{lemma:conctau}} \mathbb{E}_i \|\dit\| + \mathbb{E}_{i} \|\dit\|\mathbf{1}(\bwti \notin B_{\tau}).
\end{align}
\end{lemma}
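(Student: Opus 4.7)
The plan is to unfold the quadratic form componentwise, use the symmetry of $\hpit$ to swap the roles of $v$ and $\Delta_t$ inside the inner sum, apply Lemma~\ref{lemma:conctau} to replace the empirical $\rzm$-expectation by a $\rho_0$-expectation, and then invoke the \cri property to evaluate the resulting integral as $\lambda_v v(\cdot)$ up to an explicit probability factor. First I would compute the gradient: writing $s := \mathrm{sign}(\mathbb{E}_j v(w_j)^\top \Delta_t(j))$, so that $\phi_v(t) = s\cdot \mathbb{E}_j v(w_j)^\top\Delta_t(j)$ and $(\nabla\phi_v(t))(i) = s\cdot v(w_i)$, the inner product unfolds (with the default $\mathrm{Unif}([m])$ on both sides) as
\[
\langle \nabla \phi_v(t), \Delta_t \rangle_{\hpit}^{B_{\tau}^t} \;=\; s\,\mathbb{E}_j\Big[\mathbb{E}_i v(w_i)^\top \hpit(w_i,w_j)\mathbf{1}(\bwti\in B_\tau)\Big]\Delta_t(j)\mathbf{1}(\bwtj\in B_\tau).
\]

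Next, I would exploit the symmetry $\hpit(w,w')=\hpit(w',w)^\top$, which follows from $\mathscr{k}(u,u')=\mathscr{k}(u',u)$ and hence $\nabla_u\nabla_{u'}\mathscr{k}(u,u') = (\nabla_{u'}\nabla_u \mathscr{k}(u',u))^\top$, to rewrite the inner bracket as $\big[\mathbb{E}_{w_i\sim\rzm}\hpit(w_j,w_i)v(w_i)\mathbf{1}(\xi_t(w_i)\in B_\tau)\big]^\top$. Since the high-probability event of Lemma~\ref{lemma:conctau} holds for $S=B_\tau$, replacing the empirical $\rzm$-expectation by a $\rho_0$-expectation costs at most $\|v\|_\infty\,\epsmh$ in norm. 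The \cri hypothesis, applied to the eigenfunction $v$ at the point $w_j$, then evaluates the resulting $\rho_0$-integral exactly: $\mathbb{E}_{w_i\sim\rho_0}\hpit(w_j,w_i)v(w_i)\mathbf{1}(w_i\in B_\tau^t) = \lambda_v v(w_j)\,\mathbb{P}_{w_i\sim\rho_0}[w_i\in B_\tau^t] = \lambda_v v(w_j)\,\mathbb{P}_{\rtmf}[B_\tau]$, using that $\xi_t$ pushes $\rho_0$ forward to $\rtmf$. Substituting back yields
\[
\langle \nabla \phi_v(t), \Delta_t \rangle_{\hpit}^{B_{\tau}^t} \;=\; \lambda_v\,\mathbb{P}_{\rtmf}[B_\tau]\cdot s\,\mathbb{E}_j v(w_j)^\top\Delta_t(j)\mathbf{1}(\bwtj\in B_\tau) \;+\; \mathrm{err}_1,
\]
with $|\mathrm{err}_1|\le \|v\|_\infty\,\epsmh\,\mathbb{E}_j\|\Delta_t(j)\|$ by pulling the concentration error out of the $j$-expectation.

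Finally, to identify $\phi_v(t)$ in the leading term, I would split
\[
s\,\mathbb{E}_j v(w_j)^\top\Delta_t(j)\mathbf{1}(\bwtj\in B_\tau) \;=\; \phi_v(t) \;-\; s\,\mathbb{E}_j v(w_j)^\top\Delta_t(j)\mathbf{1}(\bwtj\notin B_\tau),
\]
where the second piece is bounded in absolute value by $\|v\|_\infty\,\mathbb{E}_j\|\Delta_t(j)\|\mathbf{1}(\bwtj\notin B_\tau)$. Since $|\lambda_v|\,\mathbb{P}_{\rtmf}[B_\tau]$ is $O(1)$ via the uniform bound on $\|\hpit\|$ from Lemma~\ref{lemma:smoothness}, multiplying through gives exactly the $\mathbf{1}(\bwti\notin B_\tau)$ contribution to $\mathcal{E}$, while the concentration piece gives the $\epsmh\,\mathbb{E}_i\|\dit\|$ contribution; summing yields the claimed identity. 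The only conceptually delicate point is the symmetrization step, which must be executed so that \cri acts on the eigenfunction $v$ on the correct side of the kernel; the remainder is routine bookkeeping with $L^\infty$-norms and indicator splittings, and I do not anticipate a serious obstacle.
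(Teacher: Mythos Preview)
Your proof is correct and follows essentially the same route as the paper's: compute $\nabla\phi_v(t)=s\,v$, pass to the bilinear form, replace the empirical $\rzm$-average over the $v$-side by a $\rho_0$-average via Lemma~\ref{lemma:conctau}, invoke \cri to collapse it to $\lambda_v v(\cdot)\,\mathbb{P}_{\rtmf}[B_\tau]$, and then split off the $\mathbf{1}(\bwti\notin B_\tau)$ piece to recover $\phi_v(t)$. The only difference is cosmetic: you make the symmetrization $\hpit(w,w')=\hpit(w',w)^\top$ explicit (so that the concentration lemma and \cri act on the slot carrying $v$), whereas the paper performs this swap and index relabeling silently; your observation that the prefactor $|\lambda_v|\,\mathbb{P}_{\rtmf}[B_\tau]$ on the split error is $O_{\creg}(1)$ is also a point the paper leaves implicit.
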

\begin{proof}
First observe that
\begin{align}
\nabla \phi_v = v \on{sign}(\langle{v, \Delta_t}\rangle),
\end{align}
and thus
\begin{align}
\langle{\nabla \phi_v(t), \Delta_t}\rangle_{\hpit}^{B_{\tau}^t} = \on{sign}(\langle{v, \Delta_t}\rangle)\langle{v, \Delta_t}\rangle_{\hpit}^{B_{\tau}^t}
\end{align}
Now by the conclusion of the concentration Lemma~\ref{lemma:conctau}, we have
\begin{align}
\langle{v, \Delta_t}\rangle_{\hpit}^{B_{\tau}^t} = \mathbb{E}_i X(i) \dit \mathbf{1}(\bwti \in B_{\tau}) \pm \|v\|_{\infty}\eps_m^{\ref{lemma:conctau}} \mathbb{E}_i \|\dit\|.
\end{align}
where $X(i) = \mathbb{E}_{w' \sim \rho_0}\hpit(w_i, w')v(w')\mathbf{1}(\xi_t(w') \in B_{\tau})$
Now since $v$ is an eigenfunction of $\hpit$, by the definition of consistent isometry, we have that 
\begin{align}
X(i) = \lambda_v v(w_i)\mathbb{P}_{\rtmf}[B_{\tau}].
\end{align}
Thus 
\begin{align}
\langle{v, \Delta_t}\rangle_{\hpit}^{B_{\tau}^t} = \lambda_v \langle{v, \Delta_t}\rangle^{B_{\tau}^t}\mathbb{P}_{\rtmf}[B_{\tau}] \pm \eps_m^{\ref{lemma:conctau}}\|v\|_{\infty} \mathbb{E}_i \|\dit\|.
\end{align}
Now 
\begin{align}
 \on{sign}(\langle{v, \Delta_t}\rangle)\langle{v, \Delta_t}\rangle^{B_{\tau}^t} &= \on{sign}(\langle{v, \Delta_t}\rangle)\langle{v, \Delta_t}\rangle \pm \mathbb{E}_{i} \|\dit\|\mathbf{1}(\bwti \notin B_{\tau})\\
 &= \phi_v(t) \pm \mathbb{E}_{i} \|\dit\|\mathbf{1}(\bwti \notin B_{\tau}).
\end{align}
Plugging this back in yields the lemma.
\end{proof}

Now we prove Lemma~\ref{lemma:dec1body}, which we restate here.
\dech*
% \begin{lemma}[Descent with Respect to Interaction Term]\label{lemma:dec1}
% Let $\Phi_{\mq}(t)$ be as defined above, where $\mq$ is a $\cbal$-balanced \wed of $\hpit$. Then for any $\tau > 0$ for which the event of the concentration Lemma~\ref{lemma:conctau} holds for $S = B_{\tau}$, we have
% \begin{align}
%     -\langle{\nabla \Phi_{\mq}(t), \Delta_t}\rangle_{\hpt} \leq (1 + \cbal)\mathbb{E}_i \|m_t(i)\|\mathbf{1}(\xi_t(w_i) \notin B_{\tau}) + \mathcal{E}_{\ref{lemma:dec1body}},
% \end{align}
% where $m_t(i) = \mathbb{E}_j \hpt(i, j)\djt$, and
% \begin{align}
%     \mathcal{E}_{\ref{lemma:dec1body}} &= \chdec(\mathbb{E}_i \|\dit\|\mathbf{1}(\xi_t(w_i) \notin B_{\tau}) + (\tau + \eps_m^{\ref{lemma:conctau}}) \Omega(t)),
% \end{align}
% $\chdec = O_{\creg, \cbal}(1)$.
% \end{lemma}
\begin{proof}
Let $B^t_{\tau} := \xi_t^{-1}(B_{\tau})$, and let $\bar{B}^t_{\tau}$ be the complement in $\sd$ of $B^t_{\tau}$. We decompose 
\begin{align}\label{eq:decomp}
\langle{\nabla \Phi_{\mq}(t), \Delta_t}\rangle_{\hpt} &= \langle{\nabla \Phi_{\mq}(t), \Delta_t}\rangle_{\hpt}^{B_{\tau}^t, B_{\tau}^t} + \langle{\nabla \Phi_{\mq}(t), \Delta_t}\rangle_{\hpt}^{B_{\tau}^t, \bar{B}_{\tau}^t} + \langle{\nabla \Phi_{\mq}(t), \Delta_t}\rangle_{\hpt}^{\bar{B}_{\tau}^t, \sd}.
\end{align}
Lets start with the first term $\langle{\nabla \Phi_{\mq}(t), \Delta_t}\rangle_{\hpt}^{B_{\tau}^t, B_{\tau}^t} = \langle{\nabla \Phi_{\mq}(t), \Delta_t}\rangle_{\hpt}^{B_{\tau}^t}$. Bounding this term is the key part of the lemma.
\begin{claim}\label{claim:master}
    \begin{align}
        \langle{\nabla \Phi_{\mq}(t), \Delta_t}\rangle_{\hpt}^{B_{\tau}^t} \geq - (\creg + 1)\mathbb{E}_i \|\dit\|\mathbf{1}(\xi_t(w_i) \notin B_{\tau}) - \cbal \eps_m^{\ref{lemma:conctau}} \Omega(t) +  \left| \langle{\nabla \Phi(t), G}\rangle\right|,
    \end{align}
    where $\mathbb{E}_i \|G(i)\| \leq  \creg \tau \Omega(t)$.
\end{claim}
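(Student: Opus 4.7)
The plan is to exploit the PSD structure of $\hpit$ together with the careful alignment of the potential $\Phi_{\mq}$ with its spectral decomposition. First, I would substitute $\hpt$ by $\hpit$ on $B_\tau^t$ (this will produce the $G$ term appearing in the claim); then compute the resulting quadratic form using the balanced basis $\mathcal{Q}$ together with the consistent restricted isometry property (Lemma~\ref{lemma:cri}); finally, I would handle the $\Omega$-part of the gradient via crude $L^\infty$-bounds, which is the main difficulty since $\nabla \Omega$ is not aligned with the eigenfunctions of $\hpit$.

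\textbf{Step 1 (Substitute $\hpt \to \hpit$).} For $w_i, w_j$ with $\bwti, \bwtj \in B_\tau$, the Lipschitz bounds from Lemma~\ref{lemma:smoothness} (combined with Lipschitzness of $P^\perp_w$) give $\|\hpt(i,j) - \hpit(i,j)\| \leq O(\creg)(\|\bwti - \xii(w_i)\| + \|\bwtj - \xii(w_j)\|) \leq O(\creg\tau)$. Setting $G(i) := \mathbb{E}_j (\hpt(i,j) - \hpit(i,j))\djt\,\mathbf{1}(\bwti, \bwtj \in B_\tau)$ yields $\mathbb{E}_i \|G(i)\| \leq \creg \tau\,\Omega(t)$ and the identity
\begin{align*}
\langle \nabla \Phi_{\mq}(t), \Delta_t\rangle_{\hpt}^{B_\tau^t} = \langle \nabla \Phi_{\mq}(t), \Delta_t\rangle_{\hpit}^{B_\tau^t} + \langle \nabla \Phi_{\mq}(t), G\rangle,
\end{align*}
so that the $\langle \nabla \Phi_{\mq}(t), G\rangle$ term appears (up to sign) as the leftover $|\langle \nabla \Phi_{\mq}(t), G\rangle|$ in the claim.

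\textbf{Step 2 (Spectral analysis of the $\Psi$-contribution).} The chain rule gives $\Psi_\lambda \nabla \Psi_\lambda(i) = \sum_{v \in \mathcal{B}_\lambda} \phi_v \on{sign}(\langle v, \Delta_t\rangle) v(w_i)$, i.e., $\nabla \Psi_\lambda$ is proportional to the signed projection of $\Delta_t$ onto the $\lambda$-eigenspace of $\hpit$. Applying Lemma~\ref{lemma:ddtphiv} to each eigenfunction $v \in \mathcal{B}_\lambda$ (which uses Lemma~\ref{lemma:cri}, guaranteed by \ref{def:transitiveI1}, plus the concentration in Lemma~\ref{lemma:conctau}), summing, and using the bound $\|v\|_\infty \leq \eta_\lambda$ (obtained by taking $u = v(w)$ in $\sum_{v' \in \mathcal{B}_\lambda} v'(w) v'(w)^\top \preceq \eta_\lambda^2 I$) to aggregate the $\|v\|_\infty$ factors yields
\begin{align*}
\langle \nabla \Psi_\lambda, \Delta_t\rangle_{\hpit}^{B_\tau^t} \geq \lambda\, \mathbb{P}_{\rtmf}[B_\tau]\, \Psi_\lambda - \eta_\lambda\bigl(\eps_m^{\ref{lemma:conctau}} \Omega(t) + \mathbb{E}_i\|\dit\|\mathbf{1}(\bwti \notin B_\tau)\bigr).
\end{align*}
Multiplying by $\eta_\lambda \geq 0$ and summing in $\lambda$, the PSD property $\lambda \geq 0$ makes the leading term non-negative, while the errors aggregate (via $\sum_\lambda \eta_\lambda^2 \leq \cbal$ and Cauchy--Schwarz) into a bound of order $\cbal \eps_m^{\ref{lemma:conctau}} \Omega(t) + O(\cbal) \mathbb{E}_i \|\dit\| \mathbf{1}(\bwti \notin B_\tau)$.

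\textbf{Step 3 (The $\Omega$-contribution --- main obstacle).} The central technical difficulty is that $\nabla \Omega(i) = \dit/\|\dit\|$ has no a priori alignment with the eigenspaces of $\hpit$, so $\langle \nabla \Omega, \Delta_t\rangle_{\hpit}^{B_\tau^t}$ may have either sign. I would use the uniform bound $\|\nabla \Omega(i)\| \leq 1$ and a triangle inequality:
\begin{align*}
\langle \nabla \Omega, \Delta_t\rangle_{\hpit}^{B_\tau^t} \geq -\mathbb{E}_i \mathbf{1}(\bwti \in B_\tau) \bigl\|\mathbb{E}_j \hpit(i,j)\djt \mathbf{1}(\bwtj \in B_\tau)\bigr\|.
\end{align*}
Invoking Lemma~\ref{lemma:conctau} to replace the empirical $j$-expectation with the $\rho_0$-expectation (up to $\eps_m^{\ref{lemma:conctau}} \Omega(t)$) and then expanding via the spectral decomposition of $\hpit$ together with the $\|v\|_\infty \leq \eta_\lambda$ bound, I expect the surviving terms to decompose into a ``boundary'' part attributable to $j \notin B_\tau^t$ (bounded by $\creg\,\mathbb{E}_j\|\djt\|\mathbf{1}(\bwtj \notin B_\tau)$) plus concentration slack ($\cbal\,\eps_m^{\ref{lemma:conctau}} \Omega(t)$). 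The $(\creg + 1)$ factor in the claim aggregates the constants from Steps~2 and 3. Combining the three steps produces the claimed lower bound.
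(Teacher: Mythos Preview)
Your Step~1 and the outline of Step~2 match the paper, but Step~3 contains a genuine gap that breaks the argument.

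\textbf{Problem with Step~3.} You cannot ``replace the empirical $j$-expectation with the $\rho_0$-expectation'' here: $\Delta_t(j)$ is only defined for the $m$ sampled neurons, not for general $w'\sim\rho_0$, so Lemma~\ref{lemma:conctau} (which concentrates sums against a \emph{fixed} function $v:\sd\to\R^d$) does not apply. More importantly, even after expanding $\hpit$ spectrally, the $\Omega$-contribution is \emph{not} boundary plus slack. The paper computes directly (using $\hpit(w,w')=\sum_v\lambda_v v(w)v(w')^\top$ and $\mathbb{E}_j v(w_j)^\top\djt=\pm\phi_v(t)$):
\[
\bigl|\langle\nabla\Omega(t),\Delta_t\rangle_{\hpit}^{B_\tau^t,[m]}\bigr|
\;\le\;\mathbb{P}_i[B_\tau^t]\sum_{\lambda\in\Lambda}\eta_\lambda\lambda\sqrt{\textstyle\sum_{v\in\mathcal B_\lambda}\phi_v(t)^2},
\]
via Cauchy--Schwarz and the balanced property. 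This bulk term is \emph{order one} in general and cannot be absorbed into $\creg\,\mathbb{E}_j\|\djt\|\mathbf 1(\bwtj\notin B_\tau)$ or $\cbal\eps_m\Omega(t)$.

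\textbf{What you are missing: the cancellation.} In Step~2 you correctly derive
\(
\langle\nabla\Psi_{\mq},\Delta_t\rangle_{\hpit}^{B_\tau^t}\ge \mathbb{P}_{\rtmf}[B_\tau]\sum_\lambda\eta_\lambda\lambda\sqrt{\sum_v\phi_v^2}-\mathcal E,
\)
but then you discard the leading positive term as ``non-negative.'' The entire point of $\Psi_{\mq}$ is that this positive term \emph{exactly matches} and cancels the bulk negative term from $\Omega$ displayed above (the two probabilities agree up to $\epsmh$). Without retaining the $\Psi$ leading term and pairing it against the $\Omega$ bulk, the lower bound cannot close. This cancellation is the mechanism behind property~\ref{P1} described in Section~\ref{sec:pfoverview:pot}, and is the reason the potential was designed with precisely these $\eta_\lambda$ weights.
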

\begin{proof}
We have
\begin{align}\label{eq:firstdecomp}
\langle{\nabla \Phi_{\mq}(t), \Delta_t}\rangle_{\hpt}^{B_{\tau}^t} = \langle{\nabla \Phi_{\mq}(t), \Delta_t}\rangle_{\hpit}^{B_{\tau}^t} + \langle{\nabla \Phi_{\mq}(t), G}\rangle,
\end{align}
where $\|G(i)\| \leq \creg \tau \mathbb{E}_i \|\dit\|$, since $\|K'(\xii(w), \xii(w')) - K'(\xi_t(w), \xi_t(w'))\| \leq \creg \tau$. This relies on the fact that from the proof of \ref{A2}, almost surely $\|\xi_t(w) - \xii(w)\| \leq \tau$, because $\|\xi_t(w) - \xii(w)\| \leq \min_{w^* \in \supp{\rho^*}} \|\xi_t(w) - w^*\| \leq \tau$. Now we will break up $\Phi_{\mq}$ into the $\Psi_{\mq}$ and $\Omega$ parts. Starting with the $\Psi_{\mq}$ part, we have

\begin{align}\label{eq:psisummary}
 \langle{\nabla \Psi_{\mq}(t), \Delta_t}\rangle_{\hpit}^{B_{\tau}^t} &= \sum_{\lambda \in \Lambda} \eta_\lambda \frac{\sum_{v \in \mathcal{B}_\lambda} \phi_v(t)  \langle{\nabla \phi_{v}(t), \Delta_t}\rangle_{\hpit}^{B_{\tau}^t} }{\sqrt{\sum_{v \in \mathcal{B}_\lambda}(\phi_v(t))^2}}\\
 &= \sum_{\lambda \in \Lambda} \eta_\lambda \frac{\sum_{v \in \mathcal{B}_\lambda} \phi_v(t)\left(\lambda \phi_v(t) \mathbb{P}_{\rtmf}[B_{\tau}] + \bf{\mathcal{E}_v}\right)}{\sqrt{\sum_{v \in \mathcal{B}_\lambda}(\phi_v(t))^2}} \\
 &= \mathbb{P}_{\rtmf}[B_{\tau}] \sum_{\lambda \in \Lambda} \eta_\lambda \left(\lambda \sqrt{\sum_{v \in \mathcal{B}_\lambda}(\phi_v(t))^2} + \frac{\sum_{v \in \mathcal{B}_\lambda} \phi_v(t) \bf{\mathcal{E}_v} }{\sqrt{\sum_{v \in \mathcal{B}_\lambda}(\phi_v(t))^2}} \right)  \\
 &\geq \mathbb{P}_{\rtmf}[B_{\tau}]\sum_{\lambda \in \Lambda} \eta_\lambda \lambda \sqrt{\sum_{v \in \mathcal{B}_\lambda}(\phi_v(t))^2 } - \mathcal{E}~,
\end{align}
% \begin{align}\label{eq:psisummary}
%  \langle{\nabla \Psi_{\mq}(t), \Delta_t}\rangle_{\hpit}^{B_{\tau}^t} &= \mathbb{E}_{\lambda \sim \plambda} \frac{\int_{v \sim \qonlam} \phi_v(t)  \langle{\nabla \phi_{v}(t), \Delta_t}\rangle_{\hpit}^{B_{\tau}^t} }{\sqrt{\int_{v \sim \mq | \lambda_v = v}(\phi_v(t))^2}}\\
%  &= \mathbb{E}_{\lambda \sim \plambda} \frac{\int_{v \sim \qonlam} \phi_v(t)\left(\lambda_v \phi_v(t) \mathbb{P}_{\rtmf}[B_{\tau}] + \bf{\mathcal{E}_v}\right)}{\sqrt{\int_{v \sim \mq | \lambda_v = v}(\phi_v(t))^2}} \\
%  &= \mathbb{P}_{\rtmf}[B_{\tau}]\mathbb{E}_{\lambda \sim \plambda} \frac{\lambda \int_{v \sim \qonlam} \phi_v(t)\left(\phi_v(t) + \bf{\mathcal{E}_v} /\lambda \right)}{\sqrt{\int_{v \sim \mq | \lambda_v = v}(\phi_v(t))^2}} \\
%  &\geq \mathbb{P}_{\rtmf}[B_{\tau}]\mathbb{E}_{\lambda \sim \plambda}\lambda \sqrt{\int_{v \sim \mq | \lambda_v = v}(\phi_v(t))^2 } - \mathcal{E}
% \end{align}
where we used Cauchy-Schwartz in the last inequality, the fact that $
\sum_\lambda \eta_\lambda = 1$, and 
$\|\bf{\mathcal{E}_v}\| \leq \mathcal{E}$, the error term appearing in Lemma~\ref{lemma:ddtphiv}.
% \mg{fix the constant here, may gain $\cbal$.}

Next consider the $\langle{\nabla \Omega(t), \Delta_t}\rangle_{\hpit}^{B^t_{\tau}}$ part. Recall from the definition of \wed  that $\hpit(w, w') = \sum_{v \in \mq} \lambda_v v(w)v(w')^\top$. %$\hpit(w, w') = \int_{v \sim \mq} \lambda_v v(w)v(w')$
 Let $u_i := \nabla_i \Omega(t) = \frac{\dit}{\|\dit\|}$. We can expand
 \begin{align}\label{eq:omega}
\left|\langle{\nabla \Omega(t), \Delta_t}\rangle_{\hpit}^{B^t_{\tau}, \sd}\right| &= \left|\mathbb{E}_{i, j} \sum_{v \in \mq} \lambda_v u_i^\top v(w_i)v(w_j)^\top \djt \mathbf{1}(w_i \in B^t_{\tau})\right|\\
&= \left|\mathbb{E}_{i} \sum_{v \in \mq} \lambda_v u_i^\top v(w_i) \mathbf{1}(i \in B^t_{\tau}) \left(\mathbb{E}_j v(w_j)^\top \djt\right)\right| \\
&\leq \sum_{v \in \mq} \lambda_v \phi_v(t) \mathbb{E}_{i} |u_i^\top v(w_i)|\mathbf{1}(i \in B^t_{\tau}).
\end{align}
% \begin{align}\label{eq:omega}
% \left|\langle{\nabla \Omega(t), \Delta_t}\rangle_{\hpit}^{B^t_{\tau}, \sd}\right| &= \left|\mathbb{E}_{i, j} \int_{v \sim \mq} \lambda_v u_i^\top v(w_i)v(w_j)^\top \djt \mathbf{1}(w_i \in B^t_{\tau})\right|\\
% &= \left|\mathbb{E}_{i} \int_{v \sim \mq} \lambda_v u_i^\top v(w_i) \mathbf{1}(i \in B^t_{\tau}) \left(\mathbb{E}_j v(w_j)^\top \djt\right)\right| \\
% &\leq \int_{v \sim \mq} \lambda_v \phi_v(t) \mathbb{E}_{i} |u_i^\top v(w_i)|\mathbf{1}(i \in B^t_{\tau}).
% \end{align}

Now fix $i$. For any vector $u \in \sd$, since $\mq = \{(\mathcal{B}_{\lambda}, \eta_\lambda)\}_{\lambda \in \lambda}$ is $\cbal$-balanced, we have
\begin{align}
\sum_{v \in \mq} \lambda_v \phi_v(t) |u^\top v(w_i)| &= \sum_{\lambda \in \Lambda} \lambda \sum_{v \in \mathcal{B}_\lambda} \phi_v(t) |u^\top v(w_i)|\\
&\leq \sum_{\lambda \in \Lambda} \lambda \sqrt{\sum_{v \in \mathcal{B}_\lambda} (\phi_v(t))^2} \sqrt{\sum_{v \in \mathcal{B}_\lambda} |u^\top v(w_i)|^2}\\
&= \sum_{\lambda \in \Lambda} \lambda \sqrt{\sum_{v \in \mathcal{B}_\lambda} (\phi_v(t))^2} \sqrt{u^\top \left(\sum_{v \in \mathcal{B}_\lambda} v(w_i)v(w_i)^\top \right)u}\\
&\leq  \sum_{\lambda \in \Lambda} \eta_\lambda \lambda \sqrt{\sum_{v \in \mathcal{B}_\lambda} (\phi_v(t))^2}.
\end{align}
Here the final inequality follows from the definition of a \wed, which states that for any $w \in \sd$, $\sum_{v \in \mathcal{B}_{\lambda}} v(w)v(w)^\top  \preceq \eta_{\lambda}^2 I$.
% \begin{align}
% \int_{v \sim \mq} \lambda_v \phi_v(t) |u^\top v(w_i)| &= \mathbb{E}_{\lambda \sim \plambda} \lambda \int_{v \sim \qonlam} \phi_v(t) |u^\top v(w_i)|\\
% &\leq \mathbb{E}_{\lambda \sim \plambda} \lambda \sqrt{\int_{v \sim \mq| \lambda_v = \lambda} (\phi_v(t))^2} \sqrt{\int_{v \sim \mq| \lambda_v = \lambda} |u^\top v(w_i)|^2}\\
% &= \mathbb{E}_{\lambda \sim \plambda} \lambda \sqrt{\int_{v \sim \mq| \lambda_v = \lambda} (\phi_v(t))^2} \sqrt{u^\top \left(\int_{v \sim \mq| \lambda_v = \lambda} v(w_i)v(w_i)\right)u}\\
% &\leq  \mathbb{E}_{\lambda \sim \plambda} \lambda \sqrt{\int_{v \sim \mq| \lambda_v = \lambda} (\phi_v(t))^2} \sqrt{\cbal}.
% \end{align}
Thus plugging this back into to Equation~\eqref{eq:omega}, we have
% \begin{align}
% \left|\langle{\nabla \Omega(t), \Delta_t}\rangle_{\hpit}^{B^t_{\tau}, [m]}\right| \leq \mathbb{P}_i[B^t_{\tau}]\sqrt{\cbal}\mathbb{E}_{\lambda \sim \plambda} \lambda \sqrt{\int_{v \sim \mq| \lambda_v = \lambda} (\phi_v(t))^2}.
% \end{align}
\begin{align}
\left|\langle{\nabla \Omega(t), \Delta_t}\rangle_{\hpit}^{B^t_{\tau}, [m]}\right| \leq \mathbb{P}_i[B^t_{\tau}]\sum_{\lambda \in \Lambda} \eta_\lambda \lambda \sqrt{\sum_{v \in \mathcal{B}_\lambda} (\phi_v(t))^2}.
\end{align}
Now letting $H_i = \hpit(w_i, w_j)\mathbb{E}_j \djt \mathbf{1}(w_i \notin B_{\tau}^t)$, we have
\begin{align}
    \left|\langle{\nabla \Omega(t), \Delta_t}\rangle_{\hpit}^{B^t_{\tau}} - \langle{\nabla \Omega(t), \Delta_t}\rangle_{\hpit}^{B^t_{\tau}, [m]}\right| \leq \left|\langle{\nabla \Omega(t), H}\rangle\right| \leq \creg  \mathbb{E}_i \|\dit\|\mathbf{1}(\xi_t(w_i) \notin B_{\tau}),
\end{align}
and thus
% \begin{align}\label{eq:omegasummary}
%     \left|\langle{\nabla \Omega(t), \Delta_t}\rangle_{\hpit}^{B^t_{\tau}}\right| \leq \mathbb{P}_i[B^t_{\tau}]\sqrt{\cbal}\mathbb{E}_{\lambda \sim \plambda} \lambda \sqrt{\int_{v \sim \mq| \lambda_v = \lambda} (\phi_v(t))^2} +  \creg  \mathbb{E}_i \|\dit\|\mathbf{1}(\xi_t(w_i) \notin B_{\tau}).
%     \end{align}
\begin{align}\label{eq:omegasummary}
    \left|\langle{\nabla \Omega(t), \Delta_t}\rangle_{\hpit}^{B^t_{\tau}}\right| \leq \mathbb{P}_i[B^t_{\tau}]\sum_{\lambda \in \Lambda} \eta_\lambda \lambda \sqrt{\sum_{v \in \mathcal{B}_\lambda} (\phi_v(t))^2} +  \creg  \mathbb{E}_i \|\dit\|\mathbf{1}(\xi_t(w_i) \notin B_{\tau}).    
\end{align}
Now recall that $\Phi_{\mq}(t) := \Omega(t) + \Psi_{\mq}(t)$. Thus combining Equations~\eqref{eq:omegasummary} and \eqref{eq:psisummary}, and Equation~\eqref{eq:firstdecomp}, and plugging in the bound on $\mathcal{E}$ from Lemma~\ref{lemma:ddtphiv}, we have
\begin{align}
    \langle{\nabla \Phi_{\mq}(t), \Delta_t}\rangle_{\hpt}^{B_{\tau}^t} \geq - (\creg + 1)\mathbb{E}_i \|\dit\|\mathbf{1}(\xi_t(w_i) \notin B_{\tau}) - \cbal \eps_m^{\ref{lemma:conctau}} \Omega(t) +  \left| \langle{\nabla \Phi_{\mq}(t), G}\rangle\right|,
\end{align}
where $\mathbb{E}_i \|G_i\| \leq  \creg \tau \Omega(t)$. Here we have also used the fact that for all $v$ in the \wed $\mq$, we have that $\|v\|_{\infty} \leq \sqrt{\cbal} \leq \cbal$ (this is evident from the definition of \wed).
This proves the claim.
\end{proof}

Next consider the second term $\langle{\nabla \Phi_{\mq}(t), \Delta_t}\rangle_{\hpt}^{B_{\tau}^t, \bar{B}_{\tau}^t}$ in Equation~\eqref{eq:decomp}. We have 
\begin{align}\label{eq:secondterm}
\left|\langle{\nabla \Phi_{\mq}(t), \Delta_t}\rangle_{\hpt}^{B_{\tau}^t, \bar{B}_{\tau}^t} \right| = \langle{\nabla \Phi_{\mq}(t), H}\rangle,
\end{align}
where $\|H(i)\| \leq \creg \mathbb{E}_i \|\dit\|\mathbf{1}(\bwti \notin B_{\tau})$. 

Finally, for the third term  $\langle{\nabla \Phi_{\mq}(t), \Delta_t}\rangle_{\hpt}^{\bar{B}_{\tau}^t, \sd}$ in Equation~\eqref{eq:decomp}, we have just write 
\begin{align}\label{eq:thirdterm}
    \langle{\nabla \Phi_{\mq}(t), \Delta_t}\rangle_{\hpt}^{\bar{B}_{\tau}^t, \sd} = \langle{\nabla \Phi_{\mq}(t), m_t}\rangle^{\bar{B}_{\tau}^t},
\end{align}
where we recall that $m_t(i) = \mathbb{E}_j \hpt(i, j)\djt$.

Combining Equations~\eqref{eq:secondterm}, \eqref{eq:thirdterm} and Claim~\ref{claim:master} into Equation~\eqref{eq:decomp}, we obtain that

\begin{align}
    \langle{\nabla \Phi_{\mq}(t), \Delta_t}\rangle_{\hpt} \geq - (\creg + 1)\mathbb{E}_i \|\dit\|\mathbf{1}(\xi_t(w_i) \notin B_{\tau}) - \cbal \eps_m^{\ref{lemma:conctau}} \Omega(t) +  \left| \langle{\nabla \Phi_{\mq}(t), G + H + m_t}\rangle\right|,
\end{align}
where $\mathbb{E}_i \|G(i) + H(i)\| \leq  \creg \left(\tau \Omega(t) + \mathbb{E}_i \|\dit\|\mathbf{1}(\xi_t(w_i) \notin B_{\tau}) \right) $.

Now we use Lemma~\ref{lemma:l1pertbody} to bound 
\begin{align}
    \left| \langle{\nabla \Phi_{\mq}(t), G + H + m_t}\rangle\right| &\leq \mathbb{E}_i\|G(i) + H(i) + m_t(i)\|\left(1 + \cbal\right)\\
    &\leq \left(\creg \left(\tau \Omega(t) + \mathbb{E}_i \|\dit\|\mathbf{1}(\xi_t(w_i) \notin B_{\tau})\right) + \mathbb{E}_i \|m_t(i)\|\right)\left(1 + \cbal\right).
\end{align}
Plugging this back in to the equation above yields
\begin{align}
    \langle{\nabla \Phi_{\mq}(t), \Delta_t}\rangle_{\hpt} \geq - (1 + \cbal)\mathbb{E}_i \|m_t(i)\| - \mathcal{E}_{\ref{lemma:dec1body}},
\end{align}
where 
\begin{align}
    \mathcal{E}_{\ref{lemma:dec1body}} &= (\creg(2 + \cbal) + 1)\mathbb{E}_i \|\dit\|\mathbf{1}(\xi_t(w_i) \notin B_{\tau}) + (\cbal \eps_m^{\ref{lemma:conctau}} +  (1 + \cbal)\creg \tau)\Omega(t)\\
    &= O_{\creg, \cbal}(\mathbb{E}_i \|\dit\|\mathbf{1}(\xi_t(w_i) \notin B_{\tau}) + (\tau + \cbal \eps_m^{\ref{lemma:conctau}}) \Omega(t)).
\end{align}
This proves the lemma.
\end{proof}

Now we prove Lemma~\ref{lemma:dec2body}, which we restate here.
\decd*
% \begin{lemma}[Descent with Respect to Local Term]\label{lemma:dec2}
% Suppose $(f^*, \rho_0, {\md_x})$ is $(c, \tau)$ locally strongly convex up to some time $T \geq t$ and $\cbal$-structured, as per Definition~\ref{def:lsc}. Let $\mq$ be a $\cbal$-balanced \wed. Then
% \begin{align}
% \langle{\nabla \Phi_{\mq}(t), \dpt \odot \Delta_t}\rangle &\leq - \left(c\sqrt{L_{\md}(\rtmf)}/2 - \cbal \creg \tau \right) \Phi_{\mq}(t)\\
% &\qquad+4\sqrt{\cbal}\creg\mathbb{E}_{i}\|\dit\|\mathbf{1}(\bwti \notin B_{\tau}) + \sqrt{\cbal} \mathbb{E}_i \|\dit\|^2.
%  \end{align}
% \end{lemma}
\begin{proof}
Let $\er := \sqrt{L_{\md}(\rtmf)}$. We will show that 
\begin{align}\label{eq:Domega}
    \langle{\nabla \Omega(t), \dpt \odot \Delta_t}\rangle \leq - \left(\clsc\er\right) \Omega(t) + 2\creg\mathbb{E}_{i}\|\dit\|\mathbf{1}(\bwti \notin B_{\tau}),
\end{align}
and that 
\begin{align}\label{eq:Dpsi}
    \langle{\nabla \Psi_{\mq}(t), \dpt \odot \Delta_t}\rangle &\leq - \left(\clsc\er\right) \Psi_{\mq}(t) + \frac{\clsc \er + 2\cbal \creg \tau}{2} \Omega(t)
    \\
    &\qquad 2\cbal \creg\mathbb{E}_{i}\|\dit\|\mathbf{1}(\bwti \notin B_{\tau}) + \cbal \mathbb{E}_i \|\dit\|^2.
\end{align}
The first statement is straightforward. Since $\nabla_i \Omega(t) = \frac{\dit}{\|\dit\|}$, we have
\begin{align}
    \langle{\nabla \Omega(t), \dpt \odot \Delta_t}\rangle &\leq \mathbb{E}_i \frac{\dit^\top \dpt(i)\dit}{\|\dit\|}\\
    &= \mathbb{E}_i \frac{\dit^\top \dpt(i)\dit}{\|\dit\|}\mathbf{1}(\xi_t(w_i) \in B_{\tau}) + \mathbb{E}_i \frac{\dit^\top \dpt(i)\dit}{\|\dit\|}\mathbf{1}(\xi_t(w_i) \notin B_{\tau})\\
    &\leq - \clsc\er \mathbb{E}_i \|\dit\|\mathbf{1}(\xi_t(w_i) \in B_{\tau}) + \mathbb{E}_i \|\dpt(i)\dit\|\mathbf{1}(\xi_t(w_i) \notin B_{\tau})\\
    &\leq - \clsc\er \mathbb{E}_i \|\dit\| + 2\creg \mathbb{E}_i \|\dit\|\mathbf{1}(\xi_t(w_i) \notin B_{\tau}),
\end{align}
as desired.

For the second statement, write 
\begin{align}
    \dpt(i) = D^{\text{good}}_t(i) + D^{\text{bad}}_t(i),
\end{align}
where 
% \begin{align}
%     D^{\text{good}}_t(i) = -c_1 P^{\perp}_{\xi_t(w_i)}(VV^\top )P^{\perp}_{\xi_t(w_i)} - c_2 P^{\perp}_{\xi_t(w_i)}(UU^\top )P^{\perp}_{\xi_t(w_i)}.
% \end{align}
% \mg{new version}
\begin{align}
    D^{\text{good}}_t(i) = -c_1 P^{\perp}_{\xii(w_i)}(VV^\top )P^{\perp}_{\xii(w_i)} - c_2 (UU^\top ).
\end{align}
By the structured condition in Assumption~\ref{def:lsc}, we can write such a decomposition where $c_1, c_2 \geq \clsc\er$, and for any $i$ such that $\xi_t(w_i) \in B_{\tau}$, we have $\|D^{\text{bad}}_t(i)\| \leq \frac{\clsc \er}{2\sqrt{\cbal}} + \creg \tau$. Note that this decomposition still holds for $i$ where $\xi_t(w_i) \notin B_{\tau}$, but $\|D^{\text{bad}}_t(i)\|$ can be as large as $2\creg$.

\begin{claim}
\begin{align}
    \langle{\nabla \phi_v(t), D^{\text{good}}_t \odot \Delta_t}\rangle \leq  -\clsc\er \phi_v(t) +  \langle{\nabla \phi_v(t), G}\rangle,
\end{align}
where $\|G(i)\| \leq \tau \|\dit\| + 0.5\|\dit\|^2 + \|\dit\| \mathbf{1}(\xi_t(w_i) \notin B_{\tau})$.;  
\end{claim}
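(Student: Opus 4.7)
The plan is to exploit the explicit structure of the eigenfunctions of $\overline{\hpit}$ constructed in Lemma~\ref{lemma:balancedbody}. Every $v \in \mathcal{B}_\lambda$ has one of two forms: either $v(w) = f(\xii(w))$ with $f$ an eigenfunction of $M_2$, in which case $f(w^*) \in V \cap (w^*)^\perp$ (because the range of $M_2(w^*,\cdot)$ lies in $P^\perp_{w^*} V$), or $v(w) = y(\xii(w)) U_\ell$ with $y$ an eigenfunction of $M_1$ and $U_\ell$ a column of $U$. In either case, $v(w_i) \perp \xii(w_i)$, and $v(w_i)$ lies entirely in $V$ (Case A) or entirely in $U$ (Case B).

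The first step is to compute $v(w_i)^\top D^{\text{good}}_t(i)$ using this structure. Plugging in $D^{\text{good}}_t(i) = -c_1 P^\perp_{\xii(w_i)} V V^\top P^\perp_{\xii(w_i)} - c_2 U U^\top$: in Case A, using $v(w_i) \in V$ (so $v^\top V V^\top = v^\top$ and $v^\top UU^\top = 0$) and $v \perp \xii(w_i)$ (so $v^\top P^\perp_{\xii(w_i)} = v^\top$), the left factor collapses to $v(w_i)^\top D^{\text{good}}_t(i) = -c_1 v(w_i)^\top P^\perp_{\xii(w_i)}$. In Case B, $v \in U$ kills the $VV^\top$ block and preserves $UU^\top$, giving $v(w_i)^\top D^{\text{good}}_t(i) = -c_2 v(w_i)^\top$. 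Applying the right factor to $\dit$ and again using $v \perp \xii(w_i)$, we obtain the exact identity $v(w_i)^\top D^{\text{good}}_t(i) \dit = -c_j v(w_i)^\top \dit$ for $j \in \{1,2\}$.

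The second step is to bound $c_j$ from below. By the LSC bound $\dpt(w) \preceq -\clsc P^\perp_{\xi_t(w)} \er$ combined with the structured approximation in Assumption~\ref{def:lsc}, one checks that the eigenvalues of $D^{\text{good}}_t(i)$ along the $V \cap \xii(w_i)^\perp$ and $U$ directions must equal $-c_1$ and $-c_2$ respectively, and each of these is at least $\clsc \er$ (up to the allowed approximation slack $\tfrac{\clsc\er}{2\sqrt{\cstarr}} + \creg\tau$, which will be absorbed into $G$). Multiplying by $\on{sign}(\langle v, \Delta_t \rangle)$ and averaging over $i \in [m]$ then yields the desired main term $-c_j \phi_v(t) \leq -\clsc\er \phi_v(t)$.

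The third and final step is to collect the lower-order error contributions into $G$. Three sources appear: (i) the spherical correction $\xi_t(w_i)^\top \dit = -\tfrac{1}{2}\|\dit\|^2$, which arises when rewriting $P^\perp_{\xii(w_i)} \dit$ in terms of the tangent component at $\xi_t(w_i)$ and contributes the $\tfrac{1}{2}\|\dit\|^2$ term; (ii) the discrepancy $\|\xii(w_i) - \xi_t(w_i)\| \leq \tau$ when $\xi_t(w_i) \in B_\tau$ (since both $\xii(w_i)$ and $\xi_t(w_i)$ lie within $\tau$ of the same teacher neuron), which yields the $\tau\|\dit\|$ term; (iii) for neurons with $\xi_t(w_i) \notin B_\tau$, where the structured strong-convexity bound on $c_j$ no longer applies, we simply absorb the entire contribution using the crude bound $\|D^{\text{good}}_t(i)\|_{\op} \leq c_1 + c_2 = O(1)$, yielding the $\|\dit\|\mathbf{1}(\xi_t(w_i) \notin B_\tau)$ term. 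The main obstacle is the careful bookkeeping required to verify that every deviation from the exact identity $v(w_i)^\top D^{\text{good}}_t(i)\dit = -c_j v(w_i)^\top \dit$ falls cleanly into one of these three buckets with the precise constants demanded by Lemma~\ref{lemma:dec2body}; the algebra itself is elementary once one commits to this decomposition.
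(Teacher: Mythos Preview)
Your approach is essentially the paper's: split into the two eigenfunction types ($V$-type $v^f(w)=f(\xii(w))$ and $U$-type $v^{y,\ell}(w)=y(\xii(w))U_\ell$) and compute $v(w_i)^\top D^{\text{good}}_t(i)\dit$ in each case. You go slightly further than the paper by observing that in the $V$-type case the range of $M_2(w^*,\cdot)$ lies in $P^\perp_{w^*}V$, so $v(w_i)=f(\xii(w_i))\perp\xii(w_i)$; this eliminates the residual projector exactly and yields the identity $v(w_i)^\top D^{\text{good}}_t(i)\dit=-c_j\,v(w_i)^\top\dit$ with no error term. The paper only uses $v(w_i)\in V$, reaches $-c_1\, v(w_i)^\top P^\perp_{\xii(w_i)}\dit$, and then bounds the projector correction via $|\xii(w_i)^\top\dit|\le\tau\|\dit\|+\tfrac12\|\dit\|^2$ (valid only on $B_\tau$), which is exactly what generates the $G$ term.

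Given your exact identity, your Step~3 is superfluous and its bookkeeping is slightly off: the constants $c_1,c_2$ in the definition of $D^{\text{good}}_t(i)$ do not depend on $i$ (the structured condition in Assumption~\ref{def:lsc} posits them directly with $c_1,c_2\ge\clsc\er$), so the identity holds for \emph{every} $i$ regardless of whether $\xi_t(w_i)\in B_\tau$, and you may simply take $G=0$. The three error sources you list---spherical correction, $\tau$-discrepancy, and out-of-$B_\tau$ neurons---are precisely those the paper's less sharp argument incurs, but your own Step~1 has already bypassed them.
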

\begin{proof}

Now recall that in the construction for $\mq$ given in Lemma~\ref{lemma:balancedbody}, for any $v \in \supp{\mq}$, it holds that either $v(w) \in \on{span}(U)$ for all $w \in \sd$, or $v(w) \in \on{span}(V)$ for all $w \in \sd$.
We consider the two cases separately. First suppose $v(w) \in \on{span}(U)$ for all $w \in \sd$. Fix $w_i$ with $\xi_t(w_i) \in B_{\tau}$. For any $w$, we have
\begin{align}
    v(w)^\top D^{\text{good}}_t(i)\dit &=  -c_2 v(w)^\top \dit,
\end{align}
and thus the desired conclusion holds. Now suppose $v(w) \in \on{span}(V)$. Note that $V$ commutes with $P^{\perp}_{\xii(w_i)}$. Thus any $w$, we have
\begin{align}
    v(w)^\top D^{\text{good}}_t(i)\dit &= -c_1 v(w)P^{\perp}_{\xii(w_i)}\dit.
\end{align}
Now for $i$ with $\xi_t(w) \in B_{\tau}$, we have $\|\xi_t(w) - \xii(w)\| \leq \tau$ (see the proof of \ref{A2}), and thus, since additionally $|\dit \xi_t(w)| \leq \frac{\|\dit\|2}{2}$ (see \eqref{eq:perp} in the proof of Lemma~\ref{lemma:errdynamicsbody}), we have that 
\begin{align}
    v(w)^\top D^{\text{good}}_t(i)\dit &= -c_1 v(w)P^{\perp}_{\xii(w_i)}\dit\\
    &= -c_1 v(w)\dit + O(\tau \|v(w)\| + \|\dit\|^2).
\end{align}
Thus in conclusion, we have that
\begin{align}
    \langle{\nabla \phi_v(t), D^{\text{good}}_t \odot \Delta_t}\rangle \leq -c_2\er \phi_v(t) +  \langle{\nabla \phi_v(t), G}\rangle,
\end{align}
where $\|G(i)\| \leq \tau \|\dit\| + 0.5\|\dit\|^2 + \|\dit\| \mathbb{1}(\xi_t(w_i) \notin B_{\tau})$. This proves the claim.
\end{proof}
Thus with $G$ as in the claim,
\begin{align}
    \langle{\nabla \Psi_{\mq}(t), D^{\text{good}}_t \odot \Delta_t - G}\rangle &\leq \sum_{\lambda \in \Lambda} \eta_\lambda \frac{\sum_{v \in \mathcal{B}_\lambda} \phi_v(t) \langle{\nabla \phi_v(t), D^{\text{good}}_t \odot \Delta_t}\rangle}{\sqrt{\sum_{v \in \mathcal{B}_\lambda}(\phi_v(t))^2}}\\
     &\leq \sum_{\lambda \in \Lambda} \eta_\lambda \frac{\sum_{v \in \mathcal{B}_\lambda} -\clsc \er (\phi_v(t))^2 }{\sqrt{\sum_{v \in \mathcal{B}_\lambda}(\phi_v(t))^2}}\\
    &= -\clsc\er \sum_{\lambda \in \Lambda} \eta_\lambda \sqrt{\sum_{v \in \mathcal{B}_\lambda}(\phi_v(t))^2}\\
    &= -\clsc\delta \Phi_{\mq}(t).
\end{align}
% \begin{align}
%     \langle{\nabla \Psi_{\mq}(t), D^{\text{good}}_t \odot \Delta_t}\rangle &\leq \mathbb{E}_{\lambda \sim \plambda} \frac{\int_{v \sim \qonlam} \phi_v(t) \langle{\nabla \phi_v(t), D^{\text{good}}_t \odot \Delta_t}\rangle}{\sqrt{\int_{v \sim \mq | \lambda_v = v}(\phi_v(t))^2}}\\
%      &\leq \mathbb{E}_{\lambda \sim \plambda} \frac{\int_{v \sim \qonlam} -c \er (\phi_v(t))^2 }{\sqrt{\int_{v \sim \mq | \lambda_v = v}(\phi_v(t))^2}}\\
%     &= -\clsc\er \mathbb{E}_{\lambda \sim \plambda} \sqrt{\int_{v \sim \mq | \lambda_v = v}(\phi_v(t))^2 }\\
%     &= -c\delta \Phi_{\mq}(t).
% \end{align}
It follows that from the proof of Lemma~\ref{lemma:l1pertbody} (see Equation~\eqref{eq:l1psi}) we have
\begin{align}
    |\langle{\nabla \Psi_{\mq}(t), D^{\text{good}}_t \odot \Delta_t - G}\rangle| \leq  \cbal \left(\tau\Omega(t) + 0.5\mathbb{E}_i \|\dit\|^2 + \mathbb{E}_{i}\|\dit\|\mathbf{1}(\bwti \notin B_{\tau})\right)
\end{align}
Similarly, we have that 
\begin{align}
    \langle{\nabla \Psi_{\mq}(t), D^{\text{bad}}_t \odot \Delta_t}\rangle &=  \langle{\nabla \Psi_{\mq}(t), D^{\text{bad}}_t \odot \Delta_t}\rangle
    _{B_{\tau}^t} +  \langle{\nabla \Psi_{\mq}(t), D^{\text{bad}}_t \odot \Delta_t}\rangle_{\bar{B}_{\tau}^t}\\
    &\leq \cbal \left(\frac{\clsc \er}{2\sqrt{\cbal}} + \creg \tau \right)\mathbb{E}_i \|\dit\| + \cbal (2\creg)\mathbb{E}_i \|\dit\| \mathbf{1}(\bwti \notin B_{\tau}),
\end{align}
and so
\begin{align}
    \langle{\nabla \Psi_{\mq}(t), \dpt \odot \Delta_t}\rangle \leq - \clsc \er \Psi_{\mq}(t) + \left(\frac{\clsc \er + 2\cbal \creg \tau}{2} \Omega(t)\right) + 3\cbal\creg \mathbb{E}_{i}\|\dit\|\mathbf{1}(\bwti \notin B_{\tau}).
\end{align}
This yields \eqref{eq:Dpsi}, which proves the lemma.

\end{proof}
We now prove Lemma~\ref{lemma:l1pertbody}, which we restate here.
\pert*
\begin{proof}[Proof of Lemma~\ref{lemma:l1pertbody}]
    First observe that $\langle{\nabla \Phi_{\mq}(t), G}\rangle \leq \mathbb{E}_i \|G(i)\|$, since $\nabla_i \Omega(t) = \frac{\dit}{\|\dit\|}$, which has norm $1$. Now for any $v \in \supp{\mq}$, we have
    \begin{align}
        |\langle{\nabla \phi_{v}(t), G}\rangle| \leq \mathbb{E}_i  |G(i)^\top v(w_i)| ,
    \end{align}
    and so
    \begin{align}\label{eq:l1psi}
    |\langle{\nabla \Psi_{\mq}(t), G}\rangle| &\leq \sum_{\lambda \in \Lambda} \eta_\lambda \frac{\sum_{v \in \mathcal{B}_\lambda} \phi_v(t) |\langle{\nabla \phi_v(t), G }\rangle}{\sqrt{\sum_{v \in \mathcal{B}_\lambda}(\phi_v(t))^2}}\\
    &\leq \mathbb{E}_i \left[\sum_{\lambda \in \Lambda} \eta_\lambda \frac{\sum_{v \in \mathcal{B}_\lambda} \phi_v(t) |G(i)^\top v(w_i)| }{\sqrt{\sum_{v \in \mathcal{B}_\lambda }(\phi_v(t))^2}}\right]\\
    &\leq \mathbb{E}_i \left[\sum_{\lambda \in \Lambda} \eta_\lambda \frac{\sqrt{\sum_{v \in \mathcal{B}_\lambda} (\phi_v(t))^2}\sqrt{\sum_{v \in \mathcal{B}_\lambda} |G(i)^\top v(w_i)|^2} }{\sqrt{\sum_{v \mathcal{B}_\lambda}(\phi_v(t))^2}}\right]\\
    &= \mathbb{E}_i \left[\sum_{\lambda \in \Lambda} \eta_\lambda \sqrt{G(i)^\top \left(\sum_{v \in \mathcal{B}_\lambda} v(w_i)v(w_i)^\top \right) G(i)}\right]\\
    &\leq \mathbb{E}_i \sum_{\lambda \in \Lambda} \eta_\lambda^2 \|G(i)\| = \cbal \mathbb{E}_i\|G(i)\|.
    \end{align}
    Here in the third inequality, we used Cauchy-Schwartz.
    It follows that 
    \begin{align}
        |\langle{\nabla \Phi_{\mq}(t), G}\rangle| \leq |\langle{\nabla \Omega(t), G}\rangle| + |\langle{\nabla \Psi_{\mq}(t), G}\rangle| \leq (1 + \cbal)\mathbb{E}_i\|G(i)\|,
    \end{align}
    as desired.

    % {\color{teal} rewrite this in the alternative language of spectral decomposition
    
    % First observe that $\langle{\nabla \Omega(t), G}\rangle \leq \mathbb{E}_i \|G(i)\|$, since $\nabla_i \Omega(t) = \frac{\dit}{\|\dit\|}$, which has norm $1$.
    
    % Now for any $v \in \mathcal{Z}$, we have
    % \begin{align}
    %     |\langle{\nabla \phi_{v}(\Delta), G}\rangle| \leq \mathbb{E}_i |G(i)^\top  v(w_i)| ,
    % \end{align}
    % and so
    % \begin{align}\label{eq:l1psibis}
    % |\langle{\nabla \Psi(\Delta), G}\rangle| &\leq \sum_j \eta_j |\langle{\nabla \phi_{\varphi_j}(\Delta), G}\rangle| \\
    % &\leq \sum_j \eta_j \sup_w \| \varphi_j(w) \| \mathbb{E}_i \| G(i) \| \\
    % & \leq \cbal \mathbb{E}_i \| G(i) \| ~.
    % \end{align}
    % (note that we have redefined $\cbal$ as $\sqrt{\cbal}$). 
    
    % }
\end{proof}

\subsection{Dynamics of the Potential}
Before proving our main theorem on the dynamics of the potential, we need the following lemma, which gathers all the required concentration events.
\begin{lemma}\label{lemma:alltheconc}
Fix some $\er$. With high probability as $d, m, n \rightarrow \infty$, the events in all concentration lemmas (Lemma~\ref{lemma:concentration},Lemma~\ref{lemma:nconcentration}, Lemma~\ref{lemma:concJ} and Lemma~\ref{lemma:conctau}) hold, where we apply Lemma~\ref{lemma:concJ} and Lemma~\ref{lemma:conctau} for 
$S = B_{\tau}$ for all
\begin{align}
\tau \in \left\{\frac{\clsc \cdot \text{rd}(e)}{8 (\chdec + \cdecd)}\right\}_{e \in [\er, 1]},
\end{align}
where $\text{rd}(z)$ is a rounding of $z$ to its first non-zero decimal, in binary (so $\text{rd}(z) \in [z/2, z]$). We also apply Lemma~\ref{lemma:conctau} for all eigenfunctions $v$ in the \wed $\mq$.
\end{lemma}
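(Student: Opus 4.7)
The plan is to apply a union bound over all of the stated concentration events, after first verifying that the collection of events to be unioned is of manageable size. The main ingredients are already the four concentration lemmas cited in the statement; what remains is a careful bookkeeping step.

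First, I would bound the cardinality of the discrete set
\[
\mathcal{T} := \left\{\tfrac{\clsc \cdot \mathrm{rd}(e)}{8(\chdec + \cdecd)} : e \in [\er, 1]\right\}.
\]
Since $\mathrm{rd}(z)$ rounds $z$ to its first non-zero decimal in binary, the image $\{\mathrm{rd}(e) : e \in [\er,1]\}$ consists of at most $O(\log(1/\er))$ distinct dyadic values, so $|\mathcal{T}| = O(\log(1/\er))$. Second, the number of eigenfunctions in the balanced spectral decomposition $\mq$ is finite and bounded in terms of $\cstarr$: by Lemma~\ref{lemma:balancedbody} (see the construction in Section~\ref{sec:app_balancedbody}), the total number of basis vectors across all $\mathcal{B}_\lambda$ is at most $O(\cstarr \cdot d)$ (from the $\cy \times U$-directions part), which is $\mathrm{poly}(d)$.

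Next, I would invoke each of the four concentration lemmas with the high probability statements as proved:
Lemma~\ref{lemma:concentration} contributes one event, holding with probability $1 - o(1)$;
Lemma~\ref{lemma:nconcentration} contributes one event, holding with probability $1 - o(1)$;
Lemma~\ref{lemma:concJ} contributes one event per choice of $\tau \in \mathcal{T}$, each holding with probability $1 - o(1)$;
Lemma~\ref{lemma:conctau} contributes one event per choice of $\tau \in \mathcal{T}$ and per eigenfunction $v \in \supp{\mq}$, each holding with probability $1 - o(1/d)$ (by the stated tail probability in Lemma~\ref{lemma:conctau}).

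A union bound then gives total failure probability of order
\[
o(1) + o(1) + |\mathcal{T}|\cdot o(1) + |\mathcal{T}|\cdot|\supp{\mq}|\cdot o(1/d),
\]
which is $o(1)$ provided $|\mathcal{T}| = O(\log(1/\er))$ and $|\supp{\mq}| = \mathrm{poly}(d, \cstarr)$, both of which we verified above. The only subtlety is making sure the $o(1/d)$ in Lemma~\ref{lemma:conctau} is strong enough to absorb the $\mathrm{poly}(d)$ growth from the number of eigenfunctions; this can be arranged by adjusting the $\epsmh$ constant by at most a logarithmic factor if needed, since the underlying Hoeffding tail in Lemma~\ref{lemma:conctau} is exponential in $m\epsmh^2/d$.

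The main (and essentially only) obstacle is the counting/bookkeeping step: confirming that $|\mathcal{T}|$ is polylogarithmic in $1/\er$ via the binary rounding, and that $|\supp{\mq}|$ is polynomial in $d$ via the explicit construction from Lemma~\ref{lemma:balancedbody}. Once those two cardinality bounds are in place, the lemma follows immediately from the union bound.
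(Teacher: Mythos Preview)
Your proposal is correct and follows essentially the same approach as the paper: bound $|\mathcal{T}| = O(\log(1/\er))$ via the binary rounding, bound the number of eigenfunctions in $\mq$ by $O(d\cstarr)$ from the construction in Lemma~\ref{lemma:balancedbody}, then union bound, using the $o(1/d)$ tail in Lemma~\ref{lemma:conctau} to absorb the polynomial-in-$d$ count of eigenfunctions. The paper's proof is terser but makes exactly these same two counting observations and then invokes the union bound.
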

\begin{proof}
The set $\left\{\frac{c\cdot \text{rd}(e)}{8 (\chdec + \cdecd)}\right\}_{e \in [\er, 1]}$ has size at most $O_{\chdec, \cdecd}(\log_2(1/\er))$, so we can take a union bound over the result in Lemma~\ref{lemma:concJ} for all $B_{\tau}$. Similarly, since there are $O(d \cstarr)$ eigenfunctions in $\mq$ (see the proof of Lemma~\ref{lemma:balancedbody}), we take a union bound of Lemma~\ref{lemma:conctau} over all these eigenfunctions. (Note that the ``with high probability'' is explicitly $o(1/d)$ there).
The rest follows immediately from the three concentration lemmas.
\end{proof}

For the remainder of the text, we assume the following assumptions hold up to time $T$ (if relevant): Assumptions~\ref{assm:reg},\ref{assm:growth},\ref{def:lsc},\ref{assm:symmetries}. Let $(\clsc, \tlsc)$ denote the parameters of the local strong convexity (we will use the parameter $\tau$ differently later). We also assume that  $\mq$ is a $\cbal$-balanced \wed, where by Lemma~\ref{lemma:balancedsimple}, we have that $\cbal = \cstarr$.

\begin{theorem}[Main Potential Dynamics Theorem]\label{lemma:ddtphi}
Let $\er := \sqrt{L_{\md}(\rTmf)}$, and condition on the event that the high probability event in Lemma~\ref{lemma:alltheconc} holds for $\delta$. Let $\eps_{n, m} := \eps_n + \epsmv + \epsmj + \epsmh$ from the concentration lemmas. Suppose $n$ and $m$ are large enough such that $\jmax^2t^2\eps_{n, m} \leq \frac{1}{64}$.
% $\jmax^4T^3 (\eps_n + \eps_m) \leq 1/C$. 
Suppose that
\begin{align}\label{eq:ind}
    \jmax^2 \left(\int_{s = 0}^{t} \Phi_{\mq}(s)^2ds\right) \leq \eps_{n, m}.
\end{align}
Then for some $C = O_{\creg, \cbal}(1)$ and $\tau = \Omega_{\creg, \cbal}(\er)$, for all $t \leq T$, we have
\begin{align}
\frac{d}{dt} \Phi_{\mq}(t) &\leq - \frac{\clsc\er}{C} \Phi_{\mq}(t) + C\javg(\tau) \int_{s = 0}^t \Phi_{\mq}(s)ds + C\jmax t\eps_{n, m}.
\end{align}
\end{theorem}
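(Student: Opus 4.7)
The plan is to differentiate the potential and assemble the bound by applying, in order, the three descent/perturbation lemmas already proved (Lemmas~\ref{lemma:dec1body}, \ref{lemma:dec2body}, \ref{lemma:l1pertbody}), together with the averaging bound (Lemma~\ref{lemma:averaging}) that turns the ``outside-ball'' residuals into integrals of $\Phi_{\mq}$. Concretely, starting from Lemma~\ref{lemma:errdynamicsbody} we have
\begin{align}
\tfrac{d}{dt}\Phi_{\mq}(t) = \langle \nabla\Phi_{\mq}(t),\, D^{\perp}_t \odot \Delta_t\rangle \;-\; \langle \nabla\Phi_{\mq}(t),\, \hpt\Delta_t\rangle \;+\; \langle \nabla\Phi_{\mq}(t),\, \bm{\eps}_{t,\cdot}\rangle.
\end{align}
I would bound the first term by Lemma~\ref{lemma:dec2body}, the second by Lemma~\ref{lemma:dec1body}, and the third using Lemma~\ref{lemma:l1pertbody} which yields $(1+\cbal)\mathbb{E}_i\|\bm{\eps}_{t,i}\| \leq O_{\cbal,\creg}\big(\eps_m+\eps_n+\mathbb{E}_i\|\dit\|^2\big)$ via the size bound in Lemma~\ref{lemma:errdynamicsbody}.

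Summing these three bounds produces the target leading term $-\clsc\er\,\Phi_{\mq}(t)$ from the local strong convexity, plus three kinds of leftover pieces: $(a)$ a term proportional to $\tau\,\Omega(t)$ (hence $\leq \tau\,\Phi_{\mq}(t)$) coming from the radius of the ball and the concentration error $\eps_m^{\ref{lemma:conctau}}$; $(b)$ an ``outside the ball'' piece $\mathbb{E}_i\|\dit\|\mathbf{1}(\bwti\notin B_\tau)$ together with $\mathbb{E}_i\|m_t(i)\|\mathbf{1}(\bwti\notin B_\tau)$; and $(c)$ quadratic error $\mathbb{E}_i\|\dit\|^2$ plus $\jmax(\eps_m+\eps_n)$. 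I would choose $\tau = c\,\er$ with $c=\Omega_{\creg,\cbal}(1/\clsc)$ small enough that the term in $(a)$ is absorbed into $\tfrac{1}{2}\clsc\er\,\Phi_{\mq}(t)$, which is the reason Lemma~\ref{lemma:alltheconc} was set up for the discrete set of ``$\tau$'s'' it is.

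The crux of the argument, and the step I expect to take most care, is converting $(b)$ into $\javg(\tau)\int_0^t\Phi_{\mq}(s)\,ds$. For this I would substitute Duhamel's formula \eqref{eq:duhamel},
\begin{align}
\dit = \int_0^t J^{\perp}_{t,s}(i)\bigl(-m_s(i)+\bm{\eps}_{s,i}\bigr)\,ds,
\end{align}
into $\mathbb{E}_i\|\dit\|\mathbf{1}(\bwti\notin B_\tau)$, swap integration and expectation, and apply Lemma~\ref{lemma:averaging} with $S=B_\tau$ (an allowed $S$ by Lemma~\ref{lemma:alltheconc}). This gives
\begin{align}
\mathbb{E}_i \|\dit\|\mathbf{1}(\bwti\notin B_\tau) \;\leq\; (1+\cbal)(\epsmj+\javg(\tau))\int_0^t \Phi_{\mq}(s)\,ds \;+\; \jmax \int_0^t \mathbb{E}_i\|\bm{\eps}_{s,i}\|\,ds,
\end{align}
and likewise for $\mathbb{E}_i\|m_t(i)\|\mathbf{1}(\bwti\notin B_\tau)$ (the $s=t$ case already recorded in Lemma~\ref{lemma:averaging}). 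The second integral is where induction hypothesis \eqref{eq:ind} enters: the quadratic pieces in $\bm{\eps}_{s,i}$ contribute at most $2\creg\int_0^t \mathbb{E}_i\|\Delta_s(i)\|^2\,ds$, which by Cauchy--Schwarz and \eqref{eq:ind} is bounded by $\jmax^{-1}\eps_{n,m}$, hence negligible.

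Finally I would collect all the pieces and use $\epsmj\leq \jmax\eps_m\leq \eps_{n,m}$ together with the smallness hypothesis $\jmax^2 t^2 \eps_{n,m}\leq 1/64$ to absorb the stray $\epsmj\int \Phi_{\mq}$ contribution into the $\javg(\tau)\int\Phi_{\mq}$ term (since $\epsmj \leq \javg(\tau)$ is not needed; one just uses $t\,\epsmj$ as a constant multiple of what the averaging lemma already produced). The only ``new'' error is the direct $\jmax\,\mathbb{E}_i\|\bm{\eps}_{s,i}\|$ contribution, which after integration contributes the $C\jmax t\,\eps_{n,m}$ term in the statement. Choosing the constant $C=O_{\creg,\cbal}(1)$ sufficiently large to swallow all universal factors yields
\begin{align}
\tfrac{d}{dt}\Phi_{\mq}(t) \;\leq\; -\tfrac{\clsc\er}{C}\Phi_{\mq}(t) + C\javg(\tau)\!\int_0^t\!\Phi_{\mq}(s)\,ds + C\jmax\,t\,\eps_{n,m},
\end{align}
as claimed. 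The main obstacle is bookkeeping: keeping the ``$\tau\,\Omega(t)$'' terms strictly under half of the strong-convexity drift while simultaneously making sure the concentration events at the chosen $\tau$ are all covered by Lemma~\ref{lemma:alltheconc}.
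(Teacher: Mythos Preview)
Your overall architecture is exactly the paper's: differentiate $\Phi_{\mq}$, split into the $D^{\perp}_t$, $\hpt$, and $\bm{\eps}$ pieces, apply Lemmas~\ref{lemma:dec2body}, \ref{lemma:dec1body}, \ref{lemma:l1pertbody} respectively, pick $\tau$ proportional to $\er$ to absorb the $\tau\,\Omega(t)$ leftovers into the drift, and then convert the outside-$B_\tau$ terms via Duhamel~\eqref{eq:duhamel} plus the averaging Lemma~\ref{lemma:averaging}. That part is right and matches the paper's \eqref{eq:masterdec} and the lines following it.

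The one genuine gap is your handling of the quadratic error terms. You write that the quadratic pieces of $\bm{\eps}_{s,i}$ contribute $2\creg\int_0^t \mathbb{E}_i\|\Delta_s(i)\|^2\,ds$ and that ``by Cauchy--Schwarz and \eqref{eq:ind}'' this is at most $\jmax^{-1}\eps_{n,m}$. But \eqref{eq:ind} controls $\int_0^t\Phi_{\mq}(s)^2\,ds$, and $\Phi_{\mq}(s)\geq \Omega(s)=\mathbb{E}_i\|\dis\|$ only gives you $(\mathbb{E}_i\|\dis\|)^2$, not $\mathbb{E}_i\|\dis\|^2$; Jensen goes the wrong way. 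Moreover, inside the Duhamel integral you need the \emph{per-neuron} $\|\dis\|^2$ (not just its average) because $\|\bm{\eps}_{s,i}\|$ contains $\|\dis\|^2$. The paper closes this via a separate bootstrapping step, Lemma~\ref{lemma:inductquad}: feed Duhamel back into $\|\dit\|$, use $\|m_s(i)\|\leq \creg\,\Omega(s)\leq \creg\,\Phi_{\mq}(s)$ (valid uniformly in $i$), apply Cauchy--Schwarz in $s$ against $\int_0^t\|J_{t,s}(i)\|^2ds$, and invoke \eqref{eq:ind} to obtain the \emph{uniform} pointwise bound $\|\dit\|^2\leq 4\eps_{n,m}$ and $\int_0^t\|J_{t,s}(i)\bm{\eps}_{s,i}\|\,ds\leq 8\jmax t\,\eps_{n,m}$. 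These are then substituted directly to produce the $C\jmax t\,\eps_{n,m}$ term. Without this intermediate lemma your bound on the second-order residuals does not follow from \eqref{eq:ind}, so you should insert that argument (it is short, but it is not a one-line Cauchy--Schwarz).
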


\begin{corollary}[Solution to Potential Dynamics]\label{cor:main}
Suppose that for some $\tau = \Omega_{\creg, \cbal}(\er)$,
$$4\jmax^4C^2T^3 \exp(2C\javg (\tau) T/(\clsc\er)) \eps_{n, m} \leq 1.$$
Condition on the event that the hypothesis of Theorem~\ref{lemma:ddtphi}  holds. Then for any $t \leq T$, we have % for some $C = O_{\creg, \cbal}(1)$ 
\begin{align}
\mathbb{E}_i \|\dit\| \leq \Phi_{\mq}(t) \leq \exp(Ct\javg(\tau)/(\clsc \er)) C\jmax t\eps_{n, m}.
\end{align}
\end{corollary}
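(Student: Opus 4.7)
The plan is to combine Theorem~\ref{lemma:ddtphi} with a bootstrap/continuity argument that propagates the inductive hypothesis \eqref{eq:ind} throughout $[0,T]$ and simultaneously resolves the resulting integro-differential inequality on $\Phi_{\mq}$. First, I would set up the stopping time
\[
T^{*} := \sup\Big\{\, t \leq T : \jmax^2 \int_0^{s} \Phi_{\mq}(r)^{2}\,dr \leq \eps_{n,m} \text{ for all } s \leq t \,\Big\}.
\]
Since $\Delta_{0}=0$ we have $\Phi_{\mq}(0)=0$, so $T^{*}>0$ by continuity; the goal is to show $T^{*}=T$, and the desired pointwise bound will fall out of the argument.

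The core step is to solve the differential inequality produced by Theorem~\ref{lemma:ddtphi} on $[0,T^{*}]$, namely $\Phi_{\mq}'(t) \leq -a\,\Phi_{\mq}(t) + b\int_{0}^{t}\Phi_{\mq}(s)\,ds + c(t)$ with $a := \clsc\er/C$, $b := C\javg(\tau)$, and $c(t):= C\jmax\,t\,\eps_{n,m}$. I would reduce this to a second-order linear ODE by letting $Y(t) := \int_{0}^{t}\Phi_{\mq}(s)\,ds$, so $Y(0)=Y'(0)=0$ and $Y''(t) + a\,Y'(t) - b\,Y(t) \leq c(t)$. The associated characteristic roots are $\lambda_{\pm} = \tfrac12(-a \pm \sqrt{a^{2}+4b})$, and in the parameter regime of interest (where $b \ll a^{2}$, ensured because $\javg(\tau)$ is small by~\ref{assm:growth} while $\er$ is treated as fixed) one has $\lambda_{+} \leq b/a = O\!\big(C^{2}\javg(\tau)/(\clsc\er)\big)$ and $\lambda_{-} \approx -a$. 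Variation of parameters against the polynomial forcing $c(t) = O(\jmax\,t\,\eps_{n,m})$ then yields
\[
\Phi_{\mq}(t) = Y'(t) \leq C'\,\jmax\,t\,\eps_{n,m}\,\exp\!\big(C\,t\,\javg(\tau)/(\clsc\er)\big),
\]
for an appropriate $C' = O_{\creg,\cbal}(1)$.

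To close the bootstrap, I would square and integrate the previous bound, yielding $\jmax^{2}\int_{0}^{t}\Phi_{\mq}(s)^{2}\,ds \leq (C')^{2}\jmax^{4}\,t^{3}\,\eps_{n,m}^{2}\,\exp\!\big(2C\,t\,\javg(\tau)/(\clsc\er)\big)$, which by the corollary's hypothesis $4\jmax^{4}C^{2}T^{3}\exp\!\big(2CT\javg(\tau)/(\clsc\er)\big)\eps_{n,m}\leq 1$ is bounded by $\eps_{n,m}$ for every $t \leq T$. Similarly, the same hypothesis (combined with $T\eps_{n,m}\leq 1$, which follows from it) ensures the auxiliary smallness condition $\jmax^{2}t^{2}\eps_{n,m} \leq 1/64$ required by Theorem~\ref{lemma:ddtphi}. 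By continuity this rules out $T^{*}<T$, so $T^{*}=T$ and the bound at $t=T$ is exactly the conclusion of the corollary.

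The main obstacle is the simultaneous presence of the pointwise damping $-a\,\Phi$ and the nonlocal, history-dependent term $+b\int_{0}^{t}\Phi$ in the inequality: a direct Grönwall argument cannot absorb the integral, and a naive iterative bound would produce a double-exponential blow-up in $T$. The reduction to a linear second-order ODE is what saves the argument, because it makes the damping $a$ transform the ``bad'' growth rate from $\sqrt{b}$ (which would be fatal for us) down to $\approx b/a = C^{2}\javg(\tau)/(\clsc\er)$, yielding a single exponential in $\javg(\tau)/(\clsc\er)$ rather than in $\sqrt{\javg(\tau)}$. A secondary, more bookkeeping-flavored issue is keeping the bootstrap inductive hypothesis satisfied throughout; the precise quantitative form of the corollary's assumption is calibrated exactly for that closure.
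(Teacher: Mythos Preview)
Your proposal is correct and follows essentially the same approach as the paper: both use a bootstrap/continuity argument (the paper phrases it as ``real induction'') to propagate the hypothesis \eqref{eq:ind}, reduce the integro-differential inequality to a second-order linear ODE with characteristic roots $\frac{-a\pm\sqrt{a^2+4b}}{2}$ (this is the paper's Lemma~\ref{lemma:pdesoln}), and close the loop by squaring and integrating. The only cosmetic difference is that the paper treats the forcing $C\jmax t\,\eps_{n,m}$ as a constant by bounding $t\leq s$ on each subinterval, whereas you handle it directly via variation of parameters; both yield the same final estimate.
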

\begin{proof}[Proof of Corollary~\ref{cor:main}]
We will use real induction (see eg. ~\cite[Theorem 2]{clark2012instructor}). Our inductive hypothesis will be that for some $t$,
\begin{align}\label{eq:ind2}
    \jmax^2 \left(\int_{s = 0}^{t} \Phi_{\mq}(s)^2ds\right) \leq \frac{1}{2}\eps_{n, m}.
\end{align}
Note that is implies the assumption in Equation~\eqref{eq:ind}. Clearly this holds for $t = 0$. Since $\Phi_{\mq}(s)$ is continuous, if Equation~\eqref{eq:ind2} holds for all $s < t$, it also holds for $t$. This is the continuity assumption. Finally, for the inductive step, we will show that if Equation~\eqref{eq:ind2} holds for some $s$, then for some $\iota$ small enough, it holds at $s + \iota$. To show this, first we use Lemma~\ref{lemma:pdesoln} (which bounds the solution of the ODE given in Theorem~\ref{lemma:ddtphi}), to show that for all $s' \leq s$, 
\begin{align}
    \Phi_{\mq}(s') \leq \exp(Cs'\javg(\tau)/(\clsc\er))Cs\jmax \eps_{n, m} + \eps_{n, m} \leq \left(\exp(Cs\javg(\tau)/(\clsc\er))Cs\jmax\right)\eps_{n, m}.
\end{align}
Note that $\Phi_{\mq}(t)$ is continuous. Thus for $\iota$ small enough, we have $\Phi_{\mq}(t) \leq \Phi_{\mq}(s) + \eps_{n, m}$ for all $t \in [s, s+ \iota]$.
It follows that for $\iota$ small enough, for $t \in [s, s + \iota]$,
\begin{align}
    \int_{s' = 0}^{t} (\Phi_{\mq}(s'))^2ds' &\leq (Cs\jmax \eps_{n, m})^2\int_{s' = 0}^{s} \exp(2Cs\javg(\tau)/(\clsc\er)) ds' + \int_{s' = s}^{t} (\Phi_{\mq}(s) + \eps_{n, m})^2 ds'\\
    &\leq 2(Cs\jmax \eps_{n, m})^2 s\exp(2C\javg(\tau) s/(\clsc\er)).
\end{align}

Now using the assumption in the corollary that 
\begin{align}
    4\jmax^4C^2T^3 \exp(2C\javg(\tau) T/(\clsc\er)) \eps_{n, m} \leq 1,
\end{align}
it follows that $\int_{s = 0}^{s'} (\Phi_{\mq}(s))^2ds \leq \frac{\eps_{n, m}}{2\jmax^2}$

This proves the inductive step. Thus by real induction, the hypothesis in Eq~\eqref{eq:ind2} holds up to time $T$. The result of the lemma then holds by applying Lemma~\ref{lemma:pdesoln} to the result of Theorem~\ref{lemma:ddtphi} at any time $t \leq T$.
\end{proof}

\begin{proof}[Proof of Theorem~\ref{lemma:ddtphi}]
Recall from Lemma~\ref{lemma:errdynamicsbody} that 
\begin{align}
    \frac{d}{dt}\dit &= D^{\perp}_t(i) \dit - \mathbb{E}_{j}H^{\perp}_t(i, j) \djt + {\bm{\epsilon}_{t, i}},
\end{align}
where 
\begin{align}
    \|{\bm{\eps}_{t, i}}\| \leq 2\eps_{n, m} + 2\creg\left(\|\dit\|^2 + \mathbb{E}_j\|\djt\|^2\right).
\end{align}

Now we have
\begin{align}\label{eq:change}
\frac{d}{dt}\Phi_{\mq}(t) &\leq \langle{\nabla \Phi_{\mq}(t), \frac{d}{dt}\Delta_t}\rangle\\
&= -\langle{\nabla \Phi_{\mq}(t), \hpt\Delta_t}\rangle + \langle{\nabla \Phi_{\mq}(t), \dpt \odot \Delta_t}\rangle + \langle{\nabla \Phi_{\mq}(t), \mathcal{E}}\rangle,
\end{align}
where $\me(i) = {\bm{\eps}_{t, i}}$. We will consider the terms in order. Let 
\begin{align}\label{eq:tauchoices}
    \tau := \frac{\clsc \cdot \text{rd}(\er)}{8 (\chdec + \cdecd )},
\end{align} 
where $\text{rd}(z)$ is a rounding of $z$ to its first non-zero decimal, in binary (so $\text{rd}(z) \in [z/2, 2z]$).

Now by Lemma~\ref{lemma:dec1body}, we have
\begin{align}
    -\langle{\nabla \Phi_{\mq}(t), \hpt\Delta_t}\rangle &= -\langle{\nabla \Phi_{\mq}(t), \Delta_t}\rangle_{\hpt} \leq (1 + \cbal)\mathbb{E}_i \|m_t(i)\|\mathbf{1}(\xi_t(w_i) \notin B_{\tau}) + \mathcal{E}_{\ref{lemma:dec1body}},
\end{align}
where $m_t(i) = \mathbb{E}_j \hpt(i, j)\djt$, and
\begin{align}
    \mathcal{E}_{\ref{lemma:dec1body}} &= \chdec(\mathbb{E}_i \|\dit\|\mathbf{1}(\xi_t(w_i) \notin B_{\tau}) + (\tau + \cbal\eps_m^{\ref{lemma:conctau}}) \Omega(t)).
\end{align}
% We have
% \begin{align*}
%   -\langle{\nabla \Phi_{\mq}(t), H^{\perp}_t\Delta_t}\rangle &= -\langle{\nabla \Phi_{\mq}(t), H^{\perp}_t\Delta_t}\rangle_{B_{\tau}} - \langle{\nabla \Phi_{\mq}(t), H^{\perp}_t\Delta_t}\rangle_{\bar{B}_{\tau}}\\
%   &\leq  -\langle{\nabla \Phi_{\mq}(t), H^{\perp}_{\infty}\Delta_t}\rangle_{B_{\tau}} + \tau\langle{\nabla \Phi_{\mq}(t), E'}\rangle  - \langle{\nabla \Phi_{\mq}(t), H^{\perp}_t\Delta_t}\rangle_{\bar{B}_{\tau}},\\
%   &\leq \langle{\nabla \Phi_{\mq}(t), \tau E' + E''}\rangle  - \langle{\nabla \Phi_{\mq}(t), H^{\perp}_t\Delta_t}\rangle_{\bar{B}_{\tau}}
% \end{align*}
% where $\|E'_i\| \leq \creg\mathbb{E}_j \|\djt\|$, $\|E''_i\| \leq \creg\mathbb{E}_j \|\djt\|\mathbf{1}(j \in \bar{B}_{\tau})$, and in the last line, we used Lemma~\ref{lemma:dec1body}. \mg{Lemma~\ref{lemma:dec1body} is in-progress...but will be fixed so that the above equation holds.} 

Next by Lemma~\ref{lemma:dec2body}, we have
\begin{align}
\langle{\nabla \Phi_{\mq}(t), \dpt \odot \Delta_t}\rangle \leq -\left(\frac{\clsc\er}{2} - \tau \cdecd \right)\Phi_{\mq}(t) + \cdecd \mathbb{E}_i \|\dit\|\mathbf{1}(\xi_t(w_i) \notin B_{\tau}) + \cbal \mathbb{E}_i \|\dit\|^2.
\end{align}
Here we have used the fact that since the loss is decreasing, the loss in Lemma~\ref{lemma:dec2body} is less than the loss $\delta^2$ at time $T$.

Putting these together, and employing Lemma~\ref{lemma:l1pertbody}, yields
\begin{align}\label{eq:masterdec}
\frac{d}{dt}\Phi_{\mq}(t) &\leq  \left(-\frac{\clsc \er}{4}\right)\Phi_{\mq}(t)\\
&\qquad + (\chdec + \cdecd) \mathbb{E}_i \|\dit\|\mathbf{1}(\xi_t(w_i) \notin B_{\tau})\\
&\qquad + (1 + \cbal)\mathbb{E}_i \|m_t(i)\|\mathbf{1}(\xi_t(w_i) \notin B_{\tau})\\
&\qquad + (1 + 2\cbal)\mathbb{E}_i \|\mathbf{\eps}_{t, i}\|,
%    \frac{d}{dt}\Phi_{\mq}(t) &\leq  -\left(\clsc\er - \tau\right)\Phi_{\mq}(t) + \langle{\nabla \Phi_{\mq}(t), F + H^{\perp}_t \Delta_t}\rangle_{\bar{B}_{\tau}} + \langle{\nabla \Phi_{\mq}(t), \tau E' + E'' + E}\rangle.\\
%    &\leq -\left(\clsc\er - \tau\right)\Phi_{\mq}(t)\\
%    &\qquad + \sqrt{K}\mathbb{E}_i (2\creg\|\dit\| + m_t(i))\mathbf{1}(i \in \bar{B}_{\tau}) + \sqrt{K}\tau\mathbb{E}_i \|\dit\| +   \sqrt{K}\mathbb{E}_i \|\bm{\eps}_{t, i}\|.\\
%    &\leq -\frac{\clsc\er}{2}\Phi_{\mq}(t) + \sqrt{K}\mathbb{E}_i (2\creg\|\dit\| + m_t(i))\mathbf{1}(i \in \bar{B}_{\tau}) + \sqrt{K}\mathbb{E}_i \|\bm{\eps}_{t, i}\|.
\end{align}
where here we used that $\tau$ was chosen such that $(\cdecd + \chdec)(\tau + \cbal \eps_m^{\ref{lemma:conctau}}) \leq \frac{\clsc\er}{8}$, and trivially, $\Omega(t) \leq \Phi_{\mq}(t)$. We also bounded $\mathbb{E}_i\|\dit\|^2$ by $\mathbb{E}_i \|\mathbf{\eps}_{t, i}\|$.

% We have 
% \begin{align}
% &\leq \langle{\nabla \Phi_{\mq}(t), -H^{\perp}_{t} \Delta_t}\rangle_{B_{\tau}} +  \langle{\nabla \Phi_{\mq}(t), -H^{\perp}_{t} \Delta_t}\rangle_{\bar{B}_{\tau}}\\
% &\qquad - (c_t - \tau_t) \Phi_{\mq}(t) + \mathbb{E}_i \|\dit\|\mathbf{1}(i \in \bar{B}_{\tau}) + \sqrt{K}\mathbb{E}_i \|\bm{\eps}_{t, i}\|\\
% &\leq \langle{\nabla \Phi_{\mq}(t), -H^{\perp}_{\infty} \Delta_t}\rangle_{B_{\tau}} +  \langle{\nabla \Phi_{\mq}(t), -H^{\perp}_{t} \Delta_t}\rangle_{\bar{B}_{\tau}} + \sqrt{K}\tau \mathbb{E}_{i}\|\dit\|\\
% &\qquad - (c_t - \tau_t) \Phi_{\mq}(t) + (1 + \sqrt{K})\mathbb{E}_i \|\dit\|\mathbf{1}(i \in \bar{B}_{\tau}) + \sqrt{K}\mathbb{E}_i \|\bm{\eps}_{t, i}\|\\
% &\leq - (c_t - \tau_t - 4K\sqrt{K}\tau_t) \Phi_{\mq}(t) +  \sqrt{K}\mathbb{E}_i \|(H^{\perp}_{t} \Delta_t)_i\| \mathbf{1}(i \in \bar{B}_{\tau}) \\
% &\qquad + (1 + \sqrt{K})\mathbb{E}_i \|\dit\|\mathbf{1}(i \in \bar{B}_{\tau}) + \sqrt{K}\mathbb{E}_i \|\bm{\eps}_{t, i}\|.
% \end{align}
% Here in the first inequality we employed  Lemma~\ref{lemma:dec2body} and Lemma~\ref{lemma:l1pertbody}. In the second inequality, we used Lemma~\ref{lemma:tau} and Lemma~\ref{lemma:l1pertbody}. In the last inequality, we have used Lemma~\ref{lemma:bidirect} and Lemma~\ref{lemma:l1pertbody}.

Now let us consider the term $\mathbb{E}_i \|m_t(i)\| \mathbf{1}(\xi_t(w_i) \notin B_{\tau})$. Using Lemma~\ref{lemma:averaging}, we have
\begin{align}
\mathbb{E}_i \|m_t(i)\|\mathbf{1}(\xi_t(w_i) \notin B_{\tau}) &\leq (1 + \cbal)\left(\eps_{n, m} + \javg(\tau)\right)\Phi_{\mq}(t).
\end{align}

Now let use consider the term $\mathbb{E}_i \|\dit\|\mathbf{1}(\xi_t(w_i) \notin B_{\tau})$. Recall from %Lemma~\ref{lemma:ditclosed} 
Equation~\eqref{eq:duhamel} that 
\begin{align}
\dit = -\int_{s = 0}^t J_{t, s}(i)m_s(i)ds + \int_{s = 0}^t J_{t, s}(i) \bm{\eps_{s, i}}ds.
\end{align}   

Thus by Lemma~\ref{lemma:averaging}, we have
\begin{align*}
\mathbb{E}_i \|\dit\|\mathbf{1}(\xi_t(w_i) \notin B_{\tau}) &\leq (1 + \cbal)\left(\eps_{n, m} + \javg(\tau)\right)\int_{s = 0}^t \Phi_{\mq}(s)ds \\
&\qquad + \int_{s = 0}^t \mathbb{E}_i \| J_{t, s}(i) \bm{\eps_{s, i}}\|\mathbf{1}(\xi_t(w_i) \notin B_{\tau}) ds.
\end{align*}

Plugging this back into Equation~\eqref{eq:masterdec} yields
\begin{align}
\frac{d}{dt} \Phi_{\mq}(t) &\leq - \frac{\clsc\er}{5}\Phi_{\mq}(t) + (\chdec + 4\sqrt{\cbal}\creg) (1 + \cbal)(\eps_{n, m} + \javg(\tau))\int_{s = 0}^t \Phi_{\mq}(s)ds\\
&\qquad + (1 + \cbal)\mathbb{E}_i \|\bm{\eps}_{t, i}\| + (1 + \cbal)(\chdec + 4\sqrt{\cbal}\creg)\int_{s = 0}^t \mathbb{E}_i \| J_{t, s}(i) \bm{\eps_{s, i}}\| ds\\
&\leq - \frac{\clsc\er}{5}\Phi_{\mq}(t) + C \javg(\tau) \int_{s = 0}^t \Phi_{\mq}(s)ds\\
&\qquad + (1 + \cbal)\mathbb{E}_i \|\bm{\eps}_{t, i}\| + C\int_{s = 0}^t \mathbb{E}_i \| J_{t, s}(i) \bm{\eps_{s, i}}\| ds,
\end{align}
where $C = O_{\creg, \cbal}(1)$. Let us simplify the error terms. Appealing to Lemma~\ref{lemma:inductquad}, we have for all $i$, $\|\dit\|^2 \leq 4\eps_{n, m}$ and $E_{t, i} := \int_{s = 0}^t \| J_{t, s}(i) \bm{\eps_{s, i}}\|ds \leq 8\jmax t\eps_{n,m}$.

Thus 
\begin{align}
    \mathbb{E}_i \|\bm{\eps}_{t, i}\| \leq 2\eps_{n, m} + 4\creg\mathbb{E}_i \|\dit\|^2 \leq 18\creg\eps_{n, m},
\end{align}
and 
\begin{align}
    \int_{s = 0}^t \mathbb{E}_i \| J_{t, s}(i) \bm{\eps_{s, i}}\| ds = \mathbb{E}_i E_{t, i} \leq 8\jmax t\eps_{n,m}.
\end{align}
Thus plugging this back into the bound on the dynamics, we have
\begin{align}
\frac{d}{dt} \Phi_{\mq}(t) &\leq - \frac{\clsc\er}{5}\Phi_{\mq}(t) + C \javg \int_{s = 0}^t \Phi_{\mq}(s)ds + C\jmax t\eps_{n,m} ds,
\end{align}
where $C = O_{\creg, \cbal}(1)$.
\end{proof}

\begin{lemma}[Inductive Squared Error Bound.]\label{lemma:inductquad}
Suppose Assumption~\ref{assm:growth} hold with value $\jmax$. Suppose for all $t' \leq t$, we have
\begin{align}
    \jmax^2 \left(\int_{s = 0}^{t'} \Phi_{\mq}(s)^2ds\right) \leq \eps_{n, m}.
\end{align}
and $\jmax^2t^2\eps_{n, m} \leq \frac{1}{64}$.
Then for all $i$ and $t' \leq t$, we have
\begin{align}
    \|\Delta_{t'}(i)\|^2 &\leq 4\eps_{n, m} \\
    E_{t, i} &:= \int_{s = 0}^t \| J_{t, s}(i) \bm{\eps_{s, i}}\|ds \leq 8\jmax t \eps_{n, m},
\end{align}
where $\bm{\eps_{s, i}}$ is defined in Lemma~\ref{lemma:errdynamicsbody}.
\end{lemma}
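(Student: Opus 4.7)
The plan is to proceed by real induction (i.e.\ a bootstrap, in the same spirit as the proof of Corollary~\ref{cor:main}) on $t'\in [0,t]$, with inductive hypothesis that $\|\Delta_s(i)\|^2 \leq 4\eps_{n,m}$ for all $i$ and all $s\leq t'$. The base case at $t'=0$ is immediate since $\Delta_0(i)=0$, and continuity of $\Delta_{t'}(i)$ in $t'$ supplies the closedness condition needed for real induction; hence the work reduces to the inductive step. We will prove both bounds together, because they are coupled: the pointwise bound on $\|\Delta_s(i)\|^2$ is needed to control $E_{t',i}$, while $E_{t',i}$ then enters the Duhamel representation of $\Delta_{t'}(i)$.

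\textbf{Step 1: bounding $E_{t',i}$.} Under the inductive hypothesis, both $\|\Delta_s(i)\|^2$ and $\mathbb{E}_j\|\Delta_s(j)\|^2$ are at most $4\eps_{n,m}$ for $s\leq t'$, so the expression for $\bm{\eps_{s,i}}$ in Lemma~\ref{lemma:errdynamicsbody} yields $\|\bm{\eps_{s,i}}\|\leq 2\eps_m+\eps_n+2\creg(\|\Delta_s(i)\|^2+\mathbb{E}_j\|\Delta_s(j)\|^2)=O_{\creg}(\eps_{n,m})$. Combining with the uniform bound $\|J^{\perp}_{t',s}(i)\|\leq \jmax$ from Assumption~\ref{assm:growth}\,\eqref{eq:JMAX} and integrating over $s\in[0,t']$ gives $E_{t',i}\leq O_{\creg}(\jmax t'\eps_{n,m})$, which is the desired second bound (absorbing $\creg$ into the displayed absolute constant).

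\textbf{Step 2: bounding $\|\Delta_{t'}(i)\|$.} Apply Duhamel's principle \eqref{eq:duhamel} to write
\begin{align*}
\Delta_{t'}(i) \;=\; \int_0^{t'} J^{\perp}_{t',s}(i)\bigl(-m_s(i)+\bm{\eps_{s,i}}\bigr)\,ds,\qquad m_s(i):=\mathbb{E}_{j}H^{\perp}_s(i,j)\Delta_s(j),
\end{align*}
so that $\|\Delta_{t'}(i)\|\leq \int_0^{t'}\|J^{\perp}_{t',s}(i)m_s(i)\|\,ds+E_{t',i}$. For the first integral, the smoothness bound \ref{S6} in Lemma~\ref{lemma:smoothness} gives $\|m_s(i)\|\leq \creg\,\mathbb{E}_j\|\Delta_s(j)\|\leq \creg\,\Phi_{\mq}(s)$, and then Cauchy--Schwarz in $s$ together with the first hypothesis $\jmax^2\int_0^{t'}\Phi_{\mq}(s)^2\,ds\leq\eps_{n,m}$ yields
\begin{align*}
\int_0^{t'}\|J^{\perp}_{t',s}(i)m_s(i)\|\,ds \;\leq\; \creg\,\jmax\sqrt{t'}\Bigl(\int_0^{t'}\Phi_{\mq}(s)^2\,ds\Bigr)^{1/2} \;\leq\; \creg\sqrt{t'\eps_{n,m}}.
\end{align*}

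\textbf{Step 3: closing the induction.} Combining Step~1 and Step~2 gives $\|\Delta_{t'}(i)\|\leq \creg\sqrt{t'\eps_{n,m}}+O_{\creg}(\jmax t'\eps_{n,m})$. The second hypothesis $\jmax^2 t^2\eps_{n,m}\leq 1/64$, equivalently $\jmax t\sqrt{\eps_{n,m}}\leq 1/8$, converts each of these contributions into a multiple of $\sqrt{\eps_{n,m}}$; the factor-of-$4$ slack in the inductive hypothesis $\|\Delta_s(i)\|^2\leq 4\eps_{n,m}$ (rather than a tight $\eps_{n,m}$) is precisely what is needed so that the resulting bound lies strictly below $2\sqrt{\eps_{n,m}}$, allowing the induction to propagate.

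\textbf{Main obstacle.} The delicate point is the coupling between the two bounds: the Duhamel bound on $\|\Delta_{t'}(i)\|$ is a sum of $E_{t',i}$ plus the $m_s$-integral, and $E_{t',i}$ itself depends, through the quadratic part of $\bm{\eps_{s,i}}$, on the hypothesized pointwise bound on $\|\Delta_s(i)\|^2$. Real induction resolves this circularity by advancing both estimates simultaneously in $t'$, but one must choose the slack constants (the $4$ and the $8$ in the statement) with just enough room to absorb the multiplicative constants from Lemma~\ref{lemma:errdynamicsbody} and Lemma~\ref{lemma:smoothness} and strictly improve the hypothesis under the quantitative smallness guaranteed by the two $\eps_{n,m}$ assumptions.
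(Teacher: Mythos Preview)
Your overall architecture---bootstrap on $\|\Delta_s(i)\|^2\le 4\eps_{n,m}$ via Duhamel \eqref{eq:duhamel}, with $\|m_s(i)\|$ controlled by $\Phi_{\mq}(s)$ and $\|\bm\eps_{s,i}\|$ controlled by the inductive hypothesis---is exactly the paper's strategy, and you correctly identify the circular coupling between the two bounds as the crux. However, Step~3 does not close as written.

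\textbf{The gap.} Your Step~2 produces the first contribution $\creg\sqrt{t'\eps_{n,m}}=\creg\sqrt{t'}\cdot\sqrt{\eps_{n,m}}$, and in Step~3 you assert that the hypothesis $\jmax t\sqrt{\eps_{n,m}}\le 1/8$ ``converts'' this into a bounded multiple of $\sqrt{\eps_{n,m}}$. It does not: the coefficient in front of $\sqrt{\eps_{n,m}}$ is $\creg\sqrt{t'}$, and nothing in the hypotheses bounds $\sqrt{t'}$ by a constant (indeed $t'$ may be $\text{poly}(d)$). The second hypothesis only gives $t'\sqrt{\eps_{n,m}}\le 1/8$ (using $\jmax\ge 1$), which controls the \emph{absolute size} of $\sqrt{t'\eps_{n,m}}$ but not its ratio to $\sqrt{\eps_{n,m}}$. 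So you cannot recover $\|\Delta_{t'}(i)\|<2\sqrt{\eps_{n,m}}$ this way, and the induction does not propagate.

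\textbf{How the paper handles it.} The paper applies Cauchy--Schwarz in the other grouping,
\[
\int_0^{t'}\|J^{\perp}_{t',s}(i)m_s(i)\|\,ds\;\le\;\Bigl(\int_0^{t'}\|J^{\perp}_{t',s}(i)\|^2\,ds\Bigr)^{1/2}\Bigl(\int_0^{t'}\|m_s(i)\|^2\,ds\Bigr)^{1/2}\;\le\;\jmax\Bigl(\int_0^{t'}\Phi_{\mq}(s)^2\,ds\Bigr)^{1/2}\;\le\;\sqrt{\eps_{n,m}},
\]
yielding $\|\Delta_s(i)\|\le\sqrt{\eps_{n,m}}+E_{s,i}$ directly (no stray $\sqrt{t'}$). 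It then substitutes this into the definition of $E_t:=\max_j E_{t,j}$ to obtain the self-referential quadratic inequality $E_t\le 2\jmax t(2\eps_{n,m}+E_t^2)$; the hypothesis $\jmax^2t^2\eps_{n,m}\le 1/64$ makes the discriminant positive, and continuity from $E_0=0$ pins $E_t$ to the small root, giving $E_t\le 8\jmax t\eps_{n,m}$ and hence $\|\Delta_{t'}(i)\|^2\le 4\eps_{n,m}$. So the bootstrap variable is $E_t$, not $\|\Delta\|^2$, and the ``slack'' comes from the gap between the two roots of the quadratic rather than from a factor-of-$4$ margin in the inductive hypothesis. If you want to keep your real-induction framing, you need the sharper Cauchy--Schwarz above to drop the $\sqrt{t'}$ before you can close.
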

% \mgn{Note for future: by doing some clever averaging over $i$ here, its possible that you only need a more average case bound on $J^{\perp}_{t, s}$.}
\begin{proof}
It suffices to prove the statement just for the final time $t$, because we could always apply the lemma with a smaller value of $t$. Recall that
\begin{align}
\bm{\eps_{s, i}} \leq 2\eps_{n, m} + 2\creg\left(\|\dit\|^2 + \mathbb{E}_j\|\djt\|^2\right).
\end{align}

Since
\begin{align}
\mathbb{E}_i \|\bm{\eps}_{t, i}\| \leq 2\eps_{n, m} + 4\creg\mathbb{E}_i \|\dit\|^2,
\end{align}
by Equation~\eqref{eq:duhamel}, we have
\begin{align}
\|\dit\| &\leq \int_{s = 0}^t J_{t, s}(i)(m_s(i) +  \bm{\eps}_{s, i})ds \\
&\leq \int_{s = 0}^t \|J_{t, s}(i)m_s(i)ds\| +  \int_{s = 0}^t \|J_{t, s}(i)\bm{\eps}_{s, i}ds\|\\
&= \int_{s = 0}^t \|J_{t, s}(i)m_s(i)ds\| +  E_{t, i}\\
&\leq \sqrt{\int_{s = 0}^t \|J_{t, s}(i)\|^2ds} \sqrt{\int_{s = 0}^t \|m_s(i)\|^2ds} +  E_{t, i}\\
&\leq \jmax\sqrt{\int_{s = 0}^t \|m_s(i)\|^2ds} +  E_{t, i}\\
&\leq \jmax\sqrt{\int_{s = 0}^t \Phi_{\mq}(s)^2ds} +  E_{t, i}\\
&\leq \sqrt{\eps_{n,m}} +  E_{t, i},
\end{align}
Here in the second last inequality, we used the fact that $\|m_s(i)\| \leq \Phi_{\mq}(s)$ for any $i$, \mg{Might want to have a formal lemma for this? I think I used it in the potential proof too.} and in the last line, we used assumption of the lemma. Note that this same calculation holds for all $s \leq t$, so we have
\begin{align}
   \|\Delta_s(i)\| \leq \sqrt{\eps_{n,m}} +  E_{t, i}.
\end{align}

Now lets bound $E_{t, i}$:
\begin{align}
E_{t, i} &:= \int_{s = 0}^t \| J_{t, s}(i) \bm{\eps_{s, i}}\| ds \leq \int_{s = 0}^t \| J_{t, s}(i)\|\left(2\eps_{n, m} + 4\creg \max_j\|\djs\|^2\right) ds\\
&\leq \jmax \int_{s = 0}^t \left(2\eps_{n, m} + \max_j \left(2\eps_{n, m} + 2 E_{t, j}^2\right) \right)ds,
\end{align}
where in the second line, we plugged in the bound on $\dis$.

Thus letting $E_t := \max_j E_{t, j}$, we have
\begin{align}
    E_t &\leq 2\jmax t\left(2 \eps_{n, m} + E_{t}^2\right)
\end{align}
Now assuming the discriminant $1 - 32\jmax^2t^2\eps_{n, m} > 0$, this equation has two sets of disjoint solutions, one small (including $0$) and one large:
\begin{align}
    E_t \in \left[-\infty, \frac{1 - \sqrt{1 - 32\jmax^2t^2\eps_{n, m}}}{4\jmax t}\right] \cup \left[\frac{1 + \sqrt{1 - 32\jmax^2t^2\eps_{n, m}}}{4\jmax t} , \infty\right]
\end{align}

Note that since at time $t = 0$, we have $E_t = 0$, and $E_t$ is continuous, it must be that if the discriminant is positive up to time $t$, the solution is always in the first set. Indeed, since an assumption of the lemma is that $\jmax^2t^2\eps_{n, m} \leq \frac{1}{64}$. Thus we have
\begin{align}
    E_t \leq \frac{1 - \sqrt{1 - 32\jmax^2t^2\eps_{n, m}}}{4\jmax t} \leq 8\jmax t \eps_{n, m}.
\end{align}
Plugging this back above into our bound on $\dit$ yields that for all $i$,
\begin{align}
    \|\dit\|^2 \leq 4 \eps_{n, m}.
\end{align}
\end{proof}

\begin{lemma}[ODE Analysis]\label{lemma:pdesoln}
Suppose we have a differential equation of the form 
\begin{align}
\frac{d}{dt} X_t &\leq - a X_t + b\int_{s = 0}^t X_s ds + \eps.
\end{align}
with initial condition $X_0 = 0$ and $a, b \geq 0$. Then
\begin{align}
    X_t \leq \exp(bt/a) \frac{\eps}{\sqrt{a^2 + 4b}}.
\end{align}
\end{lemma}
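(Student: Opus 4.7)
The plan is to convert the integro-differential inequality into a second-order ODE inequality by introducing the antiderivative $Y_t := \int_0^t X_s \, ds$. Then $Y'_t = X_t$ and the hypothesis becomes $Y''_t + a Y'_t - b Y_t \leq \eps$, with the initial conditions $Y_0 = Y'_0 = 0$. A standard comparison argument shows that $Y_t \leq Z_t$ and $X_t = Y'_t \leq Z'_t$ for $Z_t$ solving the corresponding equality $Z''_t + a Z'_t - b Z_t = \eps$ with $Z_0 = Z'_0 = 0$, so it suffices to compute $Z'_t$ explicitly.

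Next I would solve this linear constant-coefficient ODE. The characteristic equation $\lambda^2 + a\lambda - b = 0$ has roots $\lambda_\pm = \tfrac{1}{2}\bigl(-a \pm \sqrt{a^2 + 4b}\bigr)$, with $\lambda_+ \geq 0 \geq \lambda_-$, $\lambda_+ - \lambda_- = \sqrt{a^2+4b}$, and $\lambda_+ \lambda_- = -b$. A particular solution is the constant $-\eps/b$, and solving for the coefficients of the homogeneous part from the initial conditions gives (after a small calculation using $\lambda_+\lambda_- = -b$)
\begin{align}
Z'_t \;=\; \frac{\eps}{\sqrt{a^2+4b}}\bigl(e^{\lambda_+ t} - e^{\lambda_- t}\bigr)\;\leq\; \frac{\eps}{\sqrt{a^2+4b}}\, e^{\lambda_+ t}.
\end{align}

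Finally, I would bound the positive exponent by rationalizing: $\lambda_+ = \frac{2b}{a + \sqrt{a^2+4b}} \leq \frac{b}{a}$, which combined with the above yields $X_t \leq \frac{\eps}{\sqrt{a^2+4b}}\exp(bt/a)$, as claimed. The only mildly delicate step is the comparison principle for the second-order inequality $Y''_t + aY'_t - bY_t \leq \eps$ with vanishing initial data (one wants $Y_t \leq Z_t$ pointwise); this follows from writing $W_t = Z_t - Y_t$, noting $W$ satisfies $W''_t + aW'_t - bW_t \geq 0$ with $W_0 = W'_0 = 0$, and applying the standard fact that solutions of such inequalities with nonnegative initial data remain nonnegative (e.g., via multiplying by $e^{\lambda_- t}$ and integrating, or via Duhamel using the positive Green's function of the operator). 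Everything else is routine.
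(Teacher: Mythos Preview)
Your proof is correct and follows essentially the same route as the paper: reduce to a second-order linear ODE with characteristic roots $\lambda_\pm = \tfrac{1}{2}(-a \pm \sqrt{a^2+4b})$, solve explicitly to get the $\eps/\sqrt{a^2+4b}$ prefactor, and bound $\lambda_+ \le b/a$. The only organizational difference is that the paper first compares $X_t$ to the solution $Y_t$ of the integro-differential \emph{equality} (via a real-induction argument) and then differentiates to get the second-order ODE, whereas you integrate first and do the comparison at the second-order level. One small point: in your ``delicate step'' you write that one wants $Y_t \le Z_t$, but what you actually need is $Y_t' \le Z_t'$; your Duhamel suggestion does deliver this, since the impulse response satisfies $\Phi'(t) = \frac{\lambda_+ e^{\lambda_+ t} - \lambda_- e^{\lambda_- t}}{\lambda_+ - \lambda_-} \ge 0$ precisely because $\lambda_+ \ge 0 \ge \lambda_-$.
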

\begin{proof}
Let $Y_t$ solve the ODE 
\begin{align}
   \frac{d}{dt} Y_t = - a Y_t + b\int_{s = 0}^t Y_s ds + 2\eps,
\end{align}
with initial condition $Y_0 = 0$, and let $Z_t = X_t - Y_t$. We will show that $Z_t$ never goes above $0$.

Observe that $Z_t$ solves the differential equation
\begin{align}
   \frac{d}{dt} Z_t &\leq - a Z_t + b\int_{s = 0}^t Z_s ds - \eps,
\end{align}
with initial condition $Z_t = 0$. One can check by the \em real induction \em that $Z_t \leq 0$. Indeed, if $Z_s \leq 0$ for all $s < t$, then we have $Z_t \leq 0$. Further, since $Z_t$ is continuous, if the hypothesis $Z_{t} \leq 0$ holds up to time $s$, we can show that it holds at time $s + \iota$ for some $\iota > 0$. Indeed, for $\iota$ small enough (in terms of $b$ and $\eps$), for all $r \in [s, s + \iota]$, we have $Z_r \leq \frac{\eps}{b}$. Thus for $r \in [s, s + \iota]$, we have $\frac{d}{dr}Z_r \leq -a Z_r + b \iota \left(\frac{\eps}{b}\right) - \eps \leq -aZ_r$ for $\iota \leq 1$. Then Gronwall's inequality gives that $Z_{s + \iota} \leq Z_s \leq 0$, which is the inductive step. This yields the claim that $Z_t \leq 0$ for all $t > 0$.

Now we just need to solve the differential equation for $Y_t$. Taking a second derivative, we have
\begin{align}
   Y''_t = - a Y'_t + b Y_t.
\end{align}
A standard second order ODE analysis yields that 
\begin{align}
    Y_t = C_1 \exp(r_1 t) + C_2\exp(r_2 t), 
\end{align}
where $r_1$ and $r_2$ are the roots of $x^2 + ax - b = 0$, that is,
\begin{align}
   (r_1, r_2) = \frac{-a \pm \sqrt{a^2 + 4b}}{2}
\end{align}
Checking the initial condtitions of $Y_0$ and $Y'_0$ yields 
\begin{align}
    Y_t = \left(\frac{\eps}{\sqrt{a^2 + 4b}}\right)\left(\exp(r_1 t) - \exp(r_2 t)\right), 
\end{align}
where $r_1$ is the larger root. Since $r_1 \leq \frac{b}{a}$, we obtain the lemma.
\end{proof}

\section{Applications to Learning a Single-index Model}
\subsection{Setting}\label{sec:SIMsetting}
We will study the setting of learning a well-specified even single index function $f^*(x) = \sigma(x^\top w^*)$, where $w^* \in \sd$, and $\sigma(z) = \sum_{k = k^*}^K c_k \hek(z)$, where:
\begin{enumerate}
    \item $k^* \geq 4$, and $\frac{1}{\csim} \leq c_{k^*} \leq \csim \max_k {c_k}$.
    \item All $k$ with $c_k \neq 0$ are even. (That is, $\sigma$ is an even function).
\end{enumerate}
We assume the initial distribution $\rho_0$ of the neurons is uniform on $\sd$, and the data is drawn i.i.d from the distribution $\md$, which has Gaussian covariates, and subGaussian label noise: that is, 
\begin{align}
    x &\sim \mathcal{N}(0, I_d) \sim \md_x\\
    y &= f^*(x) + \zeta(x),
\end{align}
where $\zeta(x)$ has mean $0$ and is $1$-subGaussian.

We will prove the following theorem, which we restate from Theorem~\ref{theorem:SIM} in the main body.
\thmsim*

% \begin{theorem}\label{theorem:SIMapx.}
% Fix any $\delta$ small enough. Consider the setting $(f^*, \rho_0, {\md_x})$ described above for $d$ large enough (in terms of $\delta$). Then for some $t \leq O_{K, \creg}(\sqrt{d}^{k^* - 2}\delta^{-k^*})$, we have
% \begin{align}
%     \mathbb{E}_{x \sim {\md_x}} (f_{\rtmf}(x) - f^*(x))^2 \leq \delta^2.
% \end{align}
% Further, suppose $n, m \geq d^{13k^*}$. Let $\hat{\md}$ be the empirical distribution of $n$ samples drawn from $\md$. With high probability over $\hat{\md}$ and the initialization $\rho_0^m$, we have
% \begin{align}
%     \mathbb{E}_{x \sim {\md_x}} (f_{\rtm}(x) - f^*(x))^2 \leq 2\delta^2.
% \end{align}
% \end{theorem}
We will prove Theorem~\ref{theorem:SIM} by (1) analyzing the MF dynamics to show the convergence of $\rtmf$, and then (2) checking the assumptions of Theorem~\ref{thm:main} hold, and applying it to show the convergence of $\rtm$.

\paragraph{Notation}
Define $\alpha(w) := |w^\top w^*|$. Let $v(\alpha, t)$ denote the velocity of a particle $w$ with $\alpha(w) = \alpha$ in the $w^* \on{sign}(w^\top w^*)$ direction. Formally, we have
\begin{align}
    v(\alpha, t) := \langle w^*, \nu(w, \rtmf) \rangle \on{sign}(w^\top w^*),
\end{align}
for any $w$ with $\alpha(w) = \alpha$. We will often use the notation $\alpha \sim \rho$ or $\alpha' \sim \rho$ to denote the distribution of $\alpha(w)$ with $w \sim \rho$. We use $\alpha_t(w) := \alpha(\xi_t(w))$. We use $\xi_{t, s}(w)$ denote the location of the particle at time $t$ which is initialized at $w$ at time $s$. In this language, we have that $\xi_{t}(w) = \xi_{t, 0}(w)$. We similarly define $\alpha_{t, s}(\beta)$ to be $\alpha(\xi_{t, s}(w))$ for any $w$ with $\alpha(w) = \beta$.

We will use $q_{\sigma}$ to denote the polynomial with $k$th coefficient $k! c_{k}^2$, where $\sum c_k \hek(z)$ is the Hermite decomposition of $\sigma$. Similarly, we denote $q_{\sigma'}(z) = \sum_{k = k^* - 1}^{K-1} c_{k + 1}^2(k + 1)(k + 1)!z^k$. From the Hermite polynomial identity that $\mathbb{E}_x \hek(w^\top x)\hej(v^\top x) = k! \delta_{jk}(w^\top v)^k$, we have
\begin{align}
    \mathbb{E}_x \sigma(w^\top x)\sigma(v^\top x) &= q_{\sigma}(w^\top v).\\
    \mathbb{E}_x \sigma'(w^\top x)\sigma'(v^\top x) &= q_{\sigma'}(w^\top v).
\end{align}

\subsection{Bounds on the Velocity and its Derivative}
\begin{figure}[htbp]\label{fig:SIM}
    \centering
    \begin{minipage}{1\textwidth}
        \centering
        \includegraphics[width=0.5\textwidth]{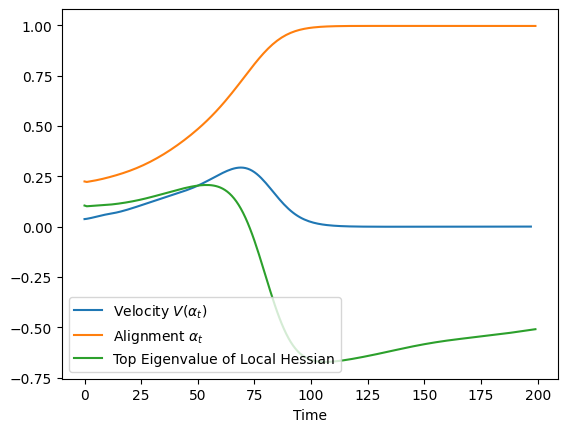}
        \vspace{-2mm}
        \caption{\small Self-Concordance Property: the top eigenvalue of the Local Hessian is Bounded by $\frac{k - 1}{\alpha_t}\nu(\alpha_t)$}
    \end{minipage}
\end{figure}

The key ingredients in both the MF convergence analysis, the perturbation analysis (bounding $\jmax$ and $\javg$), and in showing local strong convexity, is obtaining a lower bound on the particle velocity, and bounds on the local Hessian, $D^{\perp}_t(w)$. 
It turns out, it is much easier to bound these quantities under a certain inductive assumption (which in our MF analysis we will prove holds). We define the inductive property with parameter $\iota$ to hold at time $t$ if
\begin{align}
    \mathbb{P}_{w \sim \rtmf}\left[\alpha(w) \in [\iota, 1 - \iota] \right] \leq \iota.  \tag{$\star$} \label{eq:starred}
\end{align}
Eventually, we will choose $\iota$ to be some small constant dependent on the desired final loss $\delta$.

\begin{lemma}[Lower Bound of Velocity]\label{lemma:SIMv}
Let $\er := \sqrt{\mathbb{E}_x (f_{\rtmf}(x) - f^*(x))^2}.$ Suppose \eqref{eq:starred} holds at time $t$ for $\iota \leq \min\left(\Theta_K(1), \delta^{6K^2}\right)$. Then
\begin{align*}
    v(\alpha, t) &\geq q_{\sigma'}(\alpha)(1 - \alpha^2)(1 - r_t)- O_K\left((1 - \alpha^2)\mathcal{R}_{\alpha}\right),
\end{align*}
where $\mathcal{R}_{\alpha} = O_K(\iota(\alpha \sqrt{d}^{-\max(2, k^*-2)} + \alpha^{\max(1, k^*-3)}\sqrt{d}^{-2}) + \alpha\sqrt{d}^{-k^*})$, and $r_t = \mathbb{E}_{\alpha' \sim \rtmf} (\alpha')^{k^*} = \Omega_K(\delta)$. In particular, if $\alpha \geq \frac{\delta^{3K}}{\sqrt{d}}$, for $d$ large enough (in terms of $\delta, K$), we have that 
\begin{align}
    v(\alpha, t) \geq q_{\sigma'}(\alpha)(1 - \alpha^2)(1 - r_t)(1 - \sqrt{\iota}).
\end{align}
\end{lemma}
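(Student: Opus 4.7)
The plan is to compute $v(\alpha,t)$ in closed form via Hermite expansion, then estimate each piece using rotational symmetry of $\rtmf$ about the $w^*$-axis (inherited from the invariance of $(\rho^*,\rho_0,\md_x)$ under rotations fixing $w^*$). Without loss of generality I take $w^\top w^*=\alpha\geq 0$, so $\tilde w=w^*$. Since $\mathscr{f}(w)=q_\sigma(w^\top w^*)$ and $\mathscr{k}(w,w')=q_\sigma(w^\top w')$, differentiating and projecting onto $(I-ww^\top)$ yields
$$v(\alpha,t)=q_{\sigma'}(\alpha)(1-\alpha^2)-\mathbb{E}_{w'\sim\rtmf}\,q_{\sigma'}(\gamma)(\beta-\alpha\gamma),$$
with $\beta:=w^{*\top}w'$ and $\gamma:=w^\top w'$. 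By symmetry, conditionally on $\beta$ one writes $\gamma=\alpha\beta+\sqrt{(1-\alpha^2)(1-\beta^2)}\,\rho$, where $\rho$ is symmetric with $\mathbb{E}\rho^{2j}=\Theta_j(d^{-j})$. The rearrangement $\beta-\alpha\gamma=\beta(1-\alpha^2)-\alpha\sqrt{(1-\alpha^2)(1-\beta^2)}\,\rho$ then splits $v$ into a main term proportional to $(1-\alpha^2)$ and a cross-term with prefactor $\alpha\sqrt{1-\alpha^2}$.

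The key step is a sharp lower bound on the main bracket $q_{\sigma'}(\alpha)-\mathbb{E}_\beta\beta\,\mathbb{E}_\rho q_{\sigma'}(\gamma)$. Evenness of $\sigma$ forces $q_{\sigma'}(z)=\sum_{k\text{ even},\,k\geq k^*}c_k^2\, k\,k!\,z^{k-1}$, so the leading-order (in $\rho$) piece satisfies
$$q_{\sigma'}(\alpha)-\mathbb{E}_\beta \beta\,q_{\sigma'}(\alpha\beta)=\sum_{k\text{ even},\,k\geq k^*}c_k^2\, k\,k!\,\alpha^{k-1}(1-\mathbb{E}\beta^k)\;\geq\;(1-r_t)\,q_{\sigma'}(\alpha),$$
using the crucial monotonicity $\mathbb{E}\beta^k\leq\mathbb{E}\beta^{k^*}=r_t$ for even $k\geq k^*$ (pointwise $|\beta|^k\leq|\beta|^{k^*}$ since $|\beta|\leq 1$). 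It then remains to control (i) the Taylor remainder $\mathbb{E}_\rho q_{\sigma'}(\gamma)-q_{\sigma'}(\alpha\beta)=\sum_{j\geq 1}\frac{q_{\sigma'}^{(2j)}(\alpha\beta)}{(2j)!}\bigl((1-\alpha^2)(1-\beta^2)\bigr)^j\mathbb{E}\rho^{2j}$ and (ii) the cross-term.

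Both errors are handled by partitioning $\supp{\rtmf}$ via $|\beta|$ into a near-pole region ($|\beta|\geq 1-\iota$), a transition region ($\iota<|\beta|<1-\iota$, which has mass $\leq\iota$ by \eqref{eq:starred}), and a near-equator region ($|\beta|\leq\iota$), combined with the estimate $|q_{\sigma'}^{(j)}(z)|\lesssim_K|z|^{\max(0,\,k^*-1-j)}$ for $|z|\leq 1$. In the near-pole region, $(1-\beta^2)\leq 2\iota$ supplies an extra $\iota$ and the $j=2$ Taylor term yields the $\iota\alpha^{k^*-3}d^{-1}$ piece of $\mathcal{R}_\alpha$. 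In the near-equator region, aggregating Taylor terms with $|\alpha\beta|\leq\iota$ yields the $\iota\alpha\,d^{-(k^*-2)/2}$ piece, and a direct non-Taylor estimate $|q_{\sigma'}(\gamma)|\lesssim|\gamma|^{k^*-1}$ combined with $\mathbb{E}|\rho|^{k^*}\lesssim d^{-k^*/2}$ supplies the residual $\alpha\,d^{-k^*/2}$ piece. The transition region contributes $\leq\iota$ trivially; the cross-term is treated analogously, its extra $\alpha\sqrt{1-\alpha^2}$ prefactor absorbing the missing $(1-\alpha^2)$. The second display of the lemma then follows by comparing $\mathcal{R}_\alpha$ with $q_{\sigma'}(\alpha)(1-r_t)\gtrsim_K\alpha^{k^*-1}(1-r_t)$: for $\alpha\geq\delta^{3K}/\sqrt{d}$ and $\iota\leq\delta^{6K^2}$, each piece of $\mathcal{R}_\alpha$ is dominated by a $\sqrt{\iota}$ margin once $d$ is large in $(K,\delta)$; the ancillary fact $r_t=\Omega_K(\delta)$ is a byproduct of the Hermite-expanded loss identity and the current loss budget.

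The main obstacle is precisely this region-wise bookkeeping: every power of $\alpha,\iota,d^{-1/2}$ must be tracked carefully to match the specific form of $\mathcal{R}_\alpha$, and in the near-equator regime Taylor expansion alone is insufficient (because $\mathbb{E}\rho^{2j}\sim d^{-j}$ decays too slowly for the innermost series terms), so a direct polynomial bound on $q_{\sigma'}(\gamma)$ is required to produce the $\alpha\, d^{-k^*/2}$ contribution.
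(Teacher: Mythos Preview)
Your proposal is correct and follows essentially the same route as the paper: expand the velocity via the Hermite identity $\mathbb{E}_x\sigma'(w^\top x)\sigma'(w'^\top x)=q_{\sigma'}(w^\top w')$, use rotational symmetry of $\rtmf$ about $w^*$ to reduce to a conditional expectation over $\beta$ and a symmetric cross-direction variable, extract the main term via the monotonicity $\mathbb{E}\beta^k\leq\mathbb{E}\beta^{k^*}=r_t$ for even $k\geq k^*$, and control the remainder using \eqref{eq:starred}. The only organizational difference is that the paper expands degree-by-degree in $q_{\sigma'}$ (fixing $k$ and binomially expanding $(\alpha\alpha'+y'^\top w)^k$) and then invokes \eqref{eq:starred} once via the single estimate $\mathbb{E}_{\alpha'}(\alpha')^2(1-\alpha'^2)\leq 2\iota$, whereas you Taylor-expand in $\rho$ and partition explicitly by $|\beta|$ into near-pole, transition, and near-equator regions; the two bookkeeping schemes are equivalent and yield the same $\mathcal{R}_\alpha$.
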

\begin{proof}
Let us expand the velocity by expressing $v(\alpha, t)$ as a polynomial in terms of $\alpha$. 
% Let $q_{\sigma}$ be the univariate polynomial with $k$th coefficient $(k + 1)! c_{k+1}$, where $\sigma(z) = \sum c_k \hek(z)$ is the Hermite decomposition of $\sigma$.
Fix $w$ with $\alpha(w) = \alpha$ and without loss of generality assume $w^\top w^* > 0$. For $w' \in \mathbb{S}^{d-1}$, we denote $w' = \alpha' w^* + y$, where $y' \in \sqrt{1 - \alpha'^2}\mathbb{S}^{d-2}$, which we will use to denote the sphere perpendicular to $w^*$ of radius $\sqrt{1 - \alpha'^2}$. We expand
\begin{align}\label{eq:velocity1}
    \nu(w, \rtmf)^\top  w^* &= \mathbb{E}_x (f^*(x) - f_{\rtmf}(x))\sigma'(w^\top x)x^\top  P^{\perp}_w w^* \\
    &= \mathbb{E}_x \sigma({w^*}^\top x)\sigma'(w^\top x)x^\top  P^{\perp}_w w^* - \mathbb{E}_{w' \sim \rtmf}\mathbb{E}_x \sigma(w'^\top x)\sigma'(w^\top x)x^\top  P^{\perp}_w w^*\\
    &= q_{\sigma'}(w^\top w^*) {w^*}^\top P^{\perp}_w w^*  - \mathbb{E}_{w' \sim \rtmf} q_{\sigma'}(w^\top w') (w')^\top P^{\perp}_w w^*\\
    &= q_{\sigma'}(\alpha)(1 - \alpha^2) - \mathbb{E}_{w' \sim \rtmf} q_{\sigma'}(w^\top w') (w')^\top P^{\perp}_w w^*\\
    &= q_{\sigma'}(\alpha)(1 - \alpha^2) - \mathbb{E}_{\alpha' \sim \rtmf} \mathbb{E}_{y' \sim \sqrt{1 - \alpha'^2}\mathbb{S}^{d-2}}q_{\sigma'}(\alpha \alpha' + y'^\top w)\left(\alpha'(1 - \alpha^2) - \alpha  y'^\top w \right).
\end{align}
% Here in the third equality we used Stein's lemma, the fact that $\frac{d}{dz} \hek(z) = k \he_{k-1}(z)$, and the fact that $\mathbb{E}_x \hek(w^\top x)\hej(w'^\top x) = k! \delta_{jk} (w^\top w')^k$. 
Here in the fifth equality, we used the rotational symmetry of $\rtmf$ about the $w^*$ axis.

Lets break down this expression. Let 
\begin{align}
    r_{t, k} : = \mathbb{E}_{\alpha \sim \rtmf}\alpha^k.
\end{align}

Fix a (necessarily odd) coefficient $k^* - 1 \leq k \leq K - 1$ of the polynomial $q_{\sigma'}(z) := \sum q_k z^k$, and consider all terms in the above equation arising from that order term:
\begin{align}
    &q_k\alpha^{k}(1 - \alpha^2) - q_k\sum_{j = 0}^{k} \binom{k}{j}(\alpha \alpha')^{j} \mathbb{E}_{y' \sim \sqrt{1 - \alpha'^2}\mathbb{S}^{d-2}} (y'^\top w)^{k-j}(\alpha'(1 - \alpha^2) - \alpha  y'^\top w) \\
    &\qquad = q_k\alpha^{k}(1 - \alpha^2)\left(1 - r_{t, k+1}\right) + \mathbb{E}_{\alpha' \sim \rtmf}\mathcal{E}_{\alpha, \alpha', k},
\end{align}
where 
\begin{align}
    \mathcal{E}_{\alpha, \alpha', k} &= \begin{cases}
        O_{k}\left((1 - \alpha^2)(\alpha')^2(1 - \alpha'^2)(\alpha \sqrt{d}^{-(k-1)} + \alpha^{k-2}\sqrt{d}^{-2}) + \alpha(1 - \alpha^2)\sqrt{d}^{-(k+1)}\right) & k \geq 3\\
        0 & k = 1
    \end{cases}.
\end{align}
Note here that we have used the fact that $k$ is even and $\mathbb{E}_{y'} (y'^\top w)^j = O_j(\left((1 - \alpha'^2)(1 - \alpha^2)d^{-1}\right)^{j/2})$, and is $0$ for odd $j$. The final error terms arises from the fact that we have only counted the terms in the binomial expansion which could be most significant --- depending on the relative size of $\alpha \alpha'$ and $\sqrt{(1 - \alpha^2)(1 - \alpha'^2)}/\sqrt{d}$. Now plugging in the hypothesis \eqref{eq:starred}, we have that $\mathbb{E}_{\alpha' \sim \rtmf}(\alpha')^2(1 - \alpha'^2) \leq 2\iota$, so for all $k$,
\begin{align}
    \mathcal{E}_{\alpha, \alpha', k} &= 
        O_{k}\left((1 - \alpha^2)\iota(\alpha \sqrt{d}^{-(k-1)} + \alpha^{k-2}\sqrt{d}^{-2}) + \alpha(1 - \alpha^2)\sqrt{d}^{-(k+1)}\right) \leq (1 - \alpha^2)\mathcal{R}_{\alpha}
\end{align}
Summing over all odd $k^* - 1 \leq k \leq K - 1$ yields that 
\begin{align}\label{eq:vfull}
    v(\alpha, t) &= \sum_{k = k^* - 1}^{K - 1}q_k \alpha^k(1 - \alpha^2)(1 - r_{t, k+1}) +  (1 - \alpha^2)\mathcal{R}_{\alpha}\\
    &\geq q_{\sigma'}(\alpha)(1 - \alpha^2)(1 - r_t) - (1 - \alpha^2)\mathcal{R}_{\alpha},
\end{align}
where here in the inequality, we used the fact that $r_t = \mathbb{E}_{\alpha' \sim \rtmf}(\alpha')^{k^*} \geq \mathbb{E}_{\alpha' \sim \rtmf}(\alpha')^{k} = r_{t, k}$ for all $k \geq k^*$.
% where we recall that $r_t = \mathbb{E}_{\alpha' \sim \rtmf}(\alpha')^{k^*}$. 
Now for $\alpha \geq \frac{\delta^{3K}}{\sqrt{d}}$, we have
\begin{align}
    v(\alpha, t) &\geq q_{\sigma'}(\alpha)(1 - \alpha^2)(1 - r_t)\\
    &\qquad - O_K\left(\iota(1 - \alpha^2)(\alpha^{k^*-1}\delta^{-3K(k^* - 2)} + \alpha^{k^*-1}\delta^{-6K}) + (1 - \alpha^2)\alpha^{k^* - 1}\delta^{-3K(k^* - 2)}/d\right),
\end{align}
Since by Lemma~\ref{lemma:rt}, we have $(1 - r_t) = \Omega(\delta)$, it follows that 
\begin{align}
    v(\alpha, t) &\geq q_{\sigma'}(\alpha)(1 - \alpha^2)(1 - r_t)(1 - \sqrt{\iota}).
\end{align}
\end{proof}

In the following lemma, we analyze $\frac{d}{d \alpha} v(\alpha, t) $. As will be shown in Section~\ref{sec:SIMpert}, bounding $\frac{d}{d \alpha} v(\alpha, t)$ is useful in bounding $D^{\perp}_t(w)$. The second part of this lemma will also be instrumental in proving local strong convexity (Definition~\ref{def:lsc}). 
\begin{lemma}\label{lemma:selfconcordance}
Let $\er := \sqrt{\mathbb{E}_x (f_{\rtmf}(x) - f^*(x))^2}.$ Suppose \eqref{eq:starred} holds at time $t$ for $\iota \leq \min\left(\Theta_K(1), \delta^{6K^2}\right)$. Then
\begin{align} 
    \frac{d}{d \alpha} v(\alpha, t)  \begin{cases}
        = \frac{k^* - 1}{\alpha} v(\alpha, t) + \mathcal{E}_{\alpha} & \alpha \leq 1;\\
         \leq - \frac{\alpha}{1- \alpha^2}v(\alpha, t) - \Omega_K(\delta) & \alpha \geq 1 - \frac{1}{5K},
    \end{cases}
\end{align}
where $\mathcal{E}_{\alpha} := \Theta_{K}\left(\alpha^{k^*} + \iota(\sqrt{d}^{-(k^* - 2)} + \alpha^{k^* - 4
    }\sqrt{d}^{-2}) + \sqrt{d}^{-(k^* - 2)}\right)$.
\end{lemma}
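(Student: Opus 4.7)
The plan is to prove both cases directly from the explicit polynomial expansion of $v(\alpha,t)$ derived inside the proof of Lemma~\ref{lemma:SIMv}. Specifically, recall that equation \eqref{eq:vfull} shows
\[
v(\alpha,t) \;=\; \sum_{k=k^*-1}^{K-1} q_k\,\alpha^k(1-\alpha^2)(1-r_{t,k+1}) \;+\; (1-\alpha^2)\,\mathcal{R}_\alpha,
\]
where $r_{t,k+1}=\mathbb{E}_{\alpha'\sim\rtmf}(\alpha')^{k+1}$ and $\mathcal{R}_\alpha$ is the small polynomial-in-$d$ remainder collected inside the proof of Lemma~\ref{lemma:SIMv}. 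Since this is a genuine polynomial in $\alpha$ (plus a smooth remainder), both cases boil down to differentiating term by term and checking cancellations.

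For the first case ($\alpha\le 1$), I would differentiate and compute, for each $k$, the residual
\[
\frac{d}{d\alpha}\bigl[q_k\alpha^k(1-\alpha^2)\bigr] - \tfrac{k^*-1}{\alpha}\,q_k\alpha^k(1-\alpha^2) \;=\; q_k\alpha^{k-1}\bigl[(k-k^*+1)(1-\alpha^2) - 2\alpha^2\bigr].
\]
When $k=k^*-1$ this collapses to $-2q_{k^*-1}\alpha^{k^*}$, producing the $\alpha^{k^*}$ term in $\mathcal{E}_\alpha$. Higher-order odd $k>k^*-1$ contribute strictly higher powers, also absorbed into $\mathcal{E}_\alpha$. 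For the remainder piece, a direct differentiation of $(1-\alpha^2)\mathcal{R}_\alpha$ plus the subtraction of $\tfrac{k^*-1}{\alpha}(1-\alpha^2)\mathcal{R}_\alpha$ produces precisely the terms $\iota\sqrt{d}^{-(k^*-2)}$, $\iota\alpha^{k^*-4}\sqrt{d}^{-2}$, and $\sqrt{d}^{-(k^*-2)}$ appearing in $\mathcal{E}_\alpha$, using the explicit form $\mathcal{R}_\alpha=O_K(\iota(\alpha\sqrt{d}^{-(k^*-2)}+\alpha^{k^*-3}\sqrt{d}^{-2})+\alpha\sqrt{d}^{-k^*})$ valid for $k^*\ge 4$.

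For the second case ($\alpha\ge 1-\tfrac{1}{5K}$), I would instead compute
\[
\frac{d}{d\alpha}\bigl[\alpha^k(1-\alpha^2)\bigr]+\tfrac{\alpha}{1-\alpha^2}\,\alpha^k(1-\alpha^2) \;=\; k\alpha^{k-1}(1-\alpha^2)-\alpha^{k+1},
\]
and exploit the fact that for $\alpha\ge 1-\tfrac{1}{5K}$ one has $(k+1)\alpha^2-k\ge 3/5$ whenever $k\le K-1$, so each bracket is bounded above by $-\tfrac{3}{5}\alpha^{k-1}$. Summing with the coefficients $q_k(1-r_{t,k+1})\ge 0$, the $k=k^*-1$ term alone contributes
\[
-\tfrac{3}{5}\,q_{k^*-1}(1-r_t)\,\alpha^{k^*-2} \;\le\; -\Omega_K(\delta),
\]
because $q_{k^*-1}\ge k^*\cdot k^*!/\csim^2$ and $1-r_t=\Omega_K(\delta)$ by Lemma~\ref{lemma:rt} (which I would invoke as stated). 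The remainder-derived terms $-\alpha\mathcal{R}_\alpha+(1-\alpha^2)\mathcal{R}_\alpha'$ are at most $O_K(\iota\sqrt{d}^{-(k^*-2)}+\sqrt{d}^{-(k^*-2)})$, and are negligible compared to $\Omega_K(\delta)$ once $d$ is large in terms of $\delta,K$, as assumed.

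The main obstacle is bookkeeping: producing $\mathcal{E}_\alpha$ in the exact form claimed requires carefully tracking which terms survive the subtraction $\tfrac{k^*-1}{\alpha}v(\alpha,t)$, in particular verifying that the contribution of the remainder $(1-\alpha^2)\mathcal{R}_\alpha$ after differentiation does not accidentally produce an $\alpha^{-1}$ singularity (it does not, since $\mathcal{R}_\alpha$ vanishes at $\alpha=0$ to order $\alpha$), and that the $\iota$-free error $\sqrt{d}^{-(k^*-2)}$ indeed matches what the last term $\alpha\sqrt{d}^{-k^*}$ of $\mathcal{R}_\alpha$ produces after dividing by $\alpha$ and differentiating. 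A secondary subtlety is ensuring the computation in the second case is uniform in $k\in\{k^*-1,\dots,K-1\}$; this is handled by the uniform lower bound $(k+1)\alpha^2-k\ge 3/5$ together with the non-negativity $q_k\ge 0$ for all admissible $k$.
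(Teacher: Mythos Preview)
Your approach is essentially the same as the paper's: differentiate the polynomial expansion of $v(\alpha,t)$ term by term, isolate the leading $q_{k^*-1}\alpha^{k^*-2}$ contribution, and compare against $\tfrac{k^*-1}{\alpha}v$ (first case) or $-\tfrac{\alpha}{1-\alpha^2}v$ (second case). The one organizational difference is that the paper differentiates the original integral expression \eqref{eq:velocity1} directly---re-doing the binomial expansion to obtain an explicit error $\mathcal{E}_{\alpha,k}$ for the derivative---whereas you differentiate the already-simplified form \eqref{eq:vfull} and must therefore differentiate the remainder $(1-\alpha^2)\mathcal{R}_\alpha$, which in Lemma~\ref{lemma:SIMv} is stated only as an $O_K$ bound. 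Your proposal correctly flags this as the main bookkeeping obstacle; to make it rigorous you would need to revisit the explicit binomial terms inside the proof of Lemma~\ref{lemma:SIMv} (they are genuine polynomials in $\alpha$, so differentiation is legitimate), which is exactly what the paper does by returning to \eqref{eq:velocity1}. For the second case, your inequality $(k+1)\alpha^2-k\ge 3/5$ is the same mechanism as the paper's $k(1-\tfrac{k+2}{k}\alpha^2)\le -1.5\alpha^2$, just rearranged, and both rely on the non-negativity of all $q_k$.
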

\begin{proof}
First we compute $\frac{d}{d \alpha} v(\alpha, t)$. Fix a coefficient $k^* - 1 \leq k \leq K - 1$ of the polynomial $q_{\sigma'}$, and consider all terms in the the derivative of Equation~\eqref{eq:velocity1} arising from that order term:
\begin{align}\label{eq:dvda}
    &q_k k \alpha^{k-1}\left(1 - \frac{k+2}{k}\alpha^2\right)\\
    &\qquad - q_k\sum_{j = 0}^{k} \binom{k}{j} j(\alpha \alpha')^{j-1} \mathbb{E}_{y' \sim \mathbb{S}^{d-2}_{\sqrt{1 - \alpha'^2}}} (y'^\top w)^{k-j}\left(\alpha'\left(1 - \frac{j+2}{j}\alpha^2\right) + \frac{j+1}{j}\alpha  y'^\top w\right) \\
    &= k q_k\alpha^{k-1}\left(1 - \frac{k+2}{k}\alpha^2\right)(1 - r_{t, k+1}) + \mathcal{E}_{\alpha, k}, 
\end{align}
where $\mathcal{E}_{\alpha, k} = \Theta_{k}(\iota(\sqrt{d}^{-(k - 1)} + \alpha^{k - 3
    }\sqrt{d}^{-2}) + \sqrt{d}^{-(k + 1)})$, and $r_{t, k} = \mathbb{E}_{\alpha' \sim \rtmf}(\alpha')^k$.
    
Here we have used the same computations as in the proof of Lemma~\ref{lemma:SIMv}. Summing over all odd $k^* - 1 \leq k \leq K - 1$ yields
\begin{align}\label{eq:dfull}
    \frac{d}{d \alpha} v(\alpha, t) &= \sum_{k = k^* - 1}^K q_k k \alpha^{k-1}\left(1 - \frac{k+2}{k}\alpha^2\right)(1 - r_{t, k+1}) + \mathcal{E}_{\alpha, k}\\
    &= (k^* - 1)q_{k^* - 1} (1 - r_t) \alpha^{k^*-2} + \Theta_{K}\left(\alpha^{k^*} + \iota(\sqrt{d}^{-(k^* - 2)} + \alpha^{k^* - 4
    }\sqrt{d}^{-2}) + \sqrt{d}^{-(k^* - 2)}\right)
\end{align}

Combining Lemma~\ref{lemma:SIMv} with the previous equation, we obtain
\begin{align}
    \frac{d}{d \alpha} v(\alpha, t) = \frac{k^* - 1}{\alpha}v(\alpha, t) + \Theta_{K}\left(\alpha^{k^*} + \iota(\sqrt{d}^{-(k^* - 2)} + \alpha^{k^* - 4
    }\sqrt{d}^{-2}) + \sqrt{d}^{-(k^* - 2)}\right).
\end{align}
This yields the first case in the conclusion of the lemma.

For the case that $\alpha \geq 1 - \frac{1}{5K} \geq \sqrt{\frac{k}{k + 0.5}}$ for all $k \leq K$, we have
\begin{align}
    k\left(1 - \frac{k + 2}{k} \alpha^2\right) \leq - 1.5\alpha^2
\end{align}
We will compare the terms with coefficient $q_k$ in the first line of Equation~\eqref{eq:dfull} and the first line of Equation~\eqref{eq:vfull}. Let 
\begin{align}
    v_k(\alpha, t) := q_k \alpha^k(1 - \alpha^2)(1 - r_{t, k+1}),
\end{align}
such that Equation~\eqref{eq:vfull} gives
\begin{align}
    v(\alpha, t) = \sum_{k = k^* - 1}^{K - 1}v_{k}(\alpha, t) + (1-\alpha^2)\mathcal{R}_{\alpha},
\end{align}
where $\mathcal{R}_{\alpha}$ is as in Lemma~\ref{lemma:SIMv}. 
Thus the first line of Equation~\eqref{eq:dfull} gives
\begin{align}
    \frac{d}{d\alpha}v(\alpha, t) &= \sum_k  \left(v_k(\alpha, t)\right) \frac{1}{
    \alpha(1 - \alpha^2)} k\left(1 - \frac{k + 2}{k\alpha^2}\right) + \mathcal{E}_{\alpha, k}\\
    &\leq \sum_k \left(v_k(\alpha, t)\right) \frac{-1.5\alpha}{
    (1 - \alpha^2)} + \mathcal{E}_{\alpha, k}\\
    &= \frac{-1.5\alpha}{
    (1 - \alpha^2)}\left(v(\alpha, t) - (1 - \alpha^2)\mathcal{R}_{\alpha}\right) + \sum_k \mathcal{E}_{\alpha, k}\\
    &\leq -\frac{\alpha}{1- \alpha^2}v(\alpha, t) - \Omega_K(\delta).
\end{align}
Here in the first inequality, we used the fact that all the $q_k$ (and hence all the $v_k(\alpha, t)$) are non-negative. Indeed, recall that $q_k$ are the coefficients of the polynomial $q_{\sigma'}(z) := \sum_{k=k^* -1}^{K-1}c^2_{k+1}(k + 1)(k+1)!z^k$, where $\sum_k c_k \hek(z)$ is the Hermite decomposition of $\sigma$. In the last inequality, we have used the bounds on $\mathcal{R}_{\alpha}$ and $\mathcal{E}_{\alpha, k}$, along with the fact from Lemma~\ref{lemma:SIMv} that $v(\alpha, t) = \Omega_K((1 - \alpha^2)\delta)$. This yields the desired conclusion.

% that we have for any $k \leq K$ that 
% \begin{align}
%  k\left(1 - \frac{k + 2}{k} \alpha^2\right) \leq -2,   
% \end{align}
% and thus since $q_{k^* - 1} \geq c_{k^*}^2 \geq 1/\creg^2$,
% \begin{align}
%     \frac{d v(\alpha, t)}{d \alpha} \leq - \frac{1 - r_t}{\creg^2}.
% \end{align}
\end{proof}

A key part of both our MF convergence analysis, and the perturbation analysis is understanding the stability of the $\alpha_t(w)$ with respect to small changes in $\alpha_s(w)$. The following lemma controls this derivative. Define 
\begin{align}\label{eq:lts}
    \ell_{t, s}(w) := \frac{d \alpha_{t, s}(\beta)}{d \beta} \bigg|_{\beta = \alpha_s(w)}
\end{align} 
\begin{lemma}\label{claim:lst2}
% Suppose $\alpha_r(t)$ is sufficiently small if terms of the loss $\mathbb{E}_x(f_{\rtmf}(x) - f^*(x))^2$ for all $s \leq r \leq t$. 
Suppose that for all $s \leq t$, we have $\sqrt{\mathbb{E}_x(f_{\rsmf}(x) - f^*(x))^2} \geq \delta$. Suppose $\iota \leq \min\left(\Theta_K(1), \delta^{6K^2}\right)$, and $t \leq \frac{\sqrt{d}^{k^* - 2}}{\iota}$. Finally suppose \eqref{eq:starred} holds for all $s \leq t$.  Then for and $\tau \leq 1/2$ and any $w$ for which $\alpha_t(w) \leq 1 - \tau$, we have
\begin{align}
    \ell_{t, s}(w)  := \frac{d \alpha_{t, s}(\beta)}{d \beta} \bigg|_{\beta = \alpha_s(w)} = \left(\frac{\alpha_t(w)}{\alpha_s(w)}\right)^{k^*-1}\exp\left(O_K\left(\frac{\log(1/\tau)}{\delta}\right)\right).
\end{align}
\end{lemma}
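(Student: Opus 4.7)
The plan is to view $\ell_{t,s}(w)$ as the solution of a scalar linearized ODE along the characteristic. Differentiating $\alpha_{t,s}(\beta)$ in $\beta$ gives $\frac{d}{dt}\ell_{t,s}(w) = \frac{d v}{d \alpha}(\alpha_t(w), t) \, \ell_{t,s}(w)$ with $\ell_{s,s}(w)=1$. Hence
\begin{equation}
\log \ell_{t,s}(w) \;=\; \int_s^t \frac{d v}{d \alpha}(\alpha_r(w), r)\, dr.
\end{equation}
Split this integral at $t^* := \sup\{r \in [s,t] : \alpha_r(w) \leq 1 - \tfrac{1}{5K}\}$. Because $v(\alpha,r) \ge 0$ on the relevant range (Lemma~\ref{lemma:SIMv}), the curve $r \mapsto \alpha_r(w)$ is monotonically increasing, so the interval $[s,t^*]$ is where $\alpha_r \leq 1-\tfrac{1}{5K}$ and $[t^*,t]$ is where $\alpha_r \in [1-\tfrac{1}{5K}, 1-\tau]$.

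On $[s,t^*]$, invoke the self-concordance identity of Lemma~\ref{lemma:selfconcordance}: $\tfrac{d v}{d \alpha} = \tfrac{k^*-1}{\alpha} v(\alpha,r) + \mathcal{E}_\alpha$. Using $\tfrac{d}{dr}\log \alpha_r = v(\alpha_r,r)/\alpha_r$, the main term integrates exactly to $(k^*-1)\log(\alpha_{t^*}/\alpha_s)$, giving the principal factor in the conclusion. On $[t^*,t]$, apply the second case of Lemma~\ref{lemma:selfconcordance}: $\tfrac{d v}{d \alpha} \leq -\tfrac{\alpha v}{1-\alpha^2}$ (the additional $-\Omega_K(\delta)$ term only helps). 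Using $\tfrac{d}{dr}\log(1-\alpha_r^2) = -2\alpha_r v(\alpha_r,r)/(1-\alpha_r^2)$, the integral is bounded by $\tfrac{1}{2}\log\tfrac{1-\alpha_{t^*}^2}{1-\alpha_t^2} \leq \tfrac{1}{2}\log(1/\tau)$ since $1-\alpha_t^2 \geq \tau$. Finally, since $\alpha_{t^*}, \alpha_t \in [1-\tfrac{1}{5K},1]$, the discrepancy $(k^*-1)[\log(\alpha_t/\alpha_s) - \log(\alpha_{t^*}/\alpha_s)] = O_K(1)$, which is absorbed into the exponential error.

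The main technical step is bounding the error contribution $\int_s^{t^*} \mathcal{E}_{\alpha_r}\, dr$ by $O_K(\log(1/\tau)/\delta)$. Perform the change of variable $dr = d\alpha / v(\alpha, r)$, and apply the velocity lower bound $v(\alpha,r) \geq \Omega_K\!\big(q_{\sigma'}(\alpha)(1-\alpha^2)\delta\big) \geq \Omega_K(\alpha^{k^*-1}(1-\alpha^2)\delta)$ from Lemma~\ref{lemma:SIMv} (valid once $\alpha \geq \delta^{3K}/\sqrt{d}$). The dominant term $\alpha^{k^*}$ in $\mathcal{E}_\alpha$ yields
\begin{equation}
\int_{\alpha_s}^{\alpha_{t^*}} \frac{\alpha^{k^*}}{\Omega_K(\alpha^{k^*-1}(1-\alpha^2)\delta)}\, d\alpha \;\le\; O_K\!\left(\frac{1}{\delta}\right)\int \frac{\alpha}{1-\alpha^2}\,d\alpha \;=\; O_K\!\left(\frac{\log(1/\tau)}{\delta}\right),
\end{equation}
matching the target bound. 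The remaining summands of $\mathcal{E}_\alpha$ carry either a factor $\iota$ or an overall $\sqrt{d}^{-(k^*-2)}$; pairing these with the trivial bound $t-s \leq \sqrt{d}^{k^*-2}/\iota$ and with the change-of-variable bound for small $\alpha$ gives $O_K(1/\delta)$ after using $\iota \leq \delta^{6K^2}$.

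The anticipated hard part is the small-$\alpha$ regime $\alpha_r < \delta^{3K}/\sqrt{d}$, where the clean velocity lower bound from Lemma~\ref{lemma:SIMv} degrades. Here I would isolate the initial sub-interval $[s, s^{\dagger}]$ with $s^{\dagger} := \inf\{r \geq s : \alpha_r \geq \delta^{3K}/\sqrt{d}\}$ and handle it separately: over this sub-interval, the self-concordance equality still gives $\ell_{s^{\dagger}, s}(w) = (\alpha_{s^{\dagger}}/\alpha_s)^{k^*-1}\exp\big(\int_s^{s^{\dagger}} \mathcal{E}_{\alpha_r} dr\big)$, and since $\alpha_r$ is bounded by $\delta^{3K}/\sqrt{d}$ throughout, each summand of $\mathcal{E}_\alpha$ is tiny so the integrand is at most $O_K(\sqrt{d}^{-(k^*-2)})$; combining with $s^{\dagger}-s \leq t \leq \sqrt{d}^{k^*-2}/\iota$ and the bound $\iota \leq \delta^{6K^2}$ keeps the error within the stated tolerance. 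Concatenating with the $[s^{\dagger}, t]$ analysis and collecting all error terms completes the proof.
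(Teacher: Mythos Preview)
Your overall strategy matches the paper's: linearize to get $\log \ell_{t,s}(w) = \int_s^t \partial_\alpha v(\alpha_r, r)\, dr$, invoke the self-concordance identity $\partial_\alpha v = \tfrac{k^*-1}{\alpha} v + \mathcal{E}_\alpha$ so that the main term integrates exactly to $(k^*-1)\log(\alpha_t/\alpha_s)$, and then bound $\int \mathcal{E}_{\alpha_r}\, dr$ regime by regime. The paper's split points differ slightly (at $\alpha=1/\sqrt d$ and $\alpha=1/2$ rather than your $\delta^{3K}/\sqrt d$ and $1-\tfrac{1}{5K}$), but that is cosmetic.

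The one genuine gap is your handling of $[t^*, t]$. You invoke the \emph{second} case of Lemma~\ref{lemma:selfconcordance}, which is the one-sided inequality $\partial_\alpha v \leq -\tfrac{\alpha v}{1-\alpha^2} - \Omega_K(\delta)$. Integrating this yields $\int_{t^*}^t \partial_\alpha v\, dr \leq \tfrac{1}{2}\log\tfrac{1-\alpha_t^2}{1-\alpha_{t^*}^2}$ (note the sign: this is a negative number), so you only control $\log \ell_{t,t^*}$ from above. But the conclusion of the lemma is two-sided --- it is used as a \emph{lower} bound on $\ell$ in Claim~\ref{claim:backwards} --- so you also need $\log \ell_{t,t^*} \geq -O_K(\log(1/\tau)/\delta)$, and the second-case inequality gives no such control (indeed the extra $-\Omega_K(\delta)$ integrates to $-\Omega_K(\delta)\,|t-t^*|$, which could be arbitrarily negative unless you separately bound $|t-t^*|$). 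The paper instead keeps using the \emph{first} case on this interval, bounds $|\mathcal{E}_\alpha| = O_K(1)$ for $\alpha$ near $1$, and bounds the interval length by noting that $1-\alpha_r$ decays exponentially at rate $\Omega_K(\delta)$, so $|t-t^*| = O_K(\log(1/\tau)/\delta)$. Once you add that length bound, your argument goes through; the exact integration via $\log(1-\alpha^2)$ is then unnecessary.

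A minor second point: in the small-$\alpha$ regime $[s, s^{\dagger}]$ you claim $|\mathcal{E}_\alpha| = O_K(d^{-(k^*-2)/2})$ and multiply by $t \leq d^{(k^*-2)/2}/\iota$, giving $O_K(1/\iota)$, which is too large. The paper's bookkeeping retains the factor of $\iota$ in the dominant small-$\alpha$ term of $\mathcal{E}_\alpha$ (it is $\iota\, d^{-(k^*-2)/2}$, not $d^{-(k^*-2)/2}$), so the product is $O_K(1)$.
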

\begin{proof}
Observe that $\ell_{t, s}(w)$ satisfies the differential equation 
\begin{align}\label{eq:l}
    \frac{d}{dt}\ell_{t, s}(w) &= \left(\frac{d}{d \alpha_t(w)} 
    v(\alpha_t(w), t) \right)\ell_{t, s}(w);\\
    \ell_{s, s}(w) &= 1.
\end{align}

From Lemma~\ref{lemma:selfconcordance}, we have that
\begin{align}
    \frac{d}{dt}\ell_{t, s}(w) &= \left((k^*-1)\frac{v(\alpha_t(w), t)}{\alpha_t(w)} + \mathcal{E}_{\alpha}\right)\ell_{t, s}(w);\\
    \frac{d}{dt}\alpha_t(w) &= \frac{v(\alpha_t(w), t)}{\alpha_t(w)} \alpha_t(w),
\end{align}
where we recall that 
\begin{align}
   \mathcal{E}_{\alpha} =  O_K\left(\alpha^{k^*} + \sqrt{d}^{-k^*} + \iota\left(\sqrt{d}^{-(k^* - 2)} + \alpha_t^{(k^* - 4)}\sqrt{d}^{-2}\right)\right)
\end{align}
Equivalently, taking logs, we have
\begin{align}
     \frac{d}{dt}\frac{\log(\ell_{t, s}(w))}{k^* - 1} &= \frac{v(\alpha_t(w), t)}{\alpha_t(w)} + \mathcal{E}_{\alpha};\\
    \frac{d}{dt}\log(\alpha_t(w)) &= \frac{v(\alpha_t(w), t)}{\alpha_t(w)}.   
\end{align}

Let us split up the time interval into (at most 3) intervals: $[s, t_1], [t_1, t_2], [t_2, t]$, where $t_1$ is first moment at which $\alpha_{t_1} \geq \frac{1}{\sqrt{d}}$, and $\alpha_{t_2}$ is the first moment at which $\alpha_{t_2} = 0.5$. In the first interval, we have $ \mathcal{E}_{\alpha} \leq O_K(\iota\sqrt{d}^{(k^* - 2)})$. In the second interval, by Lemma~\ref{lemma:SIMv}, we have $\mathcal{E}_{\alpha} \leq O_K\left(\sqrt{\iota}\frac{v(\alpha, t)}{\alpha_t^3} \sqrt{d}^{-2} + v(\alpha, t)\alpha/\delta\right)$. 

For the first interval, since $t \leq \frac{\sqrt{d}^{k^* - 2}}{\iota}$, we have
\begin{align}
    \int_{r = s}^{t_1} \mathcal{E}_{{\alpha_r}} dr \leq O_K(\iota\sqrt{d}^{-(k^* - 2)})(t_1 - s) \leq O_K(1).
\end{align}
For the second interval, using $u$-substitution, we have
\begin{align}
    \int_{r = t_1}^{t_2} \mathcal{E}_{{\alpha_r}} dr &\leq \frac{O_K(\sqrt{\iota})}{d}\int_{r = t_1}^{t_2} \frac{v(\alpha_r, r)}{(\alpha_r)^3} dr + \int_{r = t_1}^{t_2} O_K(v(\alpha_r, r)\alpha_r/\delta)  dr\\
    &= \frac{O_K(\sqrt{\iota})}{d}\int_{\alpha = \alpha_{t_1}}^{\alpha_{t_2}} \frac{1}{\alpha^3} d\alpha + \int_{\alpha = \alpha_{t_1}}^{\alpha_{t_2}} O_K(\alpha^2/\delta) d\alpha + O_K(\alpha_{t_2}^2)\\
    &= \frac{O_K(\sqrt{\iota})}{d}\left(\frac{1}{2\alpha_{t_1}^2} - \frac{1}{2\alpha_{t_2}^2}\right) \leq O_K(1/\delta).
\end{align}
For the third interval, observe from Lemma~\ref{lemma:SIMv} that during the duration of this interval, $1 - \alpha_r(w)$ decays exponentially with rate $O_K(\delta)$. Thus, the length of this interval is at most $O_K\left(\frac{\log(1/\tau)}{\delta^2}\right)$, so 
\begin{align}
    \int_{r = t_2}^{t} \mathcal{E}_{{\alpha_r}} dr \leq O_K\left(\frac{\log(1/\tau)}{\delta}\right). 
\end{align}

% Observe that if $\alpha_t \geq \frac{1}{\sqrt{d}}$, then by Claim~\ref{claim:siminduction}, we have $\sqrt{\alpha}^{(k^* - 4)}\sqrt{d}^{-2} \leq \sqrt{\alpha}^{(k^* - 2)} = O_K(v(\alpha, t))$. If $\alpha \leq \frac{1}{\sqrt{d}}$, then $\sqrt{\alpha}^{(k^* - 4)}\sqrt{d}^{-2} \leq \sqrt{d}^{-(k^* -2)}$. Thus in either case, we have
% \begin{align}
%     \frac{d}{d \alpha} v(\alpha, t) \leq \frac{k^* - 1}{\alpha}v(\alpha, t) + O_K\left(\sqrt{d}^{-(k^* - 2)} + v(\alpha, t)\right).
% \end{align}

Thus integrating, we obtain
\begin{align}
    \frac{\log(\ell_{t, s}(w)) - \log(\ell_{s, s}(w))}{k^* - 1} = \int_{r = s}^t \frac{v(\alpha_r(w), t)}{\alpha_r(w)}dr + O_K(\log(1/\tau)/\delta).
\end{align}
Plugging in the integration of the differential equation for $\log(\alpha_t(w))$ yields
\begin{align}
    \frac{\log(\ell_{t, s}(w))}{k^* - 1} = \log\left(\frac{\alpha_t(w)}{\alpha_s(w)}\right) + O_K(\log(1/\tau)/\delta).
\end{align}
Multiplying both sides by $k^* - 1$ and exponentiating yields
\begin{align}
    \ell_{t, s}(w) = \left(\frac{\alpha_t(w)}{\alpha_s(w)}\right)^{k^*-1}\exp\left(O_K\left(\frac{\log(1/\tau)}{\delta}\right)\right)
\end{align}
as desired.
\end{proof}

\begin{lemma}\label{lemma:rt}
For $d$ large enough in terms of $\delta = \sqrt{\mathbb{E}_x(f_{\rtmf}(x) - f^*(x))^2}$, we have 
\begin{align}
    1 - \mathbb{E}_{\alpha' \sim \rtmf} (\alpha')^{k^*} \geq \Omega_{K, \creg}\left(\delta\right).
\end{align}
\end{lemma}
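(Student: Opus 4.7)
The plan is to upper-bound $\delta^2$ by a constant multiple of $(1 - r_t)^2$, modulo an additive $O_{K, \creg}(d^{-1})$ error that vanishes for $d \gtrsim 1/\delta^2$, and then rearrange to obtain the claim.

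First, I would expand $\delta^2$ using the Hermite orthogonality (for unit vectors $w, v$ and $x \sim \mathcal{N}(0, I_d)$, $\mathbb{E}_x \He_k(w^\top x)\He_j(v^\top x) = k!(w^\top v)^k$ when $j = k$ and $0$ otherwise). Since $\sigma$ is even, only even $k$ contribute, and
\[
\delta^2 \;=\; \sum_{k = k^*}^K c_k^2\, k!\, \bigl[\mathbb{E}_{w, v \sim \rtmf}(w^\top v)^k \;-\; 2\, r_{t, k} \;+\; 1\bigr], \qquad r_{t, k} := \mathbb{E}_{w \sim \rtmf}(w^\top w^*)^k.
\]
Since $k$ is even, $(w^\top w^*)^k = \alpha(w)^k$, so $r_{t,k} = \mathbb{E}_{\alpha \sim \rtmf}\alpha^k$ and in particular $r_{t, k^*} = r_t$.

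Next, I would invoke the $SO(d-1)$-symmetry of $\rtmf$ around the $w^*$-axis, inherited from the uniform initialization and the rotational invariance of both $\md_x$ and the target $f^* = \sigma(\cdot^\top w^*)$. Decomposing $w = (w^\top w^*) w^* + \sqrt{1 - (w^\top w^*)^2}\, u(w)$ with $u(w) \in S^{d-2}$ uniform and independent of $w^\top w^*$ (and similarly for $v$), and expanding $(w^\top v)^k$ binomially, I use that $\mathbb{E}(u(w)^\top u(v))^j = 0$ for $j$ odd and $O(d^{-j/2})$ for $j$ even, to obtain $\mathbb{E}_{w, v \sim \rtmf}(w^\top v)^k = r_{t, k}^2 + O_k(d^{-1})$. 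Substituting yields
\[
\delta^2 \;=\; \sum_{k = k^*}^K c_k^2\, k!\, (1 - r_{t, k})^2 \;+\; O_{K, \creg}(d^{-1}).
\]

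The key elementary step is a telescoping inequality: for $\alpha \in [0, 1]$ and $k \geq k^*$,
\[
1 - \alpha^k \;=\; (1 - \alpha)\sum_{j = 0}^{k - 1}\alpha^j \;\leq\; k(1 - \alpha) \;\leq\; k(1 - \alpha^{k^*}),
\]
where the last step uses $1 - \alpha^{k^*} = (1-\alpha)\sum_{j=0}^{k^*-1}\alpha^j \geq 1 - \alpha$. Taking expectation gives $1 - r_{t, k} \leq k(1 - r_t)$ for every $k$ appearing in the sum. Plugging back in,
\[
\delta^2 \;\leq\; K^2\, q_\sigma(1)\, (1 - r_t)^2 \;+\; O_{K, \creg}(d^{-1}),
\]
where $q_\sigma(1) = \sum_k c_k^2\, k! = \mathbb{E}_{X \sim \mathcal{N}(0,1)}\sigma(X)^2 = O_{K, \creg}(1)$ by Assumption~\ref{assm:sigma}. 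Choosing $d$ large enough that the error term is at most $\delta^2/2$ (i.e., $d \gtrsim_{K, \creg} \delta^{-2}$) and rearranging gives $1 - r_t \geq \delta/(K\sqrt{2q_\sigma(1)}) = \Omega_{K, \creg}(\delta)$.

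The only mildly technical step is the binomial estimate $\mathbb{E}(w^\top v)^k = r_{t, k}^2 + O_k(d^{-1})$; the rest is algebra together with the rotational symmetry of $\rtmf$ and non-negativity of the $c_k$'s, both of which are immediate from the setup. No serious obstacle is anticipated.
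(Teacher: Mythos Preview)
Your proof is correct and follows essentially the same route as the paper: expand $\delta^2$ via Hermite orthogonality, reduce the cross term $\mathbb{E}_{w,v}(w^\top v)^k$ to $r_{t,k}^2 + o_d(1)$ using the rotational symmetry of $\rtmf$, and then bound each $1 - r_{t,k}$ by $O_K(1 - r_t)$. The only differences are cosmetic---the paper uses Minkowski's inequality for the cross term (getting error $O_K(d^{-1/2})$ instead of your $O_K(d^{-1})$ from the binomial expansion) and obtains $1 - r_{t,k} \leq (k/k^*)(1 - r_t)$ via Jensen plus Bernoulli rather than your pointwise telescoping $1 - \alpha^k \leq k(1 - \alpha^{k^*})$.
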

\begin{proof}
Observe that 
\begin{align}
    \mathbb{E}_x(f^*(x))^2 = \mathbb{E}_x \sigma({w^*}^\top x)\sigma({w^*}^\top x) = q_{\sigma}(1).
\end{align}
\begin{align}
    \mathbb{E}_xf_{\rtmf}(x)f^*(x) = \mathbb{E}_x \mathbb{E}_{w' \sim \rtmf}\sigma(w'^\top x)\sigma({w^*}^\top x) = \mathbb{E}_{\alpha' \sim \rtmf}q_{\sigma}(\alpha').
\end{align}
Further 
\begin{align}
    \mathbb{E}_x(f_{\rtmf}(x))^2 = \mathbb{E}_x \mathbb{E}_{w, w' \sim \rtmf}\sigma(w^\top x)\sigma({w'}^\top x) = \mathbb{E}_{w, w' \sim \rtmf}q_{\sigma}(w^\top w').
\end{align}
Now for even $k$, we have
\begin{align}
    \mathbb{E}_{w, w' \sim \rtmf}(w^\top w')^k =  \mathbb{E}_{\alpha, \alpha' \sim \rtmf}\mathbb{E}_{\zeta}(\alpha \alpha' + \sqrt{(1 - \alpha^2)(1 - \alpha')^2}\zeta)^k,
\end{align}
where $\zeta$ is $\frac{1}{\sqrt{d}}$-subGaussian. Thus by Minowski's inequality, we have 
\begin{align}
    \mathbb{E}_{w, w' \sim \rtmf}(w^\top w')^k &\leq \left(\left(\mathbb{E}_{\alpha, \alpha' \sim \rtmf}(\alpha \alpha')^k\right)^{1/k} + 
    \frac{O_K(1)}{\sqrt{d}}\right)^k \\
    &\leq \mathbb{E}_{\alpha, \alpha' \sim \rtmf}(\alpha \alpha')^k + \frac{O_K(1)}{\sqrt{d}}\\
    &= \left(\mathbb{E}_{\alpha \sim  \rtmf}\alpha^k\right)^2 + \frac{O_K(1)}{\sqrt{d}}.
\end{align}

It follows that with $q_{\sigma}(z) = \sum_k q_k z^k$, we have
\begin{align}
    \mathbb{E}_x(f_{\rtmf}(x) - f^*(x))^2 &= \mathbb{E}_x (f^*(x))^2 + \mathbb{E}_x (f_{\rtmf}(x))^2 -2 \mathbb{E}f^*(x)f_{\rtmf}(x)\\
    &= \sum_{k = k^*}^K q_k \left(1^k + \left(\mathbb{E}_{\alpha \sim  \rtmf}\alpha^k\right)^2 - 2\mathbb{E}_{\alpha \sim  \rtmf}\alpha^k\right) + \frac{O_K(1)}{\sqrt{d}}\\
    &= \sum_{k = k^*}^K q_k \left(1 - \mathbb{E}_{\alpha \sim  \rtmf}\alpha^k\right)^2  + \frac{O_K(1)}{\sqrt{d}}
\end{align}

% and that
% \begin{align}
%     \mathbb{E}_x (f_{\rtmf}(x))^2 &= \mathbb{E}_x (\mathbb{E}_{w' \sim \rtmf}\sigma(x^\top w'))^2 \\
%     &\leq \mathbb{E}_x \mathbb{E}_{w' \sim \rtmf}(\sigma(x^\top w'))^2\\
%     &=  \mathbb{E}_{w' \sim \rtmf}\mathbb{E}_x (\sigma(x^\top w'))^2 \\
%     &= \mathbb{E}_x (f^*(x))^2.
% \end{align}
% Thus
% \begin{align}
% \mathbb{E}_x(f_{\rtmf}(x) - f^*(x))^2 &= \mathbb{E}_x (f^*(x))^2 + \mathbb{E}_x (f_{\rtmf}(x))^2 -2 \mathbb{E}f^*(x)f_{\rtmf}(x) \\
% &\leq 2\left(q_{\sigma}(1) -  \mathbb{E}_{\alpha' \sim \rtmf}q_{\sigma}(\alpha')\right).
% \end{align}
Now for all $k > k^*$, with $1 - s := r := \mathbb{E}_{\alpha \sim \rtmf} (\alpha)^{k^*}$, using \eqref{eq:starred}, we have
\begin{align}
    r^{\frac{k}{k^*}} \leq \mathbb{E}_{\alpha' \sim \rtmf} (\alpha')^k, % \leq r,
\end{align}
so 
\begin{align}
    \left(1 - \mathbb{E}_{\alpha \sim  \rtmf}\alpha^k\right)^2 &\leq \left(1 - r^{k/k^*}\right)^2 = \left(1 - (1 - s)^{k/k^*}\right)^2\\
    &\leq \left(1 - (1 - sk/k^*)\right) = O_K(s^2).
\end{align}
So  
\begin{align}
    \mathbb{E}_x(f_{\rtmf}(x) - f^*(x))^2 &= \mathbb{E}_x (f^*(x))^2 = O_{K, \creg}(1 - \mathbb{E}_{\alpha \sim \rtmf} (\alpha)^{k^*})  + \frac{O_K(1)}{\sqrt{d}},
\end{align}
and thus for $d$ large enough in terms of $\delta = \sqrt{\mathbb{E}_x(f_{\rtmf}(x) - f^*(x))^2}$, we have $1 - \mathbb{E}_{\alpha \sim \rtmf} (\alpha)^{k^*} = O_{K, \creg}(\delta)$ as desired.

% Thus using the upper bound on $\mathbb{E}_{\alpha' \sim \rtmf} (\alpha')^k$, we have
% \begin{align}
%     \mathbb{E}_x(f_{\rtmf}(x))^2 = \mathbb{E}_{w, w' \sim \rtmf}q_{\sigma}(w^\top w') = \mathbb{E}_{\alpha, \alpha' \sim \rtmf}q_{\sigma}(\alpha^\top \alpha') + \frac{O_K(1)}{\sqrt{d}} \leq r^2 q_{\sigma}(1) + \frac{O_K(1)}{\sqrt{d}}.
% \end{align}

% Now using the lower bound on $\mathbb{E}_{\alpha' \sim \rtmf} (\alpha')^k$, we have
% \begin{align}
%     \mathbb{E}_{\alpha' \sim \rtmf}q_{\sigma}(\alpha') \geq rq_{\sigma}(1) - O_K(1).
% \end{align}
% Putting this together yields
% \begin{align}
%     \mathbb{E}_x(f_{\rtmf}(x) - f^*(x))^2 &= \mathbb{E}_x (f^*(x))^2 + \mathbb{E}_x (f_{\rtmf}(x))^2 -2 \mathbb{E}f^*(x)f_{\rtmf}(x)\\
%     &\geq q_{\sigma}(1)\left(1 + r^2 - 2r\right) + O_K\left(\iota + \frac{1}{\sqrt{d}}\right) = O_K((1 - r))^2.
% \end{align}
% Here in the last equality we used the assumption that $\iota$ is small enough in terms of $\delta$, $d$ is large enough, and the loss is at most $\delta^2$.

% This yields the lemma.

% Letting $s := 1-r$, we have
% \begin{align}
%     q_{\sigma}(1) -  \mathbb{E}_{\alpha' \sim \rtmf}q_{\sigma}(\alpha') &\leq \sum_{k = k^*}^K q_k (1 - (1-s)^{k/k^*})\\
%     &\leq \sum_{k = k^*}^K q_k (1 - (1-(k/k^*)s))\\
%     &= \sum_{k = k^*}^K q_k (1 - (1-s)^{k/k^*})\\
%     &\leq \sum_{k = k^*}^K q_k (k/k^*)s\\
%     &\leq \Theta_K(1) s.
% \end{align}
% Thus $s = 1 - r \geq \Omega_K(\mathbb{E}_x(f_{\rtmf}(x) - f^*(x))^2)$ as desired.
\end{proof}

\subsection{MF Convergence Analysis}

\begin{proposition}[Convergence of $f_{\rtmf}$ to $f^*$]\label{prop:SIMmf}
Fix any $\delta$ small enough, and let $\iota = \delta^{6K^2}$. For $d$ large enough, we have
\begin{align}
    T(\delta) := \arg\min\{t : \mathbb{E}_x (f_{\rtmf}(x) - f^*(x))^2 \leq \delta^2\} = O_K(\sqrt{d}^{k^* - 2}\delta^{-(k^* - 1)}).
\end{align}
We also have the following implication (which we will use for the analysis of $\jmax$ and $\javg$) for any $t \leq T(\delta)$ and for any $\tau > 0$:
\begin{align}\mathbb{E}_{w \sim \rtmf}[(\alpha(w))^{k^* - 1}\mathbf{1}(\alpha(w) \leq 1 - \tau)] \leq \sqrt{d}^{-(k^* - 2)}O_{K, \delta}\left(\frac{1}{\tau^{O_K(1)}}\right).
% \exp\left(O_K\left(\frac{\log(1/\tau)}{\delta^2}\right)\right). 
\end{align}
\end{proposition}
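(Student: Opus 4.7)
The plan is to reduce the proposition to the one-dimensional ODE $\frac{d}{dt}\alpha_t(w) = v(\alpha_t(w), t)$ for the alignment coordinate $\alpha_t(w) := |{w^*}^\top \xi_t(w)|$, while bootstrapping the hypothesis \eqref{eq:starred} so that the velocity and stability bounds of Lemmas~\ref{lemma:SIMv}, \ref{lemma:selfconcordance}, \ref{claim:lst2} and the loss bound of Lemma~\ref{lemma:rt} apply on all of $[0, T(\delta)]$. The base case $t=0$ is Gaussian concentration on the sphere: for $d$ large enough in terms of $\iota := \delta^{6K^2}$, $\mathbb{P}_{w\sim\rho_0}[|\alpha(w)| \in [\iota, 1-\iota]] \lesssim \exp(-d\iota^2/2) \le \iota$. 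For the inductive step, I combine the velocity lower bound of Lemma~\ref{lemma:SIMv} in the middle region $\alpha \in [\iota, 1-1/(5K)]$ with the second case of Lemma~\ref{lemma:selfconcordance}, which for $\alpha \ge 1-1/(5K)$ forces $v$ to act as a geometric contraction toward $\pm w^*$ at rate $\Omega_K(\delta)$. Together these show that any particle entering alignment $\iota$ reaches $1-\iota$ in time $O_K(\iota^{-(k^*-2)}/\delta + \log(1/\iota)/\delta)$, much less than $T(\delta)$, so an $L^1$-mass-balance argument propagates \eqref{eq:starred}.

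Given \eqref{eq:starred}, the time bound follows by integrating the ODE: combining $v(\alpha, t) \gtrsim_K (1-r_t)\alpha^{k^*-1}$ from Lemma~\ref{lemma:SIMv} with $1-r_t \gtrsim_K \delta$ from Lemma~\ref{lemma:rt} (valid while the loss exceeds $\delta$), a particle with initial alignment $\beta$ escapes to $\alpha = 1/2$ in time $\lesssim_K \beta^{-(k^*-2)}/\delta$. Since $\rho_0$ concentrates at $|\alpha_0| \sim 1/\sqrt{d}$, a constant fraction of particles has $\beta \gtrsim 1/\sqrt{d}$ and escapes by $t \lesssim_K d^{(k^*-2)/2}/\delta$. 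To drive the loss down to $\delta^2$, the critical escape scale $\beta^*(t) := \bigl[(k^*-2)\int_0^t (1-r_s)\,ds\bigr]^{-1/(k^*-2)}$ must shrink from $\Theta(1/\sqrt{d})$ to $\Theta(\delta/\sqrt{d})$ so that the unconverged mass $\lesssim \sqrt{d}\,\beta^*(t)$ drops to $O(\delta)$, which via Lemma~\ref{lemma:rt} forces the loss below $\delta^2$; this yields $T(\delta) = O_K(\sqrt{d}^{k^*-2}\delta^{-(k^*-1)})$.

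For the weighted tail estimate, I parametrize by the initial alignment $\beta = \alpha_0(w)$, with $\rho_0$-density $p_0(\beta) \lesssim \sqrt{d}(1-\beta^2)^{(d-3)/2}$, and use the approximate ODE solution $\alpha_t(\beta)^{-(k^*-2)} \simeq \beta^{-(k^*-2)} - C(t)$, where $C(t) := (k^*-2)\int_0^t (1-r_s)\,ds \gtrsim \delta t$. Writing $\beta^{**}(t,\tau)$ for the pre-image of $1-\tau$ under $\alpha_t(\cdot)$, the quantity equals $\int_0^{\beta^{**}} p_0(\beta)\,\alpha_t(\beta)^{k^*-1}\,d\beta$, which I split at $\beta^*(t)/2$. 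On $[0, \beta^*/2]$ the particle barely moves, $\alpha_t(\beta) \lesssim \beta$, and the contribution is bounded by $\mathbb{E}_{w\sim\rho_0}[\alpha_0^{k^*-1}] \lesssim d^{-(k^*-1)/2}$. On $[\beta^*/2, \beta^{**}]$ I use $\alpha_t(\beta)^{k^*-1} \le \alpha_t(\beta)^{k^*-2} = (\beta^{-(k^*-2)} - C(t))^{-1}$, substitute $u = \beta/\beta^*$, and evaluate explicitly to obtain $O(\sqrt{d}\,\beta^*(t)^{k^*-1}\log C(t)) = O(\sqrt{d}/C(t)^{(k^*-1)/(k^*-2)}\log C(t))$. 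For late times when $\beta^*(t) \lesssim 1/\sqrt{d}$ this is $\lesssim \sqrt{d}^{-(k^*-2)} O_{K,\delta}(1)$; for earlier times the Gaussian decay of $p_0$ on $[\beta^*/2,\beta^{**}]$ provides extra exponential suppression. The $\tau^{-O_K(1)}$ dependence arises from the approximation error of the leading-order ODE near $\alpha_t = 1-\tau$, captured by the $\exp(O_K(\log(1/\tau)/\delta))$ prefactor in Lemma~\ref{claim:lst2}.

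The hard part will be propagating \eqref{eq:starred} across a $\mathrm{poly}(d)$-long horizon: a naive drift argument could allow particles to accumulate in the middle band over $T(\delta)$. This is exactly resolved by the self-concordance property of Lemma~\ref{lemma:selfconcordance}'s second case, which supplies a $d$-independent geometric contraction rate $\Omega_K(\delta)$ near $\alpha = 1$ and thus a $\mathrm{polylog}(1/\iota)/\delta$ residence time in the middle band. A secondary subtlety is the logarithmic singularity of the region-2 integral at $\beta \to \beta^*$, cut off at $\beta^{**}$; tracking its precise $\tau$-dependence requires the refined stability estimate of Lemma~\ref{claim:lst2} and is the source of the polynomial $\tau^{-O_K(1)}$ factor in the final bound.
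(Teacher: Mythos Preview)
Your high-level strategy — reduce to the one-dimensional alignment ODE, bootstrap the hypothesis \eqref{eq:starred} by real induction, and then integrate — matches the paper. But the paper organizes the two technical steps around a single pull-back lemma that you do not have, and your replacements for it are both rougher.

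Concretely, the paper proves one claim (Claim~\ref{claim:backwards}): for any $\gamma,\tau$,
\[
\mathbb{P}_{w\sim\rtmf}\bigl[\alpha(w)\in[\gamma,1-\tau]\bigr]\;\le\;\frac{2}{\gamma^{k^*-2}}\,\sqrt{d}^{-(k^*-2)}\exp\!\Bigl(O_K\bigl(\log(1/\tau)/\delta\bigr)\Bigr),
\]
obtained by using the Jacobian estimate of Lemma~\ref{claim:lst2} and the mean-value theorem to pull the band $[\gamma,2\gamma]$ back to an interval $I_\lambda\subset[0,1]$ at time $0$, and bounding $\rho_0(I_\lambda)$ using the sub-Gaussian density of $\alpha_0$ (the $\lambda^{k^*-1}e^{-\lambda^2}$ factor appears here and is maximized over $\lambda$). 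This one claim does \emph{both} jobs: plugging $\gamma=\tau=\iota$ closes the inductive step for \eqref{eq:starred}, and integrating its $\gamma$-dependence directly yields the tail estimate with no regime-splitting and no $\log d$.

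Your forward residence-time / mass-balance argument for \eqref{eq:starred} is the same mechanism viewed forward in time (bounding the $\rho_0$-measure of initial conditions currently in the band), so it can be made to work, but it is roundabout — you still need to quantify the flux into the band, which amounts to the same Jacobian computation. For the tail estimate, your direct ODE integration has a genuine rough edge: the step $\alpha_t^{k^*-1}\le\alpha_t^{k^*-2}$ produces a $\log C(t)\sim\log d$ factor that is not present in the statement, and the hand-off between the Gaussian-tail regime ($\beta^*\gtrsim 1/\sqrt{d}$) and the late-time regime ($\beta^*\lesssim 1/\sqrt{d}$) is delicate precisely near $\beta^*\sim 1/\sqrt{d}$, where neither of your two bounds is sharp. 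The paper's pull-back bound already folds the Gaussian tail into the constant, so no splitting is needed. The $\exp(O_K(\log(1/\tau)/\delta))=\tau^{-O_K(1/\delta)}$ factor in the final answer is inherited directly from Lemma~\ref{claim:lst2} in the pull-back, exactly as you suspected.
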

\begin{proof}
% We being with the case where $k^* \geq 4$; the analysis when $k^* = 2$ is slightly different. 
First we need to prove by induction on $t$ that for all $t \leq T(\delta)$, the hypothesis \eqref{eq:starred} holds. 
First observe that it holds at time $0$, because 
\begin{align}
    \mathbb{P}_{w \sim \sd}[\alpha(w) \geq \iota] \leq \exp(\Theta(d/\iota^2)) \leq \iota
\end{align}
for $d$ large enough. Suppose the hypothesis holds up to some time $s$. We need to show that it holds at time $s + \eps$ for some $\eps$. First note that for $\eps$ small enough, by the continuity of $v(\alpha, t)$ and $\frac{d}{d\alpha} v(\alpha, t)$, the conclusion of Lemma~\ref{lemma:SIMv} and Lemma~\ref{lemma:selfconcordance} still hold up to time $t$. To prove the hypothesis holds at time $t$, our approach will be to non-constructively bound the interval of $I \subset [0, 1]$ for which $\alpha_0(w) \notin I$ implies $\alpha_t(w) \notin [\iota, 1 - \iota]$. We will use the following claim. 
\begin{claim}\label{claim:backwards}
Suppose $\eqref{eq:starred}$ holds up to time $t$.
% For $\gamma \geq (\frac{1}{\sqrt{d}})^{k^* - 1}$, we have
% \begin{align}
%     \mathbb{P}_{w \sim \rtmf}[\alpha(w) \in [\gamma^{\frac{1}{k^* - 1}}, (1 - \tau)^{\frac{1}{k^* - 1}}] \leq \frac{1}{\gamma}\sqrt{d}^{-(k^* - 2)}\exp\left(O_K\left(\frac{\log(1/\tau)}{\delta^2}\right)\right)
% \end{align}
For any $\tau \leq 1/2$ and $\gamma \leq \frac{1 - \tau}{2}$, we have 
% \begin{align}
%     \mathbb{P}_{w \sim \rtmf}\left[\alpha(w) \in [\gamma, 2\gamma] \right] \leq \frac{1}{\gamma^{k^* - 2}}\sqrt{d}^{-(k^* - 2)}\exp\left(O_K\left(\frac{\log(1/\tau)}{\delta^2}\right)\right)
% \end{align}   
% It follows that 
\begin{align}
    \mathbb{P}_{w \sim \rtmf}\left[\alpha(w) \in [\gamma, 1 - \tau] \right] \leq \frac{2}{\gamma^{k^* - 2}}\sqrt{d}^{-(k^* - 2)}\exp\left(O_K\left(\frac{\log(1/\tau)}{\delta}\right)\right)
\end{align} 
\end{claim}
\begin{proof}
We will show that 
\begin{align}
    \mathbb{P}_{w \sim \rtmf}\left[\alpha(w) \in [\gamma, 2\gamma] \right] \leq \frac{1}{\gamma^{k^* - 2}}\sqrt{d}^{-(k^* - 2)}\exp\left(O_K\left(\frac{\log(1/\tau)}{\delta}\right)\right)
\end{align}   
The claim will then follow by summing this bound over $\log_2((1-\tau)/\gamma)$ intervals.

Suppose we have some $w$ and $w'$ with $\alpha_t(w), \alpha_t(w') \in [\gamma, 2\gamma]$. Since the conditions of Lemma~\ref{claim:lst2} hold up to time $t$ for any particle $\tilde{w}$ with $\alpha_0(\tilde{w})$ initialized between $\alpha_0(w)$ and $\alpha_0(w')$, by the mean value theorem, we have that 
% Since the conditions of Lemma~\ref{claim:lst2} hold up to time $t$ for any particle $\tilde{w}$ with $\alpha_0(\tilde{w})$ initialized between $\alpha_0(w)$ and $\alpha_0(w')$, we have that 
\begin{align}
    \alpha_t(w) - \alpha_t(w') &\geq \left|\alpha_0(w) - \alpha_0(w')\right| \min_{\tilde{w}: \alpha_0(\tilde{w}) \in [\alpha_0(w'), \alpha_0(w')] }\left(\frac{\alpha_t(\tilde{w})}{\alpha_0(\tilde{w})}\right)^{k^*-1}\exp\left(O_K\left(\frac{\log\left(\tau/(k^* - 1)\right)}{\delta}\right)\right)\\
    &\geq \left|\alpha_0(w) - \alpha_0(w')\right| \left(\frac{\gamma}{\alpha_0(w')}\right)^{k^*-1}\exp\left(O_K\left(\frac{\log\left(\tau\right)}{\delta}\right)\right),
\end{align}
Thus since $|\alpha_t(w) - \alpha_t(w')| \leq \gamma$, we have that
\begin{align}
    \left|\alpha_0(w) - \alpha_0(w')\right| \leq \frac{1}{\gamma^{k^* - 2}}\left(\alpha_0(w')\right)^{k^* - 1}\exp\left(O_K\left(\frac{\log\left(\tau\right)}{\delta}\right)\right).
\end{align}

We need to upper bound the probability over $\rho_0$ of the set in which $\alpha_0(w')$ and $\alpha_0(w)$ can lie. By the above calculation, the set which $\alpha_0(w')$ and $\alpha_0(w)$ lies in is contained in 

\begin{align}
    I_{\lambda} := \left[\frac{\lambda}{\sqrt{d}}, \frac{\lambda}{\sqrt{d}} + \frac{1}{\gamma^{k^* - 2}}\left(\frac{\lambda}{\sqrt{d}}\right)^{k^* - 1}\exp\left(O_K\left(\frac{\log\left(\tau\right)}{\delta}\right)\right)\right]
\end{align}
for some $\lambda$. Recall that the distribution of $\alpha_0(w)$ under $w \sim \rho_0$ is $\frac{1}{\sqrt{d}}$-subGaussian. Thus
\begin{align}
    \mathbb{P}_{w \sim \rho_0}[\alpha_0(w) \in I_{\lambda}] &\leq \frac{\lambda^{k^* - 1}}{\gamma^{k^* - 2}}\sqrt{d}^{-(k^* - 2)}\exp\left(O_K\left(\frac{\log(1/\tau)}{\delta^2}\right)\right) \left(\exp(-\lambda^2)\right)\\
    &\leq \frac{1}{\gamma^{k^* - 2}}\sqrt{d}^{-(k^* - 2)}\exp\left(O_K\left(\frac{\log(1/\tau)}{\delta}\right)\right).
\end{align}
This proves the claim.
\end{proof}
Plugging $\gamma = \iota$ and $\tau = \iota$ into this claim yields that 
\begin{align}
    \mathbb{P}_{w \sim \rtmf}[\alpha(w) \in [\iota, 1 - \iota]] \leq \frac{2}{\iota^{k^* - 2}}\sqrt{d}^{-(k^* - 2)} \exp\left(O_K\left(\frac{\log(1/\tau)}{\delta}\right)\right) \leq \iota,
\end{align}
where the second inequality holds for $d$ large enough in terms of $\delta$. This proves the inductive step.

Now to prove the convergence guarantee, a standard analysis of the ODE for $\alpha$ (see eg. \cite{damian2023smoothing}) now yields that, for any $w$ with $\alpha_0(w) \geq \frac{\delta^2}{\sqrt{d}}$, we have that 
\begin{align}
    \alpha_t(w) \geq 1 - \frac{1}{2K}
\end{align}
for $t \geq \frac{\Theta(1)}{\delta^2(\alpha_0(w))^{k^* - 2}}$. This arises directly from the fact that Lemma~\ref{lemma:SIMv} guarantees that for $\alpha \geq \frac{\delta^2}{\sqrt{d}}$,
\begin{align}
    v(\alpha, t) \geq \Theta_K(\delta \alpha^{k^* - 1}(1 - \alpha^2)).
\end{align}
After that, it is clear that $1 - \alpha_t(w)$ decays exponentially fast (with rate $\Omega(\delta)$), so for $t \geq \frac{\Theta(1)}{\delta(\alpha_0(w))^{k^* - 2}} + O_K(\log(1/\delta)) = \frac{\Theta(1)}{\delta(\alpha_0(w))^{k^* - 2}}$, we have $1 - \alpha_t(w) \leq \delta/4$.

Now using the initial distribution of $\alpha_0(w)$ with $w \sim \rho_0$, we have that an at least $1 - \delta/4$ fraction of particles have initialization $\alpha_0(w) \geq O_K(\frac{\delta}{\sqrt{d}})$. Clearly once all these particles achieve $1 - \alpha_t(w) \leq 1 - \delta/4$, we will have loss at most $\delta$. Thus occurs at some time at most
\begin{align}
     \frac{\Theta_K(1)}{\delta(\delta\sqrt{d}^{-1})^{(k^* - 2)}} = O_K(\sqrt{d}^{k^* - 2}\delta^{-(k^* - 1)}).
\end{align}
This proves the main statement of the proposition. To prove the additional clause, fix $\tau$. We have
\begin{align}
    \mathbb{E}_{w \sim \rtmf}[(\alpha(w))^{k^* - 1}\mathbf{1}(\alpha(w) \leq 1 - \tau)] &= \int_{\beta = 0}^{1 - \tau} \mathbb{P}_{w \sim \rtmf}[(\alpha(w))^{k^* - 1} \in [\beta, (1 - \tau)] d\beta.\\
    &= \int_{\gamma = 0}^{1 - \tau} \mathbb{P}_{w \sim \rtmf}[\alpha(w) \in [\gamma^{\frac{1}{k^* - 1}}, (1 - \tau)^{\frac{1}{k^* - 1}}] d\gamma.\\
    &\leq \int_{\gamma = 0}^{1 - \tau} \frac{2}{\gamma^{\frac{k^* - 2}{k^* - 1}}}\sqrt{d}^{-(k^* - 2)}\exp\left(O_K\left(\frac{\log(1/\tau)}{\delta}\right)\right) d\gamma \\
    &= \sqrt{d}^{-(k^* - 2)}\exp\left(O_K\left(\frac{\log(1/\tau)}{\delta}\right)\right) \int_{\gamma = 0}^{1 - \tau}  \frac{2}{\gamma^{\frac{k^* - 2}{k^* - 1}}} d\gamma \\
    &= \sqrt{d}^{-(k^* - 2)}\exp\left(O_K\left(\frac{\log(1/\tau)}{\delta}\right)\right) 2 (k^* - 1) \gamma^{\frac{1}{k^* - 1}}\Big|^{1 - \tau}_{0}\\
    &= \sqrt{d}^{-(k^* - 2)}\exp\left(O_K\left(\frac{\log(1/\tau)}{\delta}\right)\right).
\end{align}
Here the inequality follows from Claim~\ref{claim:backwards} and the fact that $(1 - \tau)^{\frac{1}{k^* - 1}} \geq 1 - \frac{\tau}{k^* - 1}$. This proves the additional clause.

\end{proof}

\subsection{Proving Assumptions in Theorem~\ref{thm:main} for Single-index Model} \label{sec:SIMpert}

We need to check that the problem $(f^*, {\md_x}, \rho_0)$ introduced in Section~\ref{sec:SIMsetting} satisfies the Assumptions of Theorem~\ref{thm:main}. 
Fix a desired loss $\delta$, and let $T(\delta)$ be as in Proposition~\ref{prop:SIMmf}.

\paragraph{Local Strong Convexity.}
\begin{lemma}[Local Strong Convexity for SIM]\label{lemma:lsc}
If $d$ is large enough, then for any $t \leq T(\delta)$, we have for any $w$ with $|\xi_t(w) - w^*\on{sign}(\xi_t(w)^\top w^*)| \leq \frac{1}{5K}$,
\begin{align}
    \dpt(w) \preceq - \Omega_{K, \creg}\left(\sqrt{L(\rtmf)}\right).
\end{align}
\end{lemma}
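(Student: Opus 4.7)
Let $\delta := \sqrt{L_{\md}(\rtmf)}$. The plan is to exploit the rotational symmetry of the SIM around the $w^*$-axis to reduce the local Hessian to a one-dimensional computation, and then apply the self-concordance-type bound of Lemma~\ref{lemma:selfconcordance}. Because $\sigma$ is even, both $f^*$ and $\rtmf$ are invariant under $w^* \mapsto -w^*$, so WLOG $\alpha := \xi_t(w)^\top w^* > 0$; the hypothesis $|\xi_t(w) - w^*\mathrm{sign}(\xi_t(w)^\top w^*)| \leq 1/(5K)$ then gives $\alpha \geq 1 - 1/(50K^2) \geq 1 - 1/(5K)$. By rotational symmetry, the first variation $U_t(x) := 2(\mathbb{E}_{w'\sim\rtmf}\mathscr{k}(x,w') - \mathscr{f}(x))$ depends on $x$ only through $\alpha = x^\top w^*$; write $U_t(x) = \tilde U_t(\alpha)$. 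Recalling from Remark~\ref{rem:ll} that $D_t^\perp(w)$ equals the negative Riemannian Hessian of $U_t$ on the tangent space $T_x\sd$ at $x = \xi_t(w)$, the first step is to verify that for $u \perp x$,
\[ u^\top D_t^\perp(w)\,u \;=\; -\tilde U_t''(\alpha)(u^\top w^*)^2 + \alpha\,\tilde U_t'(\alpha)\,\|u\|^2. \]

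Next I would decompose $T_x\sd$ into $\mathrm{span}(e_1)$, with $e_1 := (w^* - \alpha x)/\sqrt{1-\alpha^2}$ (so $e_1^\top w^* = \sqrt{1-\alpha^2}$), and its tangent-orthogonal complement $E^\perp := \mathrm{span}(x, w^*)^\perp$ (on which $e^\top w^* = 0$). By symmetry $D_t^\perp$ is diagonal in this decomposition with eigenvalues $-H_1$ along $e_1$ and $-H_\perp$ (degenerate) on $E^\perp$, where
\[ H_1 = \tilde U_t''(\alpha)(1-\alpha^2) - \alpha\,\tilde U_t'(\alpha), \qquad H_\perp = -\alpha\,\tilde U_t'(\alpha). \]
I would then relate these to $v(\alpha, t)$: since $\nu(x) = -\mathrm{grad}_{\mathrm{Riem}}U_t = -\tilde U_t'(\alpha)(w^* - \alpha x)$, projecting onto $w^*$ gives $v(\alpha, t) = -\tilde U_t'(\alpha)(1-\alpha^2)$, hence $\tilde U_t'(\alpha) = -v(\alpha,t)/(1-\alpha^2)$ and, after differentiating, $\tilde U_t''(\alpha)(1-\alpha^2) = -v'(\alpha) - 2\alpha v(\alpha,t)/(1-\alpha^2)$. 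Substituting yields the clean expressions
\[ H_\perp = \frac{\alpha\, v(\alpha, t)}{1-\alpha^2}, \qquad H_1 = -v'(\alpha) - \frac{\alpha\, v(\alpha, t)}{1-\alpha^2}. \]

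The final step is to lower-bound both by $\Omega_{K,\creg}(\delta)$. For $H_\perp$: Lemma~\ref{lemma:SIMv} (whose hypothesis \eqref{eq:starred} holds by Proposition~\ref{prop:SIMmf} with $\iota = \delta^{6K^2}$) gives $v(\alpha,t) \geq \tfrac12 q_{\sigma'}(\alpha)(1-\alpha^2)(1-r_t)$, and Lemma~\ref{lemma:rt} gives $1 - r_t = \Omega_{K,\creg}(\delta)$; combined with $\alpha\,q_{\sigma'}(\alpha) = \Omega_K(1)$ for $\alpha \geq 1 - 1/(5K)$, this yields $H_\perp \geq \Omega_{K,\creg}(\delta)$. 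For $H_1$: the second case of Lemma~\ref{lemma:selfconcordance} is exactly applicable in the regime $\alpha \geq 1 - 1/(5K)$, providing $v'(\alpha) \leq -\alpha\,v(\alpha,t)/(1-\alpha^2) - \Omega_K(\delta)$, whence $H_1 \geq \Omega_K(\delta)$. Therefore $u^\top D_t^\perp(w)\,u \leq -\min(H_1, H_\perp)\|u\|^2 \leq -\Omega_{K,\creg}(\sqrt{L_{\md}(\rtmf)})\|u\|^2$ for all $u \perp \xi_t(w)$, which gives the desired bound $D_t^\perp(w) \preceq -\Omega_{K,\creg}(\sqrt{L_{\md}(\rtmf)})\,P^\perp_{\xi_t(w)}$. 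The main technical subtlety is that $\tilde U_t'$ and $\tilde U_t''$ are expressed with apparent $1/(1-\alpha^2)$ singularities as $\alpha \to 1$; these cancel because $v(\alpha,t)$ vanishes to first order in $1-\alpha^2$ at the teacher, so both $H_1$ and $H_\perp$ extend continuously to $\alpha = 1$ (both converging to $q_{\sigma'}(1)(1-r_t)$) and the bounds hold uniformly on $\alpha \in [1 - 1/(5K),\,1]$.
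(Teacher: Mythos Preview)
Your proposal is correct and follows essentially the same approach as the paper: decompose the tangent space at $\xi_t(w)$ into the signal direction $(w^* - \alpha x)/\sqrt{1-\alpha^2}$ and its orthogonal complement, then invoke Lemma~\ref{lemma:selfconcordance} (second case) and Lemma~\ref{lemma:SIMv} respectively to bound the two eigenvalues. The only cosmetic difference is that you pass through the one-variable potential $\tilde U_t(\alpha)$ and the Riemannian Hessian formula, whereas the paper differentiates the velocity $\nu = v(\alpha,t)\,\tilde w/\sqrt{1-\alpha^2}$ directly---but both routes yield the identical expressions $v'(\alpha) + \alpha v/(1-\alpha^2)$ and $-\alpha v/(1-\alpha^2)$ for the two diagonal entries of $D_t^\perp$.
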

\begin{proof}
For simplicity, let $w_t := \xi_t(w)$, let $\alpha := \alpha(w_t)$. Assume that $\alpha \neq 1$; if $\alpha = 1$, we can take the limit of the calculations below.

Recall that 
\begin{align}
    \dpt(w) &= \nabla_w \nu(w_t, \rtmf)
\end{align}
It is evident that $\nu(w_t, \rtmf)$ is in the direction $\tw : = \sqrt{1 - \alpha} w^* - \alpha \wpe$, where $\wpe = \frac{P^{\perp}_{w^*}w_t}{\|P^{\perp}_{w^*}w_t\|}$, and thus
\begin{align}
    \nu(w_t, \rtmf) = v(\alpha, t)\frac{\tw}{\sqrt{1 - \alpha^2}}.
\end{align}

We will consider the quadratic form $y^\top \dpt(w) y$ for $y \in \on{span}\tw$  and for $y \perp \on{span}(\xi_t(w), w^*)$. It suffices to show that for both such vectors we have $y^\top \dpt(w) y \leq - \Omega_{K, \creg}\left(\sqrt{L(\rtmf)}\right)\|y\|^2$.

Lets start with the first, letting $y = \tw$. We have
\begin{align}
    \dpt(w)y &= \frac{d \nu(w, \rtmf)}{d (y^\top w) }\\ 
    &= \frac{v(\alpha, t)}{\sqrt{1 - \alpha^2}} \frac{d \tw }{d (y^\top w_t)} + v(\alpha, t)\tw \frac{d (1 - \alpha^2)^{-1/2}}{d (y^\top w_t)} + \left(\frac{\tw}{\sqrt{1 - \alpha^2}}\right)\frac{d v(\alpha, t)}{d (y^\top w_t)}
\end{align}
Now 
\begin{align}
        \left(\frac{\tw}{\sqrt{1 - \alpha^2}}\right)\frac{d v(\alpha, t)}{d (y^\top w_t)} &= \left(\frac{\tw}{\sqrt{1 - \alpha^2}}\right)\frac{d v(\alpha, t)}{d \alpha} \frac{d \alpha}{d (y^\top w_t)} = \tw \frac{d v(\alpha, t)}{d \alpha} .
\end{align}
Next,
\begin{align}
    \frac{d (1 - \alpha^2)^{-1/2}}{d (y^\top w_t)} &= \frac{d (1 - \alpha^2)^{-1/2}}{d \alpha}  \frac{d \alpha}{d (y^\top w_t)}\\
    &= \frac{-\alpha}{(1 - \alpha^2)^{3/2}}  \frac{1}{\sqrt{1 - \alpha^2}}\\
    &= \frac{\alpha}{(1 - \alpha^2)}.
\end{align}
Finally, 
\begin{align}
\frac{d \tw }{d (y^\top w_t)} = 0
\end{align}
Thus in summary, putting these three terms together we have
\begin{align}
    y^\top \dpt(w)y = v(\alpha, t)\frac{\alpha}{(1 - \alpha^2)} + \frac{d v(\alpha, t)}{d \alpha}.
\end{align}
By Lemma~\ref{lemma:selfconcordance}, we have for $y = \tw$,
\begin{align}
    y^\top \dpt(w)y \leq -\Omega_{K, \creg}\left(\sqrt{L(\rtmf)}\right).
\end{align}
Now we consider $y \perp \tw, w_t$. We have
\begin{align}
    y^\top \frac{d \nu(w_t , \rtmf)}{d(y^\top w_t)} &= y^\top \tw \frac{d\left(\frac{v(\alpha, t)}{\sqrt{1 - \alpha^2}}\right)}{d y^\top w} + \frac{v(\alpha, t)}{\sqrt{1 - \alpha^2}} y^\top \frac{d \tw}{d (y^\top w_t)}\\
    &= 0 + \frac{v(\alpha, t)}{\sqrt{1 - \alpha^2}} y^\top \frac{d \tw}{d (y^\top w_t)}\\
    &= -\alpha \frac{v(\alpha, t)}{\sqrt{1 - \alpha^2}} y^\top \frac{d \wpe}{d (y^\top w_t)}\\
    &= -\alpha \frac{v(\alpha, t)}{\sqrt{1 - \alpha^2}} y^\top  \frac{y}{\sqrt{1 - \alpha^2}}\\
    &= -\frac{\alpha v(\alpha, t)}{1 - \alpha^2} \|y\|\\
    & \leq -\Omega_{K, \creg}\left(\sqrt{L(\rtmf)}\right).
\end{align}
Here the final inequality follows from Lemma~\ref{lemma:SIMv}.
\end{proof}

\paragraph{Proving Assumption~\ref{assm:growth} for SIM.}

First we will need the following lemma. Recall that $\mathbf{J}_h$ denotes the Jacobian of a multivariate function $h$.
\begin{lemma}\label{prop:simjst}
For any $w$ and $s \leq t \leq T(\delta)$, we have
\begin{align}
    \left\|\mathbf{J}_{\xi_{t, s}}(w_s)\right\| \leq O_K\left(\left(\frac{\alpha_t(w)}{\alpha_s(w)}\right)^{k^*-1}\right)\exp\left(O_K\left(\frac{1}{\delta}\right)\right).
\end{align}
\end{lemma}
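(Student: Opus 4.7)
The plan is to exploit the rotational symmetry of the problem about $w^*$ to reduce the Jacobian computation to the one-dimensional dynamics we have already controlled via $\ell_{t,s}$. Assumption~\ref{assm:symmetries} ensures $\rho_0$, $\md_x$, and $f^*$ are all invariant under rotations fixing $w^*$, which forces the mean-field flow to be equivariant. Writing any $w_s \in \sd$ (with $w_s^\top w^* > 0$, WLOG by the even symmetry) as $w_s = \beta w^* + \sqrt{1-\beta^2}\, u$ with $\beta = w_s^\top w^*$ and $u \in \sd \cap (w^*)^\perp$, equivariance yields the explicit formula
\[
\xi_{t,s}(w_s) \;=\; \alpha_{t,s}(\beta)\, w^* + \sqrt{1-\alpha_{t,s}(\beta)^2}\; u~,
\]
so the azimuthal direction $u$ is preserved by the flow and the entire dynamics is captured by the one-dimensional map $\beta \mapsto \alpha_{t,s}(\beta)$.

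I will then work in the orthonormal basis of $T_{w_s}\sd$ given by the meridian direction $e_1 := -\sqrt{1-\beta^2}\, w^* + \beta\, u$ together with any orthonormal basis $\{e_2,\dots,e_{d-1}\}$ of $(\on{span}(w^*,u))^\perp$ (which also sits inside $T_{\xi_{t,s}(w_s)}\sd$). Direct differentiation of the explicit flow formula yields that the Jacobian is diagonal in this basis with
\[
\|\mathbf{J} e_1\| \;=\; \frac{\sqrt{1-\alpha_s(w)^2}}{\sqrt{1-\alpha_t(w)^2}}\,\ell_{t,s}(w),\qquad
\|\mathbf{J} e_i\| \;=\; \frac{\sqrt{1-\alpha_t(w)^2}}{\sqrt{1-\alpha_s(w)^2}}\;\;(i\ge 2).
\]
Since the velocity lower bound of Lemma~\ref{lemma:SIMv} gives $v(\alpha,t)\ge 0$ in the relevant range of $\alpha$, $\alpha_t(w)$ is non-decreasing in $t$, so the azimuthal factor is at most $1$ and the operator norm of $\mathbf{J}_{\xi_{t,s}}(w_s)$ reduces to the meridian factor $\|\mathbf{J} e_1\|$.

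To control $\|\mathbf{J} e_1\|$ I split at the threshold $\tau_K := 1/(5K)$. If $\alpha_t(w)\le 1-\tau_K$, Lemma~\ref{claim:lst2} applied with $\tau = \tau_K$ (a constant in $K$) gives $\ell_{t,s}\le (\alpha_t/\alpha_s)^{k^*-1}\exp(O_K(1/\delta))$, while $\sqrt{1-\alpha_s^2}/\sqrt{1-\alpha_t^2}=O_K(1)$, so the claimed bound follows. If $\alpha_t(w)>1-\tau_K$, let $t^*\in[s,t]$ be the first time at which $\alpha_{t^*}(w)=1-\tau_K$ (take $t^*=s$ if $\alpha_s(w)>1-\tau_K$). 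Combining the ODE $\frac{d}{dt}\log\ell_{t,s}= \frac{d}{d\alpha}v(\alpha_t,t)$ with $\frac{d}{dt}\log(1-\alpha_t^2)=-\frac{2\alpha_t v(\alpha_t,t)}{1-\alpha_t^2}$ and the second case of Lemma~\ref{lemma:selfconcordance} yields
\[
\frac{d}{dt}\log\!\left(\frac{\ell_{t,s}}{\sqrt{1-\alpha_t^2}}\right) \;\le\; -\Omega_K(\delta)\quad\text{on }[t^*,t]~,
\]
so the meridian factor $\|\mathbf{J} e_1\|$ is bounded by its value at $t^*$. At $t^*$ the previous regime applies, and the fact that $\alpha_{t^*},\alpha_t \in [1-\tau_K,1]$ gives $(\alpha_{t^*}/\alpha_s)^{k^*-1} \le O_K((\alpha_t/\alpha_s)^{k^*-1})$.

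The main obstacle is the apparent blow-up of the meridian factor $(1-\alpha_t^2)^{-1/2}\,\ell_{t,s}$ as $\alpha_t\to 1$. The resolution, and the crux of the argument, is that inside the locally strongly convex region the term $-\frac{\alpha}{1-\alpha^2}v(\alpha,t)$ identified in Lemma~\ref{lemma:selfconcordance} cancels exactly the divergence of $(1-\alpha_t^2)^{-1/2}$, leaving a strict contraction on the combined quantity $\ell_{t,s}/\sqrt{1-\alpha_t^2}$. Once this cancellation is identified, no ingredient beyond the velocity/derivative lemmas already established is required; the edge case of particles with $\alpha_0$ below the threshold $O(\delta^{3K}/\sqrt{d})$ of Lemma~\ref{lemma:SIMv} is handled separately, noting that there the velocity is small enough that $\alpha_t/\alpha_s = \Theta_K(1)$ on the relevant time horizon, and the claimed bound reduces to $\exp(O_K(1/\delta))$.
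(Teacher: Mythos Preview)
Your proof is correct and follows essentially the same strategy as the paper: exploit rotational symmetry about $w^*$ to write $\xi_{t,s}$ explicitly in terms of the scalar flow $\alpha_{t,s}$, differentiate, and then split at the threshold $\alpha_t = 1-1/(5K)$, invoking Lemma~\ref{claim:lst2} in the outer regime. The one genuine difference is the treatment of the inner (locally strongly convex) regime. The paper simply notes that $D_t^\perp(w)$ is negative semidefinite there (this is really Lemma~\ref{lemma:lsc}, though the paper cites Lemma~\ref{lemma:selfconcordance}) and concludes that $\|J_{t,s}^\perp\|$ is non-increasing. You instead stay entirely in the one-dimensional picture, tracking the meridian factor $\ell_{t,s}/\sqrt{1-\alpha_t^2}$ and using the second case of Lemma~\ref{lemma:selfconcordance} to exhibit the exact cancellation that makes it contract. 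Your route is slightly more self-contained (it does not need the full matrix LSC statement) and makes explicit why the apparent $(1-\alpha_t^2)^{-1/2}$ blow-up is harmless.

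One minor wording point: the operator norm is the \emph{maximum} of the meridian and azimuthal factors, so ``the operator norm reduces to the meridian factor'' should read ``it suffices to bound the meridian factor''; the azimuthal factor is at most $1$ whenever $\alpha_t\ge\alpha_s$ (and $O(1)$ in the small-$\alpha$ edge case you flag), so this does not affect the conclusion.
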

\begin{proof}
It suffices to check that this holds for times where $\alpha_t(w) \leq \frac{1}{5K}$, because after that, by Lemma~\ref{lemma:selfconcordance}, $\dpt(w)$ is negative definite, and so $\left\|\mathbf{J}_{\xi_{t, s}}(\xi_s(w))\right\|$ can only decrease.

\begin{claim}\label{claim:lst}
In the setting of of the lemma, for any $w$ with $\alpha_t(w) \leq \frac{1}{5K}$, we have
\begin{align}
   \left\|\mathbf{J}_{\xi_{t, s}}(\xi_s(w))\right\| \leq O_K\left(\frac{d \alpha_{t, s}(z)}{d z}\bigg|_{z = \alpha_s(w)}\right) + 1.
   % \frac{d \alpha_{t, s}(z)}{d z}\bigg|_{z = \alpha_s(w)}\left(1 + \frac{2\alpha_t(w)}{\sqrt{1 - \alpha_t(w)^2}}\right) + 1 .
\end{align}
\end{claim}
\begin{proof}
Let $w_s = \xi_s(w)$. Without loss of generality assume $w_s^\top w^* > 0$ such that $\alpha_s(w) = \xi_s(w)^\top w^*$. Let $w_{\perp} := \frac{P^{\perp}_{w^*}w_s}{\|P^{\perp}_{w^*}w_s\|}$. We have 
\begin{align}
    \xi_{t, s}(w_s) = \alpha_{t, s}(w_s) w^* + \sqrt{1 - \alpha_{t, s}(w_s)^2} \wpe.
\end{align}
Thus 
\begin{align}
    \mathbf{J}_{\xi_{t, s}}(w_s) = \mathbf{J}_{\alpha_{t, s}}(w_s) (w^*)^\top  + \frac{-\alpha_{t, s}(w_s)}{\sqrt{1 - \alpha_{t, s}(w_s)^2}} \mathbf{J}_{\alpha_{t, s}}(w_s) (\wpe)^\top  + \frac{\sqrt{1 - \alpha_{t, s}(w_s)^2}}{\sqrt{1 - \alpha_s(w_s)^2}} P^{\perp}_{w^*},
\end{align}
and so, since $\alpha_r(w)$ is increasing for $s \leq r \leq t$ if $\alpha_s(w) \geq \frac{1}{\sqrt{d}}$ (see Lemma~\ref{lemma:SIMv}) and $\alpha_t(w) \leq 1 - \frac{1}{5K}$, we have
\begin{align}
    \left\|\mathbf{J}_{\xi_{t, s}}(w_s)\right\| \leq O_K\left(\|\mathbf{J}_{\alpha_{t, s}}(w_s)\|\right) + 1.
\end{align}
\end{proof}
The conclusion now follows from combining this claim and Lemma~\ref{claim:lst2}.
\end{proof}
We are now ready to bound $\jmax$ and $\javg$.

\begin{lemma}\label{lemma:assmg}
For any $t \leq T(\delta)$, we have 
\begin{align}
    \jmax &\leq O_{K, \delta}(\sqrt{d}^{2(k^* - 1)}) \\
    \javg(\tau) & \leq O_{K, \tau, \delta}(1/T(\delta)).
\end{align}
\end{lemma}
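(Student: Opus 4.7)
The plan is to bound $\jmax$ and $\javg(\tau)$ separately, combining Lemma~\ref{prop:simjst} with an explicit computation of the interaction Hessian in the SIM and the averaging bound from Proposition~\ref{prop:SIMmf}.

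For $\jmax$: Lemma~\ref{prop:simjst} gives $\|J^\perp_{t,s}(w)\| \leq O_K((\alpha_t(w)/\alpha_s(w))^{k^*-1})\exp(O_K(1/\delta))$, up to a harmless factor from the orthogonal projection. The only case that needs care is when $\alpha_s(w)$ is small. I will follow the case analysis from Section~\ref{sec:pfoverview:jmax}, using the ODE $\frac{d}{dr}\alpha_r(w) \leq O_K(\alpha_r(w)^{k^*-1})$ from Lemma~\ref{lemma:SIMv}: either $t-s\leq c/\alpha_s(w)^{k^*-2}$ so that $\alpha_t(w)\leq 2\alpha_s(w)$ (giving $(\alpha_t/\alpha_s)^{k^*-1} \leq 2^{k^*-1}$); or $\alpha_s(w)^{k^*-2}\geq c/(t-s)$ so that $(\alpha_t/\alpha_s)^{k^*-1}\leq \alpha_s(w)^{-(k^*-1)}\leq O_K((t-s)^{(k^*-1)/(k^*-2)})$. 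Since $t-s\leq T(\delta) = O_{K,\delta}(\sqrt{d}^{k^*-2})$, this yields $\|J^\perp_{t,s}(w)\|\leq O_{K,\delta}(\sqrt{d}^{k^*-1})$ uniformly in $w$; squaring gives $\mathbb{E}_w\|J^\perp_{t,s}(w)\|^2 \leq O_{K,\delta}(\sqrt{d}^{2(k^*-1)})$.

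For $\javg(\tau)$: I first compute $H^\perp_s(w,w')v$ explicitly. Since $\mathscr{k}(w,w')=q_\sigma(w^\top w')$ under Gaussian data, a direct calculation gives
$$H^\perp_s(w,w')v = q_\sigma''(z)(P^\perp_{\xi_s(w)}\xi_s(w'))(\xi_s(w)^\top v_\perp) + q_\sigma'(z)P^\perp_{\xi_s(w)}v_\perp,$$
with $z=\xi_s(w)^\top\xi_s(w')$ and $v_\perp = P^\perp_{\xi_s(w')}v$. By the rotational symmetry of $\rho_0, \md_x, \rho^*$ around $w^*$ (analogously to the arguments in Section~\ref{sec:app_balancedbody}), conditioned on $\alpha_s(w)=\alpha$ the orbit of $\xi_s(w)$ around $w^*$ is uniform on $S^{d-2}\subset (w^*)^\perp$, so writing $\xi_s(w)=\alpha\sigma w^*+\sqrt{1-\alpha^2}u$ with $u$ uniform, standard spherical moment bounds give $\mathbb{E}|z|^{2m}\leq O_m(\max(\alpha,d^{-1/2})^{2m})$ and $\mathbb{E}|\xi_s(w)^\top v_\perp|^2 \leq O(\max(\alpha,d^{-1/2})^2)$. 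Combining a Cauchy–Schwarz on the rank-one first term (which has $q_\sigma''(z) = O(|z|^{k^*-2})$) with the direct estimate on the second term ($q_\sigma'(z) = O(|z|^{k^*-1})$), I obtain $\mathbb{E}_{w|\alpha_s(w)=\alpha}\|H^\perp_s(w,w')v\|\leq O_K(\max(\alpha,d^{-1/2})^{k^*-1})$, matching the prediction of Section~\ref{sec:pfoverview:javg}.

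Since $\|J^\perp_{t,s}(w)\|$ depends only on $\alpha_s(w)$ by rotational symmetry, I can factor the bound as $\javg(\tau)\leq O_{K,\delta}\mathbb{E}_w(\alpha_t(w)/\alpha_s(w))^{k^*-1}\max(\alpha_s(w),d^{-1/2})^{k^*-1}\mathbf{1}(\alpha_t(w)\leq 1-\tau')$, where $\tau'=\Theta(\tau^2)$ translates the $B_\tau$ condition into $\alpha \geq 1-\tau'$. For $\alpha_s(w)\geq d^{-1/2}$ the integrand simplifies to $\alpha_t(w)^{k^*-1}$; for smaller $\alpha_s(w)$ I use the alternative $\|J^\perp\|\leq O_{K,\delta}(T^{(k^*-1)/(k^*-2)})$ bound from the $\jmax$ analysis paired with $\mathbb{E}\|H^\perp v\|\leq O(d^{-(k^*-1)/2})$ to bound the contribution by $O_{K,\delta}(d^{-(k^*-1)/2})\leq O(1/T)$. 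Finally, applying the second clause of Proposition~\ref{prop:SIMmf}, $\mathbb{E}_w\alpha_t(w)^{k^*-1}\mathbf{1}(\alpha_t(w)\leq 1-\tau')\leq O_{K,\tau,\delta}(\sqrt{d}^{-(k^*-2)}) = O_{K,\tau,\delta}(1/T(\delta))$, closing the bound. The main obstacle is handling particles with very small $\alpha_s(w)$, where the $(\alpha_t/\alpha_s)^{k^*-1}$ factor in Lemma~\ref{prop:simjst} threatens to blow up; this is controlled by the Cauchy-Schwarz averaging over the rotational orbit, which gains an extra factor of $\max(\alpha,d^{-1/2})$, together with the saddle-escape case split that trades $\|J^\perp\|$ against the decay of $\mathbb{E}\|H^\perp v\|$.
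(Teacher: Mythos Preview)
Your overall approach matches the paper's: Lemma~\ref{prop:simjst} for $\jmax$; an explicit computation of $H^\perp$ plus rotational averaging (the paper packages this as Lemma~\ref{lemma:Havg}); and the second clause of Proposition~\ref{prop:SIMmf} for the final averaging over $\alpha_t$. The $\jmax$ argument and the level-set averaging for $H^\perp$ are both fine.

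There is, however, a genuine slip in the small-$\alpha_s$ case of your $\javg$ bound. You write that for $\alpha_s(w) < d^{-1/2}$ you combine $\|J^\perp\| \leq O_{K,\delta}(T^{(k^*-1)/(k^*-2)})$ with $\mathbb{E}\|H^\perp v\| \leq O(d^{-(k^*-1)/2})$ to get a contribution of $O_{K,\delta}(d^{-(k^*-1)/2})$. But $T^{(k^*-1)/(k^*-2)} = O_{K,\delta}(\sqrt{d}^{k^*-1}) = O_{K,\delta}(d^{(k^*-1)/2})$, so the product is only $O_{K,\delta}(1)$, not $O(1/T)$. You have applied the wrong branch of your own case analysis: the bound $\|J^\perp\| \leq O(T^{(k^*-1)/(k^*-2)})$ comes from your \emph{second} case, which is valid precisely when $\alpha_s(w)^{k^*-2} \gtrsim 1/(t-s)$, i.e.\ when $\alpha_s$ is \emph{not} small.

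The fix is exactly the first branch of your $\jmax$ analysis: when $\alpha_s(w)$ lies below a $\delta$-dependent multiple of $d^{-1/2}$ (the paper uses the threshold $\iota/\sqrt{d}$ with $\iota = \delta^{6K^2}$), the velocity bound $v(\alpha,r) \leq O_K(\alpha^{k^*-1})$ together with $T(\delta) \leq O_{K,\delta}(\sqrt{d}^{k^*-2})$ forces $\alpha_t(w) \leq 2\alpha_s(w)$, hence $\|J^\perp_{t,s}(w)\| \leq O_K(1)$ in this regime. Paired with $\mathbb{E}\|H^\perp v\| \leq O_K(d^{-(k^*-1)/2})$, this yields the claimed $O_K(d^{-(k^*-1)/2}) \leq O(1/T)$ contribution. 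This is precisely what the paper does.
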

\begin{proof}
By Lemma~\ref{prop:simjst}, for all $w$, we have
\begin{align}\label{eq:J}
    \|J^{\perp}_{t, s}(w)\| = O_{K, \delta}\left(\left(\frac{\alpha_t(w)}{\alpha_s(w)}\right)^{k^*-1}\right)
\end{align}
We bound this in two cases. Let $\iota = \delta^{6K^2}$. In the first case, if $\alpha_s(t) \geq \frac{\iota}{\sqrt{d}}$, then this is at most $O_{K, \delta}(\sqrt{d}^{k^* - 1})$ as desired. In the second case, if $\alpha_s(w) \leq \frac{\iota}{\sqrt{d}}$, then we can show that $\alpha_t(w)$ never exceeds $2 \alpha_s(w)$. Indeed, one can inductively show by Equation~\eqref{eq:vfull} that for $s \leq r \leq t$, we have $v(\alpha_r, r) \leq \iota^2 \sqrt{d}^{-(k^* - 1)}$. Since $T(\delta) \leq \frac{1}{\iota}\sqrt{d}^{k^* - 2}$, we have $\alpha_t(w) \leq 2 \alpha_s(w)$. Thus in either case, we have $\|J^{\perp}_{t, s}(w)\| =  O_{K, \delta}(\sqrt{d}^{k^* - 1})$. The desired bound on $\jmax$ is immediate.

To bound $\javg$ we have to be more careful, and we will use an additional averaging lemma (Lemma~\ref{lemma:Havg}) which allows us to show that when a set of neurons $w$ are well-dispersed on the sphere at some time $s$, then on average over $w$, $H^{\perp}(w, w')$ is small for any $w'$.

\begin{align}
    \mathbb{E}_{w \sim \rho_0} &\|J_{t, s}(w) H_s^{\perp}(w, w')v\|\mathbf{1}(\xi_t(w) \notin B_{\tau}) \\
    &= \mathbb{E}_{\alpha \sim \rsmf}\mathbb{E}_{w \sim \rho_0 | \alpha_s(w) = \alpha} \|J_{t, s}(w) H_s^{\perp}(w, w')v\|\mathbf{1}(\xi_t(w) \notin B_{\tau})\\
    &\leq \mathbb{E}_{\alpha \sim \rsmf} \mathbf{1}(\alpha_{t, s}(\alpha) \leq 1 - \tau) \sup_{w | \alpha_s(w) = \alpha} \|J_{t, s}(w)\|\mathbb{E}_{w \sim \rho_0 | \alpha_s(w) = \alpha} \|H_s^{\perp}(w, w')v\|\\
    &\leq \mathbb{E}_{\alpha \sim \rsmf} \mathbf{1}(\alpha_{t, s}(\alpha) \leq 1 - \tau) O_{K, \delta}\left(\frac{\alpha_t(w)}{\alpha_s(w)}\right)^{k^* - 1}\left(\frac{\alpha_t(w)}{\alpha_s(w)}\right)^{k^* - 1}\left(\alpha_s(w)^{k^* - 1} + \sqrt{d}^{-(k^* - 1)}\right)
 \end{align}
 Here the first inequality follows from the fact that the event $\xi_t(w) \notin B_{\tau}$ is equivalent to the even $\alpha_{t, s}(\alpha_s(w)) \leq 1 - \tau$. The second inequality is derived from \eqref{eq:J} and Lemma~\ref{lemma:Havg}.

 Now to bound this expectation, recall the two cases from earlier in the lemma: $\alpha_s(w) \leq \frac{\iota}{\sqrt{d}}$, and $\alpha_s(w) \geq \frac{\iota}{\sqrt{d}}$. Recall that in the first case, $\alpha_t(w) \leq 2 \alpha_s(w)$. Thus we have 
 \begin{align}
     &\mathbb{E}_{\alpha \sim \rsmf} \mathbf{1}(\alpha_{t, s}(\alpha) \leq 1 - \tau) O_{K, \delta}\left(\frac{\alpha_t(w)}{\alpha_s(w)}\right)^{k^* - 1}\left(\frac{\alpha_t(w)}{\alpha_s(w)}\right)^{k^* - 1}\left(\alpha_s(w)^{k^* - 1} + \sqrt{d}^{-(k^* - 1)}\right) \\
     &\qquad \leq O_{K, \delta}\left(\sqrt{d}^{(k^* - 1)}\right) + \mathbb{E}_{w \sim \rtmf} O_{K, \delta}\left(\alpha(w)^{k^* - 1}\right)\mathbf{1}(\alpha(w) \leq 1 - \tau).
 \end{align}
The additional implication in Proposition~\ref{prop:SIMmf} bounds this second term, yielding 
\begin{align}
    \mathbb{E}_{w \sim \rho_0} &\|J_{t, s}(w) H_s^{\perp}(w, w')v\|\mathbf{1}(\xi_t(w) \notin B_{\tau}) &\leq O_{K, \delta}\left(\sqrt{d}^{(k^* - 1)}\right) + \sqrt{d}^{-(k^* - 2)}O_{K, \delta}\left(\frac{1}{\tau^{O_K(1)}}\right)\\
    &= O_{K, \delta, \tau}\left(1/T(\delta)\right).
\end{align}
This proves the lemma.
\end{proof}

\begin{lemma}\label{lemma:Havg}
For any distribution $\mu$ over $w$, for and $w', v \in \mathbb{S}^{d-1}$, with $w_s := \xi_s(w)$, we have
\begin{align}
    \sup_{w', v}\mathbb{E}_{w \sim \mu}\|H^{\perp}_s(w, w')v\| &\lessapprox \sup_{\|u\| = 1} \sqrt{\mathbb{E}_{w \sim \mu} ({w_s}^\top u)^{2(k^*-1)}}\|v\|\\
    &\qquad + \sup_{\|u\| = 1} \sqrt{\mathbb{E}_{w \sim \mu} ({w_s}^\top u)^{2(k^*-2)}({w_s}^\top v)^{2}}.
\end{align}
In particular, if the distribution of $w_s$ is rotationally symmetric in some set of dimensions, and has norm at most $\alpha$ if the remaining dimensions, then 
\begin{align}
    \sup_{w', v}\mathbb{E}_{w \sim \mu}\|H^{\perp}_s(w, w')v\| &\leq O_K\left(\alpha^{k-1} + \sqrt{d}^{-(k^*-1)}\right).
\end{align}
\end{lemma}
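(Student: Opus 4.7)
The plan is to exploit the Gaussianity of $\md_x$ to obtain an explicit closed form for $H^\perp_s(w, w')$, and then integrate against $\mu$.

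First I would derive a formula for the ambient mixed Hessian $\mathbb{E}_x \sigma'(w_s^\top x)\sigma'((w'_s)^\top x)\, xx^\top$. A direct Hermite-coefficient calculation (via the generating function for Hermite polynomials combined with the Gaussian identity $\mathbb{E}_x e^{v^\top x} xx^\top = e^{\|v\|^2/2}(I + vv^\top)$) yields, with $\rho := w_s^\top w'_s$,
\begin{align*}
\mathbb{E}_x \sigma'(w_s^\top x)\sigma'((w'_s)^\top x)\, xx^\top = q'_\sigma(\rho)\, I + q''_\sigma(\rho)\bigl(w_s (w'_s)^\top + w'_s w_s^\top\bigr).
\end{align*}
Sandwiching by the two spherical projectors and using $P^\perp_{w_s} w_s = 0$ to kill the $w_s (w'_s)^\top$ contribution gives the clean decomposition
\begin{align*}
H^\perp_s(w, w') = q'_\sigma(\rho)\, P^\perp_{w_s} P^\perp_{w'_s} + q''_\sigma(\rho)\, (P^\perp_{w_s} w'_s)(w_s^\top P^\perp_{w'_s}).
\end{align*}
Because every Hermite coefficient $c_k$ of $\sigma$ is nonnegative and $\sigma$ starts at degree $k^*$, on $|\rho| \leq 1$ we obtain $|q'_\sigma(\rho)| \leq O_K(|\rho|^{k^*-1})$ and $|q''_\sigma(\rho)| \leq O_K(|\rho|^{k^*-2})$.

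Next, expanding $w_s^\top P^\perp_{w'_s} v = w_s^\top v - \rho\,(w'_s)^\top v$ and using $\|P^\perp_{w_s} w'_s\| \leq 1$ together with $|(w'_s)^\top v| \leq \|v\|$, the decomposition yields the pointwise bound
\begin{align*}
\|H^\perp_s(w, w') v\| \leq O_K\Bigl(|\rho|^{k^*-1}\|v\| + |\rho|^{k^*-2}|w_s^\top v|\Bigr).
\end{align*}
Taking $\mathbb{E}_w$ and applying Jensen ($\mathbb{E}|X| \leq \sqrt{\mathbb{E} X^2}$) to each of the two terms, then taking the supremum over unit $u \in \sd$---which in particular accommodates $u = w'_s$---delivers the first display of the lemma.

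For the ``in particular'' clause, I would decompose any unit $u = u_W + u_{W^\perp}$ along the rotationally symmetric subspace $W$ of dimension $\Theta(d)$ and its complement. Rotational symmetry in $W$ combined with the standard spherical moment $\mathbb{E}_{\xi \sim \Unif(S^{d_W - 1})}(\xi^\top e)^{2k} = O_k(d^{-k})$ gives $\mathbb{E}_w ((w_s^W)^\top u_W)^{2k} \leq O_k(d^{-k})$, while $\|w_s^{W^\perp}\| \leq \alpha$ gives $\mathbb{E}_w ((w_s^{W^\perp})^\top u_{W^\perp})^{2k} \leq \alpha^{2k}$; combining, $\mathbb{E}_w(w_s^\top u)^{2k} \leq O_k(\alpha^{2k} + d^{-k})$. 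Plugging this in directly with $2k = 2(k^*-1)$ for the first term, and via Cauchy--Schwarz $\sqrt{\mathbb{E}_w(w_s^\top u)^{2(k^*-2)}(w_s^\top v)^2} \leq (\mathbb{E}_w(w_s^\top u)^{4(k^*-2)})^{1/4}(\mathbb{E}_w(w_s^\top v)^4)^{1/4}$ for the second term, Young's inequality absorbs the cross terms $\alpha^{k^*-2}d^{-1/2}$ and $\alpha\, d^{-(k^*-2)/2}$ into $\alpha^{k^*-1} + d^{-(k^*-1)/2}$, yielding the claimed bound $O_K(\alpha^{k^*-1} + \sqrt{d}^{-(k^*-1)})$.

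The main computational obstacle is deriving the closed form for the ambient mixed Hessian: the $xx^\top$ factor prevents a one-line application of Hermite orthogonality. In practice one can use $SO(d)$-invariance to reduce to the ansatz $a(\rho) I + b(\rho)(w_s w_s^\top + w'_s (w'_s)^\top) + c(\rho)(w_s (w'_s)^\top + w'_s w_s^\top)$ and then test against $\sigma = \hek$ via the trace, $w_s^\top H w_s$, and $w_s^\top H w'_s$ to pin down $a = q'_\sigma$, $b = 0$, $c = q''_\sigma$. Once this formula is in hand, the remainder of the proof is elementary.
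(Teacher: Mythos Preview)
Your approach is essentially the same as the paper's: both derive the structural decomposition of $H^\perp_s$ into an ``identity part'' with coefficient $\asymp |\rho|^{k^*-1}$ and a rank-one part with coefficient $\asymp |\rho|^{k^*-2}$, then pass to $L^2$ moments. The paper squares first (bounding $(H^\perp_s)^\top H^\perp_s$ as a quadratic form) whereas you take a pointwise norm bound and then apply Jensen term by term, but these are equivalent up to constants.

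One correction: your claim that $b=0$ in the ambient ansatz is wrong. A Stein's-lemma computation gives
\[
\mathbb{E}_x\bigl[\sigma'(w_s^\top x)\sigma'((w'_s)^\top x)\,xx^\top\bigr]
= q_{\sigma'}(\rho)I + A\,(w_sw_s^\top + w'_s(w'_s)^\top) + q_{\sigma''}(\rho)\,(w_s(w'_s)^\top + w'_s w_s^\top),
\]
with $A=\mathbb{E}[\sigma'''(w_s^\top x)\sigma'((w'_s)^\top x)]=\sum_j c_jc_{j-2}\,j(j-1)(j-2)^2(j-3)!\,\rho^{j-3}$, which is generically nonzero when $\sigma$ has two consecutive even Hermite coefficients (your three tests would detect this). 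However, the $w_sw_s^\top$ and $w'_s(w'_s)^\top$ pieces are annihilated by $P^\perp_{w_s}(\cdot)P^\perp_{w'_s}$, so your formula for $H^\perp_s$ and everything downstream is unaffected.
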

\begin{proof}[Proof of Lemma~\ref{lemma:Havg}]
By Cauchy-Schwartz, 
\begin{align}
    \mathbb{E}_{w \sim \mu}\|H^{\perp}_s(w, w')v\| &\leq \sqrt{\mathbb{E}_{w \sim \mu}v(H^{\perp}_s(w, w'))^\top H^{\perp}_s(w, w')v}.
\end{align}
Let us expand $H^{\perp}_s(w, w')$. With $w_s := \xi_s(w)$ and $u := \xi_s(w')$, we have
\begin{align}
H_s^{\perp}(w, w') = \sum_{k = k^* - 1}^{K - 1} P^{\perp}_{{w_s}}\left(c(w, w')({w_s}^\top {u})^{k}I + c'(w, w')({w_s}^\top {u})^{k - 1}{u}{w_s}^\top \right)P^{\perp}_{{u}},
\end{align}
where $c(w, w'), c'(w, w') \leq \creg$. Thus we have
\begin{align}
&H_s^{\perp}(w, w')^\top  H_s^{\perp}(w, w')\\
&\qquad\preceq \sum_k 2\creg({w_s}^\top u)^{2k} P^{\perp}_{u} \\
&\qquad\qquad +  2\creg({w_s}^\top u)^{2(k - 1)} P^{\perp}_{u}{w_s}u^\top P^{\perp}_{{w_s}}u{w_s}^\top P^{\perp}_{u}\\
&\qquad\preceq 2\creg({w_s}^\top u)^{2(k^* - 1)}I \\
&\qquad\qquad + 2\creg({w_s}^\top u)^{2(k^* - 2)} {w_s}{w_s}^\top ,
\end{align}
and thus 
\begin{align}
    \mathbb{E}_{w \sim \mu}v(H^{\perp}_s(w, w'))^\top H^{\perp}_s(w, w')v &\leq 2\creg({w_s}^\top u)^{2(k^* - 1)}\|v\|^2 + 2\creg({w_s}^\top u)^{2(k^* - 1)} (v^\top {w_s})^2.
\end{align}
Taking a square root yields the desired result. The second statement follows observing that $\mathbb{E}_{w}[(u^\top w_s)^k] = O_k(\sqrt{d}^{-k})$ if $u$ is in the span of the rotationally invariant directions, because $u^\top w_s$ $\frac{1}{\sqrt{d}}$- subGaussian.
\end{proof}

\begin{proof}[Proof of Theorem~\ref{theorem:SIM}]
Fix a desired loss $\delta$, and let $T(\delta) = O_{K}(\sqrt{d}^{k^* - 2}\delta^{-(k^* - 1)})$ be as in Proposition~\ref{prop:SIMmf}, such that
\begin{align}\label{eq:MF}
        \mathbb{E}_x (f_{\rtmf}(x) - f^*(x))^2 &\leq \delta^2.
\end{align}

Let us check the conditions of Theorem~\ref{thm:main}. First, the regularity conditions in Assumption~\ref{assm:reg} trivially hold for $\creg = O_{\csim}(1)$ by our choice of Gaussian data and $\sigma$. 

By Lemma~\ref{lemma:assmg}, up to time $T(\delta)$, $(f^*, \rho_0, {\md_x})$ satisfies Assumption~\ref{assm:growth} with $\jmax = O_{K, \delta}(d^{2(k^* - 1)})$ and $\javg(\tau) = O_{K, \delta, \tau}(1/T(\delta))$. 

Observe that by Lemma~\ref{lemma:lsc}, $(f^*, \rho_0, {\md_x})$ is $(c, \tau)$ local strongly convex up to time $T(\delta)$ for $c = \Omega_{K, \creg}(1)$, $\tau = \frac{1}{5K}$. Further, since the problem has rotational symmetry in all directions orthogonal to the $w^*$ axis, the \em structured \em condition holds because by the smoothness of $\nabla_w \nu(w, \rtmf) P^{\perp}_w$ in $w$, and the fact that at $\nabla_{\xii(w_i)} \nu(\xii(w_i), \rtmf)P^{\perp}_{\xii(w_i)}$ (which approximates $\dpt(i)$ to $\creg \tau$  error) must be completely in the space orthogonal to $w^*$, and is rotationally symmetric in that space. Thus Assumption~\ref{def:lsc} holds.

Finally, the symmetry conditions in Assumption~\ref{assm:symmetries} trivially hold because the data is Guassian, and there is a reflection symmetry between $w^*$ and $-w^*$.

Now suppose $n \geq d^{11k^*} \geq \jmax^8 (T(\delta))^6 d^4$ and $m \geq d^{13k^*} \geq \jmax^{10} (T(\delta))^6 d^4$ such that 
\begin{align}
    \eps_n + \eps_m = \frac{\log(n)d^{3/2}}{\sqrt{n}} + \frac{\log(mT)\max(d^{1/2}\jmax, d^{3/2})}{\sqrt{m}} \leq \frac{1}{d\jmax^4 T^3}.
\end{align}
Thus for $d$ large enough, the condition on $\eps_n + \eps_m$ in Theorem~\ref{thm:main} holds.
Thus all the assumptions of Theorem~\ref{thm:main} hold, and the result guarantees that for $t \leq T(\delta)$, with high probability over the draw of the data and of the neural network initialization, we have with $\lambda = \min(\tau, \delta)$,
\begin{align}
    \mathbb{E}_x (f_{\rtmf}(x) - f_{\rtm}(x))^2 &\leq t\jmax(\eps_m + \eps_n)\exp\left(\frac{O(t \javg(\lambda))}{c\lambda - \Omega(\javg/\lambda)}\right)\\
    &\leq t d^{2(k^* - 1)} (\eps_n + \eps_m)O_{K, \delta}(1).
\end{align}
Combining this with Equation~\eqref{eq:MF}, we have that
\begin{align}
    \mathbb{E}_x (f^*(x) - f_{\rtm}(x))^2 &\leq 2\mathbb{E}_x (f_{\rtmf}(x) - f_{\rtm}(x))^2 + 2\mathbb{E}_x (f_{\rtmf}(x) - f^*(x))^2\\
    &\leq 2\delta^2 + 2t d^{2(k^* - 1)} (\eps_n + \eps_m)O_{K, \delta}(1) \leq 3\delta^2.
\end{align}
This proves the theorem.
\end{proof}

\newpage

\section{Full Details of Simulations}\label{apx:sims}

% Our simulations focus on:
% \begin{enumerate}
%     \item Simple examples, where we expect conditions hold (See Figure~\ref{fig:he4} for a canonical example)
%     \begin{itemize}
%         \item He4
%         \item 2 non-orthogonal neurons
%         \item $k$ orthogonal neurons
%     \end{itemize}
%     \item Manifold examples, where local strong convexity does not hold (See Figure~\ref{fig:man} for a canonical example):
%     \begin{itemize}
%         % \item Zero target fn
%         \item He4 and He6 mix misspecified 
%         \item Uniform teachers on circle. 
%     \end{itemize}
%     \item Non-symmetric examples (See Figure~\ref{fig:xor} for a canonical example)
%     \begin{itemize}
%         \item 8 random neurons $\R^2$
%         \item 8 random $\R^8$
%         \item XOR4
%     \end{itemize}
%     \item Examples where the $\javg$ condition in Assumption~\ref{assm:growth} fails (See Figure~\ref{fig:Jbad} for a canonical example  - \mg{todo})
%     \begin{itemize}
%         \item Skewed Init
%         \item Staircase problem: Boolean $0.25x_1 + 0.75x_1x_2x_3x_4$.
%     \end{itemize}
%     \item Examples where $\cstarr$ is unbounded (See Figure~\ref{fig:star} for a canonical example - \mg{todo}).
%     \begin{itemize}
%         \item Misspecified with combination of $\gen_4$ and $\gen_6$ activations. (Functions with that Generative exponent)
%         \item Uniform teachers on circle with $\gen_4$ activation.
%     \end{itemize}
% \end{enumerate}
\begin{table}[!htb]
    {\small 
    \centering
     \scalebox{0.9}{
    \begin{tabular}{|c|c|c|c|c|c|c|}
    \hline
    Name & Target Function & Activation/Network Design & LSC? & Symmetric? & $\javg$ assm? & $\cstarr$ \\
    \hline
    \hefour & $\He_{4}(x^\top e_1)$ &  $\sigma = \He_{4}$ & Yes & Yes & Yes & 1\\
    \hline
    % $\mathsf{Orth}_{4, k}$ & $\He_{4}$ link, $k$ orthogonal teachers &  $\sigma = \He_{4}$ & Yes & Yes & $k$ & Yes\\
    % \hline
    % $\mathsf{NonOrth}_{4}$ & $\He_{4}$ link, 2 teachers at $45^{\circ}$ &  $\sigma = \He_{4}$ & Yes & Yes & $2$ & Yes\\
    % \hline
    \hline
    \mantwo & $\mathbb{E}_{w \sim \mathbb{S}^{1}}\He_4(x^\top w)$ &  $\sigma = \He_{4}$ & No & Yes & Yes & $\approx 2^4$\\
    % \hline
    % $\mathsf{Zero}$ & 0 &  $\sigma = \He_{3}$ & No & Yes & $\approx d^3$ & Yes\\
    \hline
    \mis & $0.8\He_{4}(x^\top e_1) + 0.6\He_{6}(x^\top e_1)$ &  $\sigma = \He_{4} + \He_6$ & No & No  & Yes & $\approx d^4$\\
    \hline 
    \random{6}{6} & $\He_4$ link, $6$ random teachers in $\R^6$ &  $\sigma = \He_4$ & Yes & No & Yes? & $6$\\
    \hline 
    % $\mathsf{Skewed}$ & $\He_{4}(x^\top e_1)$ &  $\sigma = \He_{4}$, init biased to $e_2$ & Yes & Yes & $1$ & No\\
    % \hline 
    \hoponethree & $0.25x_1 + 0.75\mathsf{XOR}_4(x_{[4]})$ &  $\sigma = \text{SoftPlus}$, 2nd layer $\pm 8$ & Yes & No  & No & $\approx 2^8$\\
    \hline 
    \xorfour & $\mathsf{XOR}_4(x_{[4]})$ &  $\sigma = \text{SoftPlus}$, 2nd layer $\pm 8$ & Yes & No  & ? & $\approx 2^4$\\
    % \hline
    % $\mathsf{CircleNP}$ & $\mathbb{E}_{w \sim \mathbb{S}^{1}}\gen_4(x^\top w)$ &  $\sigma = \gen_{4}$ & No & Yes & $\infty$ & Yes\\
    % \hline
    % $\mathsf{MisspecifiedNP}$ & $0.6\gen_{4}(x^\top e_1) + 0.3\gen_{6}(x^\top e_1)$ &  $\sigma = \gen_{4} + \He_6$ & No & No & $\infty$ & Yes\\
    \hline
    % \mg{todo}\\
    \end{tabular}
    }
    \caption{\small List of problem settings we empirically investigated.
    % Here $\gen_k$ denotes the non-polynomial function with generative exponent $k$ \cite{damian2024computational}. In particular, $\gen_4(z) = z^2e^{-z^2}$, \mg{$\gen_6 = ?$ Skip this sim if too tricky.}
    } \label{table:sims}
    } 
\end{table}

\subsection{Experimental Design }\label{apx:sims:design}
 For each problem of interest, we simulated the training dynamics for several different widths $m \in [2^{12}, 2^{15}]$). We let $M$ be twice the largest value of $m$.  Crucially, \textit{we initialized all the networks to be a subnetwork of the largest width network.} Further, we used the same training data and training procedure (hyperparameters, batch size, batch selection, etc.) for all values of $m$ and $M$. 
We used the width $M$ network as a proxy for the mean-field limit, and studied how the neurons in the smaller networks differed in their trajectories from their counterparts in the largest network. All experiments are repeated for 3 times. 
Source code is available at \href{https://github.com/margalitglasgow/prop-chaos}{https://github.com/margalitglasgow/prop-chaos}.

\paragraph{Training procedure.}
We optimized the neural network as follows.
\begin{enumerate}
    \item We trained the models via mini-batch SGD with $n = 2^{16}$ total data points, and a batch size of $8196$. 
    \item We used a step size of $0.01$ (or occasionally smaller) for the problems with Gaussian data, and $0.05$ for the problems with Boolean data. This was mainly because the Gaussian data had higher moments, and hence the loss occasionally exploded under large step size. 
    \item For the Gaussian single-/multi-index problems we used a Hermite activation function and all-1 second-layer weights, whereas in the Boolean experiments we used the SoftPlus activation with temperature 16 (which is a smooth approximation of ReLU), and we fixed the 2nd layer weights to $\pm C_k$ with equal probability, where $C_k = 2^k / \sqrt{k}$ for the $k$-parity problem. 
\end{enumerate}

\paragraph{Analysis procedure.}
We made the following measurements along the training dynamics. 
\begin{enumerate}
    \item At each epoch, we computed the function error between the networks of width $m$ and $M$, using a randomly sampled dataset of size $n$.
    \item For each neuron $i$ in the width-$m$ network, we computed $\|\hdit\|$ as the norm of the difference between the neuron in the width-$m$ network and the corresponding neuron in the width-$M$ network. 
    \item We plot (a) the prediction risk curves, (b) the function error over time, and (c) $\mathbb{E}_i\|\hdit\|$ over time.
    % , and (d) various histograms of the empirical distribution $\mathbb{E}_i \delta_{\|\hdit\|}$ at $t = T$ and $\mathbb{E}_i \delta_{\max_{t \leq T} \|\hdit\|}$, or their scaled counterparts  $\mathbb{E}_i \delta_{\sqrt{m}\|\hdit\|}$, and $\mathbb{E}_i \delta_{\max_{t \leq T}\sqrt{m}\|\hdit\|}$. 
    In all the plots of the function and parameter error, we scaled up the error by the width $m$ for better visualization. 
\end{enumerate}

% \paragraph{The dimension $d$.} In the figures below, we ran all experiments for $d \in [16, 32, 64, 128]$, and then choose the value of $d$ which was most illustrative to display in the plots. We note that unless otherwise specified, the results were qualitatively similar for all choices of $d$. For example, for \hoponethree~ experiment, because the leap complexity is only $3$ (and not $4$, like the rest of the problems), we choose to display the $d = 128$ results, because the other ones converged quite quickly. The only exception is the \mis~example, where due to the higher order activations, there were extremely large loss spikes for $d = 128$ (and sometimes $d = 64$). This was somewhat mitigable using much smaller learning rates, but due to how long the experiments for this would take, we chose to only simulate the $d = 32$ case at larger widths. 
% Sometimes (eg. for \xorfour~ and \hoponethree), we only  

\newpage

\subsection{Additional Experimental Results}
% In addition to Figures~\ref{fig:body:hefour}, \ref{fig:body:mis}, and \ref{fig:body:xor},
% Figures \ref{fig:he4}, \ref{fig:circle}, and \ref{fig:stair} report our simulations for various models from Table \ref{table:sims}. 

\begin{figure}[!htb]
\begin{minipage}{0.32\linewidth}
\centering
{\includegraphics[width=0.96\linewidth]{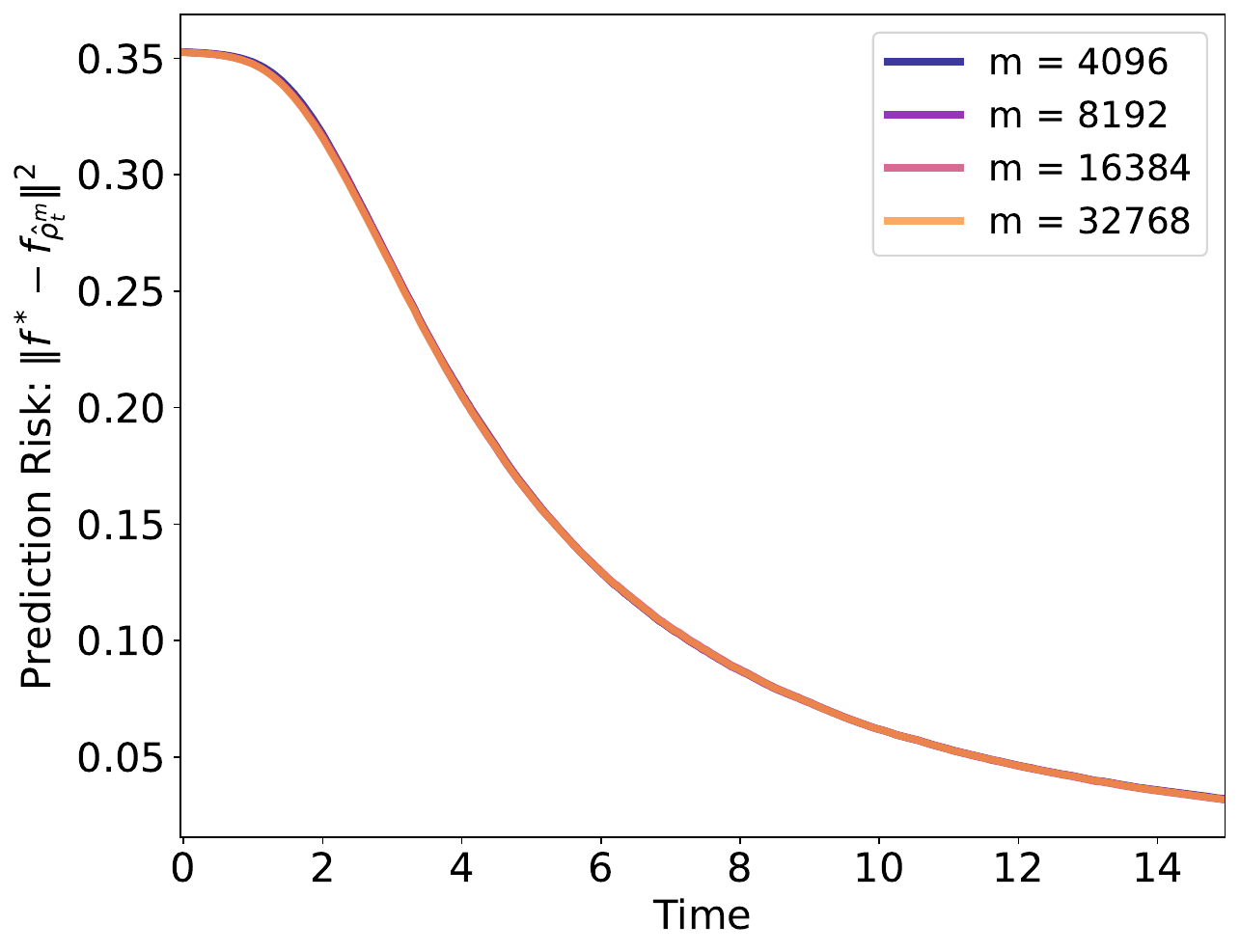}}  \\ \vspace{-1mm}
\small (a) prediction risk \\ $\|f_{\rtm} - f^*\|^2$.  
\end{minipage}%
\begin{minipage}{0.32\linewidth}
\centering
{\includegraphics[width=0.96\linewidth]{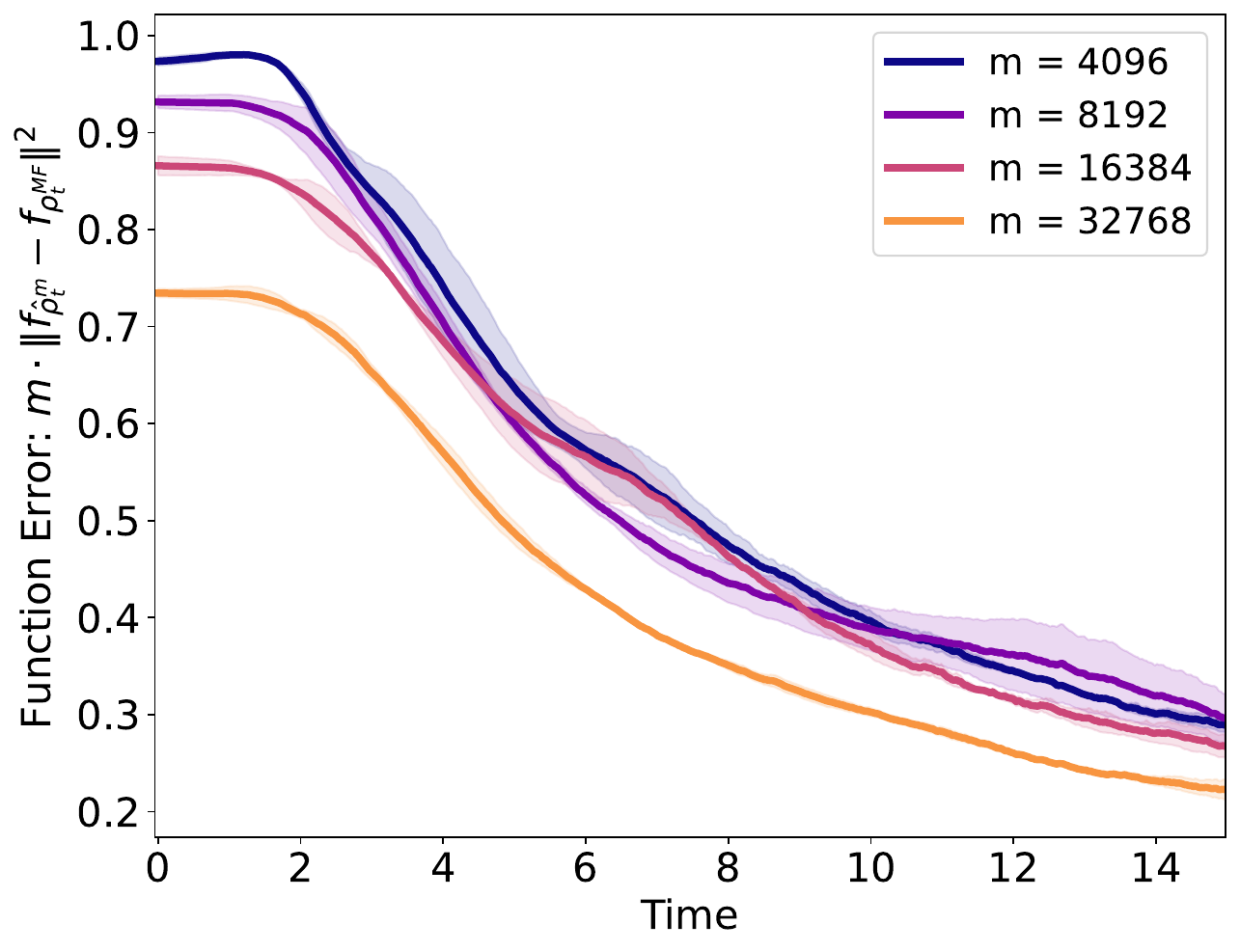}}  \\ \vspace{-1mm}
\small (b) scaled function error $m \|f_{\rtm} - f_{\rtmf}\|^2$.  
\end{minipage}%
\begin{minipage}{0.32\linewidth}
\centering
{\includegraphics[width=0.96\linewidth]{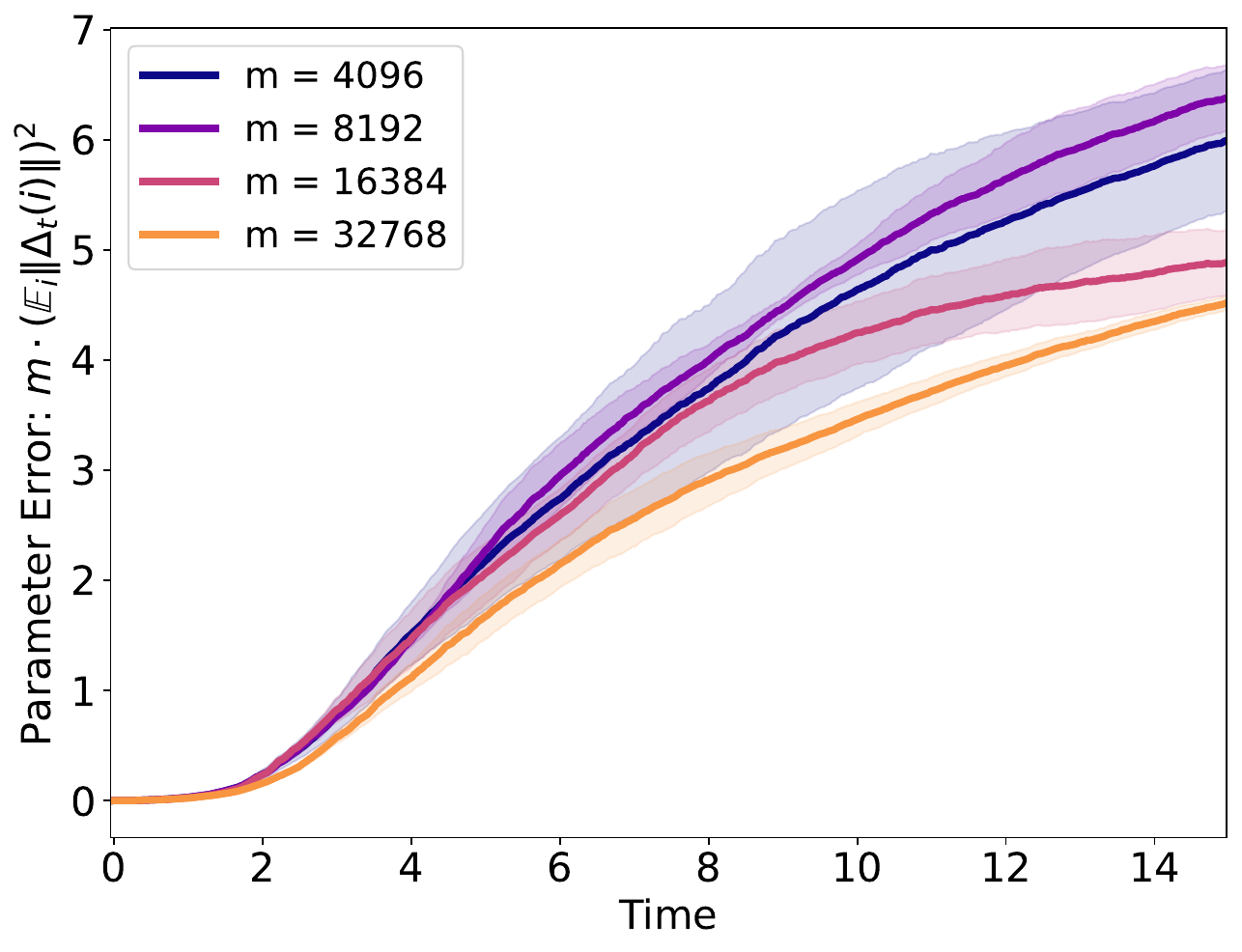}}  \\ \vspace{-1mm}
\small (c) scaled parameter coupling error $m (\mathbb{E}_i \|\hdit\|)^2$.  
\end{minipage}%
\vspace{-1mm}
\caption{Manifold (\mantwo) target function $f^*(x) = \mathbb{E}_{w \sim \mathbb{S}^{1}}\He_4(x^\top w), x\sim\mathcal{N}(0,I_d)$, and $\sigma = \text{He}_4$ ($\rho^*$ is distributed on a circle in 2 dimensions). We set $d=64$ and learning rate $\eta=0.01$. 
    % \hefour~($d = 32$) Left: Loss and risk. Middle: Function error (approximation of $m\mathbb{E}_x (f_{\rtm}(x) - f_{\rtmf}(x))^2$). Right: Parameter coupling error (approximation of $m\left(\mathbb{E}_i \|\dit\|\right)^2$). The results for other values of $d \in [16, 32, 64, 128]$ were similar.
    }
    \label{fig:circle}
\end{figure}

\begin{figure}[!htb]
\begin{minipage}{0.32\linewidth}
\centering
{\includegraphics[width=0.96\linewidth]{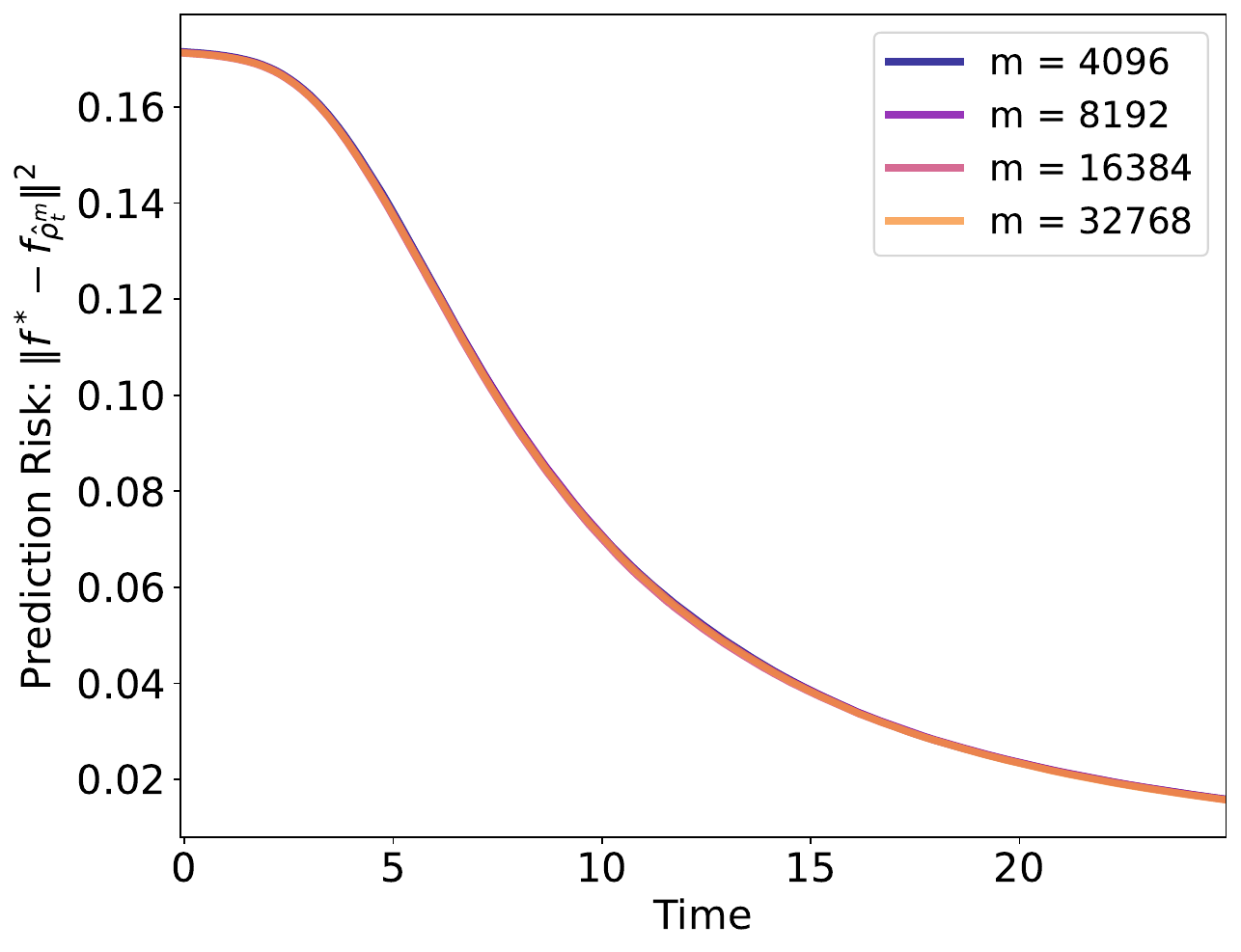}}  \\ \vspace{-1mm}
\small (a) prediction risk \\ $\|f_{\rtm} - f^*\|^2$.  
\end{minipage}%
\begin{minipage}{0.32\linewidth}
\centering
{\includegraphics[width=0.96\linewidth]{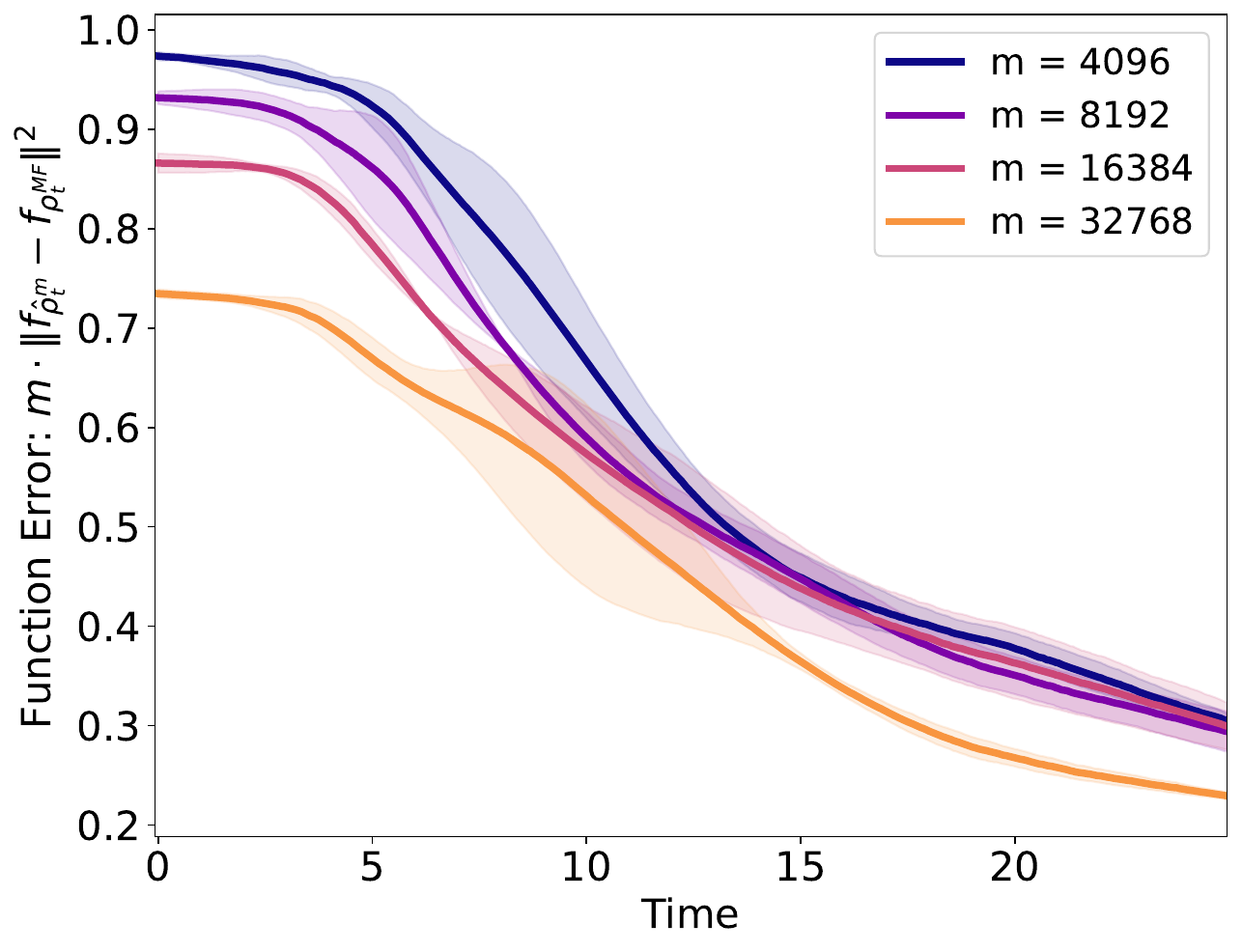}}  \\ \vspace{-1mm}
\small (b) scaled function error $m \|f_{\rtm} - f_{\rtmf}\|^2$.  
\end{minipage}%
\begin{minipage}{0.32\linewidth}
\centering
{\includegraphics[width=0.96\linewidth]{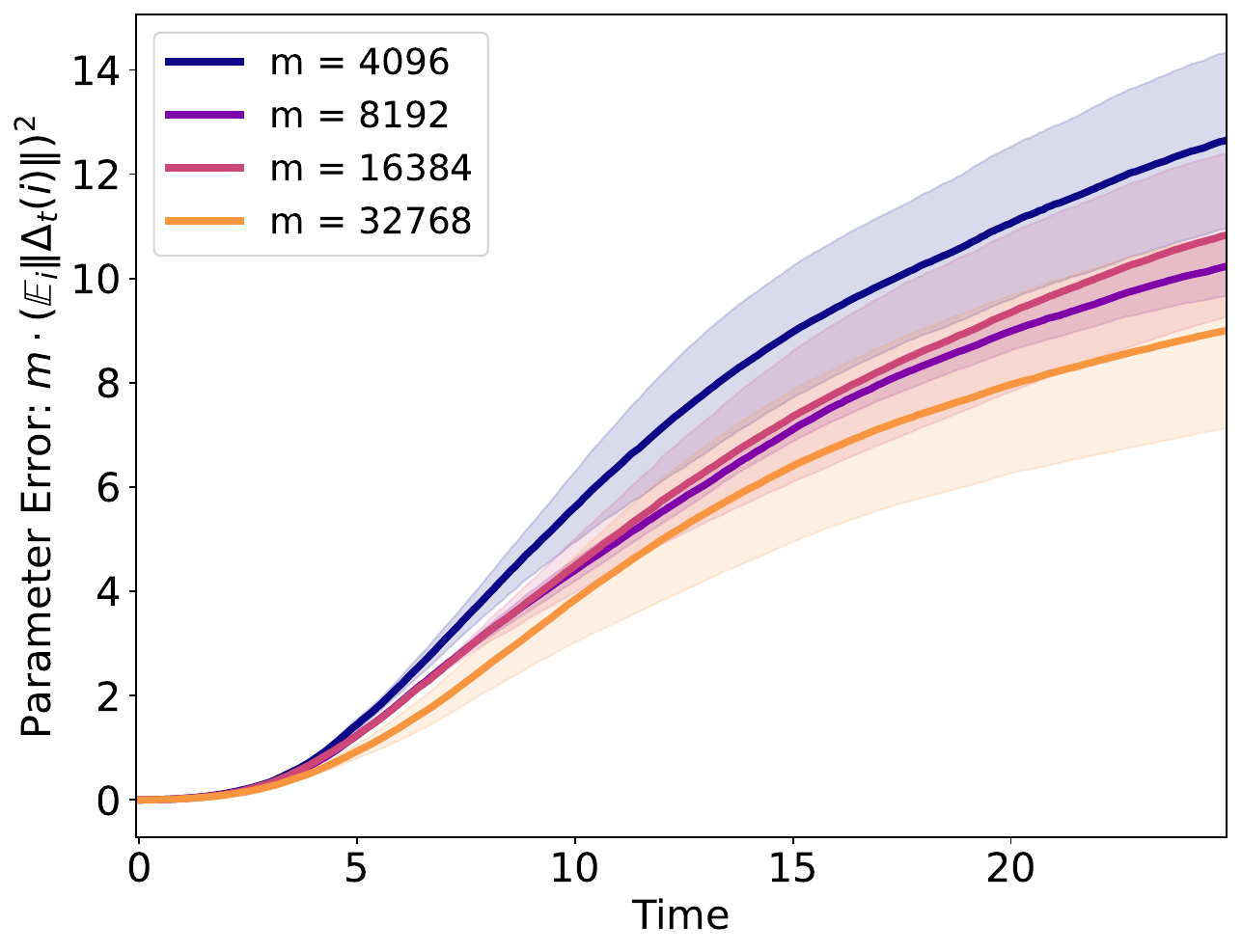}}  \\ \vspace{-1mm}
\small (c) scaled parameter coupling error $m (\mathbb{E}_i \|\hdit\|)^2$.  
\end{minipage}% 
\vspace{-1mm}
\caption{Additive (\random{6}{6}) target function $f^*(x) = \frac{1}{6} \sum_{i=1}^6 \text{He}_4(x^\top w_i), x\sim\mathcal{N}(0,I_d)$, and $\sigma = \text{He}_4$. We set $d=64$ and learning rate $\eta=0.01$. 
    }
    \label{fig:additive}
\end{figure}

\begin{figure}[!htb]
\begin{minipage}{0.32\linewidth}
\centering
{\includegraphics[width=0.96\linewidth]{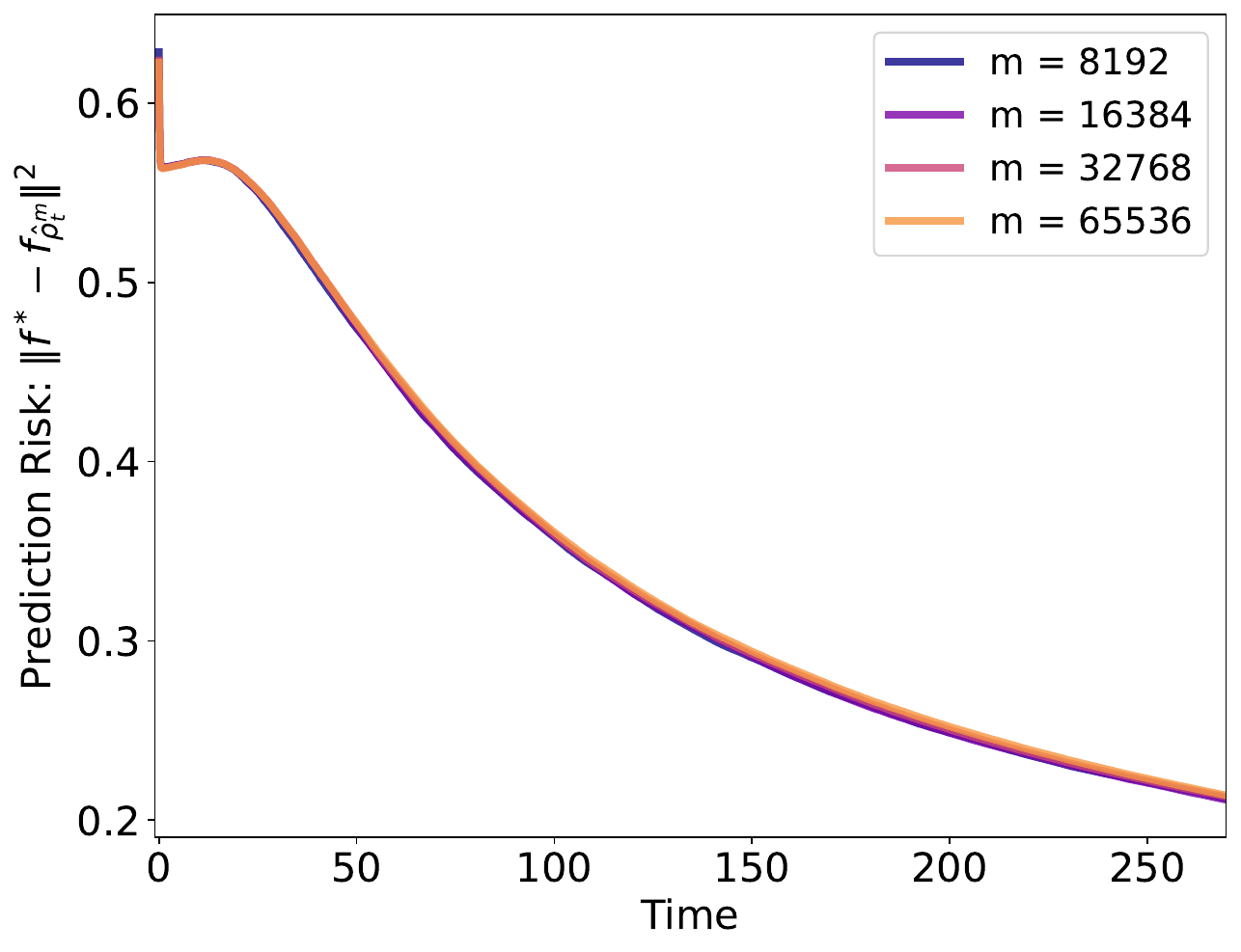}}  \\ \vspace{-1mm}
\small (a) prediction risk \\ $\|f_{\rtm} - f^*\|^2$.  
\end{minipage}%
\begin{minipage}{0.32\linewidth}
\centering
{\includegraphics[width=0.96\linewidth]{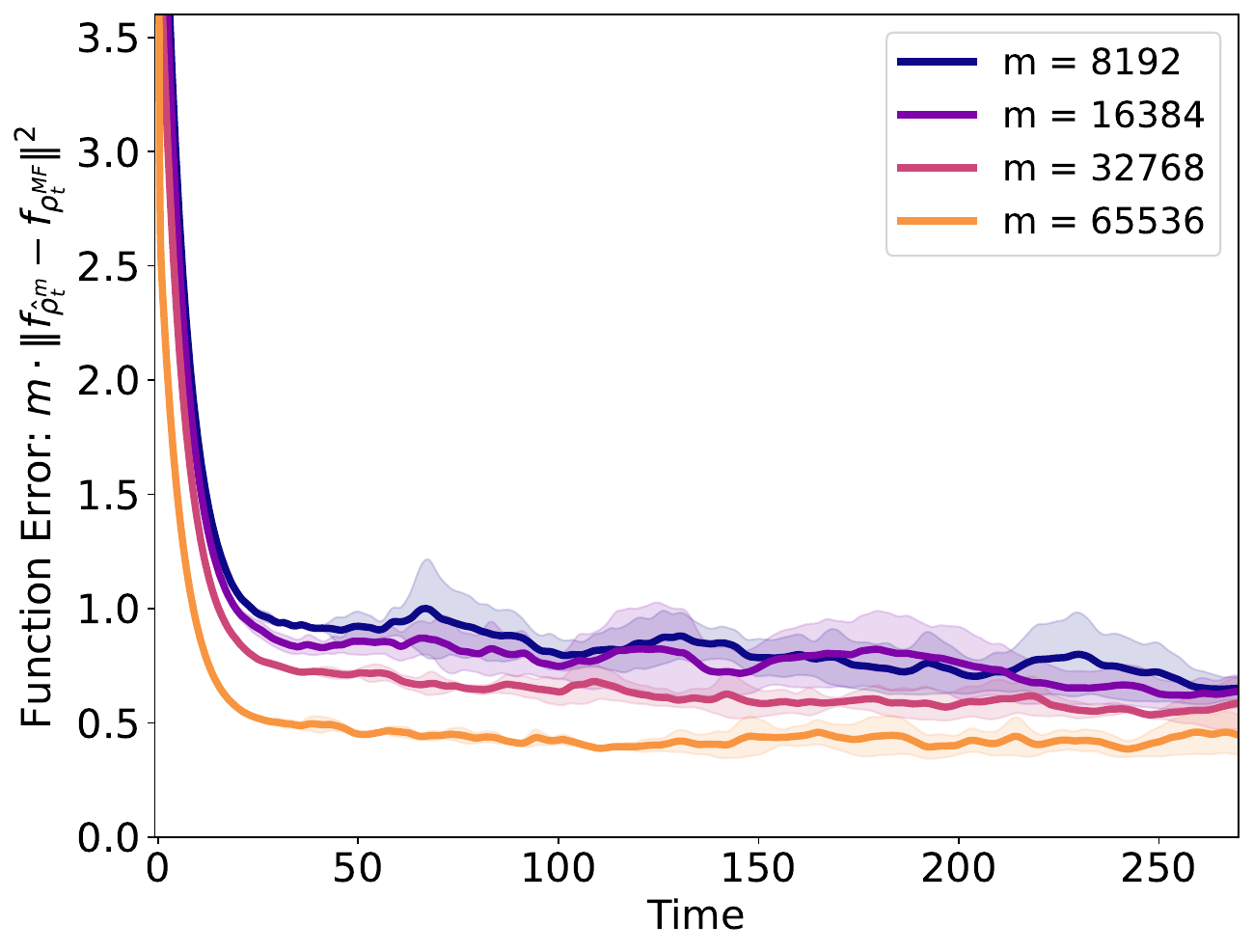}}  \\ \vspace{-1mm}
\small (b) scaled function error $m \|f_{\rtm} - f_{\rtmf}\|^2$.  
\end{minipage}%
\begin{minipage}{0.32\linewidth}
\centering
{\includegraphics[width=0.96\linewidth]{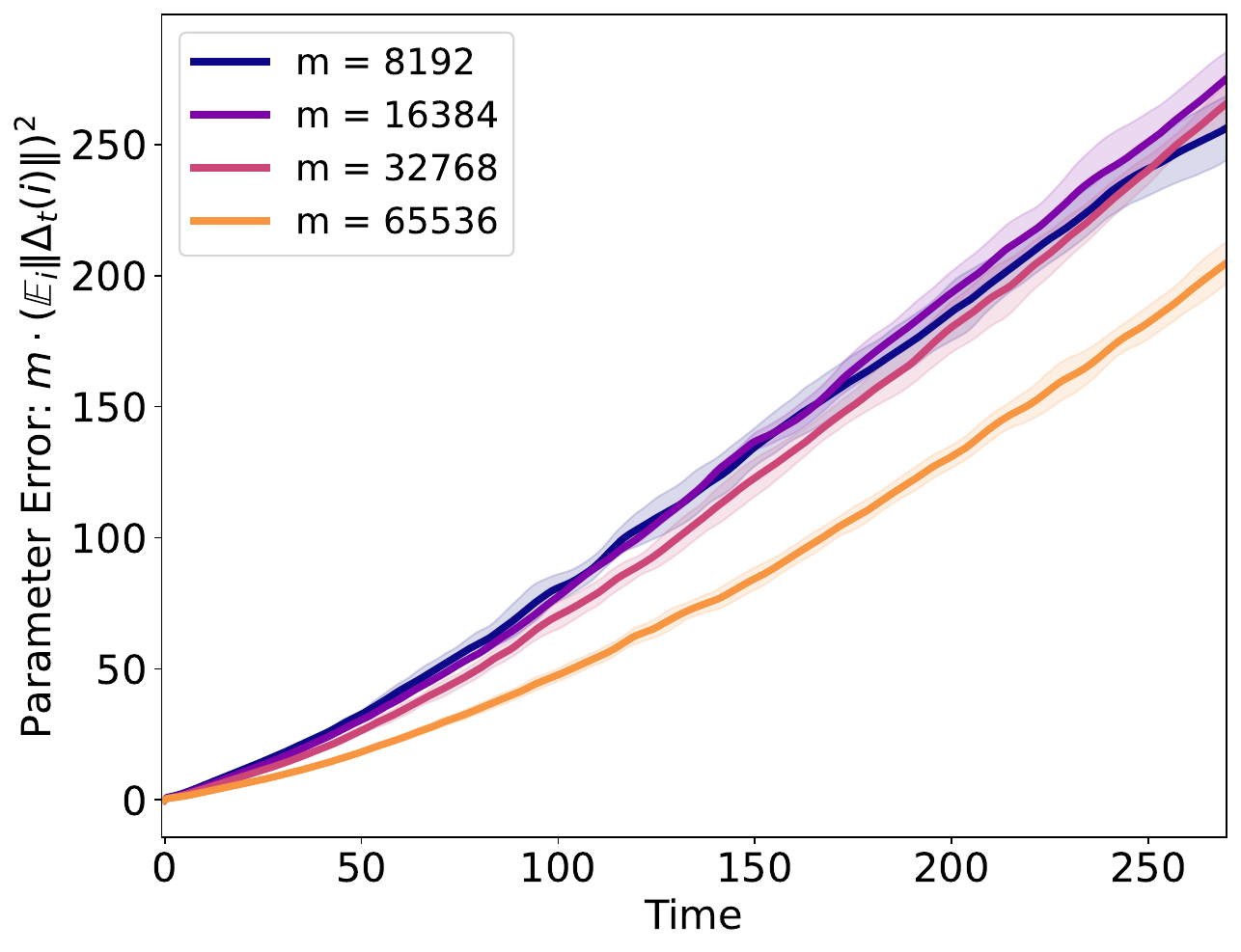}}  \\ \vspace{-1mm}
\small (c) scaled parameter coupling error $m (\mathbb{E}_i \|\hdit\|)^2$.  
\end{minipage}%
\vspace{-1mm}
\caption{\hoponethree~target function $f^*(x) = 0.25 [x]_1 + 0.75 \prod_{j\le 4} [x]_j, [x]_i\sim\text{Unif}\{1,-1\}$, and $\sigma = \text{SoftPlus}$ with temperature 16. We set $d=64$ and learning rate $\eta=0.025$. 
    }
    \label{fig:staircase}
\end{figure}

\vspace{-10mm}

\end{document}